\newcommand{\arxiv}[1]{\iftoggle{neurips}{}{#1}}
\newcommand{\loose}{\looseness=-1}
\newcommand{\neutralize}[1]{\expandafter\let\csname c@#1\endcsname\count@}
\declaretheorem[name=Theorem,parent=section]{theorem}
\declaretheorem[name=Lemma,parent=section]{lemma}
\declaretheorem[name=Assumption, parent=section]{assumption}
\declaretheorem[name=Definition, parent=section]{definition}
\declaretheorem[name=Corollary, parent=section]{corollary}
\declaretheorem[name=Remark, parent=section]{remark}
\declaretheorem[name=Proposition, parent=section]{proposition}
    \let\Cref\crtCref
    \let\cref\crtcref
  \newcommand{\creftitle}[1]{\crtcref{#1}}
  \renewenvironment{proof}[1][Proof]%
  {%
   \par\noindent{\bfseries\upshape {#1.}\ }%
  }%
  {\qed\newline}
\xpatchcmd{\proof}{\itshape}{\normalfont\proofnameformat}{}{}
\newcommand{\proofnameformat}{\bfseries}
\newcommand{\pref}[1]{\cref{#1}}
\newcommand{\pfref}[1]{Proof of \pref{#1}}
\renewcommand{\eqref}[1]{\texorpdfstring{\hyperref[#1]{(\ref*{#1})}}{(\ref*{#1})}}
\Crefname{assumption}{Assumption}{Assumptions}
\crefname{fact}{Fact}{Facts}
\DeclareDocumentCommand{\XDeclarePairedDelimiter}{mm}
 {
  \__egreg_delimiter_clear_keys: %
  \keys_set:nn { egreg/delimiters } { #2 }
  \use:x %
   {
    \exp_not:n {\NewDocumentCommand{#1}{sO{}m} }
     {
      \exp_not:n { \IfBooleanTF{##1} }
       {
        \exp_not:N \egreg_paired_delimiter_expand:nnnn
         { \exp_not:V \l_egreg_delimiter_left_tl }
         { \exp_not:V \l_egreg_delimiter_right_tl }
         { \exp_not:n { ##3 } }
         { \exp_not:V \l_egreg_delimiter_subscript_tl }
       }
       {
        \exp_not:N \egreg_paired_delimiter_fixed:nnnnn 
         { \exp_not:n { ##2 } }
         { \exp_not:V \l_egreg_delimiter_left_tl }
         { \exp_not:V \l_egreg_delimiter_right_tl }
         { \exp_not:n { ##3 } }
         { \exp_not:V \l_egreg_delimiter_subscript_tl }
       }
     }
   }
 }
\XDeclarePairedDelimiter{\supnorm}{
  left=\lVert,
  right=\rVert,
  subscript=\infty
  }
\newcommand{\cmark}{\ding{51}}%
\newcommand{\xmark}{\ding{55}}%
\DeclareFontFamily{U}{jkpmia}{}
\DeclareFontShape{U}{jkpmia}{m}{it}{<->s*jkpmia}{}
\DeclareFontShape{U}{jkpmia}{bx}{it}{<->s*jkpbmia}{}
\DeclareMathAlphabet{\mathfrak}{U}{jkpmia}{m}{it}
\SetMathAlphabet{\mathfrak}{bold}{U}{jkpmia}{bx}{it}
\DeclarePairedDelimiter{\abs}{\lvert}{\rvert} %
\DeclarePairedDelimiter{\brk}{[}{]}
\DeclarePairedDelimiter{\crl}{\{}{\}}
\DeclarePairedDelimiter{\prn}{(}{)}
\DeclarePairedDelimiter{\nrm}{\|}{\|}
\DeclarePairedDelimiter{\tri}{\langle}{\rangle}
\let\Pr\undefined
\DeclareMathOperator{\En}{\mathbb{E}}
\DeclareMathOperator{\Pr}{Pr}
\DeclareMathOperator*{\argmin}{arg\,min} %
\DeclareMathOperator*{\argmax}{arg\,max}
\newcommand{\wt}[1]{\widetilde{#1}}
\newcommand{\wh}[1]{\widehat{#1}}
\newcommand{\wb}[1]{\widebar{#1}}
\def\ddefloop#1{\ifx\ddefloop#1\else\ddef{#1}\expandafter\ddefloop\fi}
\def\ddef#1{\expandafter\def\csname bb#1\endcsname{\ensuremath{\mathbb{#1}}}}
\def\ddefloop#1{\ifx\ddefloop#1\else\ddef{#1}\expandafter\ddefloop\fi}
\def\ddef#1{\expandafter\def\csname b#1\endcsname{\ensuremath{\mathbf{#1}}}}
\def\ddef#1{\expandafter\def\csname sf#1\endcsname{\ensuremath{\mathsf{#1}}}}
\def\ddef#1{\expandafter\def\csname c#1\endcsname{\ensuremath{\mathcal{#1}}}}
\def\ddef#1{\expandafter\def\csname h#1\endcsname{\ensuremath{\widehat{#1}}}}
\def\ddef#1{\expandafter\def\csname hc#1\endcsname{\ensuremath{\widehat{\mathcal{#1}}}}}
\def\ddef#1{\expandafter\def\csname t#1\endcsname{\ensuremath{\widetilde{#1}}}}
\def\ddef#1{\expandafter\def\csname tc#1\endcsname{\ensuremath{\widetilde{\mathcal{#1}}}}}
\def\ddefloop#1{\ifx\ddefloop#1\else\ddef{#1}\expandafter\ddefloop\fi}
\def\ddef#1{\expandafter\def\csname scr#1\endcsname{\ensuremath{\mathscr{#1}}}}
\newcommand{\ind}{\mathbbm{1}}    %
\newcommand{\veps}{\varepsilon}
\newcommand{\ldef}{\vcentcolon=}
\newcommand{\rdef}{=\vcentcolon}
\newcommand{\Shat}{\wh{S}}
\newcommand{\Astar}{A^{\pistar}}
\newcommand{\sigmab}{\wb{\sigma}}
\newcommand{\sigmabs}{\wb{\sigma}_{\pistar}}
\newcommand{\Hstar}{H^{\star}}%
\newcommand{\pitil}{\wt{\pi}}%
\newcommand{\gbar}{\wb{g}}
\newcommand{\mutil}{\wt{\mu}}
\newcommand{\sigmastar}{\sigma_{\pistar}}
\newcommand{\vepsapx}{\veps_{\texttt{apx}}}
\newcommand{\vepsopt}{\veps_{\texttt{opt}}}
\newcommand{\Drho}[2]{\rho\prn*{#1\dmid{}#2}}
\newcommand{\murec}{\mu}
\newcommand{\astari}[1][i]{a^{\star,\sss{#1}}}
\newcommand{\Qstar}{Q^{\pistar}}
\newcommand{\Vstar}{V^{\pistar}}
  \newcommand{\Lbc}{L_{\textsf{bc}}}
    \newcommand{\Lhatbc}{\wh{L}_{\textsf{bc}}}
\renewcommand{\dagger}{\texttt{Dagger}\xspace}
\newcommand{\aggrevate}{\texttt{Aggrevate}\xspace}
\newcommand{\loglossbc}{\texttt{LogLossBC}\xspace}
\newcommand{\loglossdagger}{\texttt{LogLossDagger}\xspace}
\newcommand{\gstar}{g^{\star}}
\newcommand{\pibar}{\wb{\pi}}
  \newcommand{\sfrak}{\mathfrak{s}}
  \newcommand{\afrak}{\mathfrak{a}}
  \newcommand{\bfrak}{\mathfrak{b}}
  \newcommand{\cfrak}{\mathfrak{c}}
  \newcommand{\xfrak}{\mathfrak{x}}
  \newcommand{\yfrak}{\mathfrak{y}}
  \newcommand{\zfrak}{\mathfrak{z}}
\newcommand{\var}{\mathrm{Var}}
\newcommand{\Var}{\var}
\renewcommand{\emptyset}{\varnothing}
\newcommand{\filt}{\mathscr{F}}
\newcommand{\M}[1]{^{{\scriptscriptstyle M}}}  %
\newcommand{\sups}[1]{^{{\scriptscriptstyle#1}}}
\newcommand{\sss}[1]{{\scriptscriptstyle#1}}
\newcommand{\pistar}{\pi^{\star}}
\newcommand{\pihat}{\wh{\pi}}
\newcommand{\EstOnHel}{\mathrm{\mathbf{Est}}^{\mathsf{on}}_{\mathsf{H}}}
  \newcommand{\AlgEst}{\mathrm{\mathbf{Alg}}_{\mathsf{Est}}}
\newcommand{\Mstar}{M^{\star}}
  \newcommand{\ghat}{\wh{g}}
\newcommand{\Otilde}{\wt{O}}
\newcommand{\approxleq}{\lesssim}
\newcommand{\approxgeq}{\gtrsim}
\renewcommand{\ind}[1]{^{{\scriptscriptstyle#1}}}
\newcommand{\bigoh}{O}
\newcommand{\bigoht}{\wt{O}}
\newcommand{\bigom}{\Omega}
\newcommand{\indic}{\mathbb{I}}
\renewcommand{\Pr}{\bbP}
\newcommand{\Dkl}[2]{D_{\mathsf{KL}}\prn*{#1\,\|\,#2}}
\newcommand{\Dhels}[2]{D^{2}_{\mathsf{H}}\prn*{#1,#2}}
\newcommand{\Ddel}[2]{D_{\Delta}\prn*{#1,#2}}
\newcommand{\Dchis}[2]{D_{\chi^2}\prn*{#1\dmid{}#2}}
\newcommand{\DchisX}[3]{D_{\chi^2}\prn[#1]{#2\dmid{}#3}}
\newcommand{\Dtv}[2]{D_{\mathsf{TV}}\prn*{#1,#2}}
\newcommand{\Dtvs}[2]{D^2_{\mathsf{TV}}\prn*{#1,#2}}
\newcommand{\DhelsX}[3]{D^{2}_{\mathsf{H}}\prn[#1]{#2,#3}}
\newcommand{\Ber}{\mathrm{Ber}}
\newcommand{\dmid}{\;\|\;}
\newcommand{\unif}{\mathrm{unif}}
\newcommand{\astar}{a^{\star}}
\newcommand{\mathand}{\quad\text{and}\quad}
\def\multiset#1#2{\ensuremath{\left(\kern-.3em\left(\genfrac{}{}{0pt}{}{#1}{#2}\right)\kern-.3em\right)}}
\newcommand{\iid}{i.i.d.\xspace}
\renewcommand{\emptyset}{\varnothing}
\let\OldStatex\Statex
\renewcommand{\Statex}[1][3]{%
  \setlength\@tempdima{\algorithmicindent}%
  \OldStatex\hskip\dimexpr#1\@tempdima\relax}
\let\oldparagraph\paragraph
\renewcommand{\paragraph}[1]{\oldparagraph{#1.}}
\title{Is Behavior Cloning All You Need?\\
    Understanding Horizon in Imitation Learning}
\author{%
  Dylan J. Foster\\
  Microsoft Research
  \and
  Adam Block\\
  MIT
  \and
  Dipendra Misra\\
  Microsoft Research
}}
\date{}
\begin{document}
\maketitle

\begin{abstract}
Imitation learning (IL) aims to mimic the behavior of an expert in a
sequential decision making task by learning from demonstrations,
and has been widely applied to robotics, autonomous driving, and autoregressive text generation. The simplest approach to IL, \emph{behavior cloning} (BC),
is thought to incur sample
complexity with unfavorable \emph{quadratic}
  dependence on the problem horizon, motivating a variety of
    different \emph{online} algorithms that attain improved
    \emph{linear} horizon dependence under stronger assumptions on the data and the learner's access to the expert.

We revisit the apparent gap between
offline and online IL from a learning-theoretic perspective,
with a focus on the realizable/well-specified setting with general policy classes up to and including deep neural networks. Through a
  new analysis of behavior cloning with the \emph{logarithmic loss}, we show that  it is possible to achieve
  \emph{horizon-independent} sample complexity in offline IL whenever
  (i) the range of the cumulative payoffs is controlled, and (ii) an
  appropriate notion of supervised learning complexity for the policy
  class is controlled. Specializing our results to deterministic,
  stationary policies, we show that the gap between offline and online
  IL is smaller than previously thought: (i) it is possible to achieve
  \emph{linear} dependence on horizon in offline IL under dense
  rewards (matching what was previously only known to be achievable in
  online IL); and (ii) without further assumptions on the policy class, online IL cannot improve over offline IL with the logarithmic loss, even in benign MDPs.  We complement our theoretical results with experiments on standard RL tasks and autoregressive language generation to validate the practical relevance of our findings.

\end{abstract}

\section{Introduction}
\label{sec:intro}

Imitation learning (IL) is the problem of emulating an expert policy for
sequential decision making by learning from demonstrations. Compared to
reinforcement learning (RL), the learner in IL does not observe
reward-based feedback, and must imitate the expert's behavior
based on demonstrations alone; their objective is to achieve performance
close to that of the expert on an \emph{unobserved} reward
function.
\arxiv{

}
Imitation learning is motivated by the observation that in
many domains, demonstrating the desired behavior for a task (e.g.,
robotic grasping) is simple, while designing a reward function to
elicit the desired behavior can be challenging. IL is also often preferable to RL because it removes the need for exploration, leading to empirically reduced sample complexity and often much more stable training.  Indeed, the relative
ease of applying IL (over RL methods) has led to extensive
adoption, ranging from classical applications in autonomous
driving \citep{pomerleau1988alvinn} and helicopter flight
\citep{abbeel2004apprenticeship} to contemporary works that leverage
deep learning to achieve state-of-the-art performance for self-driving vehicles
\citep{bojarski2016end,bansal2018chauffeurnet,hussein2017imitation},
visuomotor control \citep{finn2017one,zhang2018deep}, navigation
\citep{hussein2018deep}, and game AI \citep{ibarz2018reward,vinyals2019grandmaster}.
Imitation learning also offers a
conceptual framework through which to study autoregressive language
modeling \citep{chang2023learning,block2023butterfly}, and a number of useful empirical
insights have arisen as a result of this perspective. However, a central challenge
limiting broader real-world deployment is to understand and
improve the reliability and stability properties of algorithms that support general-purpose (deep/neural) function approximation.\loose

In more detail, imitation learning algorithms can be loosely grouped
into \emph{offline} and \emph{online} approaches. Offline imitation
learning algorithms only require access to a dataset of logged
trajectories from the expert, making them broadly applicable. The most
widely used approach, \emph{behavior cloning}, reduces imitation
learning to a standard supervised learning problem in
  which the learner attempts to predict the expert's actions from
  observations given the collected trajectories.  The simplicity of
  this approach allows the learner to leverage the considerable
  machinery developed for supervised learning and readily incorporate
  complex function approximation with deep models
  \citep{beygelzimer2005error,ross2010efficient}.  On the other hand,
  BC seemingly ignores the problem of \emph{distribution shift},
  wherein small deviations from the expert policy early in rollout
  lead the learner off-distribution to regions where they are less able to accurately imitate. 
This apparent \emph{error
  amplification} phenomenon has been widely observed
empirically
\citep{ross2010efficient,laskey2017dart,block2023butterfly}, and motivates \emph{online} or \emph{interactive} approaches to
  imitation learning
  \citep{ross2010efficient,ross2011reduction,ross2014reinforcement,sun2017deeply},
  which avoid error amplification by interactively
  querying the expert and learning to correct mistakes on-policy.\loose
  
  In
  theory, online imitation learning enables sample complexity
  guarantees with improved (linear, as opposed to quadratic)
  dependence on horizon for favorable MDPs. Yet, while online imitation learning has found
  empirical success \citep{ross2013learning,kim2013learning,gupta2017cognitive,bansal2018chauffeurnet,choudhury2018data,kelly2019hg,barnes2023world,zhuang2023robot,lum2024dextrah},
online access to the expert can be costly or
  infeasible in many applications, and offline imitation learning
  remains a dominant empirical paradigm. Motivated by
  this disconnect between theory and practice,  we aim to understand
  to what extent the apparent gap between
  offline and online imitation learning is fundamental.  
We ask:
\begin{center}
  \emph{Is online imitation learning truly more sample-efficient than
  offline imitation learning, or can existing algorithms or analyses
  be improved?}
\end{center}

\begin{figure}
  \centering
  \begin{subfigure}[t]{0.4\textwidth}
    \includegraphics[width=\textwidth]{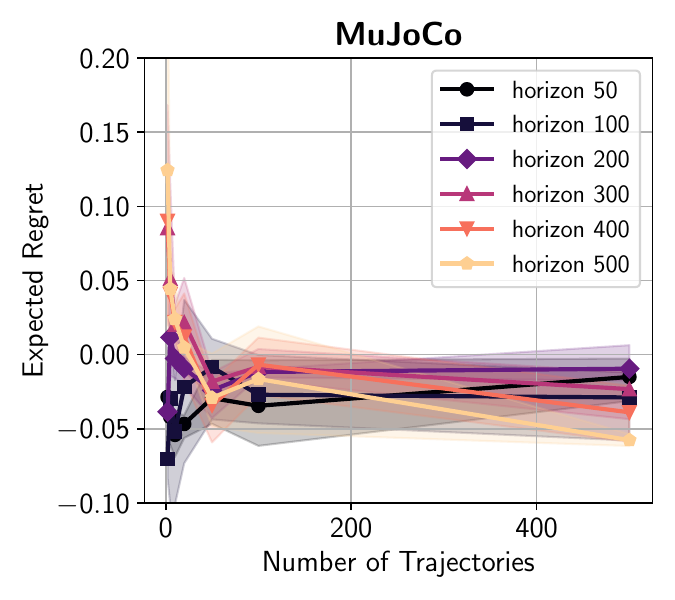}
    \caption{}
    \label{fig:mujoco}
  \end{subfigure}
\hspace{.5in}
  \begin{subfigure}[t]{0.4\textwidth}
    \includegraphics[width=\textwidth]{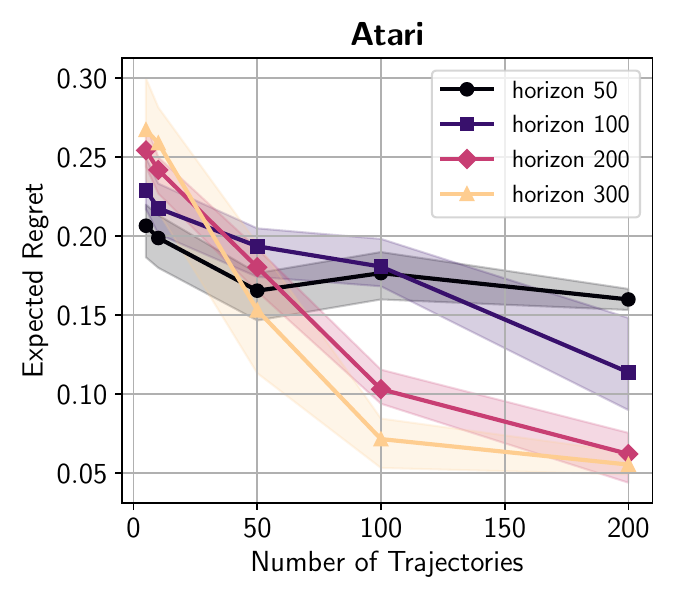}
    \caption{}
    \label{fig:atari}
  \end{subfigure}

  \caption{Suboptimality of a policy learned with log-loss behavior
    cloning (\loglossbc) as a function of the number of expert trajectories, for
    varying values of horizon $H$. In each environment, an imitator is trained
    according to \loglossbc and the regret with respect to
    the expert is reported, with reward normalized to be horizon-independent. \textbf{(a)} Continuous control with MuJoCo
    environment \textsf{Walker2d-v4}. \textbf{(b)} Discrete control with Atari
    environment \textsf{BeamRiderNoFrameskip-v4}. For both environments, we
    find that the regret is \emph{independent of horizon} (or in the
    case of Atari, slightly improving with horizon), as
    predicted by our theoretical results. Full experimental details are provided in
    \cref{sec:experiments}.\loose
  }
  \label{fig:intro}
\end{figure}

  \subsection{Background: Offline and Online Imitation Learning}
  \label{sec:background}
  
To motivate our results, we begin by formally introducing the offline and online imitation
learning frameworks, highlighting gaps in current sample complexity
guarantees concerning \emph{horizon dependence}.
We take a \emph{learning-theoretic} perspective, with a focus on general policy classes.\loose

  \paragraph{Markov decision processes}
  We study imitation learning in episodic Markov decision processes. Formally, a Markov decision process $M=(\cX, \cA, P, r,
H)$ consists of a (potentially large) state
space $\cX$, action space $\cA$, horizon $H$, probability transition function
$P=\crl{P_h}_{h=0}^{H}$, where $P_h:\cX\times\cA\to\Delta(\cX)$, and reward function
$r=\crl{r_h}_{h=1}^{H}$, where $r_h:\cX\times{}\cA\to\bbR$.
A (randomized) policy is a sequence of per-timestep functions
$\pi=\crl*{\pi_h:\cX\to\Delta(\cA)}_{h=1}^{H}$. The policy induces a
distribution over trajectories $(x_1,a_1, r_1),\ldots,(x_H,a_H,r_H)$
via the following process. The initial state is drawn via
$x_1\sim{}P_0(\emptyset)$,\footnote{We use the convention that
  $P_0(\emptyset)$ denotes the initial state distribution.} then for $h=1,\ldots,H$: $a_h\sim\pi(x_h)$,
$r_h=r_h(x_h,a_h)$, and $x_{h+1}\sim{}P_h(x_h,a_h)$. For notational
convenience, we use $x_{H+1}$ to denote a
  deterministic terminal state with zero reward. We let $\En\sups{\pi}\brk*{\cdot}$ and $\Pr\sups{\pi}\brk{\cdot}$
  denote expectation and probability law for
  $(x_1,a_1),\ldots,(x_H,a_H)$ under this process,
  respectively.\footnote{To simplify presentation, we assume that
    $\cX$ and $\cA$ are countable, but our results trivially extend
    to general spaces with an appropriate measure-theoretic treatment.}
    \loose

  The expected reward for policy $\pi$ is given by $J(\pi) \coloneqq
  \En^{\pi}\brk[\big]{\sum_{h=1}^{H}r_h}$, and the value
  functions for $\pi$ are given by %
\arxiv{\[\textstyle
    V_h^{\pi}(x)\coloneqq\En^{\pi}\brk*{\sum_{h'=h}^{H}r_{h'}\mid{}x_h=x},\mathand
    Q_h^{\pi}(x,a)\coloneqq\En^{\pi}\brk*{\sum_{h'=h}^{H}r_{h'}\mid{}x_h=x,
      a_h=a}.\]}

  \paragraph{Reward normalization}
  To study the role of horizon in imitation learning in a way that disentangles the effects
  of reward scaling from other factors, we assume that rewards are normalized such that
$\sum_{h=1}^{H}r_h\in\brk*{0,R}$ for a parameter $R>0$
\citep{jiang2018open,wang2020long,zhang2021reinforcement,jin2021bellman}. We
refer to the setting in which $r_h\in\brk{0,1}$ for all $h\in\brk{H}$,
which is the focus of most prior work
\citep{ross2010efficient,ross2011reduction,ross2014reinforcement,
  rajaraman2020toward,rajaraman2021value,rajaraman2021provably,swamy2022minimax}, 
as the \emph{dense
  reward setting}, which has $R\leq{}H$; we will frequently specialize
our results to this setting.\loose

\subsubsection{Offline Imitation Learning: Behavior Cloning}
\label{sec:bc_indicator}
Let $\pistar=\crl*{\pistar_h:\cX\to\Delta(\cA)}_{h=1}^{H}$ denote the
\emph{expert policy}. In the offline imitation learning setting, we
receive a dataset $\cD=\crl*{o\ind{i}}_{i=1}^{n}$ of (reward-free)
trajectories
$o\ind{i}=(x_1\ind{i},a_1\ind{i}),\ldots,(x_H\ind{i},a_H\ind{i})$
obtained by executing $\pistar$ in the underlying MDP $\Mstar$. Using
these trajectories, our goal is to learn a policy $\pihat$ such that
the rollout regret
$J(\pistar) -J(\pihat)$ to $\pistar$ is as small as possible. \emph{We emphasize that $\pistar$ is
an arbitrary policy, and is not assumed to be optimal.}\loose

\paragraph{Behavior cloning}
\emph{Behavior cloning}, which reduces
the imitation learning problem to supervised prediction, is the
dominant offline imitation learning paradigm. To describe
the algorithm in its simplest form, consider the case where
$\pistar: =\crl*{\pistar_h:\cX\to\cA}_{h=1}^{H}$ is
deterministic. For a user-specified policy class $\Pi\subset
\crl*{\pi_h:\cX\to\Delta(\cA)}_{h=1}^{H}$, the most basic version
of behavior cloning \citep{ross2010efficient} solves the
supervised classification problem
\arxiv{\begin{align}
  \label{eq:bc}
  \pihat=\argmin_{\pi\in\Pi}
  \underbrace{\sum_{i=1}^{n}\frac{1}{H}\sum_{h=1}^{H}\indic\crl*{\pi_h(x_h\ind{i})\neq{}a_h\ind{i}}}_{\rdef{}\Lhatbc(\pi)}.
\end{align}}
Naturally, other classification losses (e.g., square loss, logistic
loss, or log loss) may be used in place of the indicator
loss.\footnote{Behavior cloning for stochastic expert policies has
  received limited attention in theory \citep{rajaraman2020toward}, but the logarithmic loss is
  widely used in practice. One contribution of our work is to fill
  this lacuna.}
To provide sample complexity bounds for this algorithm, we make a
standard \emph{realizability assumption} (e.g., \citet{agarwal2019reinforcement,foster2023foundations}).
\begin{assumption}[Realizability]
  \label{ass:realizability}
  The policy class $\Pi$ contains the expert policy, i.e. $\pistar\in\Pi$.
\end{assumption}
This assumption asserts that $\Pi$ is expressive enough to represent
the expert policy. Depending on the application, $\Pi$ might be
parameterized by simple linear models, or by flexible models such as
convolutional neural networks or transformers. We primarily restrict our
attention to the realizable setting throughout the paper, as it is
meaningful and non-trivial, yet not fully understood. Our main results
extend to provide guarantees for the misspecified case, but
a thorough study of the role of misspecification is beyond the
scope of this work.\loose
To simplify
  presentation, we adopt a
  standard convention in RL theory and focus on finite classes with $\abs{\Pi}<\infty$
\citep{agarwal2019reinforcement,foster2023foundations}.

To proceed with analyzing \arxiv{\cref{eq:bc}}, a standard uniform convergence argument implies that if we define
$\Lbc(\pi)=\frac{1}{H}\sum_{h=1}^{H}\bbP^{\pistar}\brk*{\pi_h(x_h)\neq{}\pistar_h(x_h)}$,
then with probability at least $1-\delta$, behavior cloning has
\begin{align}
  \label{eq:lbc_generalization}
  \Lbc(\pihat) \approxleq{} \frac{\log(\abs{\Pi}\delta^{-1})}{n}.
\end{align}
Meanwhile, a standard error analysis for BC leads to the following
bound on rollout performance:
\begin{align}
    \label{eq:lbc_rollout}
  J(\pistar) - J(\pihat) \approxleq{} RH\cdot\Lbc(\pihat).
\end{align}
Combining these bounds, we conclude that
\begin{align}
  \label{eq:lbc_regret}
  J(\pistar) - J(\pihat) \approxleq{} RH \cdot{}\frac{\log(\abs{\Pi}\delta^{-1})}{n}.
\end{align}
For the \emph{dense reward setting} where $R=H$, this leads to
\emph{quadratic} dependence on horizon; that is, $\Omega(H^2)$
trajectories are required to achieve constant accuracy. Unfortunately,
both steps in this argument are tight in general:\loose
\begin{itemize}
\item The generalization bound $\Lbc(\pihat) \approxleq{}
  \frac{\log(\abs{\Pi}\delta^{-1})}{n}$ is tight even when
  $\abs{\Pi}=2$ (this is true not just for the indicator loss, but for
  other standard losses such as square loss, absolute loss, and hinge
  loss).
  Since the
    amount of information in a trajectory grows with $H$, one might
    hope a-priori that the generalization error would decrease
    with $H$; alas, this does not occur due to the
    \emph{dependence} between samples in each trajectory.\loose
\item \citet{ross2010efficient} show that the inequality $J(\pistar)
  - J(\pihat) \approxleq{} RH\cdot\Lbc(\pihat)$ is tight for MDPs
  with $3$ states; the quadratic scaling in $H$ this induces under dense rewards is often attributed to
  \emph{error amplification} or \emph{distribution shift} incurred by
  passing from error under
  the state distribution of $\pistar$ to the state distribution of $\pihat$.
\end{itemize}
Combining, these observations, %
it is natural to conclude that
offline imitation learning is fundamentally harder than supervised
classification, where linear dependence on horizon might be expected (e.g., \arxiv{if we
considered} $H$ independent prediction tasks).

\subsubsection{Online Imitation Learning and Recoverability}
The aforementioned limitations of behavior cloning
have motivated \emph{online} approaches to IL \citep{ross2010efficient,ross2011reduction,ross2014reinforcement,sun2017deeply}. In
the online framework, \iftoggle{neurips}{the learner can
    interactively choose policies to roll out and query the expert for the
    action at each state in the trajectory (see \cref{sec:dagger} for
    a formal description), representing a
    substantially stronger (and in some cases unrealistic) assumption on the learner's access both to the MDP and the expert than in the offline setting.}{learning proceeds in $n$ episodes in which the
learner can directly interact with the underlying MDP $\Mstar$ and
query the expert advice. Concretely, for each episode $i\in\brk{n}$,
the learner executes a policy $\pi\ind{i}=\crl*{\pi\ind{i}_h:\cX\to\Delta(\cA)}_{h=1}^{H}$ and
      receives a trajectory $o\ind{t} =
      (x\ind{i}_1,a\ind{i}_1,\astari_1),\ldots,(x\ind{i}_H,a\ind{i}_H,\astari_H)$,
      in which $a_h\ind{i}\sim{}\pi_h\ind{i}(x_h\ind{i})$,
      $\astari_h\sim\pistar(x\ind{t}_h)$, and
      $x_{h+1}\ind{i}\sim{}P_h(x_h\ind{i},a_h\ind{i})$; in other
      words, the trajectory induced by the learner's policy is
      annotated by the expert's action $\astar_h\sim{}\pistar_h(x_h)$
      at each state $x_h$ encountered.\footnote{All of the lower
        bounds in this paper continue to hold when the learner is
        allowed to select $a_h\ind{i}$ based on the sequence
        $(x\ind{i}_1,a\ind{i}_1,\astari_1),\ldots,(x\ind{i}_{h-1},a\ind{i}_{h-1},\astari_{h-1}),
        (x_h\ind{i}, \astari_h)$ at training time; we adopt the
        present formulation to keep notation compact.}
      After all $n$ episodes conclude, the learner produces a final
      policy $\pihat$ whose regret to $\pistar$ should be small.}
      Online imitation learning can avoid error amplification and achieve improved dependence on horizon for MDPs that
      satisfy a \emph{recoverability} condition
      \citep{ross2011reduction,rajaraman2021value}.\loose
      \begin{definition}[Recoverability parameter]
        \label{def:recoverability}
        The \emph{recoverability parameter} for an MDP $\Mstar$ and expert
        $\pistar$ is given by\footnote{For stochastic
          policies, we overload notation and write $f(\pi(x))$ as
          shorthand for $\En_{a\sim{}\pi(x)}\brk*{f(a)}$.}
        \arxiv{\[\murec =
            \max_{x\in\cX,a\in\cA,h\in\brk{H}}\crl*{(\Qstar_h(x,\pistar_h(x))-\Qstar_h(x,a))_{+}}\in\brk{0,R}.\]}
      \end{definition}
Under recoverability, the \dagger algorithm of
\citet{ross2011reduction} leverages online interaction by interactively
      querying the expert and learning to correct mistakes on-policy,
      leading to sample complexity\loose
      \begin{align}
        \label{eq:dagger_finite}
  J(\pistar) - J(\pihat) \approxleq{} \murec{}H\cdot\frac{\log\abs{\Pi}}{n}
\end{align}
for any finite class $\Pi$ and deterministic expert policy $\pistar$\arxiv{, when configured appropriately} (for
completeness, we include an analysis in \cref{sec:dagger}; see
\cref{prop:dagger,prop:dagger_finite}).
\arxiv{
  
}
For the dense reward setting where $R=H$, we can have $\mu=H$ in the
worst case, in which case \cref{eq:dagger_finite} matches
the quadratic horizon dependence of behavior cloning, but when
$\mu=\bigoh(1)$ (informally, this means it is possible to ``recover''
from a bad action that deviates from $\pistar$), the bound in
\cref{eq:dagger_finite} achieves linear dependence on horizon. Other
online IL algorithms such as \texttt{Forward}, \texttt{Smile} \citep{ross2010efficient}, and \aggrevate
\citep{ross2014reinforcement} achieve similar guarantees (we are not
aware of an approach that improves upon \cref{eq:dagger_finite} for
general finite classes).

The improvements of online IL notwithstanding, \cref{eq:lbc_regret} is known
  to be tight for BC, but this is an \emph{algorithm-dependent} (as
  opposed to information-theoretic) lower bound, and does not preclude the
  existence of more sample-efficient, purely offline algorithms. In this context,
  our central question can be restated as: \emph{Can offline imitation
    learning algorithms achieve sub-quadratic horizon dependence for
    general policy classes $\Pi$?} While prior work has investigated this question for tabular and linear policies
  \citep{rajaraman2020toward,rajaraman2021value,rajaraman2021provably},
  we approach the problem from a new (learning-theoretic) perspective by considering general policy classes.

\subsection{Contributions}

\newcommand{\No}{\xmark}
\newcommand{\Yes}{\cmark}
\begin{table}[tp]
	\renewcommand{\arraystretch}{1.6}
	\fontsize{9}{10}\selectfont
	\centering 
	\begin{tabular}{c|cc}
		\hline
          & \makecell{Parameter Sharing\\(\cref{cor:bc_deterministic_finite})}%
          & 
                                                                  \makecell{No
            Parameter
                                                                  Sharing
            ($\Pi=\Pi_1\times\cdots\Pi_H$)\\(e.g., \citep{ross2010efficient})}\\
		
		\hline
        Sparse Rewards & $\bigoh\left( \frac{R \log\left( |\Pi| \right)}{n} \right)$ & $\bigoh\left( \frac{H R \log\left( \max_h|\Pi_h| \right)}{n} \right)$ \\
        Dense Rewards ($R=H$) & $\bigoh\left( \frac{H \log\left( |\Pi|
                                \right)}{n} \right)$  & $\bigoh\left(
                                                        \frac{H^2
                                                        \log\left(
                                                        \max_h|\Pi_h|
                                                        \right)}{n}
                                                        \right)$\\
		\hline
	\end{tabular}
        	\caption{Summary of upper bounds for deterministic experts;
          lower bounds are more nuanced, and discussed in
          \cref{sec:deterministic}.  Each cell denotes the regret of a
          policy learned with log-loss behavior cloning (\loglossbc), which is optimal in
          each setting.  Here, $\Pi$ is the policy class, $R$ is the
          reward range, $H$ is the horizon, and $n$ is the number of
          expert trajectories. In the dense-reward setting, we set $R=H$.\loose
	}
	\label{tb:results_deterministic}
\end{table}

We present several new results that clarify the role of horizon in
offline and online imitation learning. %
\begin{enumerate}[leftmargin=*]
\item \textbf{Horizon-independent analysis of log-loss behavior cloning.}
    Through a new analysis of behavior cloning with the
  \emph{logarithmic loss} (\loglossbc), we show that \textbf{it is
    possible to achieve \emph{horizon-independent} sample complexity}
  \citep{jiang2018open,wang2020long,zhang2021reinforcement,zhang2022horizon}
  in offline
  imitation learning whenever (i) the range of the cumulative payoffs is
    normalized, and (ii) an appropriate notion of supervised learning
    complexity for the policy class is controlled. Our result is
    facilitated by a novel information-theoretic analysis which
    controls policy behavior at the trajectory level, supporting both
    deterministic and stochastic expert policies.\loose
  \item \textbf{Deterministic policies: Closing the gap between offline
      and online IL.}
    Specializing \loglossbc to \emph{deterministic stationary} policies (more generally, policies with
      parameter sharing) and cumulative rewards
      in the range $\brk*{0,H}$, we show that it is possible to
      achieve sample complexity with \emph{linear} dependence on
      horizon in offline IL in arbitrary MDPs, matching was was previously only known of \emph{online} IL. %
    We complement this result with a
      lower bound showing that, without further structural assumptions
      on the
    policy class (e.g., no parameter sharing 
\citep{rajaraman2020toward}), \textbf{online IL cannot
    improve over offline IL with \loglossbc},
  even for benign MDPs.  Our results are summarized in \cref{tb:results_deterministic}.
Nonetheless, as observed in prior work
    \citep{rajaraman2020toward}, online imitation
    learning can still be beneficial for \emph{non-stationary}
    policies.%
  \item \textbf{Stochastic policies: Tight understanding of optimal
      sample complexity.}
    For stochastic expert policies, our analysis of \loglossbc gives
    the first \emph{variance-dependent} sample complexity bounds for
    imitation learning with general policy classes, which we prove to be tight in a
    problem-dependent and minimax sense.  Using this result, we show
    that for stochastic stationary experts, (i) \emph{quadratic dependence on the horizon is necessary} when
    cumulative rewards lie in the range $\brk*{0,H}$, in contrast to the
    deterministic setting, but (ii) \loglossbc---through our
    variance-dependent analysis---can sidestep this hardness and achieve linear dependence on horizon under a
    recoverability-like condition. Finally, we show that---as in the
    deterministic case---online IL cannot improve over offline
    IL with \loglossbc without further assumptions on the policy class.  Our results are summarized in \cref{tb:results_stochastic}.

  \end{enumerate}

    \paragraph{Toward a learning-theoretic understanding of imitation learning}
    Our findings highlight the need to
  develop a fine-grained, problem-dependent understanding of
  algorithms and complexity for IL. Instabilities of
  offline IL \citep{muller2005off,de2019causal,block2023butterfly} and
  benefits of online IL
  \citep{ross2013learning,kim2013learning,gupta2017cognitive,bansal2018chauffeurnet,choudhury2018data,kelly2019hg,barnes2023world,zhuang2023robot,lum2024dextrah}
  likely arise in practice, but
  existing assumptions in theoretical research are often too coarse to give insights into the true nature of these
  phenomena, leading to an important gap between theory and practice.
  As a first step in this direction, we highlight several under-explored mechanisms
  through which online IL can lead to improved sample complexity,
  including representational benefits and exploration
  (\cref{sec:online}). We also complement our theoretical
    results with empirical demonstrations of the phenomena
    we describe (\cref{sec:experiments}).

  \paragraph{Experiments}  In \cref{sec:experiments}, we complement
  our theoretical results with an empirical demonstration of the
  horizon-independence of \loglossbc predicted by our theory (under
  parameter sharing and sparse rewards).  We consider tasks where the
  horizon $H$ can be naturally scaled up and down---for example, an
  agent walking for a set number of timesteps---and use an expert
  trained according to RL to generate expert trajectories, before
  training a policy using \loglossbc.  We consider both continuous
  action space (MuJoCo environment \textsf{Walker2d}) and discrete
  action space (Atari environment \textsf{Beamrider}) tasks to demonstrate the broad applicability of our theoretical results. As can be seen in
  \Cref{fig:intro}, the performance of the learned policy is
  independent or improving with horizon, consistent with our theoretical
  results. We also perform simplified experiments on autoregressive
  language generation with transformers. Here, we find that the performance of the imitator is
  largely independent of $H$, as predicted by our results, though the results are more nuanced.  \loose

  \iftoggle{neurips}{}{
  \subsection{Paper Organization}
  \cref{sec:main} presents the first of our
  main results, a horizon-independent sample complexity analysis for
  \loglossbc for deterministic experts, and discusses implications regarding the gap between
  offline and online IL as it concerns horizon. \cref{sec:stochastic}
  presents analogous results and implications for stochastic experts. \cref{sec:online}
  discusses mechanisms through which online IL can have benefits over
  offline IL, beyond horizon dependence, highlighting directions for
  future research.
  \cref{sec:experiments}
  presents an empirical validation, and we conclude with open problems
  and further directions for future research in
  \cref{sec:discussion}.
  Proofs and additional results are deferred to
  the appendix.\loose
}

  \paragraph{Notation}
\arxiv{For an integer $n\in\bbN$, we let $[n]$ denote the set
  $\{1,\dots,n\}$. For a set $\cX$, we let
        $\Delta(\cX)$ denote the set of all probability distributions
        over $\cX$. }We use
        $\indic_x\in\Delta(\cX)$ to denote the direct delta
        distribution, which places probability mass $1$ on $x$.
      We adopt standard
        big-oh notation, and write $f=\bigoht(g)$ to denote that $f =
        \bigoh(g\cdot{}\max\crl*{1,\mathrm{polylog}(g)})$ and
        $a\approxleq{}b$ as shorthand for $a=\bigoh(b)$.

\section{\mbox{Horizon-Independent Analysis of Log-Loss Behavior Cloning}}
  \label{sec:main}
  
This section presents the first of our main results, a
  horizon-independent sample complexity analysis of log-loss behavior
  cloning for the case of deterministic experts. Our second main
  result, which handles the case of stochastic experts, builds on our results
  here and is presented in \cref{sec:stochastic}.

\subsection{Log-Loss Behavior Cloning and Supervised Learning Guarantees}

The workhorse for all of our results (both for deterministic and
stochastic experts) is the following simple modification to behavior
cloning. For a class of (potentially stochastic) policies $\Pi$, we minimize the \emph{logarithmic loss}:\loose
\begin{align}
  \label{eq:log_loss_bc}
  \pihat=\argmin_{\pi\in\Pi}\sum_{i=1}^{n}\sum_{h=1}^{H}\log\prn*{\frac{1}{\pi_h(a_h\ind{i}\mid{}x_h\ind{i})}}.
\end{align}
This scheme is ubiquitous in practice
\citep{hussein2018deep,florence2022implicit}, and forms the basis for
autoregressive language modeling \citep{radford2019language}; we refer
to it as \loglossbc. We will show that this seemingly small change---moving from indicator loss to log loss---has significant benefits.\iftoggle{neurips}{}{\footnote{Beginning from \citet{foster2021efficient}, a recent line of work
  \citep{wang2023benefits,wang2024more,ayoub2024switching} shows that
  the logarithmic loss can be beneficial for deriving
  problem-dependent bounds for various reinforcement learning
  settings. We build upon the information-theoretic machinery of
  \citet{foster2021efficient,foster2021statistical}, but use it show
  that for imitation learning, the log-loss is beneficial
  even in a minimax sense.}}%
\arxiv{

}
Following the classical tradition of imitation learning
\citep{ross2010efficient,ross2011reduction,ross2014reinforcement}, our
analysis proceeds via \emph{reduction} to supervised learning. We
first show that \loglossbc satisfies an appropriate supervised
learning guarantee, then translate this into rollout performance. Our
starting point is to observe that \loglossbc, via
\cref{eq:log_loss_bc}, can be interpreted as performing maximum
likelihood estimation over the set $\crl*{\bbP^{\pi}}_{\pi\in\Pi}$ in
order to estimate the law $\bbP^{\pistar}$ over trajectories under $\pistar$
(see \cref{sec:bc_examples} for details). As a result, standard
guarantees for maximum likelihood estimation
\citep{Sara00,zhang2006from} imply convergence in distribution whenever
$\pistar\in\Pi$. To be precise, define the squared \emph{Hellinger distance} for probability measures $\bbP$ and $\bbQ$ %
\arxiv{with a common
dominating measure $\omega$ by
\begin{equation}
  \label{eq:hellinger}
  \Dhels{\bbP}{\bbQ}=\int\prn[\bigg]{\sqrt{\frac{\mathrm{d}\bbP}{\mathrm{d}\omega}}-\sqrt{\frac{\mathrm{d}\bbQ}{\mathrm{d}\omega}}}^{2}\mathrm{d}\omega.
\end{equation}
} Then for any finite policy class $\Pi$, we have the following
guarantee.\footnote{While unfamiliar readers might expect a bound on
  KL divergence, Hellinger distance turns out to be more natural due
  to a connection to the MGF of the log-loss
  \citep{Sara00,zhang2006from}. This facilitates scale-free
  generalization guarantees in spite of the potential unboundedness of
  the log-loss.
  .}
\loose
    \begin{proposition}[Supervised learning guarantee for \loglossbc (special case of \cref{thm:bc_generalization})]
      \label{prop:mle_finite}
      For any (potentially stochastic) expert $\pistar\in\Pi$, the
      \loglossbc algorithm \arxiv{in \cref{eq:log_loss_bc} }ensures that with
      probability at least $1-\delta$,\loose %
\arxiv{\begin{align}
  \label{eq:mle_hellinger}
  \DhelsX{\big}{\bbP^{\pihat}}{\bbP^{\pistar}} \leq
  2\frac{\log(\abs{\Pi}\delta^{-1})}{n}.
\end{align}}
\end{proposition}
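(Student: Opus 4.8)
The plan is to recognize \loglossbc as maximum likelihood estimation over the family of trajectory laws $\crl*{\bbP^{\pi}}_{\pi\in\Pi}$ and then run the classical finite-class MLE argument (in the style of \citet{Sara00,zhang2006from}) to obtain Hellinger convergence at rate $\log\abs{\Pi}/n$. First I would observe that $\bbP^{\pi}(\traj)=P_0(x_1)\prod_{h=1}^{H}\pi_h(a_h\mid{}x_h)\,P_h(x_{h+1}\mid{}x_h,a_h)$, so the transition factors do not depend on $\pi$, and minimizing $\sum_{i=1}^{n}\sum_{h=1}^{H}\log(1/\pi_h(a_h\ind{i}\mid{}x_h\ind{i}))$ over $\Pi$ is exactly maximizing the likelihood $\prod_{i=1}^{n}\bbP^{\pi}(\traj\ind{i})$; that is, $\pihat$ is the MLE of $\bbP^{\pistar}$ within $\crl*{\bbP^{\pi}}_{\pi\in\Pi}$ from the \iid sample $\traj\ind{1},\dots,\traj\ind{n}\sim\bbP^{\pistar}$. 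Using \cref{ass:realizability} (so $\pistar\in\Pi$), the optimality of $\pihat$ gives $\prod_{i=1}^{n}\bbP^{\pihat}(\traj\ind{i})\ge\prod_{i=1}^{n}\bbP^{\pistar}(\traj\ind{i})$; equivalently, writing $\Lambda(\pi)\ldef\sum_{i=1}^{n}\log\frac{\bbP^{\pi}(\traj\ind{i})}{\bbP^{\pistar}(\traj\ind{i})}$ for the empirical log-likelihood ratio, $\Lambda(\pihat)\ge 0$.

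The second ingredient is an exponential tail bound that holds uniformly over $\Pi$. For any \emph{fixed} $\pi$, independence of the $\traj\ind{i}$ together with the Hellinger affinity identity $\En_{\traj\sim\bbP^{\pistar}}\brk[\big]{\sqrt{\bbP^{\pi}(\traj)/\bbP^{\pistar}(\traj)}}=1-\tfrac12\DhelsX{}{\bbP^{\pi}}{\bbP^{\pistar}}$ yields
\begin{align*}
  \En\brk*{\exp\prn*{\tfrac12\Lambda(\pi)}}
  =\prn*{1-\tfrac12\DhelsX{}{\bbP^{\pi}}{\bbP^{\pistar}}}^{n}
  \le\exp\prn*{-\tfrac n2\DhelsX{}{\bbP^{\pi}}{\bbP^{\pistar}}},
\end{align*}
so that $\En\brk*{\exp\prn*{\tfrac12\Lambda(\pi)+\tfrac n2\DhelsX{}{\bbP^{\pi}}{\bbP^{\pistar}}}}\le 1$. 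Markov's inequality then gives $\Pr\brk*{\tfrac12\Lambda(\pi)+\tfrac n2\DhelsX{}{\bbP^{\pi}}{\bbP^{\pistar}}>\log(\abs{\Pi}\delta^{-1})}\le\delta/\abs{\Pi}$, and a union bound over the finite class shows that with probability at least $1-\delta$, \emph{every} $\pi\in\Pi$ satisfies $\tfrac12\Lambda(\pi)+\tfrac n2\DhelsX{}{\bbP^{\pi}}{\bbP^{\pistar}}\le\log(\abs{\Pi}\delta^{-1})$. The union bound is precisely what lets us instantiate this inequality at the \emph{data-dependent} choice $\pi=\pihat$.

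Putting the pieces together, on the event above I take $\pi=\pihat$ and use $\Lambda(\pihat)\ge 0$ to conclude $\tfrac n2\DhelsX{\big}{\bbP^{\pihat}}{\bbP^{\pistar}}\le\log(\abs{\Pi}\delta^{-1})$, which is exactly \eqref{eq:mle_hellinger}. I do not expect a genuine obstacle here—this is the textbook finite-class MLE bound—but the one point that must be handled with care is that $\Lambda$ is potentially heavy-tailed (the log-loss is unbounded when the expert's action probabilities $\min_{x,h}\pistar_h(a\mid x)$ are small), so one cannot control $\Dklshort$ directly; passing through the Hellinger affinity via the square-root change of variables is what tames this and produces a \emph{scale-free} bound with no hidden dependence on $H$ or on the minimum action probability. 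Finally, the same scheme upgrades to the general (possibly infinite) policy classes of \cref{thm:bc_generalization} by replacing the union bound over $\Pi$ with a bracketing/covering argument, which is the source of the metric-entropy term generalizing $\log\abs{\Pi}$.
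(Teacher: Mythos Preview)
Your proposal is correct and follows essentially the same route as the paper: the paper also reduces \loglossbc to MLE over $\crl*{\bbP^{\pi}}_{\pi\in\Pi}$ and then applies the exponential-inequality/Hellinger-affinity argument (via \cref{lem:martingale_chernoff} and the identity $-\log\En_{z\sim\gstar}\brk*{e^{\frac12\log(g(z)/\gstar(z))}}\ge\tfrac12\Dhels{g}{\gstar}$) together with a union bound over the finite class, which is exactly your Markov-plus-union-bound step. The paper presents this in slightly greater generality (\cref{thm:mle}, handling covers and misspecification), but specialized to the finite realizable case it yields precisely your computation and the constant $2$.
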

That is, by performing \loglossbc, we are implicitly estimating the
law $\bbP^{\pistar}$; note that this result holds even if $\pistar$ is
stochastic, as long as $\pistar\in\Pi$. We will focus on finite, realizable policy
classes throughout this section to simplify presentation as much as
possible, but guarantees for infinite classes under misspecification
are given in \cref{sec:bc_examples}.

    \subsection{Horizon-Independent Analysis of \loglossbc for Deterministic
      Experts}
    \label{sec:deterministic}

    We first consider the case where the expert $\pistar$ is
    deterministic. Our main result is the following theorem, which
    translates the supervised learning error
    $\DhelsX{\big}{\bbP^{\pihat}}{\bbP^{\pistar}}$ into a bound on
    rollout performance in a horizon-independent fashion.
\begin{theorem}[Horizon-independent regret decomposition
  (deterministic case)]
  \label{thm:bc_deterministic}
  For any deterministic policy $\pistar$ and potentially stochastic
  policy $\pihat$,
  \begin{align}
    \label{eq:bc_deterministic}
      J(\pistar)-J(\pihat)
    \leq
    4R\cdot{}\DhelsX{\big}{\bbP^{\pihat}}{\bbP^{\pistar}}.
  \end{align}
\end{theorem}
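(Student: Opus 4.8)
The plan is to exploit the fact that, because $\pistar$ is deterministic, the trajectory law $\bbP^{\pistar}$ is supported on the set of trajectories that agree with $\pistar$ at every timestep, and that on this set the learner's trajectory law is \emph{pointwise dominated} by the expert's. Write $g(\traj)\ldef\sum_{h=1}^{H}r_h(x_h,a_h)\in\brk{0,R}$ for the normalized cumulative reward, so that $J(\pistar)-J(\pihat)=\En^{\pistar}\brk*{g}-\En^{\pihat}\brk*{g}$, and let $\cE=\crl*{\traj:a_h=\pistar_h(x_h)\text{ for all }h\in\brk{H}}$. Two observations drive the proof: (i) since $\pistar$ is deterministic, $\bbP^{\pistar}\brk{\cE}=1$; and (ii) writing the trajectory densities with respect to a common dominating measure $\omega$, for every $\traj\in\cE$ the transition factors cancel and $\frac{\mathrm{d}\bbP^{\pihat}}{\mathrm{d}\bbP^{\pistar}}(\traj)=\prod_{h=1}^{H}\pihat_h(\pistar_h(x_h)\mid x_h)\le 1$.

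Given these, I first discard the off-event mass. Since $g\ge 0$ we have $\En^{\pihat}\brk*{g}\ge\En^{\pihat}\brk*{g\indic\crl{\cE}}$, while (i) gives $\En^{\pistar}\brk*{g}=\En^{\pistar}\brk*{g\indic\crl{\cE}}$; hence $J(\pistar)-J(\pihat)\le\En^{\pistar}\brk*{g\indic\crl{\cE}}-\En^{\pihat}\brk*{g\indic\crl{\cE}}=\int_{\cE}g\,(\mathrm{d}\bbP^{\pistar}-\mathrm{d}\bbP^{\pihat})$. By (ii) the signed measure $\mathrm{d}\bbP^{\pistar}-\mathrm{d}\bbP^{\pihat}$ is nonnegative on $\cE$, and $0\le g\le R$, so this is at most $R\,(\bbP^{\pistar}\brk{\cE}-\bbP^{\pihat}\brk{\cE})=R\cdot\bbP^{\pihat}\brk{\cE^{c}}$.

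It then remains to bound $\bbP^{\pihat}\brk{\cE^{c}}$ by $\Dhels{\bbP^{\pihat}}{\bbP^{\pistar}}$. This is the elementary fact that if $\bbP\brk{A}=0$ then $\bbQ\brk{A}\le\Dhels{\bbP}{\bbQ}$: indeed $\Dhels{\bbP}{\bbQ}\ge\int_{A}\prn*{\sqrt{\mathrm{d}\bbP/\mathrm{d}\omega}-\sqrt{\mathrm{d}\bbQ/\mathrm{d}\omega}}^{2}\mathrm{d}\omega=\int_{A}\mathrm{d}\bbQ=\bbQ\brk{A}$, since $\mathrm{d}\bbP\equiv 0$ on $A$. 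Applying this with $\bbP=\bbP^{\pistar}$, $\bbQ=\bbP^{\pihat}$, $A=\cE^{c}$ and combining the displays yields $J(\pistar)-J(\pihat)\le R\cdot\Dhels{\bbP^{\pihat}}{\bbP^{\pistar}}$, which already implies \cref{eq:bc_deterministic} (the constant $4$ is not optimized here and leaves room for any looseness in the steps above).

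The crux — and the step most easily overlooked — is observation (ii): the pointwise domination $\mathrm{d}\bbP^{\pihat}\le\mathrm{d}\bbP^{\pistar}$ on $\cE$ is exactly what upgrades a naive total-variation argument (which would only give $O(R\,D_{\mathsf H})$, with an unsquared Hellinger distance) to a bound in terms of the \emph{squared} Hellinger distance, and it is genuinely special to deterministic experts; the stochastic-expert case in \cref{sec:stochastic} will need a variance-based substitute for it. Beyond that, the only care required is routine measure-theoretic bookkeeping for the densities, which is immediate for the countable $\cX,\cA$ assumed in the excerpt.
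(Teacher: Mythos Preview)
Your proof is correct and in fact yields the sharper bound $J(\pistar)-J(\pihat)\le R\cdot\Dhels{\bbP^{\pihat}}{\bbP^{\pistar}}$, without the factor~$4$. Each step checks out: the domination $\mathrm{d}\bbP^{\pihat}\le\mathrm{d}\bbP^{\pistar}$ on $\cE$ is exactly right (the transition factors cancel and the remaining product of $\pihat_h$-probabilities is at most~$1$), and the fact that $\bbP\brk{A}=0$ implies $\bbQ\brk{A}\le\Dhels{\bbP}{\bbQ}$ is a clean elementary observation.

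Your route differs from the paper's. The paper introduces the trajectory-level distance $\Drho{\pi}{\pi'}=\En^{\pi}\En_{a'_{1:H}\sim\pi'(x_{1:H})}\brk*{\indic\crl{\exists h:a_h\neq a'_h}}$ and proves three separate lemmas: (i) $J(\pi)-J(\pi')\le R\cdot\Drho{\pi}{\pi'}$ for \emph{arbitrary} stochastic $\pi,\pi'$, (ii) the symmetry $\Drho{\pi}{\pi'}=\Drho{\pi'}{\pi}$, and (iii) a two-sided equivalence $\frac{1}{4}\Drho{\pistar}{\pi}\le\Dhels{\bbP^{\pi}}{\bbP^{\pistar}}\le 2\Drho{\pistar}{\pi}$ via triangular discrimination. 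Your argument bypasses the symmetry step entirely---the pointwise domination on $\cE$ lets you work directly under $\bbP^{\pihat}$ and land on $\bbP^{\pihat}\brk{\cE^{c}}$, which equals $\Drho{\pihat}{\pistar}$---and replaces the triangular-discrimination computation with the one-line fact about null sets. What the paper's approach buys is modularity: the regret-to-$\rho$ lemma holds without any determinism assumption, and the two-sided $\rho\asymp\Dhelshort$ equivalence is reused elsewhere (e.g., for the converse results in the appendix and for motivating the indicator-loss analysis in \cref{sec:indicator}). Your approach is more economical for this theorem specifically and gives a better constant.
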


This result shows that horizon-independent bounds on rollout
performance are possible whenever (i) rewards are appropriately
normalized, and (ii) the supervised learning error
$\DhelsX{\big}{\bbP^{\pihat}}{\bbP^{\pistar}}$ is appropriately
controlled. It is proven using novel trajectory-level control over
deviations between $\pihat$ and $\pistar$; we will elaborate upon this
in the sequel. We emphasize that this result would be trivial
  if squared Hellinger distance were replaced by total variation
  distance in \eqref{eq:bc_deterministic}; that the bound scales with \emph{squared} Hellinger
  distance is crucial for obtaining fast $1/n$-type rates and linear
  horizon dependence.  We further remark that this reduction is not
  specific to \loglossbc, and can be applied to any IL algorithm for
  which we can bound the Hellinger distance. Combining \cref{thm:bc_deterministic} with \cref{prop:mle_finite},
    we obtain the following guarantee for finite policy classes.
    \begin{corollary}[Regret of \loglossbc \arxiv{(deterministic case)}]
      \label{cor:bc_deterministic_finite}
            For any deterministic expert $\pistar\in\Pi$, the
      \loglossbc algorithm in \cref{eq:log_loss_bc} ensures that with
      probability at least $1-\delta$, \iftoggle{neurips}{$J(\pistar)-J(\pihat) \leq{} 8R\cdot{}\frac{\log(2\abs{\Pi}\delta^{-1})}{n}$.\loose}{
      \begin{align}
        \label{eq:bc_df}
        J(\pistar)-J(\pihat) \leq{} 8R\cdot{}\frac{\log(2\abs{\Pi}\delta^{-1})}{n}.
      \end{align}
      }
    \end{corollary}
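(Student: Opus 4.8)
The plan is to obtain \cref{cor:bc_deterministic_finite} by simply chaining the two ingredients already established: the horizon-independent regret decomposition of \cref{thm:bc_deterministic}, which converts any bound on $\DhelsX{\big}{\bbP^{\pihat}}{\bbP^{\pistar}}$ into a bound on $J(\pistar)-J(\pihat)$ at the price of a factor $4R$, together with the maximum-likelihood guarantee of \cref{prop:mle_finite} (equivalently, the relevant special case of \cref{thm:bc_generalization}), which controls precisely this squared Hellinger distance for the estimator $\pihat$ returned by \cref{eq:log_loss_bc}.

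Concretely, I would first observe that since $\pistar$ is deterministic and realizable ($\pistar\in\Pi$ by \cref{ass:realizability}), \cref{prop:mle_finite} applies directly to $\pihat$: with probability at least $1-\delta$ over the draw of the dataset $\cD=\crl*{o\ind{i}}_{i=1}^{n}$,
\[
\DhelsX{\big}{\bbP^{\pihat}}{\bbP^{\pistar}} \;\le\; 2\,\frac{\log(\abs{\Pi}\delta^{-1})}{n}.
\]
On this $1-\delta$ event I would then invoke \cref{thm:bc_deterministic}, which is a deterministic inequality holding for any fixed pair $(\pistar,\pihat)$ and hence needs no further probabilistic control, to conclude
\[
J(\pistar)-J(\pihat) \;\le\; 4R\cdot \DhelsX{\big}{\bbP^{\pihat}}{\bbP^{\pistar}} \;\le\; 8R\cdot\frac{\log(\abs{\Pi}\delta^{-1})}{n} \;\le\; 8R\cdot\frac{\log(2\abs{\Pi}\delta^{-1})}{n},
\]
which is the stated bound. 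The slightly looser $\log(2\abs{\Pi}\delta^{-1})$ form is just to match the constant coming out of the general statement \cref{thm:bc_generalization}; the extra factor of $2$ can be traced to the standard bookkeeping in the log-loss MLE analysis (e.g., allowing for approximate or improper empirical minimizers) and is otherwise inessential.

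I expect no genuine obstacle specific to the corollary: all of the substantive work lives in its two inputs—the trajectory-level Hellinger control underlying \cref{thm:bc_deterministic} and the scale-free MLE analysis underlying \cref{prop:mle_finite}—and the corollary merely composes them. The only point requiring a word of care is that $\pihat$ be well defined and measurable: since $\abs{\Pi}<\infty$, the $\argmin$ in \cref{eq:log_loss_bc} is attained, and fixing any tie-breaking rule makes $\pihat$ a measurable function of $\cD$, so that both \cref{prop:mle_finite} and \cref{thm:bc_deterministic} may be applied to it without further comment.
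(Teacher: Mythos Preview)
Your proposal is correct and matches the paper's approach exactly: the corollary is obtained by combining \cref{thm:bc_deterministic} with \cref{prop:mle_finite}, and the paper does not give any additional argument beyond stating this combination.
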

To the best of our knowledge, this is the tightest available sample
complexity guarantee for offline imitation learning with general
policy classes. This bound improves upon the guarantee for indicator-loss behavior
cloning in \cref{eq:lbc_regret} by an $O(H)$ factor, and improves upon
the guarantee for \dagger in \cref{eq:dagger_finite} (replacing $H$
with $R\leq{}H$ under $r_h\in\brk{0,1}$) in the typical regime where
$\mu=\bigom(1)$.

\subsection{Interpreting the Sample Complexity of \loglossbc}

To understand the behavior of the bound for \loglossbc in
\cref{cor:bc_deterministic_finite} in more detail, we
consider two special cases (summarized in \cref{tb:results_deterministic}).
\paragraph{Stationary policies and parameter sharing}
If $\log\abs{\Pi}=\bigoh(1)$, the bound in \iftoggle{neurips}{\cref{cor:bc_deterministic_finite}}{\cref{eq:bc_df}} is
  \emph{independent of horizon} in the case of sparse rewards
  ($R=\bigoh(1)$), and \emph{linear in horizon} in the case of dense
  rewards ($R=\bigoh(H)$). In other words, our work establishes for
  the first time that:
  \begin{center}
    \emph{$O(H)$ sample complexity can be achieved in offline IL under
      dense rewards for general classes $\Pi$,}
  \end{center}
  as long as
  $\log\abs{\Pi}$ is appropriately controlled and
    realizability holds.
  This runs somewhat
  counter to intuition expressed in prior work
  \citep{ross2010efficient,ross2011reduction,ross2014reinforcement,rajaraman2020toward,rajaraman2021value,rajaraman2021provably,swamy2022minimax},
  but we will show in the sequel that there is no contradiction.

  Generally speaking, we expect to have
  $\log\abs{\Pi}=\bigoh(1)$ if
  $\Pi$ consists of stationary policies or more broadly, policies
  with parameter sharing across steps $h\in\brk{H}$ (as is the case in transformers used for autoregressive text generation). As an example, for
  a tabular (finite state/action) MDP, if $\Pi$ consists of all
  stationary policies, we have
  $\log\abs{\Pi}=\abs{\cX}\log\abs{\cA}$, so \iftoggle{neurips}{\cref{cor:bc_deterministic_finite}}{\cref{eq:bc_df}} gives $
  J(\pistar)-J(\pihat) \approxleq{}
  \frac{R\abs{\cX}\log(\abs{\cA}\delta^{-1})}{n}$; that is,
  stationary policies can be learned with horizon-independent
  samples complexity under sparse rewards and linear dependence on
  horizon under dense rewards. \iftoggle{neurips}{Similar
      behavior holds for non-stationary policies with parameter
      sharing (e.g., log-linear policies of the
  form $\pi_h(a\mid{}x)\propto\exp(\tri*{\phi_h(x,a),\theta})$); see
  \cref{sec:bc_examples} for details.}{

  Similar
  behavior holds for non-stationary policies with parameter
  sharing. For example, we show (\cref{sec:bc_examples}) that for linear policy classes of the
  form $\pi_h(a\mid{}x)\propto\exp(\tri*{\phi_h(x,a),\theta})$ for a
  feature map $\phi_h(x,a)\in\bbR^{d}$, one can take
  $\DhelsX{\big}{\bbP^{\pihat}}{\bbP^{\pistar}}=\bigoht(\frac{d}{n})$,
  so that \cref{thm:bc_deterministic} gives $J(\pistar)-J(\pihat) \leq{}
  \bigoht(\frac{Rd}{n})$.
  }

  \arxiv{\paragraph{Non-stationary policies or no parameter sharing}}
For non-stationary policies or policies with no parameter sharing
across steps $h$ (e.g., product classes where
$\Pi=\Pi_1\times\Pi_2\cdots\times\Pi_H$), we expect
$\log\abs{\Pi}=\bigoh(H)$ (more generally,
$\DhelsX{\big}{\bbP^{\pihat}}{\bbP^{\pistar}}=\bigoht(H/n)$). For
example, in a tabular MDP, if $\Pi$ consists of all
non-stationary policies, we have
  $\log\abs{\Pi}=H\abs{\cX}\log\abs{\cA}$. In this case,
  \iftoggle{neurips}{\cref{cor:bc_deterministic_finite}}{\cref{eq:bc_df}} gives linear dependence on horizon for
  sparse rewards ($J(\pistar)-J(\pihat) \approxleq{}
  \frac{RH\abs{\cX}\log(\abs{\cA}\delta^{-1})}{n}$) and
  quadratic dependence on horizon for dense rewards ($J(\pistar)-J(\pihat) \approxleq{}
  \frac{H^2\abs{\cX}\log(\abs{\cA}\delta^{-1})}{n}$). The latter bound
  is known to be optimal \citep{rajaraman2020toward} for offline IL.\loose

    \subsection{Optimality and Consequences for Online versus
      Offline Imitation Learning}
We now investigate the optimality of \cref{thm:bc_deterministic} and
discuss implications for online versus offline imitation learning, as
well as connections to prior work. Our main result here shows that in
the dense-reward regime where $r_h\in\brk{0,1}$ and $R=H$,
\cref{thm:bc_deterministic} cannot be improved when
$\log\abs{\Pi}=\bigoh(1)$---even with online access, recoverability,
and known dynamics.
    \begin{theorem}[Lower bound for deterministic experts]
      \label{prop:lb_deterministic}
        For any $n\in\bbN$ and $H\in\bbN$, there exists a (reward-free)
        MDP $\Mstar$ with $\abs{\cX}=\abs{\cA}=2$, a class of reward functions $\cR$ with
        $\abs*{\cR}=2$, and a class of
  deterministic policies $\Pi$ with $\abs{\Pi}=2$ with the following
  property. For any (online or offline) imitation learning algorithm, there exists a deterministic reward
  function $r=\crl*{r_h}_{h=1}^{H}$ with $r_h\in\brk{0,1}$ (in
  particular, $R\leq{}H$) and
  (optimal) expert policy $\pistar\in\Pi$
  with $\murec=1$ such that\arxiv{ the expected suboptimality is lower bounded as}
  \arxiv{\begin{align}
    \En\brk*{J(\pistar)-J(\pihat)}
    \geq{} c\cdot\frac{H}{n}
         \end{align}}
  for an absolute constant $c>0$.
  In addition, the dynamics, rewards, and expert policies are 
  stationary.\loose
\end{theorem}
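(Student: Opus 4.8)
The plan is to build a single, tiny MDP in which the only ``informative'' state is reached --- by the expert, or by any online learner --- with probability only $\Theta(1/n)$ over the course of $n$ episodes, while guessing the expert's action at that state wrong costs $\Theta(H/n)$ in expectation. Take $\cX=\{A,B\}$, $\cA=\{0,1\}$, deterministic initial state $x_1=A$, and a stationary kernel in which $B$ is absorbing under \emph{both} actions, while from $A$ either action moves to $B$ with probability $p$ and stays at $A$ with probability $1-p$; thus the action is dynamically irrelevant everywhere, which is exactly what will prevent an online learner from steering toward $B$. Let $\cR=\{r^0,r^1\}$ and $\Pi=\{\pi^0,\pi^1\}$, where both policies play action $0$ at $A$, $\pi^b(B)=b$, and $r^b$ gives reward $1$ at $A$ (for both actions), reward $1$ for playing action $b$ at $B$, and reward $0$ for playing action $1-b$ at $B$. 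All rewards are stationary and lie in $[0,1]$, so $R=H$. For this family $\pi^b$ attains value $H$ and is optimal for $r^b$, $Q^{\pi^b}_h(A,\cdot)$ is action-independent, and since $B$ is absorbing $Q^{\pi^b}_h(B,b)-Q^{\pi^b}_h(B,1-b)=r^b(B,b)-r^b(B,1-b)=1$ for every $h$; hence $\murec=1$ exactly.

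Next I would place a uniform prior on $b\in\{0,1\}$ and run a two-point (Le Cam-style) argument, the point being that the only $b$-dependence anywhere is the expert's action at $B$. Because both actions at $A$ have identical dynamics, in \emph{any} of the $n$ episodes --- whether the learner passively observes an expert rollout (offline) or adaptively chooses its own actions and receives expert annotations (online) --- state $B$ is reached with probability at most $Hp$, irrespective of the strategy, even conditionally on the past. Let $E$ be the event that $B$ is never visited across all $n$ episodes. Then $\Pr[E]\ge(1-Hp)^n$, this probability does not depend on $b$ (the \emph{state}-visitation law of $\pi^b$ is the same for both $b$), and conditioned on $E$ the whole transcript --- states, learner actions, and expert annotations, all evaluated only at $A$ where the expert deterministically plays $0$ --- has the same law under $b=0$ and $b=1$. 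Consequently the learner's output $\pihat$ is conditionally independent of $b$ given $E$.

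I would then evaluate the regret in closed form. Let $\rho_h:=\Pr[\text{the rollout is at }B\text{ at step }h]=1-(1-p)^{h-1}$, which is independent of the policy executed (since $B$ is absorbing and reached action-independently), and let $\beta_h:=\pihat_h(1\mid B)$. A direct computation of $J$ (under $r^b$) gives $J(\pistar)-J(\pihat)=\sum_h\rho_h\beta_h$ when $b=0$ and $\sum_h\rho_h(1-\beta_h)$ when $b=1$, so averaging over $b$ yields exactly $\tfrac12\sum_h\rho_h$ whenever $\{\beta_h\}$ is $b$-independent --- in particular conditioned on $E$, and for arbitrary (possibly stochastic, possibly out-of-class) $\pihat$. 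Using $\rho_h\ge1-e^{-p(h-1)}\ge\tfrac12 p(h-1)$ for $p(h-1)\le1$ one gets $\sum_{h=1}^H\rho_h\ge\tfrac18 pH^2$ when $pH\le1$ and $H\ge2$; choosing $p=\tfrac1{2nH}$ then gives $\sum_h\rho_h\ge\tfrac{H}{16n}$ and $\Pr[E]\ge(1-\tfrac1{2n})^n\ge\tfrac12$. Since $\pistar$ is optimal, $J(\pistar)-J(\pihat)\ge0$ pointwise, so $\En[J(\pistar)-J(\pihat)]\ge\Pr[E]\cdot\tfrac12\sum_h\rho_h\ge\tfrac{H}{64n}$, and picking the worst $b$ for each algorithm witnesses the claim with $c=1/64$.

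The step that needs real care is making the bound robust to \emph{online} access: a careless ``rare informative state'' construction lets the online learner walk to $B$ and query the expert there, destroying the bound. Forcing $B$ to be reachable only through an action-independent transition of rate $p=\Theta(1/(nH))$, and making it absorbing (so one visit reveals $b$ but there is nothing expensive about being at $B$ beyond the unit reward gap), is precisely what ties the online and offline barriers together. The second thing to verify carefully is that this coexists with $\murec=1$: even though a single deviation costs only $1$, the wrong policy incurs that unit cost at each of the $\sum_h\rho_h=\Theta(pH^2)=\Theta(H/n)$ expected steps it spends at $B$ during a rollout, which is why the per-rollout regret is $\Theta(H/n)$ rather than $O(1)$ --- and the closed-form computation above is what confirms this (and that randomizing or leaving $\Pi$ buys nothing).
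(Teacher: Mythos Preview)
Your proof is correct and reaches the same conclusion, but via a different construction and a more elementary argument than the paper's. The paper places the informative state at the \emph{initial distribution}: $x_1=\yfrak$ with probability $\Delta\propto 1/n$ and $x_1=\xfrak$ otherwise, with both states self-looping thereafter; the two experts differ only at $\yfrak$, and the per-instance regret is exactly $\Delta\cdot\sum_h\Pr[\pihat_h(\yfrak)\neq\pistar_h(\yfrak)]$. The paper then bounds the total variation between the two induced data laws via a Hellinger chain-rule lemma, obtaining $D_{\mathsf{TV}}^2\le O(n\Delta)$, and finishes with the standard Le Cam inequality. In contrast, you arrange the informative state $B$ to be reached via action-independent transitions at rate $p\propto 1/(nH)$ and argue directly by conditioning on the event that $B$ is never visited in $n$ episodes; this sidesteps the information-theoretic machinery entirely and makes robustness to online access completely transparent (the state trajectory is literally algorithm-independent). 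Your route is more elementary, while the paper's construction is slightly cleaner---a single coin flip at the start rather than a running geometric clock---and handles $H=1$ out of the box, whereas yours needs $H\ge 2$ for $\sum_h\rho_h>0$; this is a trivial edge case you could patch by also randomizing the initial state.
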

Together, \cref{thm:bc_deterministic,prop:lb_deterministic} show that
without further assumptions on $\Pi$, \emph{online imitation learning cannot
  improve upon offline imitation learning} in the realizable setting. That is, even if
recoverability is satisfied, there is no online imitation learning
algorithm that improves upon \cref{thm:bc_deterministic} uniformly for
all policy classes. See \cref{sec:lb_weaker} for further lower bounds.

\paragraph{Benefits of online IL for policies with no parameter sharing}
How can we reconcile our results with prior work showing that that online IL improves the
horizon dependence of offline IL \citep{ross2010efficient,ross2011reduction,ross2014reinforcement,rajaraman2020toward,rajaraman2021value,rajaraman2021provably,swamy2022minimax}?
The important distinction here is that online IL can still improve on a
\emph{policy-class dependent} basis. In particular, methods like
\dagger can still lead to improved sample complexity for policy
classes with \emph{no parameter sharing} across steps
$h\in\brk{H}$. Let $\Pi_h\ldef{}\crl*{\pi_h\mid{}\pi\in\Pi}$ denote
the projection of $\Pi$ onto step $h$. In \cref{sec:dagger}, we prove
the following refined guarantee for a variant of \dagger based on the
log-loss (\loglossdagger).
\begin{proposition}[Special case of \cref{prop:dagger_finite}]
  \label{prop:dagger_det_body}
  When $\pistar\in\Pi$ is deterministic, \loglossdagger ensures that
  with probability at least $1-\delta$, %
  \arxiv{\begin{align}
    \label{eq:dagger_det_body}
    J(\pistar)-J(\pihat) \approxleq{}  \mu\cdot{}\sum_{h=1}^{H}\frac{\log(\abs{\Pi_h}{}H\delta^{-1})}{n}.
  \end{align}}
\end{proposition}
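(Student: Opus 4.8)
The plan is to combine the classical \dagger reduction---which, under recoverability, converts rollout suboptimality into a sum of per-step, \emph{on-policy} prediction errors---with a Hellinger-based analysis of the online log-loss density estimation that \loglossdagger runs separately at each step $h$. Recall the algorithm proceeds over $n$ rounds: in round $t$ it rolls out the current estimate $\pi\ind{t}=(\pi_1\ind{t},\dots,\pi_H\ind{t})$, queries the expert at each visited state to obtain the pairs $(x_h\ind{t},\pistar_h(x_h\ind{t}))$, feeds the step-$h$ pair to an online conditional density estimator over $\Pi_h$ (the Bayesian mixture / aggregating algorithm for the logarithmic loss) to form $\pi_h\ind{t+1}$, and finally returns the uniform mixture policy $\pihat$ over $\pi\ind{1},\dots,\pi\ind{n}$, so that $J(\pihat)=\frac1n\sum_{t=1}^{n}J(\pi\ind{t})$.

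First I would record the \dagger-style regret decomposition. By the performance difference lemma (with the roles of $\pistar$ and $\pihat$ swapped), $J(\pistar)-J(\pihat)=\sum_{h=1}^{H}\En^{\pihat}\brk*{\Vstar_h(x_h)-\Qstar_h(x_h,a_h)}$. Since $\pistar$ is deterministic, $\Vstar_h(x_h)=\Qstar_h(x_h,\pistar_h(x_h))$, so by the definition of $\murec$ the integrand is at most $\murec\cdot\indic\crl*{a_h\neq\pistar_h(x_h)}$; taking expectations over $a_h\sim\pihat_h(x_h)$ gives $J(\pistar)-J(\pihat)\leq\murec\sum_{h}\En^{\pihat}\brk*{1-\pihat_h(\pistar_h(x_h)\mid x_h)}$. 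Averaging over the mixture $\pihat$ (so that $\En^{\pihat}\brk*{\cdot}=\frac1n\sum_{t}\En^{\pi\ind{t}}\brk*{\cdot}$) and applying the elementary bound $1-p(a^{\star})\leq\Dhels{p}{\indic_{a^{\star}}}$, valid for any distribution $p$ and point mass $\indic_{a^{\star}}$, yields
\[
  J(\pistar)-J(\pihat)\;\leq\;\frac{\murec}{n}\sum_{t=1}^{n}\sum_{h=1}^{H}\En^{\pi\ind{t}}\brk*{\Dhels{\pi_h\ind{t}(\cdot\mid x_h)}{\pistar_h(\cdot\mid x_h)}}.
\]
This step uses only that the performance difference lemma and the recoverability inequality apply to arbitrary (possibly improper) policies $\pi\ind{t}$, which is fine even though the aggregating algorithm's predictions are mixtures.

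It then suffices to show, for each fixed $h$, that with probability at least $1-\delta/H$ the on-policy cumulative Hellinger error $\sum_{t=1}^{n}\En^{\pi\ind{t}}\brk*{\Dhels{\pi_h\ind{t}(\cdot\mid x_h)}{\pistar_h(\cdot\mid x_h)}}$ is $O(\log(\abs{\Pi_h}H\delta^{-1}))$; a union bound over $h\in\brk{H}$ and division by $n$ then gives the claim. This is the sequential analogue of \pref{prop:mle_finite}: the aggregating algorithm over the finite class $\Pi_h$ has cumulative log-loss within $\log\abs{\Pi_h}$ of that of the best policy in $\Pi_h$, which by realizability ($\pistar_h\in\Pi_h$, deterministic, hence with zero cumulative log-loss) is at most $\log\abs{\Pi_h}$; and the standard moment-generating-function argument for the log-loss converts this in-class cumulative log-loss regret into a high-probability $O(\log\abs{\Pi_h}+\log\delta^{-1})$ bound on the cumulative Hellinger error \emph{at the realized state--action pairs} $(x_h\ind{t},\pistar_h(x_h\ind{t}))$. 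The remaining step is to pass from the realized states $x_h\ind{t}$ to the rollout laws: conditionally on the past, $x_h\ind{t}\sim d_h^{\pi\ind{t}}$, and $\Dhelshort\in[0,2]$ has conditional variance controlled by its conditional mean, so a Freedman-type martingale bound shows the realized and expected cumulative quantities agree up to an additive $O(\log\delta^{-1})$, which is absorbed.

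The main obstacle is this estimation step---stating and invoking the \emph{sequential} version of the log-loss-to-Hellinger guarantee so that (i) the rate stays at $\log(\cdot)/n$ rather than degrading to $\sqrt{\cdot/n}$, and (ii) the adaptivity is handled correctly: each step-$h$ estimator is fed covariates whose law $d_h^{\pi\ind{t}}$ depends on the estimators at \emph{all} steps, so the conditional-moment-generating-function structure underlying the Hellinger bound must survive this coupling (it does, since the underlying martingale argument conditions on the entire past trajectory). Everything else---the performance difference lemma, the recoverability inequality, the deterministic-target bound $1-p(a^{\star})\leq\Dhels{p}{\indic_{a^{\star}}}$, and the aggregating-algorithm log-loss regret---is routine; the general \pref{prop:dagger_finite} then follows by replacing the deterministic-target inequality with a variance-dependent refinement.
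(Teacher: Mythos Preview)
Your proposal is correct and follows the paper's approach: performance difference lemma, per-step bound of the advantage by $\mu$ times the conditional Hellinger distance, aggregating-algorithm log-loss regret $\log\abs{\Pi_h}$, and a martingale step converting cumulative log-loss regret into cumulative expected Hellinger, with a union bound over $h$. The only cosmetic differences are that (i) for the advantage-to-Hellinger step you use the elementary inequality $1-p(a^\star)\leq\Dhels{p}{\indic_{a^\star}}$, whereas the paper applies its change-of-measure lemma (\cref{lem:hellinger_com}) to the positive part of the advantage (bounded in $[0,\mu]$), and (ii) the paper collapses your ``MGF at realized states, then Freedman'' into a single martingale bound---both routes give the same $\mu\cdot\EstOnHel(n)$ reduction and the same $\log(\abs{\Pi_h}H\delta^{-1})$ estimation guarantee.
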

For classes with no parameter sharing (i.e., product classes where
$\Pi=\Pi_1\times\Pi_2\cdots\times\Pi_H$), we have
$\sum_{h=1}^{H}\log\abs{\Pi_h}=\log\abs{\Pi}$. In this case,
\cref{prop:dagger_det_body} scales as $J(\pistar)-J(\pihat)
\approxleq{}  \mu\cdot{}\frac{\log(\abs{\Pi}{}H\delta^{-1})}{n}$, improving on the bound for \loglossbc in
\cref{thm:bc_deterministic} by replacing $R$ with 
$\mu\leq{}R$. Thus, online \iftoggle{neurips}{IL}{imitation learning} can indeed
  improve over offline IL for classes with no parameter sharing. This
is consistent with \citet{rajaraman2020toward,rajaraman2021value}, who
proved a $\mu{}H$ vs. $H^2$ gap between online and offline IL for the
special case of non-stationary tabular policies (where $\Pi$ is a
product class with $\log\abs{\Pi}\propto{}H$) under dense rewards. However, for classes
with parameter sharing (i.e., where
$\log\abs{\Pi_h}\propto\log\abs{\Pi}$), the bound in
\cref{prop:dagger_det_body} scales as $\frac{\mu{}H\log\abs{\Pi}}{n}$, which does
not improve over \cref{thm:bc_deterministic} unless $\mu\ll{}1$. Since
virtually all empirical work on imitation learning uses parameter
sharing across steps $h\in\brk{H}$, we believe the finding that online
IL does not improve over offline IL in this regime is quite
salient. Nevertheless, it is important to emphasize that there
  are various practical considerations (e.g., misspecification or
  geometric structure) which this result may not account for.%
\iftoggle{neurips}{\footnote{Complementary to our results, various works show improved horizon
  dependence in the \emph{inverse RL} setup where either (i) the
  MDP dynamics are known, or (ii) the learner can interact with the
  MDP online, but cannot interact with the expert itself 
\citep{rajaraman2020toward,swamy2021moments}; see \cref{sec:related}\arxiv{ for discussion}.}}{
\begin{remark}[Known dynamics and inverse RL]
  Complementary to our results, various works show improved horizon
  dependence in the \emph{inverse RL} setup where either (i) the
  MDP dynamics are known, or (ii) the learner can interact with the
  MDP online, but cannot interact with the expert itself
  \citep{rajaraman2020toward,swamy2021moments}; see
  \cref{sec:related}\arxiv{ for additional discussion}.
\end{remark}
}

\subsection{Proving \creftitle{thm:bc_deterministic}: How Does \loglossbc Avoid
  Error Amplification?}

The central object in the proof of \cref{thm:bc_deterministic} is
the following \emph{trajectory-level} distance function between policies. For a pair of potentially stochastic policies $\pi$ and $\pi'$, define
\begin{align}
  \label{eq:traj_metric_body}
    \Drho{\pi}{\pi'}\ldef \En^{\pi}\En_{a'_{1:H}\sim\pi'(x_{1:H})}\brk*{\indic\crl*{\exists{}h:\,a_h\neq{}a'_h}},
\end{align}
where we use the shorthand $a'_{1:H}\sim\pi'(x_{1:H})$ to indicate
that $a'_1\sim{}\pi'(x_1),\ldots,a'_H\sim{}\pi'(x_H)$.  We begin by
showing (\cref{lem:regret_rho}) that for all (potentially stochastic) policies $\pistar$ and $\pihat$,  \iftoggle{neurips}{$J(\pistar)-J(\pihat) \leq R\cdot{}\Drho{\pistar}{\pihat}$.}{
  \begin{align}
    \label{eq:pf1}
    J(\pistar)-J(\pihat) \leq R\cdot{}\Drho{\pistar}{\pihat}.
  \end{align}
}
  We then show (\cref{lem:hellinger_policy}) that whenever $\pistar$ is deterministic, Hellinger
  distance satisfies\footnote{In fact, the opposite direction of this
    inequality holds as well, up to an absolute constant.} \iftoggle{neurips}{$\DhelsX{\Big}{\bbP^{\pihat}}{\bbP^{\pistar}} \geq
    \frac{1}{4}\cdot\Drho{\pihat}{\pistar}$.}{
    \begin{align}
      \label{eq:pf2}
\DhelsX{\Big}{\bbP^{\pihat}}{\bbP^{\pistar}} \geq
    \frac{1}{4}\cdot\Drho{\pihat}{\pistar}.
  \end{align}
    }
Finally, we show (\cref{lem:lmax_swap}) that the trajectory-level
distance is symmetric, i.e. %
\arxiv{\begin{align}
         \label{eq:pf3}
         \Drho{\pihat}{\pistar} =   \Drho{\pistar}{\pihat}.
\end{align}}This step is perhaps the most critical: by considering
trajectory-level errors, we can switch from the state distribution
induced by $\pihat$ to that of $\pistar$ for free, without incurring
error amplification or spurious horizon factors. Combining the
preceding inequalities yields \cref{thm:bc_deterministic}; see
\cref{sec:proofs_main} for the full proof.

This analysis is closely related to a result in
\citet{rajaraman2021value}. For the special case of deterministic, linearly
parameterized policies with parameter sharing, \citet{rajaraman2021value} consider an
algorithm that minimizes an empirical analogue of the trajectory-wise
distance in \cref{eq:traj_metric_body}, and show that it leads to a bound
similar to \iftoggle{neurips}{\cref{cor:bc_deterministic_finite}}{\cref{eq:bc_df}} (i.e., linear-in-$H$ sample complexity
under dense rewards). Relative to this work, our contributions are
threefold: (i) we show that horizon-independent sample complexity can
be achieved for \emph{arbitrary} policy classes with parameter
sharing, not just linear classes; (ii) we show that said guarantees can be achieved by a
natural algorithm, \loglossbc, which is already widely used in
practice; and (iii), by virtue of considering the log loss, our
results readily generalize to encompass stochastic expert policies, as
we will show in the sequel.\arxiv{\footnote{A fourth benefit is that our analysis
  supports the setting in which $\pistar$ is deterministic, yet $\Pi$
  contains stochastic policies. This is a natural setting which can
  arise when, for example, $\Pi$ is parameterized by softmax
  policies. Guarantees under misspecification, which support this
  setting, are given in \cref{sec:bc_examples}.}}

  \section{Horizon-Independent Analysis of \loglossbc for Stochastic
      Experts}
    \label{sec:stochastic}
In this section, we turn out attention to the general setting in which the expert policy
$\pistar$ is stochastic. Stochastic policies are widely used in
practice, where they are useful for modeling multimodal behavior
\citep{shafiullah2022behavior,chi2023diffusion,block2024provable}, but
have received relatively little exploration in theory beyond the work
of \citet{rajaraman2020toward} for tabular policies.\footnote{As
  discussed at length in \citet{rajaraman2020toward}, many prior works
  \citep{ross2010efficient,ross2011reduction} state results in a
  level of generality that allows for stochastic experts, but the
  notions of supervised learning error found in these works (e.g., TV distance) do
  not lead to tight rates when instantiated for stochastic experts.\loose
}
\arxiv{

}
Our main result for this
section\arxiv{, \cref{thm:bc_stochastic},} is a regret decomposition based on the supervised learning
error $\DhelsX{\big}{\bbP^{\pihat}}{\bbP^{\pistar}}$ that is horizon-independent and \emph{variance-dependent}
\citep{zhou2023sharp,zhao2023variance,wang2024more}. \arxiv{To state the
guarantee, we define the following notion of variance for the expert policy:
\begin{align}
  \label{eq:variance}
  \sigmastar^2\ldef{}\sum_{h=1}^{H}\En^{\pistar}\brk*{(\Qstar_h(x_h,\pistar_h(x_h))-\Qstar_h(x_h,a_h))^2}.
\end{align}%
We can equivalently write this as $\sigmastar^2=\sum_{h=1}^{H}\En^{\pistar}\brk*{(\Vstar_h(x_h)-\Qstar_h(x_h,a_h))^2}$.
}
\arxiv{Our main result is as follows.}
\begin{theorem}[Horizon-independent regret decomposition]
  \label{thm:bc_stochastic}
  Assume $R\geq{}1$. For any pair of (potentially stochastic) policies
  $\pistar$ and $\pihat$ and any $\veps\in(0,e^{-1})$, 
  \begin{align}
    \label{eq:bc_stochastic}
    J(\pistar)-J(\pihat)
    \leq{} \sqrt{6\sigmastar^2\cdot\DhelsX{\big}{\bbP^{\pihat}}{\bbP^{\pistar}}}
    + \bigoh\prn*{R\log(R\veps^{-1})}\cdot\DhelsX{\big}{\bbP^{\pihat}}{\bbP^{\pistar}} + \veps.
  \end{align}
\end{theorem}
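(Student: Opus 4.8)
The plan is to mirror the two-step template of the deterministic case (\cref{thm:bc_deterministic})---(i) express $J(\pistar)-J(\pihat)$ through the \emph{expert's} value functions, then (ii) convert a change of measure between $\bbP^{\pihat}$ and $\bbP^{\pistar}$ into the squared Hellinger distance---but to carry out step (ii) in a \emph{variance-aware} fashion so that the expert's stochasticity is charged to $\sigmastar^2$ rather than to a crude range. First I would invoke the performance-difference identity to write
\[
  J(\pistar)-J(\pihat)=\sum_{h=1}^{H}\En^{\pihat}\brk*{\Vstar_h(x_h)-\Qstar_h(x_h,a_h)}=\En^{\pihat}[\Delta]-\En^{\pistar}[\Delta],\quad \Delta\ldef\sum_{h=1}^{H}\prn*{\Vstar_h(x_h)-\Qstar_h(x_h,a_h)},
\]
using $\En^{\pistar}\brk*{\Vstar_h(x_h)-\Qstar_h(x_h,a_h)\mid x_h}=0$, hence $\En^{\pistar}[\Delta]=0$. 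The same tower property shows that under $\bbP^{\pistar}$ the summands of $\Delta$ are a martingale difference sequence, so all cross terms vanish and $\Var^{\pistar}[\Delta]=\sum_h\En^{\pistar}\brk*{(\Vstar_h(x_h)-\Qstar_h(x_h,a_h))^2}=\sigmastar^2$. This orthogonality is exactly what keeps the transition noise \emph{out} of the variance and is essential for obtaining $\sigmastar^2$ rather than the (larger) variance of the return.

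The analytic engine is the elementary variance-aware change-of-measure inequality: writing $u=\sqrt{\mathrm d\bbP^{\pihat}}-\sqrt{\mathrm d\bbP^{\pistar}}$, for any bounded $f$,
\[
\En^{\pihat}[f]-\En^{\pistar}[f]=\int(f-\En^{\pistar} f)u^2+2\int(f-\En^{\pistar} f)\,u\sqrt{\mathrm d\bbP^{\pistar}}\le \nrm*{f-\En^{\pistar} f}_{\infty}\DhelsX{\big}{\bbP^{\pihat}}{\bbP^{\pistar}}+2\sqrt{\Var^{\pistar}[f]\,\DhelsX{\big}{\bbP^{\pihat}}{\bbP^{\pistar}}}.
\]
Applied directly to $f=\Delta$ this is useless, since $\nrm{\Delta}_\infty$ is only $RH$, which re-introduces the horizon through the first term. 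The fix is to apply the bound \emph{per step} rather than after aggregating: telescoping over the hybrid laws $\bbP^{(k)}$ that run $\pistar$ for the first $k$ steps and $\pihat$ afterward, a short computation (collapsing $\En^{\pihat}[\sum_{h>k}(\cdot)]$ back into $\Vstar_{k+1}-V^{\pihat}_{k+1}$ via the same telescoping) gives $J(\pistar)-J(\pihat)=\sum_{k=1}^{H}\En_{x_k\sim d^{\pistar}_k}\brk*{\tri*{\pistar_k(\cdot\mid x_k)-\pihat_k(\cdot\mid x_k),\,Q^{\pihat}_k(x_k,\cdot)}}$, where the relevant function $Q^{\pihat}_k(x_k,\cdot)$ now has range $O(R)$, \emph{not} $O(RH)$. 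Applying the variance-aware bound to each one-step action change (with the variance measured under $\pistar_k(\cdot\mid x_k)$), taking expectations over $x_k$, Cauchy--Schwarz over $x_k$ and then over $k$, and invoking a chain rule for Hellinger distance $\sum_k\En_{x_k\sim d^{\pistar}_k}\brk*{\DhelsX{\big}{\pistar_k(\cdot\mid x_k)}{\pihat_k(\cdot\mid x_k)}}\lesssim\DhelsX{\big}{\bbP^{\pihat}}{\bbP^{\pistar}}$ (valid once the trajectory-level distance is bounded away from its maximum), yields a bound of the form $\sqrt{c\,\bigl(\sum_k\En^{\pistar}\brk{\Var_{\pistar_k}[Q^{\pihat}_k(x_k,\cdot)]}\bigr)\DhelsX{\big}{\bbP^{\pihat}}{\bbP^{\pistar}}}+cR\,\DhelsX{\big}{\bbP^{\pihat}}{\bbP^{\pistar}}$.

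The only remaining discrepancy is that the variance that surfaces is built from $\pihat$'s $Q$-function, whereas $\sigmastar^2$ uses the expert's. I would close this with a self-bounding step: bound $\Var_{\pistar_k}[Q^{\pihat}_k]\le 2\Var_{\pistar_k}[\Qstar_k]+2\En_{\pistar_k}\brk{(Q^{\pihat}_k-\Qstar_k)^2}$, observe the first term sums to $2\sigmastar^2$, and control $\sum_k\En^{\pistar}\brk{(Q^{\pihat}_k-\Qstar_k)^2}$ by a simulation-lemma-style estimate which---after truncating the (only $O(RH)$-bounded) value-error contributions at scale $\tau\asymp R\log(R\veps^{-1})$ and transferring the residual tail mass from $\bbP^{\pistar}$ to $\bbP^{\pihat}$ using that the two laws are Hellinger-close---produces a quantity absorbable into $\tfrac12\abs{J(\pistar)-J(\pihat)}$ plus $O(R\log(R\veps^{-1}))\DhelsX{\big}{\bbP^{\pihat}}{\bbP^{\pistar}}+\veps$. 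Rearranging the resulting self-referential inequality (and in the complementary regime where $\DhelsX{\big}{\bbP^{\pihat}}{\bbP^{\pistar}}$ is order one, simply using $J(\pistar)-J(\pihat)\le R$, which is dominated by the $R\log(R\veps^{-1})\DhelsX{\big}{\bbP^{\pihat}}{\bbP^{\pistar}}$ term once $\veps<R$) gives the claimed inequality, with the explicit constant $6$ emerging from tracking the factors of $2$. Specializing to a deterministic expert sends $\sigmastar^2\to 0$ and recovers \cref{thm:bc_deterministic} up to the logarithmic factor.

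The step I expect to be the main obstacle is precisely this horizon bookkeeping: every naive maneuver pays $\nrm{\Delta}_\infty=RH$, and both the per-step reformulation (keeping per-step ranges $O(R)$) and the Hellinger chain rule are needed to remove it---yet the chain rule and the variance-aware estimate both degrade for unbounded, heavy-tailed contributions, which is exactly why a truncation at scale $R\log(R\veps^{-1})$, together with the $\veps$ slack (the residual one allows so the logarithm does not blow up), must enter the final bound. Balancing the truncation level against the residual tail mass, while ensuring that no spurious $\sqrt H$ or $H$ creeps back in through the $\pihat$-versus-$\pistar$ value-function mismatch, is the delicate part of the argument; everything else is bookkeeping over the elementary Hellinger identity above.
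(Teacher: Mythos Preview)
Your per-step route has a genuine gap: the ``Hellinger chain rule'' you invoke,
\[
\sum_{k=1}^{H}\En_{x_k\sim d^{\pistar}_k}\brk*{\Dhels{\pistar_k(\cdot\mid x_k)}{\pihat_k(\cdot\mid x_k)}}\;\lesssim\;\Dhels{\bbP^{\pihat}}{\bbP^{\pistar}},
\]
is false, and---contrary to your caveat---fails even when the right-hand side is arbitrarily small. Take a single initial state at which $\pistar_1=\pihat_1$ place mass $\beta$ on an action leading to a self-looping ``hard'' state (and mass $1-\beta$ on an action leading to a state where both policies agree forever); at the hard state, set $\Dhels{\pistar_h(\cdot)}{\pihat_h(\cdot)}=\epsilon$ for all $h\geq 2$. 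The left side equals $(H-1)\beta\epsilon$, while $\Dhels{\bbP^{\pihat}}{\bbP^{\pistar}}=2\beta\bigl(1-(1-\epsilon/2)^{H-1}\bigr)\leq 2\beta$. Taking $(H-1)\epsilon$ large and $\beta$ small drives the ratio to $\asymp (H-1)\epsilon$ while keeping the trajectory Hellinger arbitrarily small. This is exactly the compounding that trajectory-level control is meant to avoid; decomposing per step reinstates it, and with it a hidden $H$ factor in both terms of your bound. (The KL chain rule does hold with equality, but per-step KL is not bounded by trajectory Hellinger either.)

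The paper stays at the trajectory level throughout and handles the range obstacle by a \emph{stopping-time truncation}. Writing $\Delta_h=\sum_{\ell\le h}\prn*{\Vstar_\ell(x_\ell)-\Qstar_\ell(x_\ell,a_\ell)}$ and $\Hstar=\min\{h:\abs{\Delta_h}>L\}$ with $L\asymp R\log(\veps^{-1})$, a switching argument (to the non-Markov hybrid that plays $\pihat$ up to $\Hstar$ and $\pistar$ thereafter) yields $J(\pistar)-J(\pihat)\le\En^{\pihat}[\Delta_{\Hstar}]+R\,\bbP^{\pihat}[\Hstar\le H]$. Now $\abs{\Delta_{\Hstar}}\le L+R$ deterministically, so the trajectory-level change of measure (\cref{lem:hellinger_com}) costs only $L$---not $\mutil H$---in the lower-order term; optional stopping and Doob's maximal inequality give $\En^{\pistar}[\Delta_{\Hstar}]\le 0$ and $\En^{\pistar}[\Delta_{\Hstar}^2]\le 4\sigmastar^2$. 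What makes this $L$ suffice is a separate advantage-concentration result (\cref{lem:advantage_concentration}): under $\bbP^{\pistar}$, $\max_h\abs{\Delta_h}$ has sub-exponential tails at scale $R$, independent of $H$, proved via Freedman together with a law-of-total-variance recursion. This concentration---not a per-step Hellinger comparison---is the substantive idea your sketch is missing.
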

Applying this result with \loglossbc leads to the following guarantee.
    \begin{corollary}[Regret of \loglossbc]
      \label{cor:bc_stochastic_finite}
            For any \arxiv{expert} $\pistar\in\Pi$, the
      \loglossbc algorithm in \cref{eq:log_loss_bc} ensures that with
      \arxiv{probability} at least $1-\delta$, \iftoggle{neurips}{$J(\pistar)-J(\pihat)
      \leq{} \bigoh(1)\cdot\sqrt{\frac{\sigmastar^2\log(\abs{\Pi}\delta^{-1})}{n}}
      +    \bigoh(R\log(n))\cdot\frac{\log(\abs{\Pi}\delta^{-1})}{n}$.\loose}{
        \begin{align}
    \label{eq:bc_stochastic_mle}
    J(\pistar)-J(\pihat)
    \leq{} \bigoh(1)\cdot\sqrt{\frac{\sigmastar^2\log(\abs{\Pi}\delta^{-1})}{n}}
    +    \bigoh(R\log(n))\cdot\frac{\log(\abs{\Pi}\delta^{-1})}{n}.
  \end{align}
      }
    \end{corollary}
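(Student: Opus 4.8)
The plan is to combine the two ingredients already assembled in this section: the distributional guarantee for \loglossbc from \cref{prop:mle_finite}, and the variance-dependent regret decomposition of \cref{thm:bc_stochastic}. First I would invoke \cref{prop:mle_finite}, which applies because $\pistar\in\Pi$ by \cref{ass:realizability}: with probability at least $1-\delta$, the estimator $\pihat$ produced by \cref{eq:log_loss_bc} satisfies
\[
  \DhelsX{\big}{\bbP^{\pihat}}{\bbP^{\pistar}} \;\leq\; \frac{2\log(\abs{\Pi}\delta^{-1})}{n} \rdef \beta,
\]
and I would condition on this event for the remainder of the argument.

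Next I would apply \cref{thm:bc_stochastic} to the (arbitrary, possibly stochastic) pair $\pistar,\pihat$, which is legitimate since $R\geq 1$ is assumed in both statements, and then use monotonicity of the right-hand side of \cref{eq:bc_stochastic} in $\DhelsX{\big}{\bbP^{\pihat}}{\bbP^{\pistar}}$ to replace that quantity by the upper bound $\beta$. This gives, for every $\veps\in(0,e^{-1})$,
\[
  J(\pistar)-J(\pihat) \;\leq\; \sqrt{6\sigmastar^2\beta} \;+\; \bigoh\prn*{R\log(R\veps^{-1})}\cdot\beta \;+\; \veps .
\]
I would then choose $\veps = 1/n$, which lies in $(0,e^{-1})$ once $n\geq 3$ (for smaller $n$ the target bound is vacuous). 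The additive $\veps$ term is then dominated by the second term, since $\bigoh(R\log(Rn))\cdot\beta \geq \beta \geq 1/n$, and $\log(R\veps^{-1}) = \log(Rn) = \bigoh(\log n)$ under the standard convention $R = \poly(n)$. Substituting $\beta = 2\log(\abs{\Pi}\delta^{-1})/n$ yields exactly \cref{eq:bc_stochastic_mle}.

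There is essentially no real obstacle here: the corollary is a bookkeeping consequence of \cref{thm:bc_stochastic,prop:mle_finite}, with all of the analytic work residing in the proof of the former, which we take as given. The only points warranting a moment's care are (i) the failure probability is exactly the $\delta$ inherited from \cref{prop:mle_finite}, since \cref{thm:bc_stochastic} is a deterministic inequality once the Hellinger bound holds; (ii) the choice of $\veps$ and the attendant logarithmic factor, where one should be transparent about whether a $\log R$ term is being absorbed into $\bigoh(\log n)$; and (iii) that $\sigmastar^2$ is a fixed problem parameter depending only on $\Mstar$ and $\pistar$, so it requires no separate concentration and appears directly in the bound (with $\sigmastar^2\leq RH$, hence $\sigmastar^2\leq H^2$ under dense rewards, in the worst case).
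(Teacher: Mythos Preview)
Your proposal is correct and matches the paper's approach: the corollary is obtained precisely by plugging the Hellinger bound from \cref{prop:mle_finite} into \cref{thm:bc_stochastic} and choosing $\veps$ of order $1/n$. One small inaccuracy in your closing parenthetical: the paper's bound on the variance is $\sigmastar^2\leq R^2$ (via \cref{prop:sigma_ltv}), not $\sigmastar^2\leq RH$; this does not affect the argument.
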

As we show\arxiv{ in the sequel}, when the expert policy is stochastic,
we can no longer hope for a ``fast'' $1/n$-type rate, and must instead
settle for a ``slow'' $1/\sqrt{n}$-type rate. The slow
term in \iftoggle{neurips}{\cref{cor:bc_stochastic_finite}}{\cref{eq:bc_stochastic_mle}} is controlled by the variance
$\sigmastar^2$ for the optimal policy. In particular, if $\pistar$ is
deterministic, then $\sigmastar^2=0$, and \iftoggle{neurips}{\cref{cor:bc_stochastic_finite}}{\cref{eq:bc_stochastic_mle}}
recovers our bound for the deterministic setting in
\cref{cor:bc_deterministic_finite} up to a $\log(n)$ factor.\loose

\subsection{Horizon-Independence and Optimality for Stochastic Experts}
To understand the dependence on horizon in
\cref{cor:bc_stochastic_finite}, we restrict our attention to the
``parameter sharing'' case where
$\log\abs{\Pi}=\bigoh(1)$, and separately discuss the sparse and
dense reward settings (results summarized in
\cref{tb:results_stochastic}).

\arxiv{
  \arxiv{\begin{table}[tp]}
	\renewcommand{\arraystretch}{1.6}
	\fontsize{9}{10}\selectfont
	\centering 
	\begin{tabular}{c|ccc}
		\hline
		& Worst-case & Low-noise & $\mutil$-recoverable \\
		
		\hline
          \makecell{Sparse\\ Rewards} & $\Otilde\left( R \sqrt{\frac{ \log(|\Pi|)}{n}} \right)$ & $\Otilde\left( \sqrt{\frac{\sigma_{\pistar}^2 \log(|\Pi|)}{n} }+ \frac{R \log(|\Pi|)}{n} \right)$ & N/A \\
          \makecell{Dense\\ Rewards} & $\Otilde\left( H \sqrt{\frac{ \log(|\Pi|)}{n}} \right)$ & $\Otilde\left( \sqrt{\frac{\sigma_{\pistar}^2 \log(|\Pi|)}{n} }+ \frac{H \log(|\Pi|)}{n} \right)$ & $\Otilde\left( \mutil \sqrt{\frac{H \log(|\Pi|)}{n}} + \frac{H \log(|\Pi|)}{n} \right)$ \\
		\hline
	\end{tabular}
        	\caption{Summary of upper bounds for stochastic experts
          (\cref{cor:bc_stochastic_finite}).  Each cell denotes the expected
          regret of a policy learned with \loglossbc; lower bounds are
          more nuanced and discussed in \cref{sec:stochastic}.  Here
          $\Pi$ is the policy class, $R$ is the cumulative reward
          range, $H$ is the horizon, $n$ is the number of expert
          trajectories, $\sigma_{\pistar}^2$ is the variance of the
          expert policy \arxiv{(\cref{eq:variance})}, and $\mutil$ is
          the signed recoverability parameter
          \arxiv{(\cref{eq:signed_rec})}.\protect\footnote{For arbitrary, non-stationary policy classes, $\log(|\Pi|)$ hides an additional factor linear in $H$.\loose}
	}
	\label{tb:results_stochastic}
      \end{table}

  }

Consider the sparse
reward setting where $R=\bigoh(1)$. %
Here, at first glance it would appear that the variance $\sigmastar^2$
should scale with the horizon. Fortunately, this is not the case: The following result---via a
law-of-total-variance-type argument \citep{azar2017minimax}---implies that \cref{cor:bc_stochastic_finite} is \emph{fully
  horizon-independent}, with no explicit dependence on horizon when
$R=\bigoh(1)$ and $\log\abs{\Pi}=\bigoh(1)$. For a function $f(x_{1:H},a_{1:H})$, let
  $\Var^{\pi}\brk*{f}$ denote the variance of $f$ under
  $(x_1,a_1),\ldots,(x_H,a_H)\sim\pi$
.

\begin{proposition}
  \label{prop:sigma_ltv}
We have that $\sigmastar^2\leq{}\Var^{\pistar}\brk[\big]{\sum_{h=1}^{H}r_h}\leq{}R^2$.
\end{proposition}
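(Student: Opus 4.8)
The plan is to prove the two inequalities separately; the second is essentially immediate. For $\Var^{\pistar}\brk*{\sum_{h=1}^{H}r_{h}}\le R^{2}$, observe that under $\pistar$ the cumulative reward $G\ldef\sum_{h=1}^{H}r_{h}$ lies in $\brk{0,R}$ by the reward normalization, and any random variable supported in an interval of length $R$ satisfies $\Var\brk*{G}\le\En\brk*{G^{2}}\le R\cdot\En\brk*{G}\le R^{2}$. The substantive claim is the first inequality, $\sigmastar^{2}\le\Var^{\pistar}\brk*{G}$, which I would obtain from a Bellman / law-of-total-variance decomposition, realized as the $L^{2}$-orthogonality of the increments of a Doob martingale.

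Concretely, working under $\bbP^{\pistar}$, consider the filtration $\crl{\mathscr{F}_{k}}$ that reveals the trajectory one coordinate at a time in the order $x_{1},a_{1},x_{2},a_{2},\dots,x_{H},a_{H}$ (recall $x_{H+1}$ is a deterministic terminal state), and let $M_{k}=\En^{\pistar}\brk*{G\mid\mathscr{F}_{k}}$ be the associated Doob martingale, so that $M_{0}=\En^{\pistar}\brk*{G}=J(\pistar)$ and $M_{2H}=G$. Using the tower property together with the identities $\Vstar_{h}(x)=\En_{a\sim\pistar_{h}(x)}\brk*{\Qstar_{h}(x,a)}$ and $\Qstar_{h}(x,a)=r_{h}(x,a)+\En_{x'\sim P_{h}(x,a)}\brk*{\Vstar_{h+1}(x')}$, one checks that revealing $a_{h}$ (given the history through $x_{h}$) produces the increment $\Delta^{a}_{h}\ldef\Qstar_{h}(x_{h},a_{h})-\Vstar_{h}(x_{h})$, revealing $x_{h+1}$ for $h\le H-1$ (given the history through $(x_{h},a_{h})$) produces $\Delta^{x}_{h+1}\ldef\Vstar_{h+1}(x_{h+1})-\En^{\pistar}\brk*{\Vstar_{h+1}(x_{h+1})\mid x_{h},a_{h}}$, and revealing $x_{1}$ produces $\Delta^{x}_{1}\ldef\Vstar_{1}(x_{1})-\En^{\pistar}\brk*{\Vstar_{1}(x_{1})}$; the partial reward sums $\sum_{h'<h}r_{h'}$ are $\mathscr{F}$-measurable at each stage and hence contribute nothing to the increments. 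Equivalently, this is the trajectory-level value-decomposition identity $G=J(\pistar)+\sum_{h=1}^{H}\Delta^{x}_{h}+\sum_{h=1}^{H}\Delta^{a}_{h}$.

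Since martingale increments are mutually orthogonal in $L^{2}$, summing squares yields $\Var^{\pistar}\brk*{G}=\sum_{h=1}^{H}\En^{\pistar}\brk*{(\Delta^{a}_{h})^{2}}+\sum_{h=1}^{H}\En^{\pistar}\brk*{(\Delta^{x}_{h})^{2}}$. By \eqref{eq:variance} together with $\Vstar_{h}(x_{h})=\Qstar_{h}(x_{h},\pistar_{h}(x_{h}))$, the first sum is exactly $\sigmastar^{2}$; the second sum is nonnegative; hence $\sigmastar^{2}\le\Var^{\pistar}\brk*{G}$, as claimed.

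The only delicate point is the bookkeeping in the middle step: verifying that $\Qstar_{h}$ and $\Vstar_{h}$ are precisely the conditional expectations of $G$ given the appropriate prefixes of the trajectory (modulo the $\mathscr{F}$-measurable accrued rewards), so that $M$ genuinely is the Doob martingale of $G$ and the stated telescoping holds; one must also handle the boundary terms ($x_{1}$ and the absence of an $x_{H+1}$ increment) carefully. Once this is set up correctly, orthogonality of martingale differences finishes the argument, and there is no real analytic obstacle.
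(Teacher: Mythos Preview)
Your argument is correct. Both proofs rest on the same law-of-total-variance principle, but you realize it slightly differently from the paper. The paper works with the coarser filtration $\sigma(x_{1},\dots,x_{h})$: it first uses Jensen's inequality on $\Qstar_{h}(x_{h},a_{h})=\En\brk*{r_{h}+\Vstar_{h+1}(x_{h+1})\mid x_{h},a_{h}}$ to bound each summand of $\sigmastar^{2}$ by $\En^{\pistar}\brk*{\Var^{\pistar}\brk*{r_{h}+\Vstar_{h+1}(x_{h+1})\mid x_{h}}}$, and then invokes a separately stated law-of-total-variance lemma (\cref{lem:ltv}) to identify $\sum_{h}\En^{\pistar}\brk*{\Var^{\pistar}\brk*{r_{h}+\Vstar_{h+1}(x_{h+1})\mid x_{h}}}$ with $\Var^{\pistar}\brk*{\sum_{h}r_{h}}$. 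Your finer filtration, which alternates $x_{h}$ and $a_{h}$, isolates $\sigmastar^{2}$ directly as the ``action'' half of the Doob-martingale increment squares, so no Jensen step is needed and the inequality drops out from nonnegativity of the ``state'' half. The tradeoff is that the paper's route reuses a standard lemma already proved in the appendix, whereas yours is self-contained and arguably cleaner but requires the filtration bookkeeping you flag at the end; neither approach has any real analytic difficulty beyond that bookkeeping.
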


For the dense-reward regime where $R=H$, \cref{prop:sigma_ltv} gives \iftoggle{neurips}{$ J(\pistar)-J(\pihat)
\approxleq{} H\sqrt{\log(\abs{\Pi})/n}$}{$ J(\pistar)-J(\pihat)
    \approxleq{} H\sqrt{\frac{\log(\abs{\Pi})}{n}}$}. This is somewhat disappointing,
    as we now require $\bigom(H^2)$ trajectories (quadratic sample complexity) to learn a non-trivial policy, even
    when $\log\abs{\Pi}=\bigoh(1)$. \iftoggle{neurips}{
      We show now that this quadratic lower bound
        is qualitatively tight: the slow \iftoggle{neurips}{$ H / \sqrt{n}$}{$H\sqrt{\frac{1}{n}}$} rate
        for $\sigmastar^2=H^2$ is
        necessary in both offline and online
        IL. This reveals a fundamental difference between deterministic and stochastic experts,
since $\bigoh(H)$ sample complexity is sufficient in the former case.\loose
        \begin{theorem}[informal]
          \label{thm:variance_lower_informal}
          For any $\sigma^2\in\brk*{H,H^2}$, there exists $\Pi$ with
          $\abs{\Pi}=2$ such that
          $\sigmastar^2\leq\sigma^2$ and any (offline or online) IL algorithm must have
          $J(\pistar)-J(\pihat) \approxgeq\sqrt{\frac{\sigma^2}{n}}$
          with constant probability.
\end{theorem}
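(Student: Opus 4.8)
The plan is a two-point (Le Cam) lower bound, designed so that online access to the expert confers no advantage over an offline dataset. I will construct two \emph{instances}---differing only in an unobserved reward and in a tiny perturbation of the expert's action probabilities---that are statistically near-indistinguishable from $n$ demonstrations (or $n$ online episodes), yet whose imitation targets differ in value by $\asymp\sqrt{\sigma^2/n}$, while keeping $\sigmastar^2\le\sigma^2$ and $\abs{\Pi}=2$.

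\textbf{The construction.} Fix $K\asymp H^2/\sigma^2$ (so $K\in\{1,\dots,H\}$ since $\sigma^2\in[H,H^2]$) and partition the horizon into $K$ consecutive blocks of length $L=H/K$ (rounding absorbed into constants). The MDP has constant-size state and action spaces: it suffices to remember which action (if any) the current block has been ``committed'' to, using the step index to locate block boundaries. At the first step of each block the agent's action commits the block; within-block actions are then ignored. There is a hidden bit $s\in\{\mathsf{L},\mathsf{R}\}$: in instance $s$, a block pays total reward $L$ (spread over its steps) iff committed to $s$, so cumulative reward lies in $\brk*{0,H}$. The expert $\pistar=\pi^{(s)}$ commits each block to $s$ with probability $\tfrac12+\veps$ and to the other action with probability $\tfrac12-\veps$, for a parameter $\veps$ chosen below; then $\Pi=\{\pi^{(\mathsf{L})},\pi^{(\mathsf{R})}\}$ and realizability holds in each instance. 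A direct computation---only the $K$ block-start steps contribute, since $Q^{\pistar}_h(x_h,\cdot)$ is constant within a block---gives $\sigmastar^2=KL^2(\tfrac14-\veps^2)\le H^2/(4K)$, hence $\sigmastar^2\le\sigma^2$; this also follows from \cref{prop:sigma_ltv}.

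\textbf{Reduction to testing, and the information bound.} Write $\gamma$ for the learner's (data-dependent) average per-block probability of committing to $\mathsf{R}$; it is an instance-independent function of the interaction transcript. Since $J(\pistar)=H(\tfrac12+\veps)$ in both instances, the regret of $\pihat$ equals $H(\tfrac12+\veps-\gamma)$ in instance $\mathsf{R}$ and $H(\tfrac12+\veps-(1-\gamma))$ in instance $\mathsf{L}$; hence on the event $E=\{\gamma\le\tfrac12\}$ the regret in instance $\mathsf{R}$ is $\ge H\veps$, and on its complement the regret in instance $\mathsf{L}$ is $\ge H\veps$. It remains to show the two instances are hard to tell apart. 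They have identical dynamics and differ only in (i) the unobserved reward and (ii) the expert's action law at block-start states; in either the offline or the online protocol each episode exposes exactly $K$ expert actions at block starts, each $\Ber(\tfrac12\pm\veps)$, and all other observed quantities have instance-independent conditional laws, so the chain rule gives $\Dkl{\bbP^{\mathsf{L}}}{\bbP^{\mathsf{R}}}\le nK\cdot O(\veps^2)$ over the whole interaction. Choosing $\veps\asymp(nK)^{-1/2}$ makes $\Dtv{\bbP^{\mathsf{L}}}{\bbP^{\mathsf{R}}}\le\tfrac12$, and the standard averaging $\bbP^{\mathsf{R}}[E]+(1-\bbP^{\mathsf{L}}[E])\ge 1-\Dtv{\bbP^{\mathsf{L}}}{\bbP^{\mathsf{R}}}$ shows that in at least one instance, with probability $\ge\tfrac14$, $J(\pistar)-J(\pihat)\ge H\veps\asymp H/\sqrt{nK}\asymp\sqrt{\sigma^2/n}$, as claimed.

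\textbf{Main obstacle.} The delicate point is verifying that \emph{online} interaction genuinely cannot beat the offline bound: one must confirm that rolling out arbitrary (history-dependent, randomized) policies and querying the expert on-policy yields no more than the $nK$ block-start $\Ber(\tfrac12\pm\veps)$ samples available offline. This hinges on two design choices---block boundaries are determined by the step index rather than by visited states, and rewards are never observed in imitation learning---so the learner's action choices cannot open any additional information channel about $s$. The remaining work is routine: confirming $\veps<\tfrac14$ so the per-step bound $\Dkl{\Ber(\tfrac12-\veps)}{\Ber(\tfrac12+\veps)}=O(\veps^2)$ applies, the law-of-total-variance bookkeeping behind $\sigmastar^2\le\sigma^2$, and the rounding needed so that $K$ and $L$ are integers and cover the entire range $\sigma^2\in[H,H^2]$.
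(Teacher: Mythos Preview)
Your proposal is correct and follows essentially the same construction as the paper: partition the horizon into equal-length blocks, have the stochastic expert commit each block to one of two actions with bias $\tfrac12\pm\veps$, set $\veps\asymp(nK)^{-1/2}$ so the instances are indistinguishable, and calibrate the number of blocks so that $\sigmastar^2\le\sigma^2$. The only notable difference is that the paper uses four instances (decoupling the two reward functions from the two experts and maximizing over all four combinations) whereas you couple each reward with its expert and run a clean two-point Le Cam argument on the event $\{\gamma\le\tfrac12\}$; your version is a bit more direct and yields the same conclusion, with one minor omission---you should specify that both experts take the \emph{same} action at within-block (non-decision) steps, so that online queries there reveal nothing about the instance (the paper does this explicitly).
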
\loose
    }{The following result shows that the
    dependence on the variance in \cref{cor:bc_deterministic_finite}
    cannot be improved in general, which implies that the
    quadratic horizon dependence in this regime is tight.%

      \begin{theorem}[Lower bound for stochastic experts]
        \label{thm:variance_lower}
        Consider the dense reward setting where $r_h\in\brk{0,1}$ and $R=H$.
        For any $n\in\bbN$, $H\in\bbN$ and $\sigma^2\in\brk{H,H^2}$, there exists a reward-free
        MDP $\Mstar$ with $\abs{\cX}=3$ and $\abs{\cA}=2$, a class of reward functions $\cR$ with
        $\abs*{\cR}=2$, and a class of policies $\Pi$ with $\abs{\Pi}=2$ with the following
  property. For any (online or offline) imitation learning algorithm, there exists a deterministic reward
  function $r=\crl*{r_h}_{h=1}^{H}$ and expert
  policy $\pistar\in\Pi$ such that $\sigmastar^2\leq{}\sigma^2$ and
  $\mutil\leq\sigma^2/H$ (\cref{eq:signed_rec}), and for which
  \begin{align}
          \bbP\prn*{J(\pistar)-J(\pihat) \geq
          c\cdot\sqrt{\frac{\sigma^2}{n}}} \geq \frac{1}{8}
        \end{align}
        for an absolute constant $c\geq{}1$.
\end{theorem}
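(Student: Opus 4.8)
The plan is a two-point (Le Cam) argument. We build two imitation-learning instances that share the same dynamics but differ in the (deterministic) reward and the (stochastic) expert, that are statistically indistinguishable from the \learner's observations, and for which no single $\pihat$ can be near-optimal on both. Write $L := \lfloor \sigma^2/H\rfloor$ for a ``commitment length'' (in $\{1,\dots,H\}$, since $\sigma^2\in\brk{H,H^2}$), $m := \lfloor H/L\rfloor$ for the number of phases, and $\epsilon := c_0/\sqrt{nm}$ for a small absolute constant $c_0$ fixed at the end.

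\textbf{The hard instances.} Let $\cX=\crl{\mathsf{s},\mathsf{g},\mathsf{b}}$ (``active/good/bad''), $\cA=\crl{0,1}$, and partition $\brk{H}$ into $m$ consecutive phases of length $L$ (dropping the fewer than $L$ leftover steps, on which all rewards are $0$). The dynamics are common to both instances and non-stationary: at the start of each phase the \learner is at $\mathsf{s}$; from $\mathsf{s}$, action $0$ transitions to $\mathsf{g}$ and action $1$ to $\mathsf{b}$, after which the \learner self-loops there until the phase ends and is reset to $\mathsf{s}$. The two rewards are reflections of one another: in instance~$1$, $\mathsf{s}$-under-action-$0$ and $\mathsf{g}$ give reward $1$ per step and $\mathsf{b}$ gives $0$ (so action $0$ nets $L$ reward for the phase, action $1$ nets $0$); in instance~$2$ the roles of $0/\mathsf{g}$ and $1/\mathsf{b}$ are interchanged. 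The two experts agree everywhere except at $\mathsf{s}$, where in instance $j$ the expert plays the instance-$j$-optimal action with probability $\tfrac12+\epsilon$ and the other with probability $\tfrac12-\epsilon$ (so $\pistar$ is stochastic and deliberately \emph{not} optimal). Set $\Pi=\crl{\pistar_1,\pistar_2}$ and $\cR=\crl{r_1,r_2}$; then $\abs{\cX}=3$, $\abs{\cA}=\abs{\Pi}=\abs{\cR}=2$, each $r_j$ is deterministic, and realizability holds in each instance.

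\textbf{Parameter bounds.} Only the $m$ visits to $\mathsf{s}$ contribute to $\sigmastar^2$: there $\Qstar$ differs between the two actions by exactly $L$ and the expert is $(\tfrac12+\epsilon)$-biased, so the per-visit variance is $(\tfrac14-\epsilon^2)L^2$ and $\sigmastar^2\le\tfrac14 mL^2\le\tfrac14 HL\le\sigma^2$. Likewise $\Vstar_h(\mathsf{s})-\Qstar_h(\mathsf{s},a)\le(\tfrac12+\epsilon)L\le L\le\sigma^2/H$, and this difference is $0$ at $\mathsf{g},\mathsf{b}$, so $\mutil\le\sigma^2/H$. Finally the cumulative reward lies in $\brk{0,mL}\subseteq\brk{0,H}$, consistent with the normalization $R=H$.

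\textbf{From regret to a testing gap.} Fix any (offline or online) algorithm and let $\qbar\in\brk{0,1}$ be the phase-averaged probability that its output $\pihat$ plays action $0$ at $\mathsf{s}$. Because actions at $\mathsf{g},\mathsf{b}$ are irrelevant, in instance~$1$ we have $J(\pihat)=Lm\qbar$ and $J(\pistar)=Lm(\tfrac12+\epsilon)$, hence $J(\pistar)-J(\pihat)=Lm(\tfrac12+\epsilon-\qbar)\ge\tfrac H2(\tfrac12+\epsilon-\qbar)$ (using $Lm>H/2$), and symmetrically $J(\pistar)-J(\pihat)\ge\tfrac H2(\qbar-\tfrac12+\epsilon)$ in instance~$2$. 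So on the event $A:=\crl{\qbar<\tfrac12}$ the regret in instance~$1$ is $\ge\tfrac{\epsilon H}{2}$, and on $A^{c}$ the regret in instance~$2$ is $\ge\tfrac{\epsilon H}{2}$. Letting $\bbPs{j}$ denote the law of the \learner's observations under instance~$j$,
\[
\bbPs{1}\prn[\big]{\tfrac{\epsilon H}{2}\le J(\pistar)-J(\pihat)}+\bbPs{2}\prn[\big]{\tfrac{\epsilon H}{2}\le J(\pistar)-J(\pihat)}\;\ge\;\bbPs{1}(A)+\bbPs{2}(A^{c})\;\ge\;1-\Dtv{\bbPs{1}}{\bbPs{2}},
\]
so at least one instance has failure probability $\ge\tfrac12(1-\Dtv{\bbPs{1}}{\bbPs{2}})$.

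\textbf{Indistinguishability, conclusion, and the main difficulty.} The instances have identical dynamics and identical (deterministic) expert behavior away from $\mathsf{s}$; the only law that differs between them is the expert's action at $\mathsf{s}$, which is $\Ber(\tfrac12+\epsilon)$ versus $\Ber(\tfrac12-\epsilon)$. The crucial point is that this remains true in the \emph{online} model: the world-independent dynamics force every rollout---whatever the \learner does---through exactly the states $\mathsf{s}$ at the fixed times $h_1,\dots,h_m$, so over $n$ episodes the \learner's transcript is a fixed function of its internal coins and exactly $nm$ such i.i.d.\ bits, whence by the data-processing inequality $\Dtv{\bbPs{1}}{\bbPs{2}}\le\Dtv{\Ber(\tfrac12+\epsilon)^{\otimes nm}}{\Ber(\tfrac12-\epsilon)^{\otimes nm}}$. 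By Pinsker and $\Dkl{\Ber(\tfrac12+\epsilon)}{\Ber(\tfrac12-\epsilon)}=O(\epsilon^2)$ this is $O(\epsilon\sqrt{nm})=O(c_0)\le\tfrac34$ for $c_0$ small enough, so one instance has failure probability $\ge\tfrac18$; and since $\tfrac{\epsilon H}{2}=\tfrac{c_0 H}{2\sqrt{nm}}=\Theta\prn[\big]{H\big/\sqrt{nH^2/\sigma^2}}=\Theta\prn[\big]{\sqrt{\sigma^2/n}}$, this gives the claimed bound. The remaining work is constant-chasing---the choice of $L$, the elementary $Lm>H/2$ and variance estimates, and in particular the Le Cam constant---to land the bound with the stated constant $c\ge1$; if needed one computes $\Dtv{\bbPs{1}}{\bbPs{2}}$ exactly rather than through Pinsker and optimizes $c_0$. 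I expect the online reduction (arguing interaction buys nothing) to be the main conceptual obstacle, and the constant bookkeeping to be the main tedious one.
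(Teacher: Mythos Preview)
Your construction and argument are correct and match the paper's: the same three-state block-structured MDP (the paper's block length $K$ is your $L$; its states $\mathfrak{s},\mathfrak{a},\mathfrak{b}$ are your $\mathsf{s},\mathsf{g},\mathsf{b}$), the same biased-Bernoulli experts at the decision state, and the same variance and recoverability computations.

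Two minor differences are worth noting. First, the paper runs a \emph{four}-instance argument, pairing each of the two experts with each of the two rewards and using that the observation law is reward-independent to collapse to absolute-value statements $|\bar\pi(\mathfrak a)-(\tfrac12\pm\Delta)|$; your direct two-instance Le Cam move via the event $\{\bar q<\tfrac12\}$ is cleaner and suffices for the stated theorem. Second, for indistinguishability the paper invokes a Hellinger chain-rule lemma (Lemma~D.2 of \citet{foster2024online}) to control the online interaction, whereas you exploit that $\mathsf s$ is visited at deterministic times regardless of the learner's actions, so the transcript is a fixed function of $nm$ i.i.d.\ Bernoulli bits and data processing applies directly---more elementary, and valid here because of the action-independent phase structure.

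One small caution: your remark that one can recover the stated constant $c\ge1$ by ``computing $\Dtv$ exactly and optimizing $c_0$'' is optimistic. Pushing $c_0$ large enough that $\epsilon H/2\ge\sqrt{\sigma^2/n}$ forces the $nm$-sample Bernoulli hypotheses to be well-separated and $\Dtv\to1$, collapsing the two-point bound. The paper's own proof only tracks the rate as $\Omega(\sqrt{\sigma^2/n})$ without exhibiting $c\ge1$, so read that clause as ``some absolute constant $c>0$.''
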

Beyond showing that a slow $1/\sqrt{n}$ rate is required for
stochastic policies,\footnote{\citet{rajaraman2020toward} show that
  for the tabular setting, it is possible to achieve a $1/n$-type rate
  \emph{in-expectation} for stochastic policies. Their result critically
  exploits the assumption that $\abs{\cX}$ and $\abs{\cA}$ are small
  and finite to argue that it is possible to build an unbiased
  estimator for $\pistar$. \cref{thm:variance_lower} shows that such a
  result cannot hold with even \emph{constant probability} for the
  same setting. We believe the fact that a $1/n$-type rate is 
  possible in expectation is an artifact of the tabular setting,
  and unlikely to hold for general policy classes.
  }
 when specializing to $\sigma^2=H^2$, this result shows that $\bigom(H^2)$
trajectories are required to learn a non-trivial policy under
a stochastic expert, even when $\log\abs{\Pi}=\bigoh(1)$; this reveals a fundamental difference between deterministic and stochastic experts,
since $\bigoh(H)$ sample complexity is sufficient in the former case.\loose}

Nonetheless, it is possible to obtain linear-in-$H$ sample complexity for
dense rewards under a recoverability-like condition. Let us define the
\emph{signed recoverability constant} via
\arxiv{\begin{align}
  \label{eq:signed_rec}
    \mutil = \max_{x\in\cX,a\in\cA,h\in\brk{H}}\abs[\big]{(\Qstar_h(x,\pistar_h(x))-\Qstar_h(x,a)}.
  \end{align}}
Note that $\mutil\in\brk{0,R}$, and that $\mutil\geq\murec$, since
this version counts actions $a$
that \emph{outperform $\pistar$}, not just those that underperform. It
is immediate to see that $\sigmastar^2\leq{}\mutil^2{}H$. Hence, even
if $R=H$, as long as $\mutil=\bigoh(1)$,
\cref{cor:bc_stochastic_finite} yields \iftoggle{neurips}{$ J(\pistar)-J(\pihat)
\approxleq{} \sqrt{H\log(\abs{\Pi})/n}$}{$ J(\pistar)-J(\pihat)
    \approxleq{} \sqrt{\frac{H\log(\abs{\Pi})}{n}}
    +\frac{H\log(\abs{\Pi})}{n}$}, so that
    $\bigoh\prn[\big]{\frac{H\log\abs{\Pi}}{\veps^2}}$ trajectories suffice to
    learn an $\veps$-optimal policy.\footnote{An interesting question
      for future work is to understand if a similar conclusion
      holds if we replace $\mutil$ with $\murec$.\loose}\loose

    See \cref{sec:additional_lower} for further
      results concerning tightness of \cref{thm:bc_stochastic},
      including instance-dependent lower bounds.\loose

    \paragraph{Consequences for online versus offline IL}
The lower bound in \cref{thm:variance_lower} holds even for online
imitation learning algorithms. Thus, similar to the deterministic setting, there is no online IL
algorithm that improves upon \cref{thm:bc_stochastic} uniformly for
all policy classes. This means that even for stochastic experts, online imitation learning cannot
  improve upon offline imitation learning in the realizable
    setting without further assumptions
  (e.g., no parameter sharing)
  on the policy class under consideration.

  \arxiv{\subsection{Proof Sketch for \creftitle{thm:bc_stochastic}}}
  When the expert is stochastic, the trajectory-wise distance in
  \cref{eq:traj_metric_body}, is no longer useful (i.e.,
  $\Drho{\pistar}{\pistar}\neq{}0$), which necessitates a more
  information-theoretic analysis. Our starting point is the following
  scale-sensitive change-of-measure lemma for Hellinger distance.\loose
  \begin{lemma}[Change-of-measure for Hellinger distance \citep{foster2021statistical,foster2022complexity}]
    \label{lem:hellinger_com} 
        Let $\bbP$ and $\bbQ$ be probability distributions over a measurable space
    $(\cX,\filt)$. Then for all functions $h:\cX\to\bbR$, 
            \begin{equation}
      \label{eq:com1}
      \abs*{\En_{\bbP}\brk*{h(X)}
        - \En_{\bbQ}\brk*{h(X)}}
      \leq{} \sqrt{\tfrac{1}{2}\prn*{\En_{\bbP}\brk*{h^2(X)} + \En_{\bbQ}\brk*{h^2(X)}}\cdot\Dhels{\bbP}{\bbQ}}.
    \end{equation}
    In particular, if $h\in\brk{0,R}$ almost surely, then \iftoggle{neurips}{$\En_{\bbP}\brk*{h(X)}
    \leq{} 2       \En_{\bbQ}\brk*{h(X)} + R\cdot\Dhels{\bbP}{\bbQ}$.}{
    \begin{align}
      \label{eq:com2}
      \En_{\bbP}\brk*{h(X)}
      \leq{} 2       \En_{\bbQ}\brk*{h(X)} + R\cdot\Dhels{\bbP}{\bbQ}.
    \end{align}
  }
  \end{lemma}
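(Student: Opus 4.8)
The plan is to prove \eqref{eq:com1} directly by the Cauchy--Schwarz inequality after a one-line factorization, and then to deduce \eqref{eq:com2} from \eqref{eq:com1} via the arithmetic--geometric mean inequality. Fix a common dominating measure $\omega$ for $\bbP$ and $\bbQ$ (for instance $\omega = \bbP + \bbQ$) and write $p = \mathrm{d}\bbP/\mathrm{d}\omega$ and $q = \mathrm{d}\bbQ/\mathrm{d}\omega$, so that $\Dhels{\bbP}{\bbQ} = \int \prn*{\sqrt{p} - \sqrt{q}}^{2}\,\mathrm{d}\omega$ and $\En_{\bbP}\brk*{h(X)} - \En_{\bbQ}\brk*{h(X)} = \int h\,(p - q)\,\mathrm{d}\omega$. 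The key step is the factorization $p - q = \prn*{\sqrt{p} - \sqrt{q}}\prn*{\sqrt{p} + \sqrt{q}}$, which lets us apply Cauchy--Schwarz to the product $\prn*{\sqrt{p} - \sqrt{q}}\cdot h\prn*{\sqrt{p} + \sqrt{q}}$:
\[
\abs*{\En_{\bbP}\brk*{h(X)} - \En_{\bbQ}\brk*{h(X)}} \;\le\; \sqrt{\int \prn*{\sqrt{p} - \sqrt{q}}^{2}\mathrm{d}\omega}\;\cdot\;\sqrt{\int h^{2}\prn*{\sqrt{p} + \sqrt{q}}^{2}\mathrm{d}\omega} \;=\; \sqrt{\Dhels{\bbP}{\bbQ}}\;\cdot\;\sqrt{\int h^{2}\prn*{\sqrt{p} + \sqrt{q}}^{2}\mathrm{d}\omega} .
\]
It remains to control the second factor, and here one uses the elementary bound $\prn*{\sqrt{p} + \sqrt{q}}^{2} = p + q + 2\sqrt{pq} \le 2(p + q)$ (AM--GM applied to $\sqrt{pq}$), which gives $\int h^{2}\prn*{\sqrt{p}+\sqrt{q}}^{2}\,\mathrm{d}\omega \le 2\prn[\big]{\En_{\bbP}\brk*{h^{2}(X)} + \En_{\bbQ}\brk*{h^{2}(X)}}$. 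Substituting this into the display above gives \eqref{eq:com1}, with the value of the absolute constant dictated by this cross-term bound (and the normalization convention for $\Dhels{\bbP}{\bbQ}$).

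For the second claim, suppose $h \in \brk{0,R}$ almost surely, so that $0 \le h^{2} \le R\,h$ pointwise and hence $\En_{\bbP}\brk*{h^{2}(X)} \le R\,\En_{\bbP}\brk*{h(X)}$ and $\En_{\bbQ}\brk*{h^{2}(X)} \le R\,\En_{\bbQ}\brk*{h(X)}$. Plugging these into \eqref{eq:com1} yields
\[
\En_{\bbP}\brk*{h(X)} - \En_{\bbQ}\brk*{h(X)} \;\le\; \sqrt{R\,\prn[\big]{\En_{\bbP}\brk*{h(X)} + \En_{\bbQ}\brk*{h(X)}}\cdot\Dhels{\bbP}{\bbQ}} ,
\]
and one then applies the weighted AM--GM inequality $\sqrt{ab}\le \lambda a + \tfrac{1}{4\lambda}b$ with $a = \En_{\bbP}\brk*{h(X)} + \En_{\bbQ}\brk*{h(X)}$ and $b = R\,\Dhels{\bbP}{\bbQ}$, taking $\lambda$ small enough that the resulting $\lambda\,\En_{\bbP}\brk*{h(X)}$ contribution can be absorbed into the left-hand side; rearranging to isolate $\En_{\bbP}\brk*{h(X)}$ then produces a bound of the form $\En_{\bbP}\brk*{h(X)} \le c_{1}\,\En_{\bbQ}\brk*{h(X)} + c_{2}\,R\,\Dhels{\bbP}{\bbQ}$, which is \eqref{eq:com2}.

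There is no real obstacle in this argument: the whole of \eqref{eq:com1} amounts to a single use of Cauchy--Schwarz against the right factorization of $p - q$, and the only steps needing attention are (i) the routine measure-theoretic check that the integrals above are well-defined --- immediate, since $\bbP$ and $\bbQ$ are probability measures and the sole standing hypothesis $h \in L^{2}(\bbP)\cap L^{2}(\bbQ)$ guarantees $h\prn*{\sqrt{p}+\sqrt{q}} \in L^{2}(\omega)$ --- and (ii) tracking the constants through the two AM--GM steps so that the raw computation matches the clean statements \eqref{eq:com1} and \eqref{eq:com2}. Since this lemma is quoted verbatim from \citep{foster2021statistical,foster2022complexity}, one may alternatively invoke it directly; we include the short derivation for completeness.
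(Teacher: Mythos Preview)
The paper does not include its own proof of this lemma; it is cited from prior work and used as a black box. Your argument---factor $p-q=(\sqrt p-\sqrt q)(\sqrt p+\sqrt q)$, apply Cauchy--Schwarz, then bound $(\sqrt p+\sqrt q)^2\le 2(p+q)$ for \eqref{eq:com1}; and for \eqref{eq:com2} use $h^2\le Rh$ followed by weighted AM--GM to absorb the $\En_{\bbP}\brk*{h}$ term into the left-hand side---is exactly the standard derivation one finds in the cited references, and it is correct.

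One small remark: the Cauchy--Schwarz route yields constant $2$ under the square root in \eqref{eq:com1}, not the printed $\tfrac12$, and in fact $\tfrac12$ cannot hold in general (take $\bbP,\bbQ$ with disjoint supports and $h=\pm 1$: the left-hand side equals $2$ while the right-hand side with constant $\tfrac12$ equals $\sqrt2$). So your explicit hedging on the constants is appropriate rather than a gap; the discrepancy is immaterial for every use of the lemma in the paper.
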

\iftoggle{neurips}{We sketch how to use
    \cref{lem:hellinger_com} to prove a weaker version of
    \cref{thm:bc_stochastic}, and defer the full proof, which builds
    on this argument, to \cref{sec:stochastic_proof}.}{We first sketch how to use this result to prove a weaker version
  of \cref{thm:bc_stochastic}, then explain how to strengthen this
  argument.} Define the \emph{sum of advantages} for a trajectory
  $o=(x_1,a_1),\ldots,(x_H,a_H)$ via %
  \arxiv{\begin{align}
           \Delta(o)=\sum_{h=1}^{H}\Qstar_h(x_h,\pistar_h(x_h))-\Qstar_h(x_h,a_h)
           =\sum_{h=1}^{H}\Vstar_h(x_h)-\Qstar_h(x_h,a_h).
  \end{align}}
  By the performance difference lemma, we can write
  $J(\pistar)-J(\pihat)=\En^{\pihat}\brk*{\Delta(o)}$, so applying
  \cref{eq:com1} yields
  \begin{align}
    J(\pistar)-J(\pihat)=\En^{\pihat}\brk*{\Delta(o)}
    \approxleq{} %
    \arxiv{\underbrace{\En^{\pistar}\brk*{\Delta(o)}}_{=0}}
    + \sqrt{(\En^{\pihat}\brk*{\Delta^2(o)}+\En^{\pistar}\brk*{\Delta^2(o)})\cdot\Dhels{\bbP^{\pihat}}{\bbP^{\pistar}}}.
  \end{align}
  From here, we observe that $\En^{\pistar}\brk*{\Delta(o)}=0$ and
  $\En^{\pistar}\brk*{\Delta^2(o)}=\sigmastar^2$ (this follows because advantages are a
  martingale difference sequence under $\bbP^{\pistar}$), so all that 
  remains is to bound the term $\En^{\pihat}\brk*{\Delta^2(o)}$. A
  crude approach is to observe that $\abs*{\Delta(o)}\leq{}\mutil{}H$,
  so that applying \iftoggle{neurips}{\cref{lem:hellinger_com}}{\cref{eq:com2}} gives \iftoggle{neurips}{$\approxleq     \En^{\pistar}\brk*{\Delta^2(o)} + (\mutil{}H)^2\cdot\Dhels{\bbP^{\pihat}}{\bbP^{\pistar}}$, }{
  \begin{align}
    \label{eq:com_loose}
    \En^{\pihat}\brk*{\Delta^2(o)}
    \arxiv{\approxleq     \underbrace{\En^{\pistar}\brk*{\Delta^2(o)}}_{=\sigmastar^2} + (\mutil{}H)^2\cdot\Dhels{\bbP^{\pihat}}{\bbP^{\pistar}},}
  \end{align}
  }
  and consequently %
  \arxiv{  \begin{align}
        J(\pistar)-J(\pihat)
    \approxleq{} \sqrt{\sigmastar^2\cdot\DhelsX{\big}{\bbP^{\pihat}}{\bbP^{\pistar}}}
    + \mutil{}H\cdot\DhelsX{\big}{\bbP^{\pihat}}{\bbP^{\pistar}}.
  \end{align}}
  This falls short of \cref{eq:bc_stochastic} due to
  the suboptimal lower-order term, which does not recover
  \cref{thm:bc_deterministic} when $\pistar$ is deterministic
  ($\sigmastar^2=0$). \iftoggle{neurips}{The full proof in
    \cref{sec:stochastic_proof} corrects this disparity using a subtle and
      significantly more involved argument based on stopping times and
      martingale concentration. }
  {
  To address this, we make use of the
  following \emph{advantage concentration lemma}.
      \begin{lemma}[Concentration for advantages]
    \label{lem:advantage_concentration}
  Assume that $r_h\geq{}0$ and $\sum_{h=1}^{H}r_h\in\brk{0,R}$ almost
  surely for some $R>0$. Then for any (potentially stochastic) policy $\pi$, it holds that for all
  $\delta\in(0,e^{-1})$,
  \begin{align}
    \bbP^{\pi}\brk*{\exists{}H' : \abs*{\sum_{h=1}^{H'}Q^{\pi}_h(x_h,a_h)-V^{\pi}_h(x_h)}\geq{}c\cdot{}R\log(\delta^{-1})}
    \leq \delta,
  \end{align}
  for an absolute constant $c>0$.
  \end{lemma}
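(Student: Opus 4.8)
The plan is to read the partial sums as a martingale and apply a time-uniform Bernstein inequality. Fix a policy $\pi$ and let $\filt_h = \sigma(x_1,a_1,\ldots,x_h)$ be the natural filtration. Writing $M_h := Q^{\pi}_h(x_h,a_h) - V^{\pi}_h(x_h)$ and $S_{H'} := \sum_{h=1}^{H'} M_h$, the tower property together with $a_h\sim\pi(x_h)$ and $V^{\pi}_h(x_h) = \En_{a\sim\pi(x_h)}\brk*{Q^{\pi}_h(x_h,a)}$ shows that $(S_{H'})_{H'\le H}$ is a martingale under $\bbP^{\pi}$, with $\En^{\pi}\brk*{M_h^2\mid\filt_h} = \Var_{a\sim\pi(x_h)}\brk*{Q^{\pi}_h(x_h,a)}$. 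Moreover, since $r_h\ge 0$ and $\sum_{h'\ge h}r_{h'}\le\sum_{h'=1}^{H}r_{h'}\le R$ almost surely, we have $Q^{\pi}_h(x,a), V^{\pi}_h(x)\in\brk*{0,R}$ pointwise, hence $\abs{M_h}\le R$ almost surely. A time-uniform Freedman/Bernstein inequality (via an exponential supermartingale and Ville's inequality) then gives, with probability at least $1-\delta$, simultaneously over all $H'\le H$,
\begin{align}
  \abs{S_{H'}} \lesssim \sqrt{W_{H'}\log(\delta^{-1})} + R\log(\delta^{-1}),
  \qquad W_{H'} := \sum_{h=1}^{H'}\En^{\pi}\brk*{M_h^2\mid\filt_h},
\end{align}
so the lemma follows once we show $W_{H}\lesssim R^2\log(\delta^{-1})$ with probability at least $1-\delta$.

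The main work is in controlling the predictable quadratic variation $W_{H'}=\sum_{h\le H'}\Var_{a\sim\pi(x_h)}\brk*{Q^{\pi}_h(x_h,a)}$. Define the variance-to-go $\bbV_h(x):=\Var^{\pi}\brk[\big]{\sum_{h'\ge h}r_{h'}\mid x_h=x}$, which satisfies $\bbV_h(x)\le R^2$ pointwise (again using $\sum_{h'\ge h}r_{h'}\in\brk*{0,R}$ a.s.) and obeys a Bellman-type recursion in which $\Var_{a\sim\pi(x)}\brk*{Q^{\pi}_h(x,a)}$ appears as one of several nonnegative terms. Telescoping this recursion along the trajectory and collecting the fluctuations of the variance-to-go into a single mean-zero martingale $(\eta_{H'})_{H'\le H}$ with increments bounded by $R^2$, one obtains the almost-sure bound $W_{H'}\le R^2 + \abs{\eta_{H'}}$. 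In particular $\En^{\pi}\brk*{W_H}\le R^2$ — this is just the law-of-total-variance bound $\En^{\pi}\brk*{W_H}\le\Var^{\pi}\brk*{\sum_h r_h}\le R^2$ — but we need the stronger high-probability statement, which reduces to showing $\sup_{H'}\abs{\eta_{H'}}\lesssim R^2\log(\delta^{-1})$ with probability at least $1-\delta$. This is handled by a self-bounding recursion: the predictable quadratic variation of $\eta$ is itself controlled, up to a factor $R^2$, by a quantity of the same form as $W_H$, so iterating the argument $O(\log H)$ times drives the residual below the target level, incurring only polylogarithmic overhead. In order, the steps are: (i) set up the martingale and verify $\abs{M_h}\le R$; (ii) state the time-uniform Bernstein bound; (iii) derive $W_{H'}\le R^2+\abs{\eta_{H'}}$ from the variance-to-go recursion; (iv) run the self-bounding recursion to bound $\eta$; (v) combine. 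A convenient shortcut for step (iii) is the identity $S_{H'}=\sum_{h\le H'}r_h + V^{\pi}_{H'+1}(x_{H'+1}) - V^{\pi}_1(x_1) - T_{H'}$, where $T_{H'}$ is the transition-noise martingale, which reduces everything to concentrating $T_{H'}$.

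The hard part is step (iv). The obstruction is that $W_H$ is bounded by $R^2$ only in expectation, not almost surely, so a naive Markov bound on $W_H$ — or a one-shot Freedman inequality applied with the expected variance as the proxy — yields only a polynomial-in-$1/\delta$ tail, far too weak for the claimed $\log(\delta^{-1})$ rate. The nested-variance recursion is precisely what upgrades the first-moment control to an exponential tail; getting the bookkeeping right — tracking how the effective variance budget contracts level by level and checking that the accumulated logarithmic factors do not spoil the final bound — is the technical heart of the proof. Everything else (reading $S$ as a martingale, the Bernstein step, and the final combination) is routine.
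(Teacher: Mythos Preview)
Your high-level plan---read $S_{H'}$ as a martingale, apply a time-uniform Freedman/Bernstein bound, then show the predictable quadratic variation is $O(R^2\log(\delta^{-1}))$ with high probability---is correct and matches the paper. The gap is in step~(iv), which you correctly flag as the crux. The claim that the predictable quadratic variation of $\eta$ is ``controlled, up to a factor $R^2$, by a quantity of the same form as $W_H$'' is not substantiated and, as stated, is false: the PQV of $\eta$ is $\sum_h\Var^{\pi}\brk*{\bbV_{h+1}(x_{h+1})\mid x_h}$, and if for instance $\pi$ is deterministic but transitions are stochastic then $W_H=0$ while this PQV need not vanish. If instead you bound $\Var\brk*{\bbV_{h+1}\mid x_h}\le R^2\,\En\brk*{\bbV_{h+1}\mid x_h}$ and telescope, you land on $\sum_h\bbV_h(x_h)$, which can be $\Theta(HR^2)$---there is no contraction along the recursion you describe. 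Finally, even if an $O(\log H)$-level recursion could be made to work, the ``polylogarithmic overhead'' would be polylogarithmic in $H$, contradicting the lemma's claim of an absolute constant $c$.

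The paper avoids any recursion by telescoping on $V_h^2$ rather than on the variance-to-go. Normalize $R=1$ and set $Z_{H'}=\sum_h\Var^{\pi}\brk*{V_{h+1}(x_{h+1})\mid x_h}$. Telescoping writes $Z_{H'}$ as a martingale in $V_{h+1}^2$ plus terms $\lesssim\sum_h\En^{\pi}\brk*{r_h\mid x_h}$; the latter is $O(\log(\delta^{-1}))$ by multiplicative Freedman, crucially using the pathwise bound $\sum_h r_h\le 1$. For the former, the key observation is that $\Var\brk*{V_{h+1}^2\mid x_h}\le 4\,\Var\brk*{V_{h+1}\mid x_h}$ whenever $|V|\le 1$, so the PQV of the $V^2$-martingale is at most $4Z_{H'}$ \emph{itself}. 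Applying Freedman with parameter $\eta=1/8$ then yields $Z_{H'}\le\tfrac{1}{2}Z_{H'}+O(\log(\delta^{-1}))$---a one-shot self-bounding with no $H$ anywhere. The same $V^2$-telescoping bounds your $W_{H'}$ by $Z_{H'}$ plus reward terms, which completes the argument. Your shortcut identity $S_{H'}=\sum_h r_h+V_{H'+1}-V_1-T_{H'}$ is correct and useful, and reduces everything to controlling the PQV of the transition-noise martingale $T$; but that PQV is essentially $Z_{H'}$, so you still need the $V^2$-telescoping trick, not the $\bbV_h$ route.
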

This result shows that even though the sum of advantages $\Delta(o)$
could be as large as $\mutil{}H$ for a given realization of the
trajectory $o=(x_1,a_1),\ldots,(x_H,a_H)$, the range is bounded as
$\abs{\Delta(o)}\approxleq{}R$ (i.e., horizon-independent) \emph{with
  high probability} under $\pistar$. From here, the crux of the proof
is a stopping time argument, which we use to argue
that---up to negligible approximation error---we can truncate
$\Delta(o)$ to order $R$, facilitating a tighter application of the
change-of-measure argument in \cref{eq:com_loose}. The stopping time
argument is quite subtle and somewhat involved, owing to the fact that while $\Delta(o)$
concentrates well under $\bbP^{\pistar}$, it is not guaranteed (a-priori) to
concentrate under $\bbP^{\pihat}$.
  }

  \arxiv{
    \section{Benefits of Online Interaction}
        \label{sec:online}
        
Our results in \cref{sec:main,sec:stochastic} show that the benefits of online
interaction in imitation learning---to the extent that horizon is
concerned---are more limited than previously thought. We expect that
in practice, online interaction will likely lead to benefits, but only in a
problem-dependent sense. To this end, we first discuss the
  role of misspecification and the realizability assumption used by
  our results, then highlight several special cases in
  which online interaction is indeed beneficial, but in a policy class-dependent fashion not captured by existing theory. In particular, we identify three phenomena which lead to
  improved sample complexity: (i) \emph{representational benefits};
  (ii) \emph{value-based feedback}; and (iii) \emph{exploration}. Our
  results in this section can serve as a starting point toward developing a
more fine-grained understanding of algorithms and sample complexity of imitation
learning.

\subsection{The Role of Misspecification}
This paper (for both deterministic and stochastic experts) focuses on
the realizable setting in which $\pistar\in\Pi$.  It is natural to ask
how the role of horizon in imitation learning changes under
misspecification, and whether
  online interaction brings greater benefits in this case.
This is a subtle issue, as there are various
incomparable notions of misspecification error which can lead to
different forms of horizon dependence. For example, for deterministic
experts, if $\Pi$ is misspecified in the sense that
$\inf_{\pi\in\Pi}\Lbc(\pi)\leq\vepsapx$, the indicator-loss behavior cloning
algorithm in \cref{eq:bc} achieves $J(\pistar) -
J(\pihat)\approxleq{}RH\cdot{}\prn*{\frac{\log(\abs{\Pi}\delta^{-1})}{n}+\vepsapx}$,
which is tight in general. In other words, the dependence on
$\vepsapx$ is not horizon-independent. On the other hand, as we show
in \cref{sec:additional}, if we assume that
$\inf_{\pi\in\Pi}\Dchis{\bbP^{\pistar}}{\bbP^{\pi}}\leq\vepsapx$, a
stronger notion of misspecification error, then \loglossbc achieves a
horizon-independent guarantee of the form $J(\pistar) -
J(\pihat)\approxleq{}R\cdot{}\prn*{\frac{\log(\abs{\Pi}\delta^{-1})}{n}+\vepsapx}$. We
leave a detailed investigation of tradeoffs between
      misspecification and horizon (as well as interplay with
      online versus offline IL) for future work; by giving the
      first horizon-independent treatment for the realizable setting,
      we hope that our results can
      serve as a starting point.\loose

\subsection{Representational Benefits}
The classical intuition behind algorithms like \dagger and \aggrevate
(which \cref{def:recoverability} attempts to quantify) is
\emph{recoverability}: through online access, we can learn to correct
the mistakes of an imperfect policy. Our results in \cref{sec:main,sec:stochastic}
show that recoverability has limited benefits for stationary
policy classes as far as horizon is concerned. In spite of this, the
following proposition shows that recoverability can have pronounced
benefits for \emph{representational} reasons, even with constant horizon.
\begin{proposition}[Representational benefits of online IL]
  \label{prop:benefits_representation}
    For any $N\in\bbN$, there exists a class $\cM$ of MDPs with $H=2$ and a policy
  class $\Pi$ with $\log\abs{\Pi}=\bigoh(N)$ such that
  \begin{itemize}
  \item There is an online imitation learning algorithm that achieves
    $J(\pistar)-J(\pihat)=0$ with probability at least $1-\delta$ using $O(\log(\delta^{-1}))$ episodes for any MDP
    $\Mstar\in\cM$ and expert policy $\pistar\in\Pi$. In particular,
    this can be achieved by \dagger.
  \item Any proper offline imitation learning algorithm requires $n=\bigom(N)$
    trajectories to learn a non-trivial policy with
    $J(\pistar)-J(\pihat)\leq{}c$ for an absolute constant
    $c>0$.\footnote{We expect that this result extends to
      \emph{improper} offline IL algorithms for which
      $\pihat\notin\Pi$, but a more complicated construction is
      required; we leave this for the next version of the paper.}
  \end{itemize}
\end{proposition}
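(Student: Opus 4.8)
The plan is to build a family of MDPs whose expert demonstrations are information-poor in a way that provably traps any \emph{proper} offline learner, while online interaction sidesteps the trap by deliberately steering rollouts to a ``revealing'' state that the expert itself never visits. Concretely, fix $N$ and take $H=2$. There is one start state $x_1$ (so $r_1\equiv 0$), with $N+1$ actions $a_1,\dots,a_N,\mathsf{probe}$; playing $a_i$ moves deterministically to a context state $z_i$, and playing $\mathsf{probe}$ moves deterministically to an anchor state $\star$. At $z_i$ the action set is $\{0,1\}$ with $r_2(z_i,b)=\indic\{b=\theta_i\}$, where $\theta\in\{0,1\}^N$ indexes the MDP family $\cM=\{M_\theta\}$; at $\star$ the action set is $\{0,1\}^N$ and $r_2(\star,\cdot)\equiv 0$. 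Let $\Pi=\{\pi_\theta\}\cup\{\pitil_\theta\}$ over $\theta\in\{0,1\}^N$, where $\pi_\theta$ plays $\Unif\{a_1,\dots,a_N\}$ at $x_1$, plays $\theta_i$ at $z_i$, and plays the string $\theta$ at $\star$, while $\pitil_\theta$ plays $\mathsf{probe}$ at $x_1$ and $\theta$ at $\star$ (its behavior at the unreachable $z_i$ is irrelevant). Then $\log\abs{\Pi}=\bigoh(N)$, and for every $M_\theta$ each member of $\Pi$ is an admissible expert, so realizability is automatic.

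\textbf{Offline lower bound.} Draw $\theta\sim\Unif(\{0,1\}^N)$ and let $\pistar=\pi_\theta$. Every reward-free trajectory has the form $(x_1,a_i),(z_i,\theta_i)$ with $i\sim\Unif[N]$, so $n$ trajectories reveal $\theta$ on at most $n$ coordinates; for $n\leq N/3$, a Chernoff bound leaves $\geq N/4$ coordinates unobserved and uniform given the data. A proper learner must return some $\pihat\in\Pi$: if $\pihat=\pitil_{\theta'}$ then $\pihat$ rolls out to $\star$ and $J(\pistar)-J(\pihat)=1$; if $\pihat=\pi_{\theta'}$ then $J(\pistar)-J(\pihat)=d_{\mathrm H}(\theta,\theta')/N$, whose posterior mean is at least $\tfrac12\cdot\tfrac{N/4}{N}=\tfrac18$. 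Averaging over $\theta$ yields an instance in $\cM$ on which $\En[J(\pistar)-J(\pihat)]\geq\tfrac18$, so $n=\bigom(N)$ trajectories are needed before the regret can be driven below a small constant.

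\textbf{Online upper bound.} Run \dagger (with the log loss) seeded with $\pihat^{(1)}=\pitil_0$. In the first episode the learner plays $\mathsf{probe}$, reaching $\star$ deterministically, and observes the expert annotations $\pistar(x_1)$ and $\pistar(\star)$. The observation at $\star$ is the full $N$-bit label of $\pistar$ within $\Pi$, and the observation at $x_1$ ($\mathsf{probe}$ versus some $a_i$) reveals which of the two sub-families $\pistar$ lies in; hence the aggregated dataset pins down $\pistar$ uniquely. A short check shows the \dagger re-fit (maximum likelihood over $\Pi$) therefore returns $\pihat=\pistar$: the $\pitil$-policies and the $\pi$-policies are perfectly separated by the step-$1$ observation (one puts zero mass exactly where the other puts all its mass), and within the correct sub-family only the correctly-labeled policy has positive likelihood at $\star$. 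Thus $J(\pistar)-J(\pihat)=0$ after $O(1)$ episodes, which fits inside the stated $O(\log(\delta^{-1}))$ budget; that budget also leaves room to make the transition to $\star$ succeed only with constant probability, in which case $O(\log(\delta^{-1}))$ episodes suffice to reach $\star$ at least once.

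\textbf{Main obstacle.} The crux is making offline-hardness and online-easiness hold \emph{simultaneously}: the former demands that the decision-relevant content of $\pistar$ be diffused over $\bigom(N)$ rare events of the expert's own occupancy measure, while the latter demands that it be recoverable from $O(\log(\delta^{-1}))$ rollouts. The construction threads this needle by (i) storing the full identity of $\pistar$ on the anchor $\star$, which lies off $\pistar$'s support (so offline data says nothing about it) yet is one deliberate deviation away for \dagger, and (ii) shaping $\Pi$ so that the \dagger re-fit does not snap back onto the expert's spread-out step-$1$ behavior before the label at $\star$ has been read---this is why the seed must be the probe policy and why the label must be carried by a single large-alphabet action rather than split across many small ones. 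The one step that needs genuine care is verifying this re-fit behavior at finite samples (that the log-loss never prefers a spread-at-$x_1$ policy to the correctly-labeled one), which reduces to the observation that $\pitil$-policies assign zero probability to every $a_i$ at $x_1$ while $\pi$-policies assign zero probability to $\mathsf{probe}$, so a single step-$1$ annotation separates the two sub-families and a single annotation at $\star$ separates the labels within each.
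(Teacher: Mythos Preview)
Your construction is correct and establishes the proposition, but it takes a genuinely different route from the paper. The paper indexes instances by a subset $S\subset[N]$ of size $N/2$ and a bit $a^\star\in\{\afrak,\bfrak\}$: the expert plays $\unif(S)$ at step~1, which always lands in a ``good'' state $\yfrak$ with reward $1$; playing an action outside $S$ lands in $\zfrak$, where the reward depends on the unobserved bit $a^\star$. Offline data therefore never visits $\zfrak$, so the proper learner must learn $S$ itself well enough to put most of its step-$1$ mass on $S$, which reduces to discrete distribution estimation and forces $n=\Omega(N)$. Online, \dagger with the uniform-mixture estimator plays $\unif([N])$ at step~1 and so reaches $\zfrak$ with probability $1/2$ per episode, learning $a^\star$ and achieving zero regret once this happens; the mechanism is precisely \emph{recoverability}---making mistakes at step~1 and correcting them at step~2.

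Your construction instead places the complexity at step~2 (the $N$ bits $\theta_i$ spread across the $z_i$) and introduces a dedicated revealing state $\star$ where the expert's single action is the full string $\theta\in\{0,1\}^N$; you make this state \dagger-accessible by stuffing probe policies $\tilde\pi_\theta$ into $\Pi$ and seeding with one of them. This is conceptually closer to the paper's separate ``benefits of exploration'' result (\cref{prop:benefits_exploration}) than to the recoverability story underlying the present proposition, though your trick of including the probe policies in $\Pi$ does make \dagger succeed where it would fail in that other construction. Trade-offs: your approach yields a proper final policy and a very direct lower bound (coordinate-wise revelation), at the cost of an exponentially large action space at $\star$; the paper keeps $|\cA|=O(N)$ and lets the complexity sit in the expert's step-$1$ distribution, which more cleanly illustrates the ``representation learning at $h=1$ is hard'' narrative in the surrounding discussion. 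A minor looseness in your write-up: the Chernoff bound is unnecessary (at most $n$ coordinates are observed deterministically, so for $n\le N/3$ at least $2N/3$ are unobserved), and you should be explicit that the final output is the re-fit policy rather than the uniform mixture over iterates, since the standard \dagger output $\unif(\pihat^{(1)},\ldots,\pihat^{(n)})$ would not have zero regret here---the paper makes the same move.
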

The idea behind this construction is as follows: The behavior of the
(stochastic) expert policy at step $h=1$ is very complex, and learning
to imitate it well in distribution (e.g., with respect to total
variation or Hellinger distance) is a difficult representation
learning problem (in the language of \cref{sec:main}, e.g., \cref{thm:bc_deterministic}, we must
take $\log\abs{\Pi_1}$ very large in order to realize $\pistar_1$). For
offline imitation learning, we have no choice but to imitate $\pistar_1$
well at $h=1$, leading to the lower bound in \cref{prop:benefits_representation}. With online access though, we can give up on learning
$\pistar_1$ well, and instead learn to correct our mistake at step
$h=2$. For the construction in \cref{prop:benefits_representation},
this a much easier representation learning problem,
and requires very low sample complexity (i.e., we can realize
$\pistar_2$ with a class
$\Pi_2$ for which $\log\abs{\Pi_2}$ is small. We conclude that \dagger can indeed lead to substantial benefits over
offline IL, but for representational reasons unrelated to horizon, and
not captured by existing theory. While this example is somewhat
contrived, it suggests that potential to develop a deeper
understanding of representational benefits in imitation learning,
which we leave as a promising direction for future work.

\subsection{Benefits of Value-Based Feedback}

Beginning with the work of \citet{ross2014reinforcement} on
\aggrevate, many works (e.g., \citet{sun2017deeply}) consider a
\emph{value-based feedback} variant of the online IL framework (\cref{sec:background}) where in
addition to (or instead of) observing $\astar_h$, the learner observes
the expert's advantage
function
$\Astar_h(x_h,\cdot)\ldef{}\Qstar_h(x_h,\pistar_h(x_h))-\Qstar_h(x_h,\cdot)$
or value function $\Qstar_h(x_h,\cdot)$ at every state visited by the learner (see \cref{sec:value}
for details, which are deferred to the appendix for space). While such feedback
intuitively seems useful, existing theoretical guarantees---to the
best of our knowledge---\citep{ross2014reinforcement,sun2017deeply}
only show that algorithms like \aggrevate are no worse than non-value based methods
like \dagger, and do not quantify situations in which value-based
feedback actually leads to improvement.\footnote{These results
  are reductions which bound regret in terms of different notions of supervised
  learning performance, which makes it somewhat difficult to compare them
  or derive concrete end-to-end guarantees.}

The following result shows that i) value-based feedback can lead to arbitrarily large
improvement over non-value based feedback for representational reasons similar to \cref{prop:benefits_representation}
(that is for a complicated stochastic expert, learning to optimize a
fixed value function can be much easier than learning to imitate the
expert well in TV distance), but ii) it is only possible to
exploit value-based feedback in this fashion under online interaction
(that is, even if we annotate the trajectories for offline imitation
learning with $\Astar_h(x_h,\cdot)$ for the visited states, this
cannot lead to improvement in sample complexity).
\begin{proposition}[Benefits of value-based feedback (informal)]
  \label{prop:benefits_value}
  For any $N\in\bbN$, there is a class of MDPs $\cM$ with $H=2$ and a policy
  class $\Pi$ with $\log\abs{\Pi}=\bigoh(N)$ such that
  \begin{itemize}
  \item There is an online imitation learning algorithm with
    value-based feedback that achieves
    $J(\pistar)-J(\pihat)=0$ with probability at least $1-\delta$
    using $O(\log(\delta^{-1}))$ episodes for every MDP
    $\Mstar\in\cM$ and expert $\pistar\in\Pi$.
    In particular, this can be achieved by
    \aggrevate.
  \item Any proper offline imitation learning algorithm (with value-based
    feedback) or proper online imitation learning algorithm (without
    valued-based feedback) requires $n=\bigom(N)$
    trajectories to learn a non-trivial policy with
    $J(\pistar)-J(\pihat)\leq{}c$ for an absolute constant
    $c>0$.\footnote{As with \cref{prop:benefits_representation}, we
      expect that this lower bound can be extended to improper learners, but a more
      complicated construction is required.}
    \loose
  \end{itemize}
\end{proposition}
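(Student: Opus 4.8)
The plan is to prove \cref{prop:benefits_value} by refining the construction behind \cref{prop:benefits_representation}, adding a second ``layer'' of hardness so that the on-policy correction enabled by online interaction additionally \emph{requires} value-based feedback. As in \cref{prop:benefits_representation} we take $H=2$ and index the hard instances by a hidden vector $\theta$ of dimension $\Theta(N)$; the policy class $\Pi$ contains, for each $\theta$, a ``complex'' realization $\pi^{\theta}$ of the corresponding expert, so that $\log\abs{\Pi}=\bigoh(N)$, and --- crucially for the lower bounds against \emph{proper} learners --- $\Pi$ contains no short-description policy that behaves well across instances. The expert $\pistar=\pi^{\theta^{\star}}$ is (near-)optimal but deliberately randomized: both $\pistar_1(x_1)$ and, at the $h=2$ states on $\pistar$'s path, $\pistar_2(\cdot)$ are complex stochastic distributions that cannot be estimated to a useful total-variation accuracy with fewer than $\bigom(N)$ action samples, while the advantage functions $\Astar_1(x_1,\cdot)$ and $\Astar_2(\cdot,\cdot)$ are \emph{deterministic} and expose the optimal-action set at every state in a single query. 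The informative part of the state space --- where following $\Astar$ pins down a good policy in $\Pi$ --- is reachable only by \emph{deviating} from $\pistar$ at $h=1$, and is never visited by $\pistar$ itself.

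For the positive direction I would run \aggrevate: a rollout visits $x_1$ and a bounded set of $h=2$ states, so after $\bigoh(\log(\delta^{-1}))$ episodes the learner has, with probability at least $1-\delta$, observed $\Astar_1$ and $\Astar_2$ on every state reachable by any policy in $\Pi$. Since the advantages are deterministic and vanish exactly on optimal actions, the empirical cost-sensitive classification objective is minimized --- with value $0$ --- by any $\pi\in\Pi$ that plays optimally at all visited states; such a $\pi$ exists, is optimal, and hence has $J(\pistar)-J(\pihat)\le 0$. Arranging that this is exactly $0$ amounts to tuning the rewards so that $\pistar$ is optimal in \emph{value} while remaining hard to imitate in distribution --- the same tuning already carried out for \cref{prop:benefits_representation}.

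The two lower bounds are the technical heart, and both are information-theoretic. For \emph{proper offline IL with value feedback}: the dataset is a set of $\pistar$-trajectories annotated with $\Astar$, but $\pistar$ visits only a $\theta^{\star}$-\emph{uninformative} region of the state space (by construction, the optimal-action structure --- hence $\Astar$ --- on $\pistar$'s support is essentially independent of $\theta^{\star}$), so the value annotations are useless for identifying $\theta^{\star}$, and a proper output $\pihat=\pi^{\hat\theta}\in\Pi$ can have $J(\pistar)-J(\pihat)\le c$ only by essentially recovering $\theta^{\star}$ from the noisy action samples; a Fano/Assouad-type argument then gives $n=\bigom(N)$. For \emph{proper online IL without value feedback} (e.g.\ \dagger): the learner rolls out its own policies but observes only draws $\astar_h\sim\pistar_h(x_h)$ from the complex stochastic expert, each carrying $o(1)$ bits about $\theta^{\star}$ on average; combined with the absence of shortcut policies in $\Pi$, a hidden-bit argument over the rollout tree again forces $n=\bigom(N)$. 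The contrast with the positive direction is precisely that \aggrevate can roll out policies that \emph{leave} $\pistar$'s support and query $\Astar$ at the informative states, whereas offline data is confined to $\pistar$'s path and action-only online data is too noisy to reveal $\theta^{\star}$ anywhere.

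The main obstacle is making these three requirements hold simultaneously. Defeating \emph{offline-with-value} forces $\pistar$ to tour a $\theta$-uninformative region, so that $\Astar$ on its path reveals nothing; defeating \emph{online-without-value} forces $\pistar$'s action distributions to be genuinely complex and $\Pi$ to be shortcut-free; yet for \emph{online-with-value} to succeed, an arbitrary rollout must reach the informative states and find a simple, deterministic advantage there. The resolution --- exactly as in \cref{prop:benefits_representation} --- is that $\pistar$ is engineered to be suboptimal/ignorable along the informative direction: the states carrying the useful deterministic advantage are reached only by deviating from $\pistar$ at $h=1$, so an online-with-value learner that abandons imitating $\pistar$ lands on and exploits them, while offline data and action-only online data never do. Verifying that $\log\abs{\Pi}=\bigoh(N)$, the two $\bigom(N)$ lower bounds, and the $\bigoh(\log(\delta^{-1}))$ upper bound are mutually compatible is the delicate bookkeeping that \cref{sec:value} carries out.
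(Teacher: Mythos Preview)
Your high-level plan matches the paper's: a two-step construction where the expert's on-path behavior is parameter-uninformative (so offline value annotations are useless), while an off-path $h=2$ state carries a deterministic advantage that reveals a good action, letting \aggrevate succeed after $O(\log(\delta^{-1}))$ deviating rollouts; the two lower bounds then come from discrete-distribution-estimation hardness.

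One detail is reversed in a way that matters. You place the layer-$2$ complexity of $\pistar$ \emph{on} the expert's path; the paper places it \emph{off}-path, and uses \emph{two independent} hidden parameters rather than one. Concretely, instances are indexed by $S_1,S_2\subset[N]$ with $\abs{S_1}=\abs{S_2}=N/2$; $\pistar_1(\xfrak)=\unif(S_1)$ always lands at the on-path state $\yfrak$, where $\pistar_2(\yfrak)=\unif([N])$ is parameter-free and the reward is identically $1$ (so $\Astar_2(\yfrak,\cdot)\equiv 0$, and likewise $\Qstar_1(\xfrak,\cdot)\equiv 1$ so $\Astar_1\equiv 0$). The complex expert $\pistar_2(\zfrak)=\unif(S_2)$ and the informative advantage $\Astar_2(\zfrak,a)=\indic\{a\notin S_2\}$ both live at the off-path state $\zfrak$. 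This placement is what makes the online-without-value lower bound go through: a \dagger-style learner \emph{does} reach $\zfrak$, but only receives action draws from $\unif(S_2)$, and a proper policy $\pihat=\pi_{\wh{S}_1,\wh{S}_2}$ has small regret only if either $\wh{S}_1\approx S_1$ or $\wh{S}_2\approx S_2$ --- both of which require $\bigom(N)$ samples. In your description, the off-path state is where the advantage ``pins down a good policy''; if $\pistar_2$ were simple there, a single action sample from the expert at $\zfrak$ would be just as informative as the advantage, and the online-without-value lower bound would collapse. Making $\pistar_2$ complex on-path (where rewards are constant) is at best irrelevant, since no policy needs to imitate it there.

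A related small point: you write that $\Astar_1(x_1,\cdot)$ ``exposes the optimal-action set.'' In the paper's construction the on-path advantages are identically zero; offline value feedback is useless because it is vacuous, not because it is informative-but-unexploitable.
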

As with \cref{prop:benefits_representation}, this example calls for a
fine-grained policy class-dependent theory, which we hope to explore
more deeply in future work.
\subsection{Benefits from Exploration}
A final potential benefit of online interaction arises in
\emph{exploration}. One might hope that with online access, we can directly guide the MDP
to informative states that will help to identify the optimal policy
faster. The following proposition gives an example in which deliberate
exploration can lead to arbitrarily large improvement over offline
imitation learning, as well as over naive online imitation learning
algorithms like \dagger that do not deliberately explore.
\begin{proposition}[Benefits of exploration for online IL]
  \label{prop:benefits_exploration}
      For any $n\in\bbN$ and $H\in\bbN$, there exists an MDP $\Mstar$ and a class of
  deterministic policies $\Pi$ with $\abs{\Pi}=2$ with the following
  properties.
  \begin{enumerate}
  \item There exists an online imitation learning algorithm that returns
    a policy $\pihat$ such that
    $J(\pistar) - J(\pihat) = 0$
    with probability at least $1-\delta$ using
    $\bigoh(\log(\delta^{-1}))$ episodes, for \emph{all possible
      reward functions} (i.e., even if $\mu=H$).\loose
  \item For any offline imitation learning algorithm, there exists a deterministic reward
  function $r=\crl*{r_h}_{h=1}^{H}$ and expert policy $\pistar\in\Pi$
  with $\mu=1$ such that any algorithm must have
  $\En\brk*{J(\pistar)-J(\pihat)}
    \geq{} \bigom(1)\cdot\frac{H}{n}$. In addition, \dagger has regret $\En\brk*{J(\pistar)-J(\pihat)}
    \geq{} \bigom(1)\cdot\frac{H}{n}$.\loose
  \end{enumerate}
\end{proposition}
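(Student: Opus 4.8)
The plan is to build a single hard MDP that fuses the ``rarely-reached decision state'' mechanism behind \cref{prop:lb_deterministic} with one extra action that makes that state deliberately reachable, so that a deliberately-exploring learner identifies the expert exactly while offline learning and \dagger cannot. Since the statement does not constrain $\abs{\cX}$ or $\abs{\cA}$, I have room to use a layered MDP with a \emph{safe trunk} $s_1\to s_2\to\cdots\to s_H$ (the states the expert traverses) and a parallel chain of \emph{decision states} $x_1,\dots,x_H$. From the start $s_1$ the expert's action leads to $s_2$ with probability $1-p$ and to $x_1$ with probability $p\ldef c/n$ for a small absolute constant $c$; a second (``explore'') action at $s_1$ goes to $x_1$ with probability $1$. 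At each $x_h$ the actions $\crl{a_0,a_1}$ are tied to the expert's hidden bit $\beta$: playing $a_\beta$ returns to the trunk (recovery), playing $a_{1-\beta}$ forfeits the step's reward and moves to $x_{h+1}$. Rewards are dense: $1$ on the trunk and at decision states where the correct action is played, $0$ when the wrong action is played. The policy class is $\Pi=\crl{\pi_0,\pi_1}$, where $\pi_b$ plays the forced trunk actions and action $a_b$ at every decision state; the adversary (equivalently, a uniform prior over $\beta$) picks $\beta$, sets $\pistar=\pi_\beta$ and the matching reward. The first thing to verify is that this reward has $\murec=1$: once the step at which recovery rejoins the trunk is aligned, every single deviation --- at a decision state, or via the explore action at $s_1$ --- costs exactly one unit in $Q^\star$, so no action gap exceeds $1$.

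For Part~1, the online algorithm rolls out the policy that plays the explore action at $s_1$ for $\bigoh(\log(\delta^{-1}))$ episodes. Each such episode reaches $x_1$ (with probability $1$, or with constant probability in a mildly perturbed version), and since $\pistar$ is deterministic a single visit reveals $\beta$, hence pins down $\pistar\in\Pi$; returning $\pihat=\pistar$ gives $J(\pistar)-J(\pihat)=0$. This argument uses neither the reward nor any recoverability bound, so it holds for every reward function, which is the content of the ``even if $\mu=H$'' remark. By contrast, \dagger --- and any online algorithm whose rollout policies lie in the convex hull of $\Pi\cup\crl{\pistar}$ --- plays the expert action at $s_1$, so it reaches $x_1$ only with probability $p=c/n$ per episode, exactly as in the offline case.

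For Part~2, I would argue that an arbitrary offline algorithm, whose whole dataset is generated by $\pistar$, reaches $x_1$ with probability $p$ per trajectory; hence for $c$ small enough, with probability $(1-p)^n\geq\bigom(1)$ the dataset contains no demonstration at any decision state. Conditioned on that event the two rewards induce identical laws on the observed trajectories --- the bit only affects states that were not visited --- so the posterior over $\beta$ is uniform and any $\pihat$ outputs $\pi_{1-\beta}$ with probability at least $\tfrac12$. When it does, $\pi_{1-\beta}$ is routed into the decision chain with probability $p$ and thereafter takes the wrong action at all $\Theta(H)$ remaining decision states, so by the performance-difference lemma its regret is $p\cdot\Theta(H)=\bigom(H/n)$; combining, $\En\brk*{J(\pistar)-J(\pihat)}\geq(1-p)^n\cdot\tfrac12\cdot\bigom(H/n)=\bigom(H/n)$, and a Yao-type argument converts the uniform-prior bound into the stated $\exists$-reward form. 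The identical accounting applies to \dagger, which returns an ERM over $\Pi$: it fails to identify $\beta$ with probability $(1-p)^n\geq\bigom(1)$, and on that event some tie-broken choice $\pi_b$ is wrong for the adversary's $\beta=1-b$, giving $\En\brk*{J(\pistar)-J(\pihat)}\geq\bigom(H/n)$.

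The step I expect to be the main obstacle is calibrating the rewards and transitions so that three things hold at once: $\murec=1$ under the hard reward, expected regret $\bigom(H/n)$ (not merely $\bigom(1/n)$), and the explore action disturbing neither $\murec$ nor the offline hardness. The tension is intrinsic: a single wrong decision must be individually cheap (cost $1$, so $\murec=1$), yet a learner ignorant of $\beta$ must compound $\Theta(H)$ of these mistakes once misrouted. Making the per-step advantage equal to exactly $1$ at each decision state --- so that it neither vanishes nor grows with the distance to recovery --- is what lets the lower bound reach $H/n$, and lining up the trunk/decision-chain step indices so that $\murec$ lands exactly on $1$ (rather than a constant like $2$) is the fiddly part of the bookkeeping.
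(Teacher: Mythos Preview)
Your proposal is correct and shares the paper's core idea: take the hard instance behind \cref{prop:lb_deterministic} and bolt on a single extra action (not taken by any $\pi\in\Pi$) that lets a deliberate online learner reach a state where the expert's action reveals the hidden bit. The implementations differ, and the paper's is considerably simpler. Rather than your layered trunk/decision-chain MDP, the paper keeps the two-state construction $\{\xfrak,\yfrak\}$ from \cref{prop:lb_deterministic} almost verbatim and adds one reward-free terminal state $\zfrak$ together with one action $\cfrak$ at $\xfrak$ that transitions there; the two experts are extended so that $\pi^{(\afrak)}(\zfrak)=\afrak$ and $\pi^{(\bfrak)}(\zfrak)=\bfrak$. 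The online learner plays $\cfrak$ whenever it starts at $\xfrak$ (probability $1-\Delta$), observes $\pistar(\zfrak)$, and is done; the $O(\log(1/\delta))$ episodes are needed only to ensure at least one start at $\xfrak$.

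The main payoff of the paper's design is that the step you flagged as ``the main obstacle'' disappears: because $\zfrak$ carries zero reward and $\xfrak$ already had zero reward, every advantage at $\xfrak$ and $\zfrak$ is zero, so $\murec=1$ is inherited directly from the original construction with no alignment of indices or per-step costs. Likewise, since neither expert ever plays $\cfrak$, offline data and \dagger's rollouts (which mix only over $\Pi$) never visit $\zfrak$; the information available to them is literally identical to that in \cref{prop:lb_deterministic}, so the $\Omega(H/n)$ lower bound carries over with no new argument. Your construction achieves the same conclusions but has to work for them: you unify the ``rare'' and ``explore-reachable'' states, so you must separately calibrate rewards and transitions to keep $\murec=1$ while the compounding still yields $\Theta(H)$ regret, and you must run a fresh (if routine) two-point/Yao argument rather than pointing back to the earlier proof.
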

The idea behind this construction is simple: We take the lower bound
construction from \cref{prop:lb_deterministic} and augment it with a
``revealing'' which directly reveals the identity of the underlying
expert. The true expert never visits this state, so offline imitation
learning algorithms cannot
exploit it (standard online IL algorithms like \dagger and relatives do not exploit
the revealing state for the same reason),\footnote{This phenomenon is
also distinct from ``active'' online imitation learning algorithms
\citep{sekhari2024selective} which can obtain improved sampling
complexity under strong distributional assumptions in the vein of
active learning \citep{hanneke2014theory}, but still do not
deliberately explore.} but a well-designed online
IL algorithm that deliberately navigates to the revealing state can
use it to identify $\pistar$ extremely quickly. 

As with the previous examples, this construction is somewhat
contrived, but it suggests that directly maximizing information acquisition may be a useful algorithm design
paradigm for online IL, and we hope to explore this more deeply in
future work.

      }

\arxiv{
\section{Experiments}
\label{sec:experiments}

  In this section, we validate our theoretical results empirically. We first provide a detailed overview of our experimental setup, including the control and natural language tasks we consider, then present empirical results for each task individually. %

\subsection{Experimental Setup}

We evaluate the effect of horizon on the performance of \loglossbc in three environments. We begin by describing our training and evaluation protocol (which is agnostic to the environment under consideration), then provide details for each environment.

In each experiment, we begin with an expert policy $\pistar$ (which is always a neural network; details below) and construct an offline dataset by rolling out with it $n$ times for $H$ timesteps per episode. To train the imitator policy $\pihat$, we use the same architecture as the expert, but randomly initialize the weights and use stochastic gradient descent with the Adam optimizer to minimize the \loglossbc objective for the offline dataset; this setup ensures that the realizability assumption used by our main results (\cref{ass:realizability}) is satisfied. We repeat this entire process for varying values of $H$.\loose

To evaluate the regret $J(\pistar)-J(\pihat)$ after training, we approximate the average reward of the imitator policy $\pihat$ by selecting new random seeds and collecting $n$ trajectories of length $H$ by rolling out with $\pihat$; we approximate the average reward of the expert $\pistar$ in the same fashion, and we also compute several auxiliary performance measures (details below) that aim to capture the distance between $\pihat$ and $\pistar$. In all environments, we normalize rewards so that the average reward of the expert is at most 1, in order to bring us to the sparse reward setting in \cref{sec:background} and keep the range of the possible rewards constant as a function of the (varying) horizon.\loose

We consider four diverse environments, with the aim of evaluating \loglossbc in qualitatively different domains: (i) \textsf{Walker2d}, a classical continuous control task from MuJoCo  \citep{towers_gymnasium_2023,todorov2012mujoco} where the learner attempts to make a stick figure-like agent walk to the right by controlling its joints; (ii) \textsf{Beamrider}, a standard discrete-action RL task from the Atari suite \citep{bellemare2013arcade}, where the learner attempts to play the game of Beamrider;  (iii) \textsf{Car}, a top-down discrete car racing environment where the car has to avoid obstacles to reach a goal, and (iv) \textsf{Dyck}, an autoregressive language generation task where the agent is given a sequence of brackets in $\left\{\{, \}, [, ], (, )\right\}$ and has to close all open brackets in the correct order.

We emphasize diversity in task selection in order to demonstrate the generality of our results, covering discrete and continuous actions spaces, as well as both control and language generation. For some of the environment (\textsf{Walker2d}, \textsf{Beamrider}), the task is intended to be ``stateless'', in the sense that varying the horizon $H$ does not change the difficulty of the task itself (e.g., complexity of the expert policy $\pistar$), allowing for an honest evaluation of the difficulty of the \emph{learning} problem as we vary the horizon $H$. For other domains, such as \textsf{Dyck},  horizon dependence is more nuanced, as here the capacity required to represent the expert grows as the horizon increases; this manifests itself in our theoretical results through the realizability condition (\cref{ass:realizability}), which necessitates a more complex function class $\Pi$ as $H$ increases.

\begin{figure}%
  \centering
  \begin{subfigure}[t]{0.49\textwidth}
    \includegraphics[width=\textwidth]{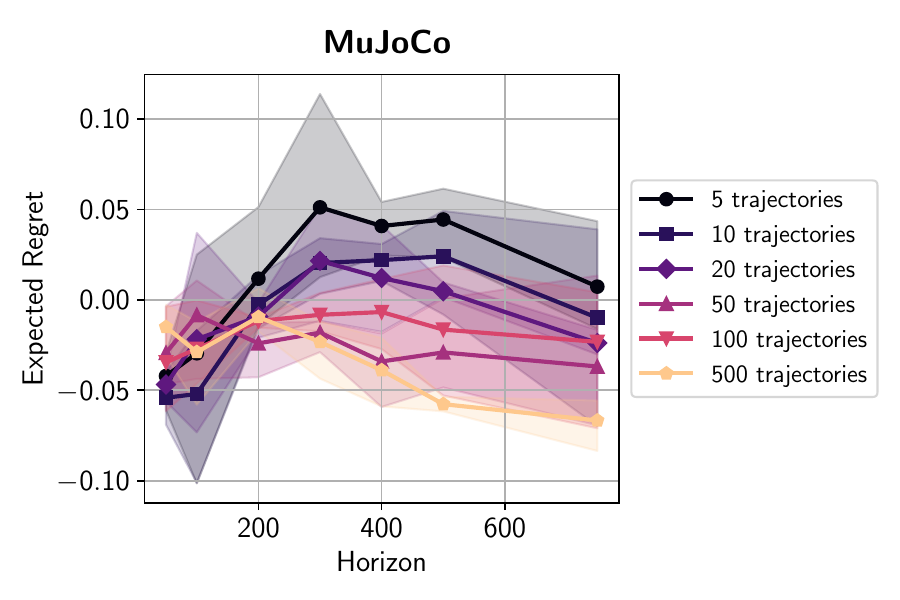}
    \caption{}
  \end{subfigure}
  \hfill
  \begin{subfigure}[t]{0.49\textwidth}
      \includegraphics[width=\textwidth]{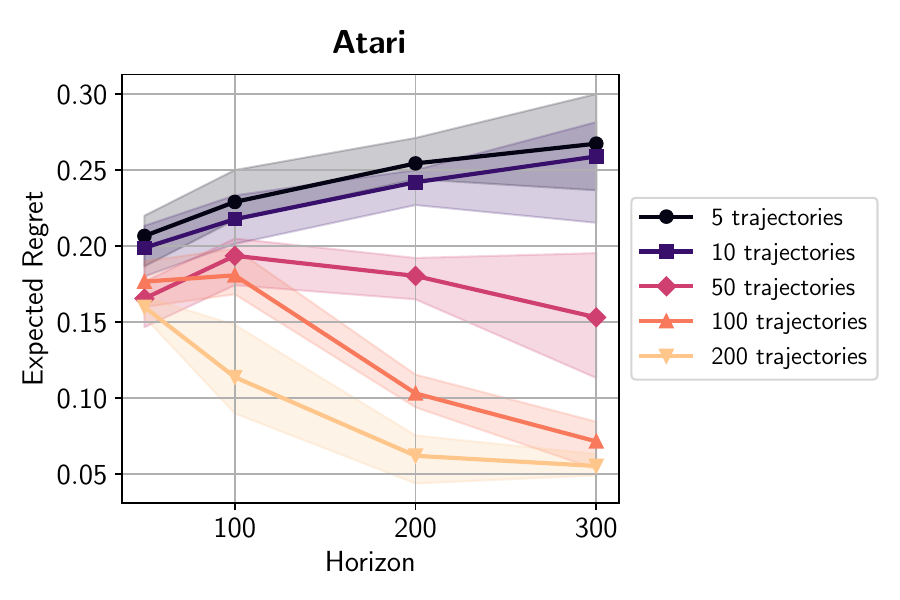}
      \caption{}
    \end{subfigure}
      \caption{Dependence of expected regret on the horizon for multiple choices for the number of imitator trajectories $n$. \textbf{(a)} Continuous control environment \textsf{Walker2d-v4}. \textbf{(b)} Discrete Atari environment \textsf{BeamriderNoFrameskip-v4}. For both environments, increasing the horizon does not lead to a significant increase in regret, as predicted by our theory. }
  \label{fig:horizon_dependence}
\end{figure}

We now provide details for our experimental setup for each environment.  

\paragraph{\textsf{Walker2d}}  We use the Gymnasium \citep{towers_gymnasium_2023} environment \textsf{Walker2d-v4}, which has continuous state and action spaces of dimensions 17 and 6 respectively. The agent is rewarded for moving to the right and staying alive as well as being penalized for excessively forceful actions; because we vary the horizon $H$, in order to make the comparison fair, we normalize the rewards so that our trained expert always has average reward 1.  Our expert is a depth-2 MLP with width 64. We use the Stable-Baselines3 \citep{sb3} implementation of the Proximal Policy Optimization (PPO) algorithm \citep{schulman2017proximal} with default settings to train the expert for 500K steps. The policy's action distribution is Gaussian, with the mean and covariance determined by the MLP; we use this for computation of the logarithmic loss. For data collection, we enforce a \emph{deterministic} expert by always playing the mean of the Gaussian distribution produced by their policy. Our imitator policy uses the same architecture as the expert policy, with the weights re-initialized randomly.  We train the imitator using the logarithmic loss by default, but as an ablation, we also evaluate the effect of training with the mean-squared-error loss on the Euclidean norm over the actions.  We train using the Adam optimizer \citep{kingma2014adam} with a learning rate of $10^{-3}$ and a batch size of 128.  We stop training early based on the validation loss on a held out set of expert trajectories. Note that the expert and imitator policies above are both \emph{stationary policies}.

\paragraph{\textsf{Beamrider}}  We use the Gymnasium environment \textsf{BeamRiderNoFrameskip-v4}, which has 9 discrete actions and a 210x160x3 image as the state; the rewards are computed as a function of how many enemies are destroyed.  As in the case of the previous setup, we account for the varying of $H$ by normalizing expert rewards to be 1.  Here we do not train our expert ourselves, but instead use the trained PPO agent provided by \citet{rlzoo3}, which is a convolutional neural network.  We use the same architecture for our imitator policy, with the weights re-initialized randomly.  Here, the expert (and imitator) policies map the observation to a point on the probability simplex over actions, and so logarithmic loss computation is immediate. Similar to the case of \textsf{Walker2d}, we enforce a deterministic expert for collecting trajectories by taking the action with maximal probability.  We then train our imitators using the same setup as in the \textsf{Walker2d} environment. As with \textsf{Walker2d}, the expert and imitator here are both stationary policies.

\begin{figure}
    \centering   
    \begin{subfigure}[b]{0.35\textwidth} 
      \includegraphics[width=\textwidth]{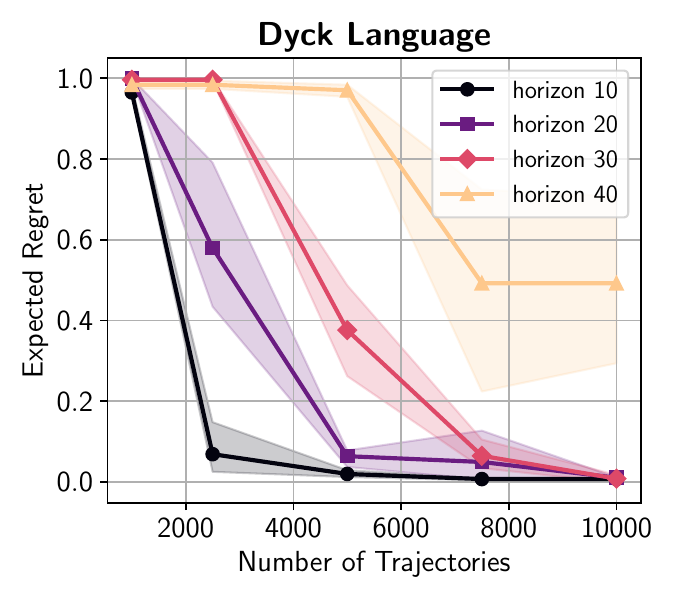}
        \caption{}
        \label{fig:dyck_n_dependence_logloss}
    \end{subfigure}
\hspace{.5in}
    \begin{subfigure}[b]{0.35\textwidth} 
        \includegraphics[width=0.915\textwidth]{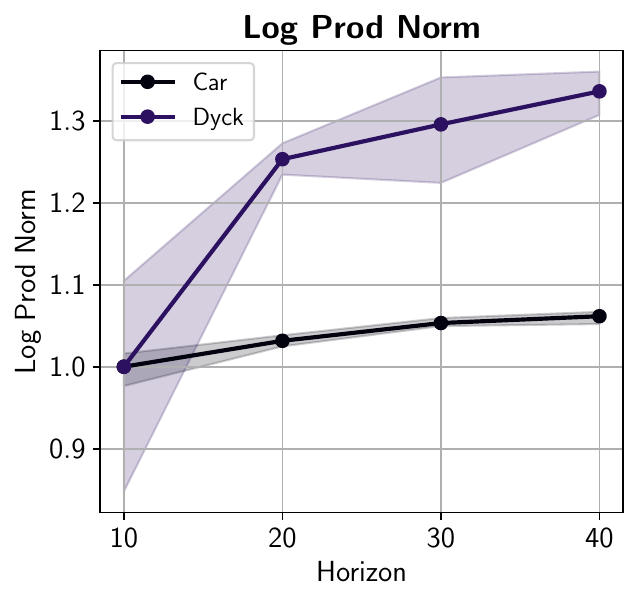}
        \caption{}
        \label{fig:dyck_norm}
      \end{subfigure}
        \caption{\textbf{(a)} Relationship between the number of expert trajectories and expected regret for the \textsf{Dyck} environment multiple choices of horizon $H$. The expert is trained to produce valid Dyck words of length $H$, and the imitator's ability to generate a valid word is evaluated. We find that regret increases as a function of $H$. \textbf{(b)} Logarithm of the product of weight matrix norms for the expert policy network as a function of $H$, for \textsf{Dyck} and \textsf{Car} environments. The log-product-norm acts as a proxy for complexity for the class $\Pi$; we rescale such that log-product-norm at $H=10$ is $1.0$ for both domains. For \textsf{Dyck}, we find that as $H$ increases, the complexity of $\Pi$ required to represent the expert policy (as measured by the log-product-norm) also increases, explaining the increasing regret in (a). However, the gain in log-product-norm for the \textsf{Car} domain is much lower, which is in line with the fact that the regret for the \textsf{Car} domain exhibits only mild scaling with horizon.}
    \label{fig:dyck}
  \end{figure}

\paragraph{\textsf{Car}} We introduce a simple top-down navigation task where the agent is a ``car'' that always moves forward by one step, but can take actions to move left, right, or remain in its lane to avoid obstacles and reach the desired destination.  There are $M$ possible lanes. At timestep $h \in [H+1]$, if the agent is in lane $i \in [M]$, then the agent's state is $(i, h)$. We view the state space as a $M \times (H + 1)$ grid; a given point $(i, j)$ in the grid can be empty, or contain an obstacle, or contain the agent. The agent's action space consists of 3 possible actions: stay in the current lane ($(i, h) \mapsto (i, h + 1)$), move one step left ($(i, h) \mapsto (i - 1, h + 1)$), or move one step right ($(i, h) \mapsto (i + 1, h + 1)$). If the agent's action causes it to collide with an obstacle or the boundary of the grid, it is sent to an absorbing state. The agent gets a reward of 1 for reaching the goal state for the first time, and a reward of 0 otherwise. When the agent occupies a state $(i, h)$, it observes an image-based observation $x_h$ showing the state of all lanes for $V$ steps ahead where $V$ is the size of the viewing field. At the start of each episode, we randomly sample obstacles positions, the start position, and the goal position. The goal can be reached after $H$ actions, and it is always possible to reach the goal.

  \paragraph{\textsf{Dyck}} In addition to the RL environments above, we evaluate \loglossbc for autoregressive language generation with transformers (cf. \cref{sec:autoregressive}),
where the goal of the ``agent'' is to complete a valid word of a
given length in a Dyck language; this has emerged as a popular
sandbox for understanding the nuances of autoregressive text
generation in theory
\citep{yao2021self,hahn2020theoretical,bhattamishra2020ability} and
empirically \citep{liu2022transformers,wen2024transformers}.  We recall that a Dyck language $\mathsf{Dyck}_k$ consists of $2k$ matched symbols thought of as open and closed parentheses, with concatenations being valid words if the parentheses are closed in the correct order.  For example, if we define the space of characters as `()', `[]', and `\{\}', then `$([()])\{\}$' is a valid word, whereas `$([)]$' and `(((\{\}' are not.

Our experiments use the Dyck language $\mathsf{Dyck}_3$.  For our expert, we train an instance of GPT-2 small \citep{radford2019language} with 6 layers, 3 heads, and 96 hidden dimensions from scratch to produce valid Dyck words. In particular, the training dataset consists of random Dyck prefixes that require exactly $H$ actions (symbols) to complete.  To imitate this expert, we train a GPT-2 small model with the same architecture, but with randomly initialized weights on an offline dataset of sequences generated by the expert. We assign a reward $1$ to each trajectory if the generated word is valid, and assign reward $0$ otherwise.  We use Adam optimization for training, with our experts trained for 40K iterations in order to ensure their quality. Note that in this environment, the expert and imitator policies are non-stationary, but use \emph{parameter sharing} via the transformer architecture.\loose

\begin{figure}
    \centering   
    \begin{subfigure}[b]{0.35\textwidth} 
      \includegraphics[width=\textwidth]{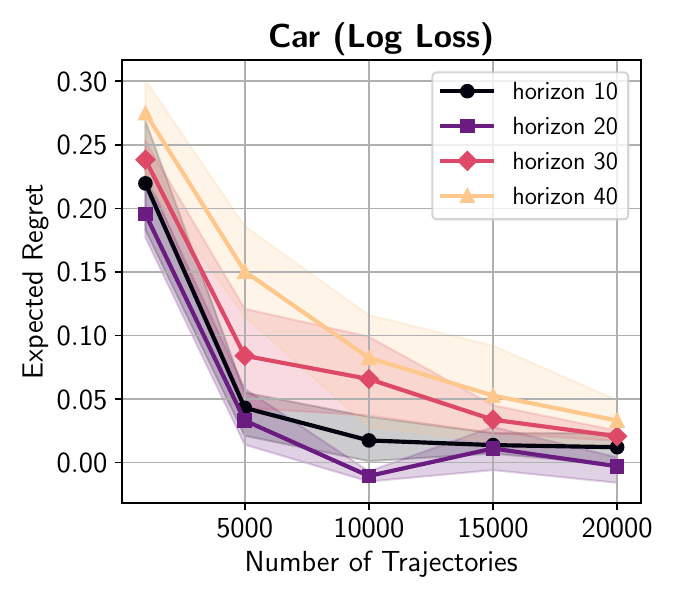}
        \caption{}
        \label{fig:car_n_dependence_logloss}
    \end{subfigure}
\hspace{.5in}
    \begin{subfigure}[b]{0.35\textwidth}
        \includegraphics[width=\textwidth]{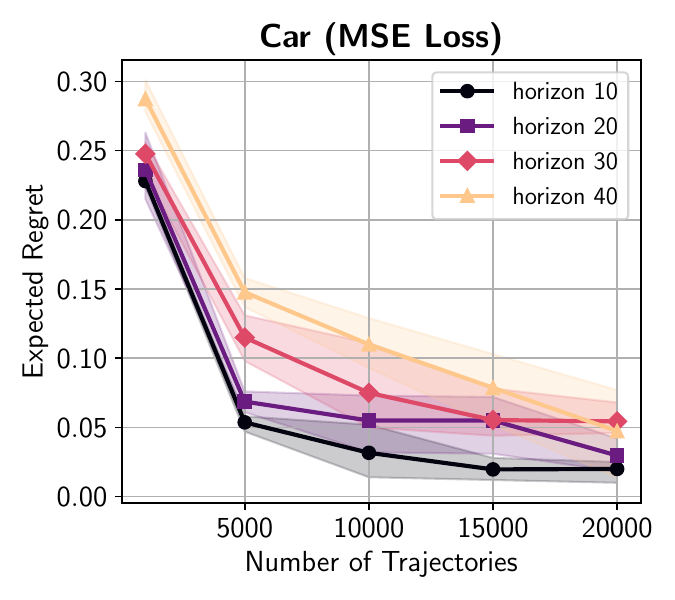}
        \caption{}
        \label{fig:car_norm}
      \end{subfigure}
      \caption{Dependence of expected regret on the number of expert trajectories for \textsf{Car} environment under varying values for horizon $H$ for log-loss (\textbf{a}) and mean-squared loss (\textbf{b}). The expert policy network is trained on a set of $2\times 10^4$ episodes generated by an optimal policy via behavior cloning. We use \loglossbc to train imitator policy for varying values of the horizon $H$ and number of trajectories $n$. For both losses, we find that the expected regret goes down as the number of expert trajectories increases, but degrades slightly as a function of $H$.\loose}
    \label{fig:car}
  \end{figure}

\subsection{Results}

We summarize our main findings below.

\paragraph{Effect of horizon on regret}
\Cref{fig:intro,fig:horizon_dependence} plot the relationship between expected regret and the number of expert trajectories for the \textsf{Walker2d} (MuJoCo), and \textsf{BeamriderNoFrameskip} (Atari) environments, as the horizon $H$ is varied from $50$ to $500$. For both environments, we find regret is largely independent of the horizon, consistent with our theoretical results. In fact, in the case of \textsf{BeamriderNoFrameskip}, we find that increasing the horizon leads to \emph{better} regret. To understand this, note that our theory provides horizon-agnostic \emph{upper bounds} independent of the environment. Our lower bounds are constructed for specific worst-case environments, and not rule out the possibility of improved performance with longer horizons environments with favorable structure. We conjecture that this phenomenon is related to the fact that longer horizons yield fundamentally more data, as the total number of state-action pairs in the expert dataset is equal to $nH$.\footnote{For example, if we repeat a fixed contextual bandit instance $H$ times across the horizon and train a stationary policy, it is clear that regret should decrease with $H$ under sparse rewards. Less trivial instances where increasing horizon provably leads to better performance are known in some special cases \citep{tu2022learning}.}\loose

\cref{fig:dyck}(a) plots our findings for the \textsf{Dyck} environment. Here, we see that with the number of trajectories $n$ fixed, regret does increase with $H$, which might appear to contradict our theory at first glance. However, we note that the policy class itself must become larger as $H$ increases, as the task itself becomes more difficult (equivalently, the supervised learning error $\Dhels{\bbP^{\pistar}}{\bbP^{\pihat}}$ must grow with $H$). As a result, the regret is not expected to be independent of $H$ for this environment, in spit of parameter sharing. To verify whether supervised learning error is indeed the cause for horizon dependence for \textsf{Dyck}, \cref{fig:dyck}(b) plots the logarithm of the product of the Frobenius norms of the weight matrices of the expert for varying values of $H$, as a proxy for supervised learning performance \citep{bartlett2017spectrally,golowich2018size}.\footnote{We only include log-product-norm plots for \textsf{Dyck} and \textsf{Car} because for the other environments (\textsf{Walker2d} and \textsf{BeamriderNoFrameskip}), we do not change the expert as a function of $H$.}
We find that the log-product-norms do in fact grow with $H$, consistent with the fact that the regret grows with $H$ in this case.

For the \textsf{Car} environment, we observe similar behavior to the \textsf{Dyck} environment, visualized in \cref{fig:car}. We find that performance degrades slightly as a function of the horizon $H$, but that this increase in regret can be explained by an increase in the log-product-norm (\cref{fig:dyck}(b)). However, the effect is mild compared to \textsf{Dyck}.\loose

\begin{figure}
    \centering
    \begin{subfigure}[b]{0.95\textwidth}
      \includegraphics[width=\textwidth]{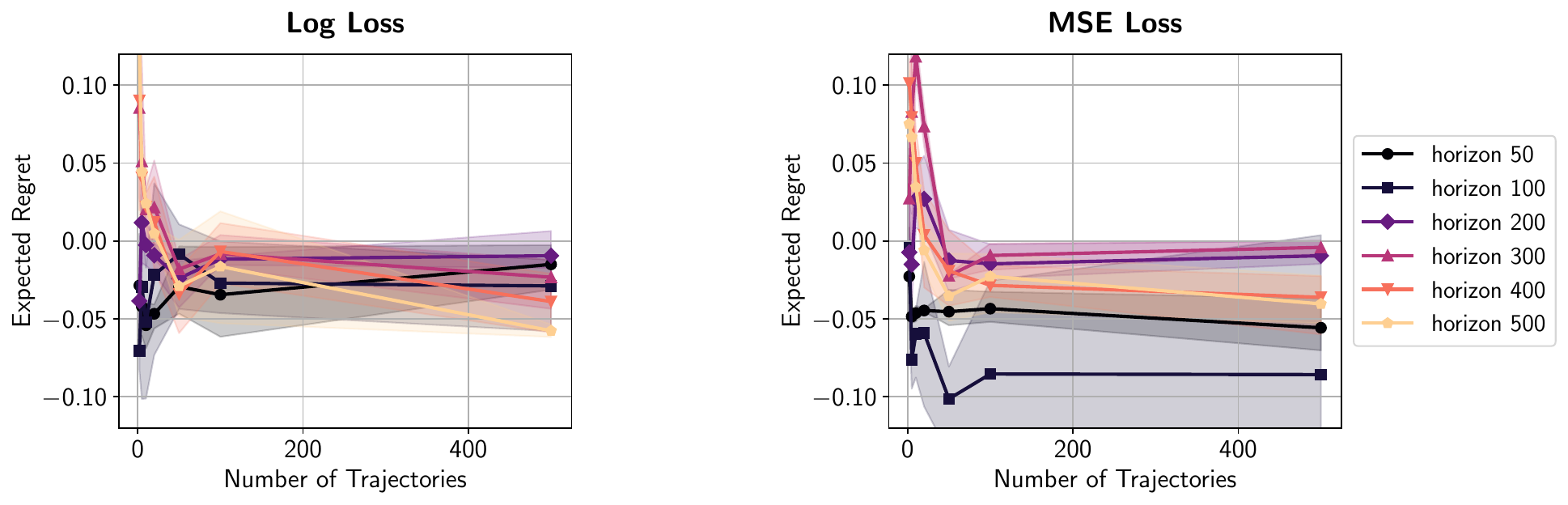}
      \label{fig:walker2d_n_dependence_bothlosses}
    \end{subfigure}
    \caption{Dependence of expected regret on the number of expert trajectories for continuous control environment \textsf{Walker2d-v4} under varying choices for horizon $H$. \textbf{(Left)} Behavior cloning with logarithmic loss (\loglossbc);  \textbf{(Right)} Behavior cloning with mean squared error (MSE) Loss. Both losses lead to similar performance for this environment, possibly due to Gaussian policy parameterization.}    
    \label{fig:n_dependence_bothlosses}
  \end{figure}

  \paragraph{Comparison between log loss and square loss} As an ablation, \Cref{fig:car,fig:n_dependence_bothlosses} compare \loglossbc to the original behavior cloning objective of \citet{pomerleau1988alvinn}, which uses the mean squared error (MSE) to regress expert actions to observations in the offline dataset. Focusing on the \textsf{Walker2d} environment (\cref{fig:n_dependence_bothlosses}) and \textsf{Car} environment (\cref{fig:car}) (other environments presented difficulties in training\footnote{In particular, we attempted a similar result in the Atari environment, using MSE loss being between vectors on the probability simplex over $|\mathcal A|$ actions. For MSE loss, we found that the imitator did not train, in the sense that even with 500 expert trajectories, the performance of the cloner did not improve. We suspect this was due to numerical instability in optimization for the MSE loss in this setup or a failure of hyperparameter optimization.}), we find that performance with the MSE loss is comparable to that of the logarithmic loss. For \textsf{Walker2d}, a possible explanation is that under the Gaussian policy parameterization we use, the MSE loss is the same as the logarithmic loss up to state-dependent heteroskedasticity.\footnote{In theory, the MSE loss can still underperform the logarithmic loss when the heteroskedasticity is severe \citep{foster2021efficient}, but this may not manifest for this environment.} Another possible explanation is that this is an instance of the phenomenon described in \cref{sec:indicator}.

    \begin{figure}
    \centering
    \begin{subfigure}[b]{0.45\textwidth}
      \includegraphics[width=\textwidth]{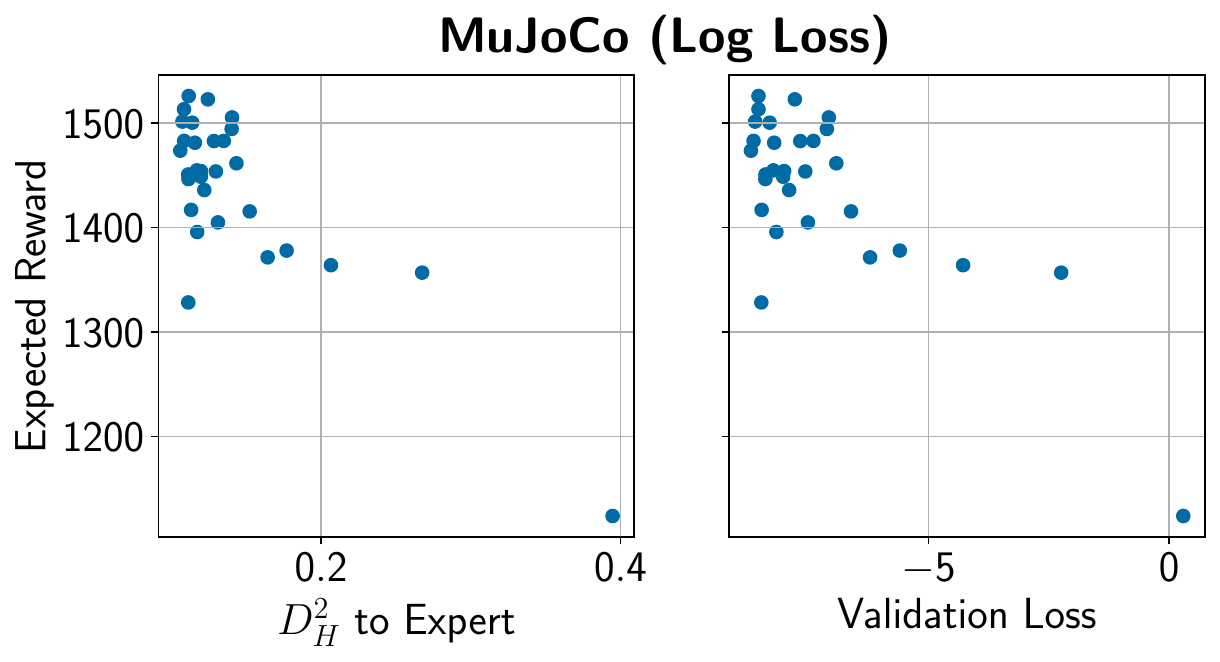}
      \caption{}
        \label{fig:walker2d_logloss}
    \end{subfigure}
    \hfill
    \begin{subfigure}[b]{0.45\textwidth}
        \includegraphics[width=\textwidth]{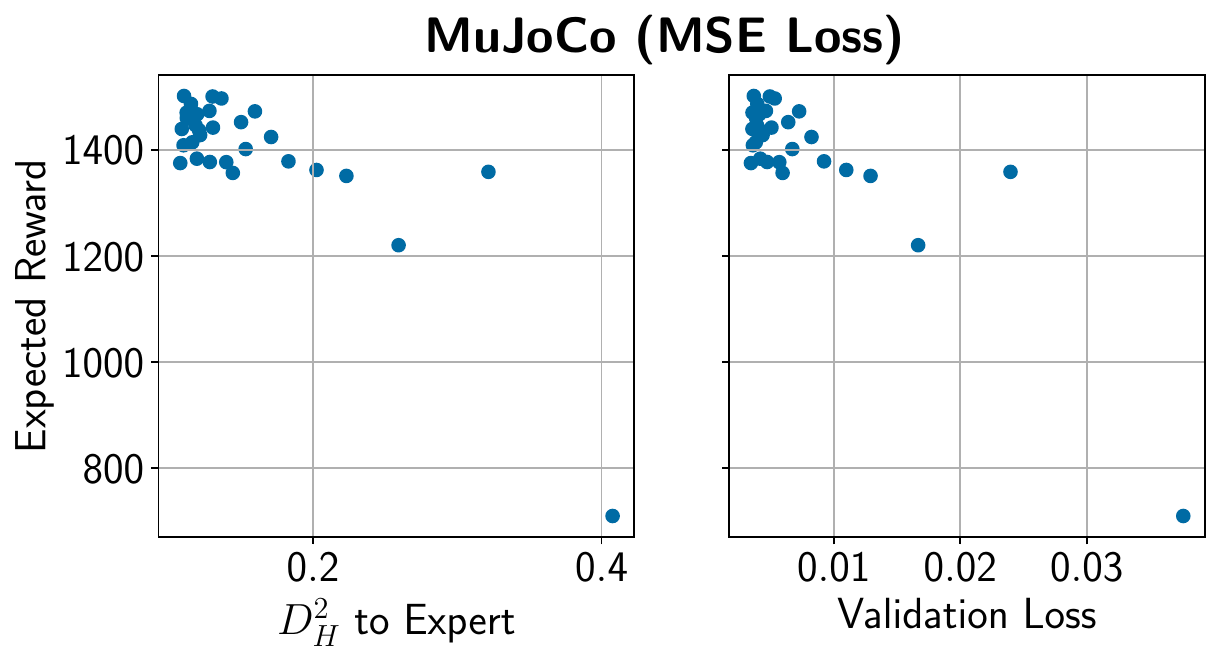}
        \caption{}
        \label{fig:walker2d_mseloss}
      \end{subfigure}
      \caption{Evaluation of the quality of (i) Hellinger distance $\Dhels{\bbP^{\pistar}}{\bbP^{\pihat}}$, and (ii) validation loss as a proxy for rollout reward. We plot Hellinger distance and validation loss against mean reward for a over a single training run for \textsf{Walker2d} environment with $H = 500$ and $n = 500$. \textbf{(a)} Results for \loglossbc, where the validation loss and Hellinger distance $D_{\mathsf{H}}^2$ are highly correlated, and serve as good proxies for the expected reward of the policy. \textbf{(b)} Results for MSE loss, where the validation loss is less well correlated with the expected reward (note the cluster in the upper left hand corner), but the Hellinger distance $D_{\mathsf{H}}^2$ remains a good proxy.\loose}      
    \label{fig:scatter_training_walker}
  \end{figure}

  \paragraph{Relationship between regret and Hellinger distance to expert}

  Finally, we directly evaluate the quality of (i) Hellinger distance $\Dhels{\bbP^{\pistar}}{\bbP^{\pihat}}$, and (ii) validation loss as proxies for rollout performance.  We estimate the Hellinger distance using sample trajectories. 
  \cref{fig:scatter_training_walker} displays our findings for \textsf{Walker2d} with $H=n=500$, where we observe that both metrics, particularly the Hellinger distance, are well correlated with rollout performance, as measured by average reward. In \Cref{fig:walker2d_logloss}, we see that under \loglossbc, Hellinger distance and validation loss are highly correlated with each other, and negatively correlated with expected reward, thereby acting as excellent proxies for rollout performance.
  Meanwhile, in \Cref{fig:walker2d_mseloss}, we find that under behavior cloning with the MSE loss, validation error is less well correlated with the expected reward of the imitator policy, as evinced by the cluster in the upper left corner, where there are policies with roughly the same validation loss, but variable expected reward.  On the other hand, the Hellinger distance $D_{\mathsf{H}}^2$ still appears to predict the performance of the policy well, as is consistent with our theoretical results.

}

\arxiv{
  \section{Discussion}
  \label{sec:discussion}
  We conclude with additional technical remarks and directions for
  future research.
  
\subsection{When is Indicator-Loss Behavior Cloning Suboptimal?}
\label{sec:indicator}
Our discussion in \cref{sec:bc_indicator} suggests that
indicator-loss behavior cloning, which solves
$
\pihat=\argmin_{\pi\in\Pi}\Lhatbc(\pi)\ldef{}\sum_{i=1}^{n}\frac{1}{H}\sum_{h=1}^{H}\indic\crl*{\pi_h(x_h\ind{i})\neq{}a_h\ind{i}}$,
can have suboptimal horizon dependence compared to \loglossbc. This
turns out to be a subtle point. Suppose that $\pistar$ is
deterministic, that $\Pi$ exactly satisfies realizability in the
sense that $\pistar\in\Pi$, and that $\Pi$ only contains deterministic
policies. In this case, we observe that
$\Lhatbc(\pihat)=0$ (i.e., $\pihat$ agrees with $\pistar$ on every
instance in the dataset). Consequently, $\pihat$ can also be viewed as minimizing an
empirical version of the trajectory-wise loss in
\cref{eq:traj_metric_body}, i.e.
\begin{align}
  \label{eq:traj_empirical}
  \sum_{i=1}^{n}\indic\crl*{\exists{}h: \pihat_h(x_h\ind{i})
   \neq a_h\ind{i}} = 0.
\end{align}
From here, a standard uniform convergence argument implies that
$\Drho{\pistar}{\pihat}\approxleq{}\frac{\log(\abs{\Pi}\delta^{-1})}{n}$,
and by combining this with \cref{eq:pf1}, we obtain the following result.
\begin{proposition}
  \label{prop:indicator_deterministic}
  For any deterministic expert $\pistar\in\Pi$, the indicator loss
  behavior cloning policy $\pihat=\argmin_{\pi\in\Pi}\Lhatbc(\pi)$
  ensures that with probability at least $1-\delta$,
  \[
    J(\pistar)-J(\pihat) \leq{} \bigoh(R)\cdot{}\frac{\log(\abs{\Pi}\delta^{-1})}{n}.
  \]
\end{proposition}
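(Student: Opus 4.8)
The plan is to exploit the interpolation property of indicator-loss behavior cloning in the realizable deterministic setting (already noted in the discussion above) and reduce to the trajectory-wise distance $\Drho{\cdot}{\cdot}$ of \cref{eq:traj_metric_body}. The key point is that the resulting argument is a \emph{realizable}-case uniform convergence bound carried out at the level of entire trajectories, which is precisely what avoids the spurious horizon factor present in the naive bound \eqref{eq:lbc_generalization}.

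First I would observe that $\Lhatbc(\pihat)=0$. Since the dataset $\cD=\crl*{\traj\ind{i}}_{i=1}^{n}$ is produced by rolling out $\pistar$, we have $a_h\ind{i}=\pistar_h(x_h\ind{i})$ for all $i,h$, and realizability ($\pistar\in\Pi$, \cref{ass:realizability}) gives $\Lhatbc(\pistar)=0$; as $\pihat$ minimizes $\Lhatbc$ over $\Pi$, this forces $\Lhatbc(\pihat)=0$. Because the policies in $\Pi$ are deterministic, $\Lhatbc(\pihat)=0$ is equivalent to exact per-instance agreement $\pihat_h(x_h\ind{i})=a_h\ind{i}$ for all $i,h$, so in particular the empirical trajectory-wise error vanishes,
\[
  \sum_{i=1}^{n}\indic\crl*{\exists h:\ \pihat_h(x_h\ind{i})\neq a_h\ind{i}}=0,
\]
which is exactly \cref{eq:traj_empirical}.

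Next I would convert this into a population bound on $\Drho{\pistar}{\pihat}$ via a one-sided Chernoff estimate plus a union bound, operating over whole trajectories. The trajectories $\traj\ind{1},\dots,\traj\ind{n}$ are \iid draws from $\bbP^{\pistar}$, so although the within-trajectory state-action pairs are dependent, the trajectory-level indicator $\indic\crl*{\exists h:\ \pi_h(x_h)\neq a_h}$ is an \iid Bernoulli across $i$ with mean exactly $\Drho{\pistar}{\pi}$ (using that both $\pistar$ and $\pi\in\Pi$ are deterministic, so that $a_h=\pistar_h(x_h)$ under $\bbP^{\pistar}$). For any fixed $\pi$ with $\Drho{\pistar}{\pi}>\veps$, the probability that all $n$ trajectory indicators are zero is at most $(1-\veps)^{n}\le e^{-\veps n}$; a union bound over the $\abs{\Pi}$ policies with $\veps=\log(\abs{\Pi}\delta^{-1})/n$ shows that, with probability at least $1-\delta$, every $\pi\in\Pi$ that interpolates the data in the trajectory sense satisfies $\Drho{\pistar}{\pi}\le\log(\abs{\Pi}\delta^{-1})/n$. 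Since $\pihat$ is such a policy, $\Drho{\pistar}{\pihat}\approxleq\log(\abs{\Pi}\delta^{-1})/n$.

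Finally I would invoke \cref{lem:regret_rho} (namely \eqref{eq:pf1}), which gives $J(\pistar)-J(\pihat)\le R\cdot\Drho{\pistar}{\pihat}$, and combine it with the previous bound to conclude $J(\pistar)-J(\pihat)\le\bigoh(R)\cdot\log(\abs{\Pi}\delta^{-1})/n$. The main subtlety --- and the reason the surrounding discussion flags this point as delicate --- is recognizing that the correct empirical object to control is the \emph{trajectory}-level interpolation error rather than the per-state error, so the uniform convergence runs over $n$ \iid trajectories and never pays a factor of $H$; a secondary caveat is that the argument genuinely relies on $\Pi$ containing only deterministic policies (so that $\Lhatbc(\pihat)=0$ forces exact interpolation), which is exactly the property that fails in the stochastic-policy regime where one must instead pass to \loglossbc.
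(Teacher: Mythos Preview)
Your proposal is correct and follows essentially the same approach as the paper: observe that interpolation ($\Lhatbc(\pihat)=0$) forces the empirical trajectory-wise error to vanish (\cref{eq:traj_empirical}), apply a realizable-case uniform convergence argument at the trajectory level to bound $\Drho{\pistar}{\pihat}$, and conclude via \cref{lem:regret_rho}. The paper merely says ``a standard uniform convergence argument'' for the middle step, whereas you spell out the one-sided Bernoulli/Chernoff-plus-union-bound details explicitly; otherwise the arguments are identical.
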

This result, which shows that indicator-loss BC attains a similar
horizon-independent rate
to\loglossbc under the conditions of
\cref{cor:bc_deterministic_finite}, is novel to our knowledge. While
this would seem to suggest that indicator-loss BC can match the
performance of \loglossbc, there are number of important caveats.

First, \cref{prop:indicator_deterministic} is not robust to
optimization errors or misspecification errors. For example, if
$\pihat$ only minimizes the indicator loss $\Lhatbc(\pi)$ up to error
$\vepsopt$, i.e.
\[
\Lhatbc(\pihat) \leq \inf_{\pi\in\Pi}\Lhatbc(\pi) + \vepsopt{}\cdot{}n,
\]
then by adapting the construction of \citet{ross2010efficient}, one
can show that in general the algorithm can have $
J(\pistar)-J(\pihat)\geq{}RH\cdot\vepsopt$, meaning it no longer
achieves horizon dependence. Indeed, in this case, it is no longer
possible to translate the bound on $\Lhatbc(\pihat)$ to a bound on the
trajectory-level loss in  \cref{eq:traj_empirical} without incurring
an $H$ factor. On the other hand, as we show in \cref{sec:additional},
if the \loglossbc objective is solved only approximately, i.e.
\[
  \sum_{i=1}^{n}\sum_{h=1}^{H}\log\prn*{\frac{1}{\pihat_h(a_h\ind{i}\mid{}x_h\ind{i})}}
\leq{}
\inf_{\pi\in\Pi}\sum_{i=1}^{n}\sum_{h=1}^{H}\log\prn*{\frac{1}{\pi_h(a_h\ind{i}\mid{}x_h\ind{i})}}
+\vepsopt\cdot{}n,
\]
the regret of the algorithm degrades only to $    J(\pistar)-J(\pihat)
\approxleq{}
R\cdot{}\prn*{\frac{\log(\abs{\Pi}\delta^{-1})}{n}+\vepsopt}$, and
thus remains horizon-independent. Similar remarks apply to the case of
misspecification. Of course, perhaps the greatest advantage of
\loglossbc is that it readily supports stochastic policies, and is far
more practical to implement.

\subsection{Conclusion and Future Work}}

\arxiv{Our results clarify the role of horizon in offline and online
imitation learning, and show that---at least under standard
assumptions in theoretical research into imitation learning---the gap
between online and offline IL is smaller than previously thought. Instabilities of
  offline IL \citep{muller2005off,de2019causal,block2023butterfly} and
  benefits of online IL \citep{ross2013learning,kim2013learning,gupta2017cognitive,bansal2018chauffeurnet,choudhury2018data,kelly2019hg,barnes2023world,zhuang2023robot,lum2024dextrah}
  likely arise in practice,
  but existing assumptions in theoretical research on imitation learning
  appear be too coarse to give insights into the true nature of these
  phenomena, highlighting the need to
  develop a fine-grained, problem-dependent understanding of
  algorithms and complexity for IL. To this end, natural directions
  include (i) Building upon the initial results in \cref{sec:online}, and
    investigating new mechanisms such as exploration and
    representational benefits through which online IL can improve over
    offline IL;  (ii) Developing and analyzing imitation learning algorithms under
    control-theoretic assumptions that more directly capture practical
    notions of instability
    \citep{pfrommer2022tasil,tu2022sample,block2023butterfly,block2024provable};
    (iii) Developing a more refined theory in the
  context of language models, via the connection in
  \cref{sec:autoregressive}. For the latter two directions, an important question
  is to understand whether the notion of supervised learning error
  $\Dhels{\bbP^{\pihat}}{\bbP^{\pistar}}$ we consider is a suitable
  proxy for real-world performance, or whether more refined notions are required.\loose

 Our results also highlight the importance of
developing \emph{learning-theoretic} foundations for imitation
learning that support general, potentially neural function classes. To
this end, a natural question left open by our work is to develop
complexity measures (analogous to VC dimension or Rademacher complexity) that characterize the minimax sample complexity of
online and offline IL for \emph{any policy class}.\loose
}

\arxiv{\subsubsection*{Additional Results}}
Secondary results deferred to the appendix for space include (i)
examples and additional guarantees for \loglossbc and \loglossdagger
(\cref{sec:additional}); and (ii) additional lower bounds and results
concerning the tightness of
\cref{prop:lb_deterministic,thm:bc_stochastic} (\cref{sec:additional_lower}).

\subsubsection*{Acknowledgements}
We thank Jordan Ash, Audrey Huang, Akshay Krishnamurthy, Max
Simchowitz, and Cyril Zhang for many helpful discussions. We thank
Drew Bagnell for valuable comments and pointers to related work.\loose

\arxiv{
\clearpage
}

\bibliography{refs}

\clearpage

\appendix

\renewcommand{\contentsname}{Contents of Appendix}
\addtocontents{toc}{\protect\setcounter{tocdepth}{2}}
{
  \hypersetup{hidelinks}
  \tableofcontents
}
\clearpage

\section{Additional Related Work}
\label{sec:related}

\subsection{Theory of Imitation Learning and Reinforcement Learning}

Classical theoretical works in imitation learning, beginning from the
work of \citet{ross2010efficient} observes that behavior cloning (for
the specific indicator loss in \arxiv{\cref{eq:bc}}) can incur quadratic
dependence on horizon, and shows that online interaction, via
algorithms like \dagger and \aggrevate, can obtain improved sample
complexity under recoverability-type conditions
\citep{ross2010efficient,ross2011reduction,ross2014reinforcement,sun2017deeply}. Further
works along this line include
\citet{cheng2018convergence,cheng2020policy,cheng2019accelerating,yan2021explaining,spencer2021feedback}.

These papers can be thought of as \emph{supervised learning
    reduction}, in the sense that---in the vein of
  \cref{eq:lbc_rollout}---they guarantee that the imitation learning performance is controlled by
  an appropriate notion of supervised learning performance. Notably, this holds for
  \emph{any policy $\pihat$}, which means that in practice, the rollout performance
  is good whenever supervised learning succeeds, even if we do not
  necessarily have a provable guarantee for the generalization of
  $\pihat$ (e.g., for neural networks, where understanding
  generalization is an active area of research). However, as
noted throughout this paper and elsewhere
\citep{rajaraman2020toward,rajaraman2021value,rajaraman2021provably}, these
works typically state regret guarantees in terms of different, often
incomparable notions of supervised learning performance, and avoid
giving concrete, end-to-end guarantees for specific policy classes of
interest. This can make it challenging to objectively evaluate
optimality, and to understand whether limitations of specific
algorithms are due to suboptimal design choices versus
information-theoretic limitations. For example, \citet{li2022efficient} show that in some cases,
supervised learning oracles that satisfy assumptions required by prior
work do not actually exist.

\paragraph{Minimax sample complexity of imitation learning}
More recently, a line of work beginning with
\citet{rajaraman2020toward} revisits the minimax sample complexity of
imitation learning, aiming to provide end-to-end sample complexity
guarantees and lower bounds, but primarily focused on tabular MDPs and
policies
\citep{rajaraman2020toward,rajaraman2021value,rajaraman2021provably,swamy2022minimax}. Notably,
\citet{rajaraman2020toward} show that when $\Pi$ is the set of all
non-stationary policies in a tabular MDP and $R=H$, online IL methods can
achieve $\bigoh(\murec{}H)$ sample complexity, while offline IL
methods must pay $\bigom(H^2)$; this is consistent with our findings
in \cref{sec:main}, as $\log\abs{\Pi}=\bigom(H)$ for this
setting. Other interesting findings from this line of work include the
observation that when the MDP dynamics are \emph{known}, the sample
complexity for offline IL with non-stationary tabular policies can be
brought down to $\bigoh(H^{3/2})$. As noted in \cref{sec:main},
\citet{rajaraman2021value} show that offline IL methods can obtain
$\bigoh(H)$ sample complexity for \emph{linearly parameterized} policies under
parameter sharing; our analysis of \loglossbc for the special case of
deterministic policies shows that it can be viewed as
\emph{implicitly} minimizing the objective they consider.

\citet{xu2022understanding} also consider the problem of horizon
independence in IL. Their work focuses on tabular MDPs and policies,
and shows with knowledge of the dynamics, it is possible to achieve
horizon dependence for a restricted class of MDPs termed
\emph{RBAS-MDPs}. In contrast, our work achieves horizon independence
for general MDPs, without knowledge of dynamics.

Compared to the works above, we focus on general finite classes
$\Pi$. Various works on theoretical reinforcement learning
\citep{agarwal2019reinforcement,foster2023foundations} have observed
that finite classes are a useful test case for general function
approximation, because
  they are arguably the simplest type of policy class from a generalization
  perspective, yet do not have any additional structure (e.g.,
  linearity) that could lead to spurious conclusions that do not extend
  to rich function classes like neural networks.

Recent work of \citet{tiapkin2023regularized}
provides generalization guarantees for behavior cloning with the
logarithmic loss, but their results scale linearly with the horizon,
and thus cannot give tight guarantees for policy classes with
parameter sharing. In addition, their results are stated in terms of
KL-divergence and, as a consequence, require a lower bound on the
action densities for the policy class under consideration. We expect
that both of these limitations are inherent to KL
divergence. \citet{tiapkin2023regularized} also give
variance-dependent bounds on rollout performance similar to
\cref{thm:bc_stochastic}, but their results require a bound on KL
divergence (which is stronger than a bound on Hellinger distance), and
thus are unlikely to meaningfully capture optimal horizon
dependence. These bounds on rollout performance also do not recover
the notion of variance in \cref{thm:bc_stochastic}.

We also mention in passing \citet{sekhari2024selective}, who consider \emph{active} imitation learning
algorithms, and focus on obtaining improved sample
complexity with respect to dependence on the accuracy $\veps$ (as
opposed to $H$), under strong distributional assumptions in the vein of
active learning \citep{hanneke2014theory}.

  \paragraph{Inverse reinforcement learning}
  A long line of research on \emph{inverse reinforcement learning} and
  related techniques considers a setting in which either a) the
  dynamics of the MDP $\Mstar$ are known, or b) it is possible to
  interact with $\Mstar$ online (without expert
  feedback), with empirical
  \citep{abbeel2004apprenticeship,ziebart2008maximum} and theoretical results
  \citep{syed2007game,syed2008apprenticeship,syed2010reduction,brantley2019disagreement,chang2021mitigating}. This
  setting encompasses generative adversarial imitation learning
  and related moment matching methods
  \citep{ho2016generative,li2017infogail,ke2021imitation,swamy2021moments}. A
  detailed discussion is out of scope for the present work, but we
  believe this framework can improve over the sample complexity of
  offline IL in some but not all situations (e.g., \citet{rajaraman2020toward}).

\paragraph{Benefits of logarithmic loss}
Our work draws inspiration from \citet{foster2021efficient}, who
observed that the logarithmic loss can have benefits over square loss
when outcomes are heteroskedastic, and used this observation to derive
first-order regret bounds for contextual bandits. Subsequent works
have extended their analysis technicals to derive first-order regret bounds in
various reinforcement learning settings
\citep{wang2023benefits,wang2024more,ayoub2024switching}.\footnote{We
  also mention in passing the work of \citet{farebrother2024stop}, which
  observes that switching to the log-loss is beneficial empirically for
  approximate value iteration methods in offline reinforcement learning.}
To the best
of our knowledge, our work is the first to uncover a decision making
setting in which switching to the logarithmic loss is beneficial even
in a minimax sense. We emphasize that while our analysis uses the information-theoretic
machinery introduced in \citet{foster2021efficient} and related work
\citep{foster2021statistical,foster2022complexity}, our results are
quite specialized to structure of the imitation learning setting, and
cannot directly be derived from any of the results in \citet{foster2021efficient, wang2023benefits,wang2024more,ayoub2024switching}.

  \paragraph{Horizon-free reinforcement learning}
Our results also take inspiration from the line of research on
\emph{horizon-independent} sample complexity bounds for reinforcement
learning
\citep{jiang2018open,zanette2019tighter,wang2020long,zhang2021reinforcement,zhang2022horizon},
as well as a closely related line of research on variance-dependent
regret bounds
\citep{zhou2023sharp,zhao2023variance,wang2024more}.\footnote{Compared
  to variance-dependent bounds for RL in
  \citet{zhou2023sharp,zhao2023variance,wang2024more} an interesting
  feature of \cref{thm:bc_stochastic} is that it only depends on
  variance for $\pistar$, whereas these works typically depend on
  worst-case variance over all policies or similar quantities.}
These
papers provide sample complexity bounds for reinforcement learning that have
little or no explicit dependence on horizon whenever rewards are
normalized such that $\sum_{h=1}^{H}r_h\in\brk{0,1}$. We consider a simpler setting (imitation learning), but
provide guarantees that hold under \emph{general function
  approximation}, while the works above are restricted to either
tabular MDPs or MDPs with linear/low-rank structure. Nonetheless, our
proof of \cref{thm:bc_stochastic} makes use of concentration arguments
inspired by \citet{zhang2021reinforcement,zhang2022horizon}.

\subsection{Empirical Research on Imitation Learning}

Many empirical works have observed compounding error in behavior
cloning. Outside of online imitation learning, mitigations include
noise injection at data collection time
\citep{laskey2017dart,ke2021grasping} or inverse RL methods that
assume knowledge of system dynamics \citep{ziebart2008maximum}. Other
works take a control-theoretic perspective
\citep{tu2022sample,havens2021imitation,pfrommer2022tasil,block2024provable},
and augment behavior cloning with techniques designed to ensure
incremental stability (or other control-theoretic notions of
stability) of system.

\paragraph{Online imitation learning}
Many empirical works have noted benefits of online imitation learning
methods like \dagger over classical behavior cloning \citep{ross2013learning,kim2013learning,gupta2017cognitive,bansal2018chauffeurnet,choudhury2018data,kelly2019hg,barnes2023world,zhuang2023robot,lum2024dextrah}. These
results are not in contradiction to our findings, as they typically do
not ablate the effect of the loss function (e.g.,
\cite{ross2010efficient} uses the squared hinge loss,
\citet{ross2011reduction} uses the hinge loss, and
\citet{ross2013learning} uses the square loss). It is also possible
that the perceived benefits arise from factors beyond horizon (e.g.,
representational benefits), as discussed in \cref{sec:online}.

\subsection{Autoregressive Language Modeling}
\label{sec:autoregressive}
\newcommand{\prompt}{z}

Autoregressive language modeling with the standard next-token prediction objective
\citep{radford2019language} can be viewed as an instance of behavior
cloning with the logarithmic loss. In this setting, $\Mstar$ corresponds to a \emph{token-level MDP}. Here $\cA$ is a space or vocabulary of \emph{tokens} The initial state is $x_1=\prompt\sim{}P_0$, where $\prompt$ is a \emph{prompt} or \emph{context}. Given the prompt, for each $h=1,\ldots,H$ the action $a_h\in\cA$ is a new token, which is concatenated to the state via the deterministic dynamics $x_{h+1} \gets (z,a_{1:h})$. Via Bayes' rule, an expert policy
\[
\pistar(a_{1:H}\mid{}z) = \prod_{h=1}^{H}\pistar_h(a_h\mid{}z,a_{1:h-1}) = \prod_{h=1}^{H}\pistar_h(a_h\mid{}x_h)
\]
can represent an arbitrary conditional distribution over sequences, from which a training set $\cD=\crl*{o\ind{i}}$ with $o\ind{i}=(z\ind{i},a\ind{i}_1,\ldots,a\ind{i}_H)$ is generated. With this setup, log-loss behavior cloning
\[
  \pihat=\argmax_{\pi\in\Pi}
  \sum_{i=1}^{n}\sum_{h=1}^{H}\log(\pi_h(a_h\ind{i}\mid{}z\ind{i},a_{1:h-1}\ind{i}))
\]
is equivalent to the standard next-token prediction objective for unsupervised language model pre-training \citep{radford2019language}, with the class $\Pi$ parameterized by a transformer or a similar neural net architecture. In this context, long-range error amplification arising from the next-token prediction objective (often referred to as \emph{exposure bias}) has been widely observed by prior work \citep{holtzman2019curious,braverman2020calibration,block2023butterfly}, and in some cases speculated to be a fundamental limitation \citep{lecun2023large,bachmann2024pitfalls}.

\paragraph{Applying our results}
To apply our results, consider a fixed reward function $r=\crl*{r_h}_{h=1}^{H}$, which might measure performance for a particular task of interest (e.g., question answering or commonsense reasoning). Then, for a model $\pi$, $J(\pi)$ corresponds to rollout performance at the task for an autoregressively generated sequence (i.e., given $z\sim{}P_0$, we sample $a_h\sim{}\pi_h(\cdot\mid{}z,a_{1:h-1})$ for all $h\in\brk{H}$). For this setting, \cref{thm:bc_stochastic} states that
    \begin{align}
    \label{eq:bc_stochastic}
    J(\pistar)-J(\pihat)
    \leq \bigoht\prn*{\sqrt{\sigmastar^2\cdot\DhelsX{\big}{\bbP^{\pihat}}{\bbP^{\pistar}}}
      + R\cdot\DhelsX{\big}{\bbP^{\pihat}}{\bbP^{\pistar}}
      },
    \end{align}
    where $\sigmastar^2=\sum_{h=1}^{H}\En^{\pistar}\brk*{(\Qstar_h(z,a_{1:h})-\Vstar_h(z,a_{1:h-1}))^2}$.
    In particular, as long as the cumulative reward for the task is bounded by $R=\bigoh(1)$ (e.g., if we receive an episode-level reward $r_H=1$ if a question is answered correctly, and receive zero reward otherwise), the rollout performance has \emph{no explicit dependence on the sequence length}, except through the generalization error $\DhelsX{\big}{\bbP^{\pihat}}{\bbP^{\pistar}}$. In light of this result, we expect that error amplification observed in practice may arise from challenges in minimizing the generalization error $\DhelsX{\big}{\bbP^{\pihat}}{\bbP^{\pistar}}$ itself (e.g., architecture, data generation process, optimization \citep{braverman2020calibration,block2023butterfly}), rather than fundamental limits of next-token prediction.

\section{Technical Tools}
\label{sec:technical}

\subsection{Tail Bounds}

      \begin{lemma}[e.g., \citet{foster2021statistical}]
    \label{lem:martingale_chernoff}
    For any sequence of real-valued random variables $\prn{X_t}_{t\leq{}T}$ adapted to a
    filtration $\prn{\filt_t}_{t\leq{}T}$, it holds that with probability at least
    $1-\delta$, for all $T'\leq{}T$,
    \begin{equation}
      \label{eq:martingale_chernoff}
\sum_{t=1}^{T'}-\log\prn*{\En_{t-1}\brk*{e^{-X_t}}}      \leq \sum_{t=1}^{T'}X_t
      + \log(\delta^{-1}).
    \end{equation}
  \end{lemma}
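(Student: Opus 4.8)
The plan is to exhibit a nonnegative supermartingale whose logarithm at time $T'$ is exactly the negative of the quantity we wish to upper bound, and then apply a time-uniform maximal inequality. Let $\filt_0$ denote the trivial $\sigma$-algebra, and for $T'\le T$ define
\[
  Z_{T'} \ldef \prod_{t=1}^{T'}\frac{e^{-X_t}}{\En_{t-1}\brk*{e^{-X_t}}},
\]
with the convention $Z_0=1$. If $\En_{t-1}\brk*{e^{-X_t}}=+\infty$ for some $t\le T'$, then the corresponding summand of $\sum_{t=1}^{T'}-\log\prn*{\En_{t-1}\brk*{e^{-X_t}}}$ equals $-\infty$ (a $-\log$ of a quantity in $(0,\infty]$ never takes the value $+\infty$, so the whole sum is $-\infty$), and \eqref{eq:martingale_chernoff} holds trivially at that $T'$ and at every larger index. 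Hence it suffices to work on the event on which all conditional moment generating functions are finite, where $Z_{T'}$ is well-defined and strictly positive; off this event we simply set $Z_{T'}=+\infty$, which can only make the supermartingale property easier to maintain.

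On the good event, each factor $\frac{e^{-X_t}}{\En_{t-1}[e^{-X_t}]}$ is $\filt_t$-measurable, nonnegative, and has conditional expectation $\En_{t-1}\brk*{\frac{e^{-X_t}}{\En_{t-1}[e^{-X_t}]}}=1$. Therefore $\En_{t-1}\brk*{Z_t}=Z_{t-1}$, so $(Z_{T'})_{T'\le T}$ is a nonnegative martingale (and in all cases a nonnegative supermartingale on the full probability space) with $\En\brk*{Z_0}=1$. Applying Doob's maximal inequality (equivalently, Ville's inequality) for nonnegative supermartingales with threshold $\delta^{-1}$ gives
\[
  \Pr\brk*{\exists T'\le T:\ Z_{T'}\ge\delta^{-1}}\le \delta\cdot\En\brk*{Z_0}=\delta.
\]

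It remains to rewrite the complementary event. On $\crl*{Z_{T'}<\delta^{-1}\ \text{for all}\ T'\le T}$, taking logarithms in the definition of $Z_{T'}$ yields, for every $T'\le T$,
\[
  \sum_{t=1}^{T'}\prn*{-X_t-\log\prn*{\En_{t-1}\brk*{e^{-X_t}}}}=\log Z_{T'}<\log(\delta^{-1}),
\]
and rearranging gives $\sum_{t=1}^{T'}-\log\prn*{\En_{t-1}\brk*{e^{-X_t}}}\le\sum_{t=1}^{T'}X_t+\log(\delta^{-1})$, which is precisely \eqref{eq:martingale_chernoff}.

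This argument poses no genuine difficulty — it is the standard exponential-supermartingale template. The only points requiring a little care are (i) handling the case where some conditional moment generating function is infinite, which we dispose of by observing that the target inequality is then vacuous, and (ii) invoking a time-uniform maximal inequality rather than a fixed-time Markov bound, since this is what produces the ``for all $T'\le T$'' conclusion; as the horizon $T$ is finite, this is just Doob's inequality applied to the nonnegative (super)martingale $(Z_{T'})$.
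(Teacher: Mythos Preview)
Your proof is correct. Note, however, that the paper does not actually supply its own proof of this lemma; it is stated with a citation to \citet{foster2021statistical} and used as a black box. Your argument is the standard one --- form the ratio supermartingale $Z_{T'}=\prod_{t\le T'} e^{-X_t}/\En_{t-1}[e^{-X_t}]$, apply Ville's inequality, and take logs --- and it is precisely the template the paper uses when it \emph{does} prove a closely related result (\cref{lem:freedman}, where the exponential supermartingale $Z_t=\exp(\eta S_t-\eta^2 V_t)$ is controlled by Ville). So your proposal is consistent with both the intended source and the paper's own style of argument.
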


  \begin{lemma}[Time-uniform Freedman-type inequality]
  \label{lem:freedman}
  Let $(X_t)_{t\leq{T}}$ be a real-valued martingale difference
  sequence adapted to a filtration $\prn{\filt_t}_{t\leq{}T}$. If
  $\abs*{X_t}\leq{}R$ almost surely, then for any $\eta\in(0,1/R)$,
  with probability at least $1-\delta$, for all $T'\leq{}T$.
    \[
      \sum_{t=1}^{T'}X_t \leq{} \eta\sum_{t=1}^{T'}\En_{t-1}\brk*{X_t^{2}} + \frac{\log(\delta^{-1})}{\eta}.
    \]
  \end{lemma}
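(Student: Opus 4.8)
The plan is to derive this from the time-uniform Chernoff bound of \cref{lem:martingale_chernoff} applied to the rescaled sequence $Y_t \ldef -\eta X_t$, after controlling the conditional moment generating function of each increment $\eta X_t$. The only real ingredient beyond \cref{lem:martingale_chernoff} is the elementary inequality $e^z\le 1+z+z^2$, valid for all $z\le 1$ (verified by a one-line convexity/derivative argument, or cited as standard), together with the observation that $\eta\in(0,1/R)$ and $\abs{X_t}\le R$ force $\eta X_t\le \eta R<1$, so this inequality applies pointwise to $z=\eta X_t$.

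First I would apply \cref{lem:martingale_chernoff} to $Y_t=-\eta X_t$, which is adapted to $(\filt_t)$: with probability at least $1-\delta$, for all $T'\le T$,
\[
\sum_{t=1}^{T'}-\log\prn*{\En_{t-1}\brk*{e^{\eta X_t}}}\le -\eta\sum_{t=1}^{T'}X_t + \log(\delta^{-1}),
\]
equivalently $\eta\sum_{t=1}^{T'}X_t\le \sum_{t=1}^{T'}\log\prn*{\En_{t-1}\brk*{e^{\eta X_t}}} + \log(\delta^{-1})$. Next I would bound each term on the right. By $e^z\le 1+z+z^2$ with $z=\eta X_t\le 1$ we get $e^{\eta X_t}\le 1+\eta X_t+\eta^2 X_t^2$ almost surely; taking $\En_{t-1}[\cdot]$ and using that $(X_t)$ is a martingale difference sequence, so $\En_{t-1}[X_t]=0$, yields $\En_{t-1}\brk*{e^{\eta X_t}}\le 1+\eta^2\En_{t-1}\brk*{X_t^2}\le \exp\prn*{\eta^2\En_{t-1}\brk*{X_t^2}}$, and hence $\log\En_{t-1}\brk*{e^{\eta X_t}}\le \eta^2\En_{t-1}\brk*{X_t^2}$.

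Finally I would substitute this bound into the displayed inequality and divide through by $\eta>0$, obtaining
\[
\sum_{t=1}^{T'}X_t \le \eta\sum_{t=1}^{T'}\En_{t-1}\brk*{X_t^2} + \frac{\log(\delta^{-1})}{\eta}
\]
simultaneously for all $T'\le T$ on the same $1-\delta$ event, which is exactly the claim. I do not anticipate a genuine obstacle here; the step deserving the most care is simply checking the admissible range of $z$ for $e^z\le 1+z+z^2$ and confirming that the hypotheses $\eta<1/R$, $\abs{X_t}\le R$ place $\eta X_t$ within it, so that no boundedness-of-$\eta$ slack is lost and the stated constant ($\eta$, rather than a larger multiple) is achieved.
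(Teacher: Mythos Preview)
Your proof is correct and essentially the same as the paper's. The paper builds the supermartingale $Z_t=\exp(\eta S_t-\eta^2 V_t)$ directly and applies Ville's inequality, whereas you route through \cref{lem:martingale_chernoff} (which itself encapsulates the Ville step) and then bound each $\log\En_{t-1}[e^{\eta X_t}]$; both arguments hinge on the same MGF control $\En_{t-1}[e^{\eta X_t}]\le \exp(\eta^2\En_{t-1}[X_t^2])$, which you derive via $e^z\le 1+z+z^2$ for $z\le 1$ while the paper cites \citet{beygelzimer2011contextual}.
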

  \begin{proof}[\pfref{lem:freedman}]
    Let $S_t=\sum_{s=1}^{t}X_t$ and
    $V_t=\sum_{s=1}^{t}\En_{t=1}\brk*{X_t^2}$. 
    Let $Z_t = \exp(\eta{}S_t-\eta^2V_t)$. As shown in
    \citet{beygelzimer2011contextual} (see proof of Theorem 1), as long as $\eta\leq{}1/R$,
    \begin{align}
      \En_{t-1}\brk*{\exp\prn*{\eta{}X_t}}
      \leq{} \exp(\eta{}^2\En_{t-1}\brk*{X_t^2}),
    \end{align}
    and so
    \begin{align}
      \En_{t-1}\brk*{Z_t}
      =\En_{t-1}\brk*{\exp\prn*{\eta{}X_t-\eta^2\En_{t-1}\brk*{X_t^2}}}\cdot{}Z_{t-1}
      \leq{}Z_{t-1}.
    \end{align}
    It follows that $(Z_t)$ is a non-negative supermartingale. Hence,
    by Ville's inequality, for any
    $\eta\in(0,1/R)$, we have that for any $\tau>0$,
    \begin{align}
      \bbP\brk*{\exists t : S_t-\eta{}V_t\geq{}\tau}
      =       \bbP\brk*{\exists t : Z_t\geq{}e^{\eta{}\tau}}
      \leq e^{-\eta\tau}\En\brk*{Z_T}\leq{}e^{-\eta\tau}.
    \end{align}
    We conclude by setting $\tau=\log(\delta^{-1})/\eta$.
  \end{proof}

  The following result is a standard consequence of \cref{lem:freedman}.
        \begin{lemma}
      \label{lem:multiplicative_freedman}
            Let $(X_t)_{t\leq{T}}$ be a sequence of random
      variables adapted to a filtration $\prn{\filt_{t}}_{t\leq{}T}$. If
  $0\leq{}X_t\leq{}R$ almost surely, then with probability at least
  $1-\delta$, for all $T'\leq{}T$,
  \begin{align}
    &\sum_{t=1}^{T'}X_t \leq{}
                        \frac{3}{2}\sum_{t=1}^{T'}\En_{t-1}\brk*{X_t} +
                        4R\log(2\delta^{-1}),
    \intertext{and}
      &\sum_{t=1}^{T'}\En_{t-1}\brk*{X_t} \leq{} 2\sum_{t=1}^{T'}X_t + 8R\log(2\delta^{-1}).
  \end{align}
    \end{lemma}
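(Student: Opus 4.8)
The plan is to reduce to \cref{lem:freedman} by passing to the centered sequence, since $(X_t)$ itself need not be a martingale difference sequence. Set $Y_t \ldef X_t - \En_{t-1}\brk*{X_t}$. Because $0\leq X_t\leq R$ almost surely we also have $0\leq \En_{t-1}\brk*{X_t}\leq R$, hence $\abs{Y_t}\leq R$ almost surely, and $(Y_t)$ is by construction a real-valued martingale difference sequence adapted to $(\filt_t)$. The one substantive observation is the conditional second-moment bound
\[\En_{t-1}\brk*{Y_t^2} = \Var_{t-1}(X_t) \leq \En_{t-1}\brk*{X_t^2} \leq R\cdot\En_{t-1}\brk*{X_t},\]
where the last inequality uses $X_t\leq R$ pointwise.

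For the first inequality, I would apply \cref{lem:freedman} to $(Y_t)$ with $\eta = 1/(2R)\in(0,1/R)$ and confidence level $\delta/2$ (so that $\log((\delta/2)^{-1})=\log(2\delta^{-1})$): with probability at least $1-\delta/2$, for all $T'\leq T$,
\[\sum_{t=1}^{T'} Y_t \leq \frac{1}{2R}\sum_{t=1}^{T'}\En_{t-1}\brk*{Y_t^2} + 2R\log(2\delta^{-1}) \leq \frac{1}{2}\sum_{t=1}^{T'}\En_{t-1}\brk*{X_t} + 2R\log(2\delta^{-1}),\]
using the second-moment bound above. Rearranging $\sum_{t\leq T'}X_t = \sum_{t\leq T'}Y_t + \sum_{t\leq T'}\En_{t-1}\brk*{X_t}$ yields $\sum_{t\leq T'}X_t \leq \tfrac{3}{2}\sum_{t\leq T'}\En_{t-1}\brk*{X_t} + 2R\log(2\delta^{-1})$, which is stronger than the claimed bound.

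For the second inequality, the same argument applied to the martingale difference sequence $(-Y_t)$ — also bounded by $R$, with the same conditional second moment — and confidence $\delta/2$ gives, with probability at least $1-\delta/2$, $\sum_{t\leq T'}(-Y_t)\leq \tfrac{1}{2}\sum_{t\leq T'}\En_{t-1}\brk*{X_t} + 2R\log(2\delta^{-1})$ for all $T'$; since $\sum_{t\leq T'}(-Y_t) = \sum_{t\leq T'}\En_{t-1}\brk*{X_t} - \sum_{t\leq T'}X_t$, rearranging gives $\sum_{t\leq T'}\En_{t-1}\brk*{X_t}\leq 2\sum_{t\leq T'}X_t + 4R\log(2\delta^{-1})$. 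A union bound over the two failure events (each of probability $\delta/2$) makes both inequalities hold simultaneously with probability at least $1-\delta$, and the stated constants follow by absorbing slack. I do not expect any real obstacle here: the proof is routine, and the only points requiring care are the choice $\eta=1/(2R)$ (so the variance term enters with coefficient $\tfrac12$) and the $\delta/2$ split used to combine the two one-sided bounds.
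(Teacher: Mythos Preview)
Your proposal is correct and is precisely the argument the paper has in mind: the paper states the lemma as ``a standard consequence of \cref{lem:freedman}'' without giving a proof, and your reduction via the centered sequence $Y_t=X_t-\En_{t-1}\brk{X_t}$ together with the bound $\En_{t-1}\brk{Y_t^2}\leq R\,\En_{t-1}\brk{X_t}$ and the choice $\eta=1/(2R)$ is exactly that standard derivation. Your constants are in fact tighter than those stated, which is consistent with the paper simply absorbing slack.
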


    \subsection{Information Theory}
    For a pair of probability measures $\bbP$ and $\bbQ$, we define
    the total variation distance as
    $\Dtv{\bbP}{\bbQ}=\frac{1}{2}\int\abs{\mathrm{d}\bbP-\mathrm{d}\bbQ}$, and define
    the $\chi^2$-divergence by
    $\Dchis{\bbP}{\bbQ}=\int\frac{(\mathrm{d}\bbQ-\mathrm{d}\bbQ)^2}{\mathrm{d}\bbQ}$ if
    $\bbP\ll\bbQ$ and $\Dchis{\bbP}{\bbQ}=+\infty$ otherwise. We
    define KL divergence by
    $\Dkl{\bbP}{\bbQ}=\int{}\mathrm{d}\bbP\log\prn[\big]{\frac{\mathrm{d}\bbP}{\mathrm{d}\bbQ}}$ if
    $\bbP\ll\bbQ$ and $\Dkl{\bbP}{\bbQ}=+\infty$ otherwise.

The following lemma states some basic inequalities between divergences.
  \begin{lemma}[e.g., \cite{polyanskiy2014lecture}]
    \label{lem:pinsker}
    The following inequalities hold:
    \begin{itemize}
    \item $\Dtvs{\bbP}{\bbQ}\leq\Dhels{\bbP}{\bbQ}\leq{}2\Dtv{\bbP}{\bbQ}$.%
    \item $\frac{1}{6}\Dhels{\bbP}{\bbQ}
      \leq{} \Dchis{\bbP}{\frac{1}{2}(\bbP+\bbQ)}
      \leq{} \Dhels{\bbP}{\bbQ}
      $.
    \end{itemize}
  \end{lemma}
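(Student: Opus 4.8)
The plan is to reduce all four inequalities to elementary pointwise estimates on densities, using nothing beyond Cauchy--Schwarz and AM--GM. Fix a common dominating measure $\omega$ (e.g.\ $\omega = \bbP + \bbQ$) and write $p = \mathrm{d}\bbP/\mathrm{d}\omega$, $q = \mathrm{d}\bbQ/\mathrm{d}\omega$, so that $\Dtv{\bbP}{\bbQ} = \tfrac12\int \abs*{p-q}\,\mathrm{d}\omega$ and $\Dhels{\bbP}{\bbQ} = \int(\sqrt{p} - \sqrt{q})^2\,\mathrm{d}\omega$. Writing $m = \tfrac12(p+q)$ for the density of $\tfrac12(\bbP+\bbQ)$, and noting that $\bbP \ll \tfrac12(\bbP+\bbQ)$ automatically, a direct computation gives $\Dchis{\bbP}{\tfrac12(\bbP+\bbQ)} = \int \frac{(p-m)^2}{m}\,\mathrm{d}\omega = \int \frac{(p-q)^2}{2(p+q)}\,\mathrm{d}\omega$, with the integrand read as $0$ on $\{p = q = 0\}$.

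For the first bullet I would handle the two inequalities separately. For $\Dtvs{\bbP}{\bbQ} \le \Dhels{\bbP}{\bbQ}$, factor $p - q = (\sqrt{p} - \sqrt{q})(\sqrt{p} + \sqrt{q})$ and apply Cauchy--Schwarz, so that $\prn*{\int \abs*{p-q}\,\mathrm{d}\omega}^2 \le \prn*{\int (\sqrt{p} - \sqrt{q})^2\,\mathrm{d}\omega}\prn*{\int (\sqrt{p} + \sqrt{q})^2\,\mathrm{d}\omega}$; since $\int (\sqrt{p} + \sqrt{q})^2\,\mathrm{d}\omega = 2 + 2\int\sqrt{pq}\,\mathrm{d}\omega \le 4$ (a second Cauchy--Schwarz gives $\int\sqrt{pq}\,\mathrm{d}\omega \le 1$), dividing by $4$ yields the claim. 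For $\Dhels{\bbP}{\bbQ} \le 2\Dtv{\bbP}{\bbQ}$, use $\abs*{\sqrt{p} - \sqrt{q}} \le \sqrt{p} + \sqrt{q}$ pointwise to get $(\sqrt{p} - \sqrt{q})^2 \le \abs*{\sqrt{p} - \sqrt{q}}(\sqrt{p} + \sqrt{q}) = \abs*{p - q}$, and integrate.

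For the second bullet the engine is the pointwise sandwich $p + q \le (\sqrt{p} + \sqrt{q})^2 = p + q + 2\sqrt{pq} \le 2(p+q)$, the right inequality being AM--GM. Dividing $(p-q)^2$ by the three quantities and integrating, the lower half of the sandwich gives $\Dhels{\bbP}{\bbQ} = \int \frac{(p-q)^2}{(\sqrt{p} + \sqrt{q})^2}\,\mathrm{d}\omega \le \int \frac{(p-q)^2}{p+q}\,\mathrm{d}\omega = 2\,\Dchis{\bbP}{\tfrac12(\bbP+\bbQ)}$, i.e.\ $\tfrac12\Dhels{\bbP}{\bbQ} \le \Dchis{\bbP}{\tfrac12(\bbP+\bbQ)}$, which implies the stated lower bound with constant $\tfrac16$; and the upper half gives $\Dchis{\bbP}{\tfrac12(\bbP+\bbQ)} = \int \frac{(p-q)^2}{2(p+q)}\,\mathrm{d}\omega \le \int \frac{(p-q)^2}{(\sqrt{p} + \sqrt{q})^2}\,\mathrm{d}\omega = \Dhels{\bbP}{\bbQ}$.

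I do not expect a genuine obstacle: every line is a one-step estimate. The points that call for mild care are the measure-theoretic bookkeeping---choosing the common dominating measure, checking that the $\chi^2$ divergence is finite (it is whenever $\Dhels{\bbP}{\bbQ} < \infty$, since $\bbP \ll \tfrac12(\bbP+\bbQ)$ always holds), and treating $\{p = q = 0\}$ by convention---and reconciling the constants, since the natural argument yields the sharper $\tfrac12$ in the second bullet rather than the $\tfrac16$ stated. For completeness I would also remark that $\Dhels{\bbP}{\bbQ} \le 2\Dtv{\bbP}{\bbQ}$ and the constant $\tfrac12$ just derived are both attained when $\bbP$ and $\bbQ$ are Dirac masses at two distinct points, so these bounds are sharp.
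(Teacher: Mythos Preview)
Your proof is correct. The paper does not actually prove this lemma; it is stated as a standard fact with a citation to \cite{polyanskiy2014lecture}, so there is no in-paper argument to compare against. Your pointwise approach via Cauchy--Schwarz and the sandwich $p+q \le (\sqrt{p}+\sqrt{q})^2 \le 2(p+q)$ is the standard route, and as you note it in fact yields the sharper constant $\tfrac12$ in place of $\tfrac16$ for the lower bound in the second bullet.
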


\subsection{Reinforcement Learning}

The following lemma is a somewhat standard result; see, e.g., Lemma 15 in \citet{zanette2019tighter}. We include a proof for completeness.
\begin{lemma}[Law of total variance]
  \label{lem:ltv}
  For any (potentially stochastic) policy $\pi$, we have
  \begin{align}
    \Var^{\pi}\brk*{\sum_{h=1}^{H}r_h}
    = \En^{\pi}\brk*{\sum_{h=0}^{H}\Var^{\pi}\brk*{r_h + V_{h+1}^{\pi}(x_{h+1})\mid{}x_{h}}},
  \end{align}
  with the convention that $x_0$ is a deterministic dummy state (so
  that $P_0(x_1=\cdot\mid{}x_0,a=\cdot)$ is the initial state distribution)
  and $r_0=0$.
\end{lemma}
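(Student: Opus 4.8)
The plan is to write the centered cumulative reward as a telescoping sum of Bellman residuals and then exploit the orthogonality of martingale differences. I adopt the conventions $V_{H+1}^{\pi}\equiv 0$ and $V_0^{\pi}(x_0)\coloneqq J(\pi)=\En^{\pi}\brk*{\sum_{h=1}^{H}r_h}$ (the value of the deterministic dummy initial state), and for each $h\in\crl*{0,\dots,H}$ define the Bellman residual
\[
  \Delta_h \coloneqq r_h + V_{h+1}^{\pi}(x_{h+1}) - V_h^{\pi}(x_h).
\]
Summing over $h$ and telescoping, using $r_0=0$, $V_{H+1}^{\pi}\equiv 0$, and the definition of $V_0^{\pi}(x_0)$, gives $\sum_{h=0}^{H}\Delta_h=\sum_{h=1}^{H}r_h-J(\pi)$, so that
\[
  \Var^{\pi}\brk*{\sum_{h=1}^{H}r_h}=\En^{\pi}\brk*{\Bigl(\sum_{h=0}^{H}\Delta_h\Bigr)^{2}}.
\]

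The key step is that the $\Delta_h$ form a martingale difference sequence relative to the filtration $(\gfilt_h)$ generated by the trajectory through the state $x_h$, i.e.\ $\gfilt_h=\sigma(x_1,a_1,\dots,x_{h-1},a_{h-1},x_h)$. By the Markov property together with the defining (Bellman) identity $\En^{\pi}\brk*{r_h+V_{h+1}^{\pi}(x_{h+1})\mid x_h}=V_h^{\pi}(x_h)$ — which also holds at $h=0$ since $\En_{x_1}\brk*{V_1^{\pi}(x_1)}=J(\pi)$ — we have $\En^{\pi}\brk*{\Delta_h\mid \gfilt_h}=0$. Since $\Delta_h$ is $\gfilt_{h+1}$-measurable, the standard orthogonality argument (for $h<h'$, condition on $\gfilt_{h'}\supseteq\gfilt_{h+1}$ and pull $\Delta_h$ out of the conditional expectation) yields $\En^{\pi}\brk*{\Delta_h\Delta_{h'}}=0$ whenever $h\neq h'$. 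Expanding the square therefore collapses all cross terms, leaving $\Var^{\pi}\brk*{\sum_{h=1}^{H}r_h}=\sum_{h=0}^{H}\En^{\pi}\brk*{\Delta_h^{2}}$.

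Finally I would condition each term on $x_h$: since $V_h^{\pi}(x_h)$ is constant given $x_h$ and $\En^{\pi}\brk*{\Delta_h\mid x_h}=0$, the conditional second moment equals the conditional variance, $\En^{\pi}\brk*{\Delta_h^{2}\mid x_h}=\Var^{\pi}\brk*{r_h+V_{h+1}^{\pi}(x_{h+1})\mid x_h}$ (subtracting the constant $V_h^{\pi}(x_h)$ does not affect it). Taking the outer expectation and summing over $h$ recovers the claimed identity, with the $h=0$ term accounting for the variance of $V_1^{\pi}(x_1)$ over the initial-state distribution. I do not anticipate a genuine obstacle: this is a classical computation, and the only care required is in handling the boundary conventions ($r_0=0$, $x_0$ deterministic, $V_{H+1}^{\pi}\equiv 0$, $V_0^{\pi}(x_0)=J(\pi)$) and in choosing the filtration so that each residual has conditional mean zero given the history through $x_h$ — both of which follow directly from the definitions of the value functions.
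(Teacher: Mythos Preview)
Your proof is correct. Both your argument and the paper's rely on the same underlying fact---the Bellman identity $\En^{\pi}[r_h+V_{h+1}^{\pi}(x_{h+1})\mid x_h]=V_h^{\pi}(x_h)$ makes the cross terms vanish---but they are organized differently. The paper proceeds \emph{recursively}: it expands $\Var^{\pi}[\sum_{\ell\geq h}r_\ell\mid x_h]$ by splitting $\sum_{\ell\geq h}r_\ell-V_h^{\pi}(x_h)$ into the one-step residual $r_h+V_{h+1}^{\pi}(x_{h+1})-V_h^{\pi}(x_h)$ and the future piece $\sum_{\ell\geq h+1}r_\ell-V_{h+1}^{\pi}(x_{h+1})$, shows the cross term is zero by conditioning further on $x_{h+1}$, and then unrolls the recursion. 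You instead write the centered total reward \emph{globally} as $\sum_{h=0}^{H}\Delta_h$ and invoke martingale-difference orthogonality once, which is slightly more direct and avoids the induction. Either presentation is standard; yours has the minor advantage of making the filtration and the boundary conventions ($r_0=0$, $V_0^{\pi}(x_0)=J(\pi)$, $V_{H+1}^{\pi}\equiv 0$) completely explicit up front.
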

\begin{proof}[\pfref{lem:ltv}]
  Let $h\in\crl{0,\ldots,H}$ be fixed. We can expand
  \begin{align}
    \Var^{\pi}\brk*{\sum_{\ell=h}^{H}r_\ell\mid{}x_h}
    &= \En^{\pi}\brk*{\prn*{\sum_{\ell=h}^{H}r_\ell  - V_h^{\pi}(x_h)}^2\mid{}x_h}\\
    &= \En^{\pi}\brk*{\prn*{\sum_{\ell=h+1}^{H}r_\ell  -
      V_{h+1}^{\pi}(x_{h+1}) +
      (r_h+V_{h+1}^{\pi}(x_{h+1})-V^{\pi}_{h}(x_h))}^2\mid{}x_h}\\
        &= \En^{\pi}\brk*{\prn*{\sum_{\ell=h+1}^{H}r_\ell  -
          V_{h+1}^{\pi}(x_{h+1})}^2\mid{}x_h}
          +
          \En^{\pi}\brk*{(r_h+V_{h+1}^{\pi}(x_{h+1})-V^{\pi}_{h}(x_h))^2\mid{}x_h}\\
    &~~~~+2\En^{\pi}\brk*{\prn*{\sum_{\ell=h+1}^{H}r_\ell  -
      V_{h+1}^{\pi}(x_{h+1})}\prn*{r_h+V_{h+1}^{\pi}(x_{h+1})-V^{\pi}_{h}(x_h)}\mid{}x_h}\\
            &= \En^{\pi}\brk*{\prn*{\sum_{\ell=h+1}^{H}r_\ell  -
          V_{h+1}^{\pi}(x_{h+1})}^2\mid{}x_h}
          +
              \En^{\pi}\brk*{(r_h+V_{h+1}^{\pi}(x_{h+1})-V^{\pi}_{h}(x_h))^2\mid{}x_h}\\
    &= \En^{\pi}\brk*{\Var^{\pi}\brk*{\sum_{\ell=h+1}^{H}r_\ell\mid{}x_{h+1}}\mid{}x_h}
          +
              \Var^{\pi}\brk*{(r_h+V_{h+1}^{\pi}(x_{h+1})\mid{}x_h}.
  \end{align}
  We conclude inductively that for all $h\in\crl*{0,\ldots,H}$,
  \begin{align}
    \Var^{\pi}\brk*{\sum_{\ell=h}^{H}r_\ell\mid{}x_h}
    = \sum_{\ell=h}^{H}\En^{\pi}\brk*{\Var^{\pi}\brk*{(r_\ell+V_{\ell+1}^{\pi}(x_{\ell+1})\mid{}x_\ell}\mid{}x_h}.
  \end{align}
To obtain the final expression, we note that
  \begin{align}
    \Var^{\pi}\brk*{\sum_{h=1}^{H}r_h}
    =       \Var^{\pi}\brk*{\sum_{h=0}^{H}r_h\mid{}x_0},
  \end{align}
  under the convention that $x_0$ is a deterministic dummy state (so
  that $P_1(x_1=\cdot\mid{}x_0,a)$ is the initial state distribution)
  and $r_0=0$.
  
\end{proof}

\subsection{Maximum Likelihood Estimation}
\label{sec:mle}
\newcommand{\cNlog}{\cN_{\mathrm{log}}}

This section presents a self-contained analysis of the maximum likelihood estimator (MLE) for density estimation. The results are somewhat standard (e.g., \citet{wong1995probability,Sara00,zhang2006from}), but we include proofs for completeness.

Consider a setting where we receive
$\crl*{z\ind{i}}_{i=1}^{n}$ \iid from $z\sim{}\gstar$,
where $\gstar\in\Delta(\cZ)$. We have a
class $\cG\subseteq\Delta(\cZ)$ that may or may not contain
$\gstar$. We analyze the following maximum likelihood estimator:
\begin{align}
  \label{eq:mle}
  \ghat = \argmax_{g\in\cG}\sum_{i=1}^{n}\log(g(z\ind{i})).
\end{align}
To provide sample complexity guarantees that support infinite classes,
we appeal to the following notion of covering number (e.g., \citet{wong1995probability}), which tailored to the log-loss.
\begin{definition}[Covering number]
  \label{def:log_cover}
  For a class $\cG\subset\Delta(\cZ)$, we set that a class $\cG'\subset\Delta(\cZ)$ is an $\veps$-cover if for all $g\in\cG$, there exists $g'\in\cG'$ such that for all $z\in\cZ$, $\log(g(z)/g'(z))\leq{}\veps$. We denote the size of the smallest such cover by $\cNlog(\cG,\veps)$.
\end{definition}
We also allow for optimization errors, and concretely assume that
$\ghat$ satisfies
\begin{align}
\sum_{i=1}^{n}\log(\ghat(z\ind{i}))  \geq
  \max_{g\in\cG}\sum_{i=1}^{n}\log(g(z\ind{i})) - \vepsopt\cdot{}n
\end{align}
for a parameter $\vepsopt\geq{}0$; the case $\vepsopt=0$ coincides
with \cref{eq:mle}. Our main guarantee for MLE is as follows.
\begin{proposition}
  \label{thm:mle}
  The maximum likelihood estimator in \cref{eq:mle} has that with probability at least $1-\delta$,
  \begin{align}
    \Dhels{\ghat}{\gstar}
    \leq{}
    \inf_{\veps>0}\crl*{\frac{6\log(2\cNlog(\cG,\veps)/\delta^{-1})}{n}
    + 4\veps}+ 2\inf_{g\in\cG}\log(1+\Dchis{\gstar}{g}) + 2\vepsopt.
  \end{align}
  
In particular, if $\cG$ is finite, the maximum likelihood estimator satisfies
  \begin{align}
    \Dhels{\ghat}{\gstar}
    \leq{} \frac{6\log(2\abs{\cG}/\delta^{-1})}{n} +
    2\inf_{g\in\cG}\log(1+\Dchis{\gstar}{g}) + 2\vepsopt.
  \end{align}
\end{proposition}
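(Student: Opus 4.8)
The plan is to run the classical maximum-likelihood analysis in the style of \citet{zhang2006from}, built from two applications of the Chernoff-type tail bound in \cref{lem:martingale_chernoff}: one converts control of the empirical log-likelihood of $\ghat$ into control of $\Dhels{\ghat}{\gstar}$, and the other bounds the empirical log-likelihood of the best in-class approximator of $\gstar$ by its $\chi^2$-divergence. The second bound is what keeps the final guarantee scale-free, i.e.\ free of any requirement that the likelihood ratios $\gstar/g$ be bounded, and is the natural source of the $\log(1+\Dchis{\gstar}{g})$ term.

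First I would fix $\veps>0$, pass to a minimal $\veps$-cover $\cG'$ of $\cG$ in the sense of \cref{def:log_cover} (so $\abs{\cG'}=\cNlog(\cG,\veps)$), and apply \cref{lem:martingale_chernoff} to each \emph{fixed} $g\in\cG'$ with $X_i=\tfrac12\log(\gstar(z\ind{i})/g(z\ind{i}))$. Since the data are i.i.d., $\En\brk{e^{-X_i}}=\int\sqrt{\gstar g}=1-\tfrac12\Dhels{g}{\gstar}$, and $-\log(1-t)\ge t$ turns the left-hand side of \eqref{eq:martingale_chernoff} into $\tfrac n2\Dhels{g}{\gstar}$; a union bound over $\cG'$ gives, with probability at least $1-\delta/2$ and simultaneously for all $g\in\cG'$,
\[
n\,\Dhels{g}{\gstar} \leq \sum_{i=1}^n\log\prn*{\frac{\gstar(z\ind{i})}{g(z\ind{i})}} + 2\log\prn*{2\cNlog(\cG,\veps)\delta^{-1}}.
\]
Next, letting $\gbar\in\cG$ (approximately) attain $\inf_{g\in\cG}\log(1+\Dchis{\gstar}{g})$, I would apply \cref{lem:martingale_chernoff} once more with $X_i=-\log(\gstar(z\ind{i})/\gbar(z\ind{i}))$, for which $\En\brk{e^{-X_i}}=\int(\gstar)^2/\gbar=1+\Dchis{\gstar}{\gbar}$; rearranging its conclusion yields, with probability at least $1-\delta/2$,
\[
\sum_{i=1}^n\log\prn*{\frac{\gstar(z\ind{i})}{\gbar(z\ind{i})}} \leq n\log\prn*{1+\Dchis{\gstar}{\gbar}} + \log\prn*{2\delta^{-1}}.
\]

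To finish, I would transfer from $\ghat$ to the cover: choose $g'\in\cG'$ with $\log(\ghat(z)/g'(z))\le\veps$ for all $z$. Pointwise domination makes $g'$ an $(\vepsopt+\veps)$-approximate maximizer of the empirical log-likelihood over $\cG$, so by the defining property of $\ghat$ we get $\sum_i\log(\gstar/g')\le\sum_i\log(\gstar/\gbar)+(\vepsopt+\veps)n$; and $\ghat\le e^{\veps}g'$ pointwise forces $\Dtv{\ghat}{g'}\le e^{\veps}-1$, so that $\Dhels{\ghat}{\gstar}\le 2\Dhels{g'}{\gstar}+O(\veps)$ via \cref{lem:pinsker} and the triangle inequality for the Hellinger metric. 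Chaining the two displays (with $g=g'$ and with $\gbar$) on the intersection of the two events, then taking infima over $\veps$ and over $\gbar\in\cG$, gives the first claim, with the constants $2$, $2$, and $6$ emerging from this chaining. The finite case follows by taking $\cG'=\cG$, $\veps=0$, $\cNlog(\cG,0)=\abs{\cG}$, which lets the transfer step use $g'=\ghat$ directly and drop out.

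The step I expect to be the crux is the $\chi^2$ bound on $\gbar$. The naive move of replacing $\tfrac1n\sum_i\log(\gstar/\gbar)$ by its expectation $\Dkl{\gstar}{\gbar}$ fails, since the KL divergence---and indeed the pointwise log-likelihood ratio---can be infinite even when $\gbar$ is an excellent Hellinger/$\chi^2$ approximant; the second Chernoff bound, which only pays $\log(1+\Dchis{\gstar}{\gbar})$, is what makes the result usable under misspecification (and recovers the finite realizable bound when $\gstar\in\cG$, since then one may take $\gbar=\gstar$ and $\Dchis{\gstar}{\gstar}=0$). Everything else---bookkeeping the constants through the $\log$-inequalities, the i.i.d.\ specialization of \cref{lem:martingale_chernoff}, and the cover-transfer estimate---is routine.
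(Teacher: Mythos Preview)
Your proposal is correct and follows essentially the same two-Chernoff argument as the paper: a union bound of \cref{lem:martingale_chernoff} over the cover to convert empirical log-likelihood into Hellinger, a second application to $\gbar$ to get the $\log(1+\chi^2)$ misspecification term, and a cover-transfer plus the approximate-MLE property to chain them. The only minor divergence is in the transfer step: you bound $\Dhels{\ghat}{g'}$ by going through total variation ($\Dhels{}{}\le 2\Dtv{}{}\le 2(e^\veps-1)$), whereas the paper uses the more direct $\Dhels{\ghat}{g'}\le\Dkl{\ghat}{g'}=\int\ghat\log(\ghat/g')\le\veps$ from the one-sided cover inequality, which is what delivers the exact constant $4$ on $\veps$ rather than $O(\veps)$.
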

Note that the term $\inf_{g\in\cG}\log(1+\Dchis{\gstar}{g})$ corresponds to misspecification error, and is zero if $\gstar\in\cG$.

\begin{proof}[\pfref{thm:mle}]%
  \newcommand{\Lhat}{\wh{L}}%
  \newcommand{\cGveps}{\cG_{\veps}}%
  \newcommand{\gtil}{\wt{g}}%
  Let $\cGveps$ denote a minimal $\veps$-cover for $\cG$, and let $\gtil\in\cGveps$ denote any element that covers $\ghat$ in the sense of \cref{def:log_cover}. Going forward, we will use that $\gtil$ satisfies
  \begin{align}
    \label{eq:gtil_hellinger}
    \Dhels{\gstar}{\gtil}\leq\Dkl{\gstar}{\gtil}\leq{}\veps.
  \end{align}

  Let $\ell\ind{i}(g) = -\log(g(z\ind{i}))$, and set $\Lhat(g) =
  -\sum_{i=1}^{n}\log(g(z\ind{i}))$. Set
  $X_i(g)=\frac{1}{2}(\ell\ind{i}(g)-\ell\ind{i}(\gstar))$. By
  applying \cref{lem:martingale_chernoff} with the sequence $(X_i(g))_{i=1}^{n}$ for each $g\in\cGveps$ and taking a union bound, we have
  that with probability at least $1-\delta$, for all $g\in\cGveps$
  \begin{align}
    -n\cdot{}\log\prn*{\En_{z\sim\gstar}\brk*{e^{\frac{1}{2}\log(g(z)/\gstar(z))}}}
    \leq{} \frac{1}{2}\prn*{\Lhat(g)-\Lhat(\gstar)}
    + \log(\abs{\cG_\veps}\delta^{-1}).
  \end{align}
  Using a standard argument \citep{zhang2006from}, we have that
  \begin{align}
    -\log\prn*{\En_{z\sim\gstar}\brk*{e^{\frac{1}{2}\log(g(z)/\gstar(z))}}}
    = -\log\prn*{1-
\frac{1}{2}\Dhels{g}{\gstar}
    }
    \geq{} \frac{1}{2}\Dhels{g}{\gstar}.
  \end{align}
In particular, this implies that 
  \begin{align}
    \Dhels{\gtil}{\gstar}
    \leq{} \frac{2\log(\abs{\cG}/\delta^{-1})}{n}
    + \frac{1}{n}\prn*{\Lhat(\gtil)-\Lhat(\gstar)},
  \end{align}
  and so
  \begin{align}
    \Dhels{\ghat}{\gstar}
    \leq{} 2     \Dhels{\ghat}{\gtil}
    + 2\Dhels{\gtil}{\gstar}
    \leq{} \frac{4\log(\abs{\cG}/\delta^{-1})}{n} 
    + \frac{2}{n}\prn*{\Lhat(\gtil)-\Lhat(\gstar)} + 2\veps,
  \end{align}
  by the triangle inequality for Hellinger distance and \cref{eq:gtil_hellinger}.

  It remains to bound the right-hand-side. Let $\gbar\in\cG$ be arbitrary. We can bound
  \begin{align}
    \label{eq:mle1}
    \Lhat(\gtil)-\Lhat(\gstar)
    \leq{}     \Lhat(\gtil)-\Lhat(\ghat)
    + \Lhat(\ghat)-\Lhat(\gstar)
    \leq{} \Lhat(\gtil)-\Lhat(\ghat) + \Lhat(\gbar)-\Lhat(\gstar) + \vepsopt{}n,
  \end{align}
  by the definition of the maximum likelihood estimator. For the first term in \cref{eq:mle1}, we observe that
  \begin{align}
    \Lhat(\gtil)-\Lhat(\gstar)
    = \sum_{i=1}^{n}\log(\gstar(z\ind{i})/\gtil(z\ind{i}))\leq{}\veps{}n,
  \end{align}
  by \cref{def:log_cover}.
  
  To bound the second term in \cref{eq:mle1}, set $Y_i = -(\ell\ind{t}(\gbar)-\ell\ind{t}(\gstar))$. Applying
  \cref{lem:martingale_chernoff} with the sequence $(Y_i)_{i=1}^{n}$, we have that with probability at
  least $1-\delta$,
  \begin{align}
    \Lhat(\gbar)-\Lhat(\gstar)
    \leq{} n\cdot\log\prn*{\En_{z\sim\gstar}\brk*{e^{\log(\gstar(z)/\gbar(z))}}} + \log(\delta^{-1}).
  \end{align}
  Finally, note that
  \begin{align}
    \log\prn*{\En_{z\sim\gstar}\brk*{e^{\log(\gstar(z)/\gbar(z))}}}
    = \log\prn*{\En_{z\sim\gstar}\brk*{\frac{\gstar(z)}{\gbar(z)}}}
    = \log(1+\Dchis{\gstar}{\gbar}).
  \end{align}
  The result follows by choosing $\gbar\in\cG$ to minimize this quantity.
  
\end{proof}

\part{Proofs and Supporting Results}

\section{Examples and Supporting Results from \creftitle{sec:main} and
  \creftitle{sec:stochastic}}
\label{sec:additional}

This section contains supporting results from \cref{sec:main,sec:stochastic}:
\begin{itemize}
\item \cref{sec:bc_examples} presents general sample complexity
  guarantees for log-loss behavior cloning that support infinite
  policy classes and misspecification, as well as concrete examples.
\item \cref{sec:dagger} formally introduces the online imitation
  learning framework, and gives sample complexity guarantees for a
  log-loss variant of \dagger.
\end{itemize}

\subsection{General Guarantees and Examples for Log-Loss Behavior Cloning}
\label{sec:bc_examples}
\newcommand{\cNpol}{\cN_{\mathrm{pol}}}
\newcommand{\cNval}{\cN_{\mathrm{val}}}
\newcommand{\rstar}{r^{\star}}

In this section, we give bounds on the generalization error
$\DhelsX{\big}{\bbP^{\pihat}}{\bbP^{\pistar}}$ for log-loss behavior
cloning for concrete classes $\Pi$ of interest. To do so, we observe
that the log-loss behavior cloning objective
\[
\pihat=\argmax_{\pi\in\Pi}\sum_{i=1}^{n}\sum_{h=1}^{H}\log\prn*{\pi_h(a_h\ind{i}\mid{}x_h\ind{i})}.
\]
is equivalent to performing maximum likelihood estimation over the
\emph{density class} $\cP=\crl*{\bbP^{\pi}}_{\pi\in\Pi}$. Indeed, for
any $\pi\in\Pi$, we have
\begin{align}
  \sum_{i=1}^{n}\log(\bbP^{\pi}(o\ind{i}))
  &=
\sum_{i=1}^{n}\log\prn*{P_0(x_1\ind{i})\prod_{h=1}^{H}P_h(x_{h+1}\ind{i}\mid{}x_h\ind{i},a_h\ind{i})\pi_h(a_h\ind{i}\mid{}x_h\ind{i})}\\
  &=
  \sum_{i=1}^{n}\sum_{h=1}^{H}\log\prn*{\pi_h(a_h\ind{i}\mid{}x_h\ind{i})}
  + C(\cD),
\end{align}
where $C(\cD)$ is a constant that depends on the dataset $\cD$ but not
on $\pi$. It follows that both objectives have the same
maximizer. Consequently, we can prove sample complexity bounds for
log-loss behavior cloning by specializing sample complexity bounds for
MLE given in \cref{sec:mle}.

To give guarantees that support infinite policy classes, we appeal to
the following notion of covering number.
\begin{definition}[Policy covering number]
  \label{def:policy_cover}
  For a class $\Pi\subset\crl*{\pi_h:\cX\to\Delta(\cA)}$, we set that $\Pi'\subset\crl*{\pi_h:\cX\to\Delta(\cA)}$
  is an $\veps$-cover if for all $\pi\in\Pi$, there exists
  $\pi'\in\Pi'$ such that for all $x\in\cX$, $a\in\cA$, and $h\in\brk{H}$, $\log(\pi_h(a\mid{}x)/\pi_h'(a\mid{}x))\leq{}\veps$. We denote the size of the smallest such cover by $\cNpol(\Pi,\veps)$.
\end{definition}
In addition, to allow for optimization errors, we replace
\cref{eq:log_loss_bc} with the assumption that $\pihat$ satisfies
\begin{equation}
  \label{eq:loglossbc_general}
  \sum_{i=1}^{n}\sum_{h=1}^{H}\log\prn*{\pihat_h(a_h\ind{i}\mid{}x_h\ind{i})}
\geq{}
\max_{\pi\in\Pi}\sum_{i=1}^{n}\sum_{h=1}^{H}\log\prn*{\pi_h(a_h\ind{i}\mid{}x_h\ind{i})}
-\vepsopt\cdot{}n
\end{equation}
for a parameter $\vepsopt>0$; \cref{eq:log_loss_bc} is the special
case in which $\vepsopt=0$. With these definitions, specializing \cref{thm:mle} leads to the
following result.
\begin{theorem}[Generalization bound for \loglossbc]
  \label{thm:bc_generalization}
  The \loglossbc policy in \cref{eq:loglossbc_general} has that with probability at least $1-\delta$,
  \begin{align}
    \label{eq:bc_cover}
    \Dhels{\bbP^{\pihat}}{\bbP^{\pistar}}
    \leq{}
    \inf_{\veps>0}\crl*{\frac{6\log(2\cNpol(\Pi,\veps{}/H)\delta^{-1})}{n}
    + 4\veps}+
    2\inf_{\pi\in\Pi}\log\prn*{1+\DchisX{\big}{\bbP^{\pistar}}{\bbP^{\pi}}}
    + 2\vepsopt.
  \end{align}
In particular, if $\Pi$ is finite, the log-loss behavior cloning
policy satisfies
\begin{align}
      \Dhels{\bbP^{\pihat}}{\bbP^{\pistar}}
  \leq{} \frac{6\log(2\abs{\Pi}\delta^{-1})}{n} +
  2\inf_{\pi\in\Pi}\log\prn*{1+\DchisX{\big}{\bbP^{\pistar}}{\bbP^{\pi}}}
  + 2\vepsopt.
  \end{align}
\end{theorem}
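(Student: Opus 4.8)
The plan is to reduce directly to the maximum likelihood analysis in \pref{thm:mle}. The starting point is the observation, already established immediately above, that the \loglossbc objective \eqref{eq:loglossbc_general} is equivalent—up to the dataset-dependent constant $C(\cD)$, which does not affect the maximizer—to maximum likelihood estimation over the density class $\cP=\crl*{\bbP^{\pi}}_{\pi\in\Pi}$ with target distribution $\gstar=\bbP^{\pistar}$; in particular the optimization-error slack $\vepsopt$ appearing in \eqref{eq:loglossbc_general} transfers verbatim to the MLE formulation. So it suffices to instantiate \pref{thm:mle} with $\cG=\cP$ and $\gstar=\bbP^{\pistar}$ and then rewrite each of the three terms in the resulting bound in terms of quantities attached to the policy class $\Pi$ rather than to $\cP$.

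For the misspecification term this is immediate: the bijection $\pi\mapsto\bbP^{\pi}$ gives $\inf_{g\in\cP}\log\prn*{1+\Dchis{\gstar}{g}}=\inf_{\pi\in\Pi}\log\prn*{1+\DchisX{\big}{\bbP^{\pistar}}{\bbP^{\pi}}}$. The only step requiring an argument is the covering-number comparison, and the key point is that the MDP transition kernel cancels in the likelihood ratio of two policy-induced trajectory distributions: for any policies $\pi,\pi'$ and any trajectory $o=(x_1,a_1),\ldots,(x_H,a_H)$,
\[
\log\frac{\bbP^{\pi}(o)}{\bbP^{\pi'}(o)}=\sum_{h=1}^{H}\log\frac{\pi_h(a_h\mid{}x_h)}{\pi_h'(a_h\mid{}x_h)},
\]
since the factors $P_0(x_1)\prod_h P_h(x_{h+1}\mid{}x_h,a_h)$ are common to numerator and denominator. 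Hence if $\Pi'$ is an $(\veps/H)$-cover of $\Pi$ in the sense of \pref{def:policy_cover}, then for the policy $\pi'\in\Pi'$ covering a given $\pi$ we have $\log(\bbP^{\pi}(o)/\bbP^{\pi'}(o))\leq H\cdot(\veps/H)=\veps$ uniformly over $o$, so $\crl*{\bbP^{\pi'}}_{\pi'\in\Pi'}$ is an $\veps$-cover of $\cP$ in the sense of \pref{def:log_cover}. This yields $\cNlog(\cP,\veps)\leq\cNpol(\Pi,\veps/H)$, and substituting this into the bound of \pref{thm:mle} (and taking the infimum over $\veps>0$) gives exactly \eqref{eq:bc_cover}. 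The finite-class statement follows from the finite-class case of \pref{thm:mle} together with $\abs{\cP}\leq\abs{\Pi}$.

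Since every ingredient is already in hand, there is no substantive obstacle here; the one place to be careful is the bookkeeping around the $1/H$ rescaling of the covering radius—this is precisely why a policy cover at resolution $\veps/H$, rather than $\veps$, is what enters the final bound—and around confirming that the optimization slack in \eqref{eq:loglossbc_general} matches the $\vepsopt$ convention used in the statement of \pref{thm:mle}.
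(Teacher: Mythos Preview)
Your proposal is correct and follows essentially the same approach as the paper's proof: apply \pref{thm:mle} to the density class $\crl*{\bbP^{\pi}}_{\pi\in\Pi}$, and use the cancellation of the transition kernel in $\log(\bbP^{\pi}(o)/\bbP^{\pi'}(o))$ to show that an $(\veps/H)$-policy-cover induces an $\veps$-log-cover. The paper's proof is terser but makes the same moves; your version spells out the misspecification and optimization-slack bookkeeping more explicitly.
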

Let us make two remarks.
\begin{itemize}
\item First, the only explicit dependence on the horizon $H$ is
  through the precision $\veps/H$ through which we evaluate the
  covering number: $\cNpol(\Pi,\veps{}/H)$. As a result, for
  \emph{parametric} classes where
  $\cNpol(\Pi,\veps{})\asymp\log(\veps^{-1})$ (we will give examples
  in the sequel), the sample complexity will scale at most logarithmically in
  $H$, but for nonparametric classes the dependence can be
  polynomial. We leave a detailed understanding of optimal dependence
  on $H$ for nonparametric classes for future work.
\item Second, the remainder term
$\inf_{\pi\in\Pi}\log(1+\Dchis{\bbP^{\pistar}}{\bbP^{\pi}})$
corresponds to misspecification error, and is zero if
$\pistar\in\Pi$.  We remark that when $\pistar$ is deterministic, this
expression can be simplified to
$\inf_{\pi\in\Pi}\log\prn*{\En^{\pistar}\brk*{\frac{1}{\prod_{h=1}^{H}\pi_h(a_h\mid{}x_h)}}}$.
\end{itemize}

\begin{proof}[\pfref{thm:bc_generalization}]
  This follows by applying \cref{thm:mle} with the class
  $\crl*{\bbP^{\pi}}_{\pi\in\Pi}$, and noting that if $\pi'$ covers
  $\pi$ in the sense of \cref{def:policy_cover}, then for all
  $o\in(\cX\times\cA)^{H}$, we have
  $\log(\bbP^{\pi}(o)/\bbP^{\pi'}(o))\leq{}\veps{}H$, meaning that an
  $\veps$-cover in the sense of \cref{def:policy_cover} yields an
  $\veps{}H$-cover in the sense of \cref{def:log_cover}.
  
\end{proof}

\subsubsection{Example: Tabular Policies}%
We now instantiate \cref{thm:bc_generalization} to
give generalization bounds for specific policy classes of interest.

Consider a tabular MDP in which $\abs*{\cX},\abs{\cA}<\infty$ are
small and finite. Here, choosing $\Pi$ to be the set of all stationary
policies leads to a bound independent of $H$.
\begin{corollary}[Stationary tabular policies]
  \label{cor:tabular_stationary}
When $\Pi$ is the set of all deterministic stationary policies, the log-loss behavior cloning policy \cref{eq:log_loss_bc} has that with probability at least $1-\delta$,
\begin{align}
      \DhelsX{\big}{\bbP^{\pihat}}{\bbP^{\pistar}}
  \leq{} \bigoh\prn*{\frac{\abs{\cX}\log(\abs{\cA}\delta^{-1})}{n}}.
\end{align}
Meanwhile, if $\Pi$ is the set of all \emph{stochastic} stationary policies, the log-loss behavior cloning policy \cref{eq:log_loss_bc} has that with probability at least $1-\delta$,
\begin{align}
      \DhelsX{\big}{\bbP^{\pihat}}{\bbP^{\pistar}}
  \leq{} \bigoht\prn*{\frac{\abs{\cX}\abs{\cA}\log(Hn\delta^{-1})}{n}}.
  \end{align}
\end{corollary}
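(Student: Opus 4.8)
The plan is to derive both bounds from \cref{thm:bc_generalization} by bounding the policy covering number $\cNpol(\Pi,\veps)$ for the two choices of $\Pi$, using realizability ($\pistar\in\Pi$, so the misspecification term $\inf_{\pi\in\Pi}\log(1+\DchisX{\big}{\bbP^{\pistar}}{\bbP^{\pi}})$ vanishes) and $\vepsopt=0$. The deterministic case is immediate: the class of all deterministic stationary policies $\pi\colon\cX\to\cA$ has $\abs{\Pi}=\abs{\cA}^{\abs{\cX}}$, hence $\log\abs{\Pi}=\abs{\cX}\log\abs{\cA}$, and the finite-class form of \cref{thm:bc_generalization} gives $\DhelsX{\big}{\bbP^{\pihat}}{\bbP^{\pistar}}\leq\frac{6\log(2\abs{\cA}^{\abs{\cX}}\delta^{-1})}{n}=\bigoh\prn*{\frac{\abs{\cX}\log(\abs{\cA}\delta^{-1})}{n}}$.

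For the stochastic case, $\Pi=\crl*{\pi_h\colon\cX\to\Delta(\cA)}$ (stationary) is infinite, so I would instead bound $\cNpol(\Pi,\veps)$ and apply the general bound \eqref{eq:bc_cover}. Since a stationary policy is specified by $\abs{\cX}$ independent distributions on $\cA$, it suffices to cover $\Delta(\cA)$ in the one-sided log metric used in \cref{def:policy_cover} — find, for each $p\in\Delta(\cA)$, a grid point $q$ with $\sup_{a}\log(p(a)/q(a))\leq\veps$ — and then take products over states, yielding $\log\cNpol(\Pi,\veps)\leq\abs{\cX}\log\cN_{\Delta}(\veps)$. To build $\cN_{\Delta}(\veps)$, first replace $p$ by the smoothed distribution $\wt p=(1-\veps/2)p+\tfrac{\veps}{2\abs{\cA}}\mathbf 1$, which has all coordinates at least $\tfrac{\veps}{2\abs{\cA}}$ and satisfies $\log(p(a)/\wt p(a))\leq\log\tfrac{1}{1-\veps/2}\leq\veps/2$ for $\veps\leq1$; then round each coordinate of $\wt p$ down to the nearest point of the geometric grid $\crl[\big]{\tfrac{\veps}{2\abs{\cA}}(1+\veps/2)^k}_{k\geq0}$ and renormalize. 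Renormalization only scales coordinates up (the unnormalized mass is at least $\tfrac{1}{1+\veps/2}$), so the resulting $q$ obeys $\log(\wt p(a)/q(a))\leq\log(1+\veps/2)\leq\veps/2$, hence $\log(p(a)/q(a))\leq\veps$. Each coordinate ranges over $\bigoh\prn*{\veps^{-1}\log(\abs{\cA}/\veps)}$ grid values, so $\log\cN_{\Delta}(\veps)\approxleq\abs{\cA}\log(\abs{\cA}/\veps)$ and $\log\cNpol(\Pi,\veps)\approxleq\abs{\cX}\abs{\cA}\log(\abs{\cA}/\veps)$.

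Plugging this into \eqref{eq:bc_cover} with covering precision $\veps/H$ gives $\DhelsX{\big}{\bbP^{\pihat}}{\bbP^{\pistar}}\approxleq\frac{\abs{\cX}\abs{\cA}\log(\abs{\cA}H\delta^{-1}/\veps)}{n}+\veps$, and choosing $\veps=1/n$ yields $\DhelsX{\big}{\bbP^{\pihat}}{\bbP^{\pistar}}=\bigoht\prn*{\frac{\abs{\cX}\abs{\cA}\log(Hn\delta^{-1})}{n}}$, where the $\log\abs{\cA}$ inside the logarithm is absorbed into the $\bigoht(\cdot)$ (valid whenever the bound is non-vacuous, i.e.\ $\abs{\cX}\abs{\cA}\approxleq n$). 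The only non-routine step is the simplex-covering construction and checking that the one-sided log metric is preserved under smoothing and renormalization; the state-product step and the final substitution of $\veps=1/n$ are bookkeeping, and I expect the rest of the argument to go through mechanically.
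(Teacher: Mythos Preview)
Your proposal is correct and follows the same approach as the paper: apply \cref{thm:bc_generalization} with $\log\abs{\Pi}=\abs{\cX}\log\abs{\cA}$ in the deterministic case and a covering-number bound $\log\cNpol(\Pi,\veps)=\bigoht(\abs{\cX}\abs{\cA}\log(\veps^{-1}))$ in the stochastic case. The paper's proof simply cites a standard discretization argument for the simplex covering; your explicit smoothing-plus-geometric-grid construction is a valid way to fill in that citation, and the final substitution $\veps=1/n$ matches the paper's bound.
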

\begin{proof}[\pfref{cor:tabular_stationary}]
This follows by noting that we have
$\log\abs{\Pi}\leq{}\abs{\cX}\log\abs{\cA}$ in the deterministic case
and $\log\cNpol(\Pi,\veps)\leq{}\bigoht\prn*{\abs{\cX}\abs{\cA}\log(\veps^{-1})}$ in
the stochastic case (this follows from a standard discretization
argument, e.g., \citet{wainwright2019high}). 
\end{proof}

Naturally, we can also give generalization guarantees for
non-stationary tabular policies, though the sample complexity will
scale with $H$ in this case.
\begin{corollary}[Non-stationary tabular policies]
  \label{cor:tabular_nonstationary}
When $\Pi$ is the set of all deterministic non-stationary policies, the log-loss behavior cloning policy \cref{eq:log_loss_bc} has that with probability at least $1-\delta$,
\begin{align}
      \DhelsX{\big}{\bbP^{\pihat}}{\bbP^{\pistar}}
  \leq{} \bigoh\prn*{\frac{H\abs{\cX}\log(\abs{\cA}\delta^{-1})}{n}}.
\end{align}
Meanwhile, if $\Pi$ is the set of all \emph{stochastic} non-stationary policies, the log-loss behavior cloning policy \cref{eq:log_loss_bc} has that with probability at least $1-\delta$,
\begin{align}
      \DhelsX{\big}{\bbP^{\pihat}}{\bbP^{\pistar}}
  \leq{} \bigoht\prn*{\frac{H\abs{\cX}\abs{\cA}\log(Hn\delta^{-1})}{n}}.
  \end{align}
\end{corollary}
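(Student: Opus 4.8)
The plan is to follow exactly the route used for \cref{cor:tabular_stationary}: instantiate the generalization bound of \cref{thm:bc_generalization} for the class $\Pi$ of all non-stationary tabular policies, using that $\pistar\in\Pi$ so the misspecification term $\inf_{\pi\in\Pi}\log(1+\DchisX{\big}{\bbP^{\pistar}}{\bbP^{\pi}})$ vanishes, and taking $\vepsopt=0$. The only difference from the stationary case is that a non-stationary policy is indexed by a pair $(h,x)\in\brk{H}\times\cX$ rather than by $x\in\cX$ alone, which multiplies every complexity quantity by $H$; everything else is bookkeeping already carried out inside \cref{thm:mle,thm:bc_generalization}.

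For the deterministic case I would count directly: a deterministic non-stationary policy is an arbitrary assignment $(h,x)\mapsto\pi_h(x)\in\cA$, so $\abs{\Pi}=\abs{\cA}^{H\abs{\cX}}$ and hence $\log\abs{\Pi}\leq H\abs{\cX}\log\abs{\cA}$. Since $\pistar$ is deterministic it lies in $\Pi$, and the finite-class bound of \cref{thm:bc_generalization} gives $\DhelsX{\big}{\bbP^{\pihat}}{\bbP^{\pistar}}\leq 6\log(2\abs{\Pi}\delta^{-1})/n = \bigoh(H\abs{\cX}\log(\abs{\cA}\delta^{-1})/n)$, as claimed.

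For the stochastic case I would bound the policy covering number $\cNpol(\Pi,\veps)$ of \cref{def:policy_cover}. Here $\Pi$ assigns to each of the $H\abs{\cX}$ context pairs an element of the simplex $\Delta(\cA)$, and by a standard discretization argument (as in \citet{wainwright2019high}, exactly as invoked for \cref{cor:tabular_stationary}) a single simplex distribution admits an $\veps$-cover in the one-sided log-ratio sense of \cref{def:policy_cover} of size $\poly(\veps^{-1})^{\abs{\cA}}$; taking a product over the $H\abs{\cX}$ context pairs yields $\log\cNpol(\Pi,\veps)\leq\bigoht(H\abs{\cX}\abs{\cA}\log(\veps^{-1}))$. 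Substituting into \cref{eq:bc_cover}, noting $\pistar\in\Pi$, and choosing $\veps\asymp 1/n$ — so that the additive $4\veps$ term is lower order and the covering precision $\veps/H$ enters only through a $\log(Hn)$ factor — produces $\DhelsX{\big}{\bbP^{\pihat}}{\bbP^{\pistar}}\leq\bigoht(H\abs{\cX}\abs{\cA}\log(Hn\delta^{-1})/n)$.

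The only point requiring care — the main obstacle, such as it is — is the discretization underlying the stochastic bound: the log-loss covering number of \cref{def:log_cover} (and hence \cref{def:policy_cover}) is sensitive to action probabilities near zero, so one must argue that it suffices to cover policies whose per-action probabilities are bounded below at scale $\poly(\veps)$, the coordinates below this threshold contributing negligibly in the relevant sense; this is precisely what produces the extra polylogarithmic factors and the $\log(Hn)$ dependence absorbed into the $\bigoht$. This is routine and identical in spirit to the stationary case, so no new difficulty arises beyond the factor-of-$H$ accounting.
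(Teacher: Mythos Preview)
Your proposal is correct and follows essentially the same approach as the paper's own proof, which is a one-line instantiation of \cref{thm:bc_generalization} using $\log\abs{\Pi}\leq H\abs{\cX}\log\abs{\cA}$ in the deterministic case and $\log\cNpol(\Pi,\veps)\leq\bigoht(H\abs{\cX}\abs{\cA}\log(\veps^{-1}))$ in the stochastic case. Your additional remarks about the covering-number discretization near the boundary of the simplex are a helpful elaboration on what the paper leaves as a ``standard discretization argument.''
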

\begin{proof}[\pfref{cor:tabular_nonstationary}]
This follows because we have
$\log\abs{\Pi}\leq{}H\abs{\cX}\log\abs{\cA}$ in the deterministic case
and $\log\cNpol(\Pi,\veps)\leq{}\bigoht\prn*{H\abs{\cX}\abs{\cA}\log(\veps^{-1})}$ in
the stochastic case.
\end{proof}

\subsubsection{Example: Softmax Policies}
Next, we give an example of a general family of policy classes based on function
approximation for which the sample complexity is at most
polylogarithmic in $H$.

For a vector $v\in\bbR^{\cA}$, let $\sigma:\bbR^{\cA}\to\Delta(\cA)$
be the softmax function, which is given by
\[
  \sigma_a(v) = \frac{\exp(v_a)}{\sum_{a'\in\cA}\exp(v_{a'})}.
\]
Let $\cF\subset\crl*{f_h:\cX\times\cA\to\bbR}_{h=1}^{H}$ be a class of
value functions, and define the induced class of \emph{softmax
  policies} via
\begin{align}
  \Pi_\cF=\crl*{\pi_f\mid{}f\in\cF},  
\end{align}
where
\begin{align}
  \pi_{f,h}(x)\ldef{}\sigma_a(f_h(x,a)).
\end{align}

We give sample complexity guarantees based on covering numbers for the
value function class $\cF$.
\begin{definition}[Value function covering number]
  \label{def:value_cover}
  For a class $\cF\subset\crl*{f_h:\cX\times\cA\to\bbR}$, we set that $\cF'\subset\crl*{f_h:\cX\times\cA\to\bbR}$
  is an $\veps$-cover if for all $f\in\cF$, there exists
  $f'\in\cF'$ such that for all $x\in\cX$, $a\in\cA$, and $h\in\brk{H}$, $\abs{f_h(x,a)-f'_h(x,a)}\leq{}\veps$. We denote the size of the smallest such cover by $\cNval(\Pi,\veps)$.
\end{definition}

\begin{corollary}[Softmax policies]
  \label{cor:softmax}
  When $\Pi=\Pi_\cF$ is the softmax policy class for a value function
  class $\cF$,
  the log-loss behavior cloning policy \cref{eq:log_loss_bc} has that
  with probability at least $1-\delta$,
  \begin{align}
    \Dhels{\bbP^{\pihat}}{\bbP^{\pistar}}
    \leq{} \bigoh(1)\cdot{}\inf_{\veps>0}\crl*{\frac{\log(\cNval(\cF,\veps{}/H)\delta^{-1})}{n} + \veps}+ 2\inf_{\pi\in\Pi_\cF}\log\prn*{1+\DchisX{\big}{\bbP^{\pistar}}{\bbP^{\pi}}}.
  \end{align}
\end{corollary}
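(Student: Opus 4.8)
The plan is to derive \cref{cor:softmax} directly from the general generalization bound for \loglossbc in \cref{thm:bc_generalization}, by translating the policy covering number $\cNpol(\Pi_\cF,\cdot)$ of the softmax class into the value-function covering number $\cNval(\cF,\cdot)$. The only quantitative input is that the log-softmax map is Lipschitz with respect to $\nrm{\cdot}_\infty$ on $\bbR^{\cA}$: writing $\log\sigma_a(v) = v_a - \log\sum_{a'\in\cA}\exp(v_{a'})$ and using that $v\mapsto\log\sum_{a'\in\cA}\exp(v_{a'})$ is $1$-Lipschitz with respect to $\nrm{\cdot}_\infty$ (its gradient is a probability vector, hence has unit $\ell_1$ norm), I would first establish that for any $v,v'\in\bbR^{\cA}$ with $\nrm{v-v'}_\infty\leq\veps$ and any $a\in\cA$,
\[
\abs*{\log\prn*{\sigma_a(v)/\sigma_a(v')}}
\leq \abs{v_a-v'_a} + \abs*{\log\sum_{a'\in\cA}\exp(v_{a'}) - \log\sum_{a'\in\cA}\exp(v'_{a'})}
\leq 2\veps.
\]

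Applying this coordinatewise over all $h\in\brk{H}$ and $x\in\cX$ shows that if $\cF'$ is an $\veps$-cover of $\cF$ in the sense of \cref{def:value_cover}, then $\crl*{\pi_f:f\in\cF'}$ is a $2\veps$-cover of $\Pi_\cF$ in the sense of \cref{def:policy_cover}; hence $\cNpol(\Pi_\cF,2\veps)\leq\cNval(\cF,\veps)$, i.e.\ $\cNpol(\Pi_\cF,\veps/H)\leq\cNval(\cF,\veps/(2H))$. I would then invoke \cref{thm:bc_generalization} with $\vepsopt=0$ (the exact-minimization case in \cref{eq:log_loss_bc}), which gives, with probability at least $1-\delta$,
\[
\Dhels{\bbP^{\pihat}}{\bbP^{\pistar}}
\leq \inf_{\veps>0}\crl*{\frac{6\log\prn*{2\cNpol(\Pi_\cF,\veps/H)\delta^{-1}}}{n} + 4\veps}
+ 2\inf_{\pi\in\Pi_\cF}\log\prn*{1+\Dchis{\bbP^{\pistar}}{\bbP^{\pi}}}.
\]
Substituting the covering-number bound and rescaling $\veps\mapsto 2\veps$ inside the infimum bounds the first term by $\inf_{\veps>0}\crl*{6\log\prn*{2\cNval(\cF,\veps/H)\delta^{-1}}/n + 8\veps}$, and absorbing the numerical constants ($6$, $8$, and the extra $\log 2$) into a $\bigoh(1)$ prefactor yields exactly the claimed bound.

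The only substantive step --- and it is quite mild --- is the covering-number translation above; there is no real obstacle. The point deserving a moment's attention is the $\ell_\infty$-Lipschitz constant of the log-softmax, which costs a harmless factor of two in the covering radius and is invisible in the final statement because of its unspecified $\bigoh(1)$ constant and the infimum over $\veps$. I would leave the misspecification term $\inf_{\pi\in\Pi_\cF}\log(1+\Dchis{\bbP^{\pistar}}{\bbP^{\pi}})$ exactly as it appears in \cref{thm:bc_generalization}, since $\cF$ (hence $\Pi_\cF$) need not contain $\pistar$ and nothing is gained by routing it through the cover.
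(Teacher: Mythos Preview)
Your proposal is correct and matches the paper's approach essentially exactly: both reduce to \cref{thm:bc_generalization} (with $\vepsopt=0$) after showing that an $\veps$-cover of $\cF$ in the sense of \cref{def:value_cover} yields a $2\veps$-cover of $\Pi_\cF$ in the sense of \cref{def:policy_cover}. The only cosmetic difference is that the paper bounds the log-sum-exp difference by a direct ``pull out the max'' manipulation rather than your gradient/$\ell_1$-$\ell_\infty$ duality argument, arriving at the same factor of $2$.
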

\begin{proof}[\pfref{cor:softmax}]
  Consider a pair of functions $f,f'$ with
  $\abs{f_h(x,a)-f'_h(x,a)}\leq{}\veps$ for all $x\in\cX$, $a\in\cA$,
  and $h\in\brk{H}$. The induced softmax policies satisfy
  \begin{align}
    \log(\pi_{f,h}(a\mid{}x)/\pi_{f',h}(a\mid{}x))
    = f_h(x,a)-f'_h(x,a)+ \log\prn*{\frac{\sum_{a'\in\cA}\exp(f'_h(x,a'))}{\sum_{a\in\cA}\exp(f'_h(x,a'))}}.
  \end{align}
  Clearly we have $f_h(x,a)-f'_h(x,a)\leq{}\veps$, and we can bound
  \begin{align}
    \log\prn*{\frac{\sum_{a'\in\cA}\exp(f'_h(x,a'))}{\sum_{a\in\cA}\exp(f_h(x,a'))}}
    &=
      \log\prn*{\frac{\sum_{a'\in\cA}\exp(f_h(x,a'))\cdot{}\exp(f'_h(x,a')-f_h(x,a'))}{\sum_{a\in\cA}\exp(f_h(x,a'))}}\\
    &\leq{}
      \log\prn*{\frac{\sum_{a'\in\cA}\exp(f_h(x,a'))\cdot{}\max_{a''\in\cA}\exp(f'_h(x,a'')-f_h(x,a''))}{\sum_{a\in\cA}\exp(f_h(x,a'))}}\\
    &\leq{}\max_{a''\in\cA}\crl*{f'_h(x,a'')-f_h(x,a'')}\leq{}\veps.
  \end{align}
  Hence, an $\veps$-cover in the sense of \cref{def:value_cover}
  implies a $2\veps$-cover in the sense of \cref{def:policy_cover}.
\end{proof}

Whenever $\cF$ is parametric in the sense that
$\log\cNval(\cF,\veps)\propto{}\log(\veps^{-1})$, \cref{cor:softmax}
leads to polylogarithmic dependence on $H$. The following result gives
such an example.
\paragraph{Linear softmax policies}
Consider the set of stationary linear softmax policies induced by the
value function class
\begin{align}
  \cF=\crl*{(x,a,h)\mapsto{}\tri*{\phi_h(x,a),\theta}\mid{}\nrm*{\theta}_2\leq{}B},
\end{align}
where $\phi_h(x,a)\in\bbR^{d}$ is a known feature map with
$\nrm*{\phi_h(x,a)}\leq{}B$. Here, we have
$\log\cNval(\cF,\veps)\propto{}d\log(B\veps^{-1})$ (e.g.,
\citet{wainwright2019high}), which yields the following generalization guarantee.
\begin{corollary}
  When $\Pi$ is the set of stationary linear softmax policies and
  $\pistar\in\Pi$, the log-loss behavior cloning policy \cref{eq:log_loss_bc} has that
  with probability at least $1-\delta$,
  \begin{align}
      \DhelsX{\big}{\bbP^{\pihat}}{\bbP^{\pistar}}
  \leq{} \bigoh\prn*{\frac{d\log(BHn\delta^{-1})}{n}}.
\end{align}
\end{corollary}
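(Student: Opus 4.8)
The plan is to obtain this corollary as a direct instantiation of \cref{cor:softmax} with the linear value-function class $\cF=\crl*{(x,a,h)\mapsto{}\tri*{\phi_h(x,a),\theta}\mid{}\nrm*{\theta}_2\leq{}B}$. Because $\pistar\in\Pi=\Pi_\cF$ by hypothesis, the misspecification term $\inf_{\pi\in\Pi_\cF}\log\prn*{1+\DchisX{\big}{\bbP^{\pistar}}{\bbP^{\pi}}}$ appearing in \cref{cor:softmax} vanishes, so the only remaining task is to control the value-function covering number $\cNval(\cF,\veps)$ and then optimize over the precision $\veps$.

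First I would bound $\cNval(\cF,\veps)$ by reducing it to a Euclidean cover of the parameter ball. For parameters $\theta,\theta'$ with $\nrm*{\theta}_2,\nrm*{\theta'}_2\leq{}B$, Cauchy--Schwarz together with the feature-norm hypothesis $\nrm*{\phi_h(x,a)}_2\leq{}B$ gives $\abs{\tri*{\phi_h(x,a),\theta-\theta'}}\leq{}B\nrm*{\theta-\theta'}_2$ uniformly over $x\in\cX$, $a\in\cA$, and $h\in\brk{H}$. Hence any $(\veps/B)$-net of the ball $\crl*{\nrm*{\theta}_2\leq{}B}$ in Euclidean norm induces an $\veps$-cover of $\cF$ in the sense of \cref{def:value_cover}, and a standard volumetric bound (e.g., \citet{wainwright2019high}) then yields $\log\cNval(\cF,\veps)=\bigoh\prn*{d\log(B/\veps)}$ for all $\veps\in(0,B]$.

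Plugging this into \cref{cor:softmax} with the precision $\veps/H$ gives $\log\cNval(\cF,\veps/H)=\bigoh\prn*{d\log(BH/\veps)}$, so that
\[
  \Dhels{\bbP^{\pihat}}{\bbP^{\pistar}}\leq{}\bigoh(1)\cdot\inf_{\veps>0}\crl*{\frac{d\log(BH/\veps)+\log(\delta^{-1})}{n}+\veps}.
\]
Choosing $\veps=1/n$ makes the last term lower order and turns $d\log(BH/\veps)$ into $d\log(BHn)$, which yields $\Dhels{\bbP^{\pihat}}{\bbP^{\pistar}}\leq{}\bigoh\prn*{\frac{d\log(BHn\delta^{-1})}{n}}$, the claimed bound.

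I do not expect any genuine obstacle here, since the argument is essentially a substitution into \cref{cor:softmax}. The one step requiring a little care is the covering-number computation: one must use the feature-norm hypothesis $\nrm*{\phi_h(x,a)}\leq{}B$ to pass from a net in parameter space to a cover of $\cF$ in the uniform-over-$(x,a,h)$ sense demanded by \cref{def:value_cover}, and then pick $\veps$ to balance the estimation and discretization contributions; everything else is routine.
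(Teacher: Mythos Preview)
Your proposal is correct and follows essentially the same approach as the paper: the paper simply asserts that $\log\cNval(\cF,\veps)\propto d\log(B\veps^{-1})$ via a standard volumetric argument and then invokes \cref{cor:softmax}, which is exactly what you do (with the details spelled out more carefully).
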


  \subsection{Online IL Framework and Sample Complexity Bounds for Log-Loss Dagger}
  \label{sec:dagger}
  
In this section, we give sample complexity bounds for a variant of the \dagger algorithm for online IL \citep{ross2011reduction} that uses the logarithmic loss. The main purpose of including this result is to give end-to-end sample complexity guarantees for general policy classes, which we use in \cref{sec:main,sec:stochastic} to compare the optimal rates for online and offline IL. For this comparison, we are be mainly interested in the case of deterministic expert policies, but our analysis supports stochastic policies, which may be of independent interest.

\paragraph{Online imitation learning framework}
In the online imitation learning framework, learning proceeds in $n$ episodes in which the
learner can directly interact with the underlying MDP $\Mstar$ and
query the expert advice. Concretely, for each episode $i\in\brk{n}$,
the learner executes a policy $\pi\ind{i}=\crl*{\pi\ind{i}_h:\cX\to\Delta(\cA)}_{h=1}^{H}$ and
      receives a trajectory $o\ind{t} =
      (x\ind{i}_1,a\ind{i}_1,\astari_1),\ldots,(x\ind{i}_H,a\ind{i}_H,\astari_H)$,
      in which $a_h\ind{i}\sim{}\pi_h\ind{i}(x_h\ind{i})$,
      $\astari_h\sim\pistar(x\ind{t}_h)$, and
      $x_{h+1}\ind{i}\sim{}P_h(x_h\ind{i},a_h\ind{i})$; in other
      words, the trajectory induced by the learner's policy is
      annotated by the expert's action $\astar_h\sim{}\pistar_h(x_h)$
      at each state $x_h$ encountered.
      After all $n$ episodes conclude, they can use all of the data
      collected to produce a policy $\pihat$ such that
      $J(\pistar)-J(\pihat)$ is small.

\paragraph{\dagger algorithm}
We consider a general version of the \dagger algorithm. The algorithm is parameterized by an online learning algorithm $\AlgEst$, which attempts to estimate the expert policy in a sequential fashion based on trajectories.

Set $\cD\ind{1}=\emptyset$. For $i=1,\ldots,n$:
\begin{itemize}
\item Query online learning algorithm $\AlgEst$ with $\cD\ind{i}$ and receive policy $\pihat$.
\item Execute $\pihat$ and observe $o\ind{i} =
  (x\ind{i}_1,a\ind{i}_1,\astari_1),\ldots,(x\ind{i}_H,a\ind{i}_H,\astari_H)$.
  \item Update $\cD\ind{i+1}\gets{}\cD\ind{i}\cup\crl{o\ind{i}}$.
\end{itemize}
At the end, we output $\pihat=\unif(\pi\ind{1},\ldots,\pi\ind{n})$ as
the final policy.

To measure the performance of the estimation oracle, we define the
online estimation error as:
\begin{align}
  \EstOnHel(n) = 
  \frac{1}{n}\sum_{i=1}^{n}\sum_{h=1}^{H}\En^{\pihat\ind{i}}\brk*{\Dhels{\pihat\ind{i}_h(x_h)}{\pistar(x_h)}}.
\end{align}
As we will show in a moment, this notion of estimation error is well-suited for online learning algorithms that estimate $\pistar$ using the logarithmic loss.

Our following result gives a general guarantee for \dagger that holds for any choice of online learning algorithm. To state the result, let $\bbP^{\pistar\mid{}\pi}$ denote the law of $o=(x_1,a_1,\astar_1),\ldots,(x_H,a_H,\astar_H)$ when $\pistar$ is the expert policy and we execute $\pi$. Let
  \[
    \sigma^2_{\pistar\mid{}\pi}=\sum_{h=1}^{H}\En^{\pi\circ_h\pistar}\brk*{(\Qstar_h(x_h,a_h)-\Vstar_h(x_h))^2},
  \]
  so that $\sigma^2_{\pistar}=\sigma^2_{\pistar\mid\pistar}$ and define $\sigmab_{\pistar}^2=\sup_{\pi}\sigma^2_{\pistar\mid\pi}$. Note that $\sigmabs^2=0$ whenever $\pistar$ is deterministic, but in general, $\sigmabs^2\geq{}\sigmastar^2$.

\begin{proposition}[Regret for \dagger]
  \label{prop:dagger}
  For any MDP $\Mstar$ with signed recoverability parameter $\mutil$ \arxiv{(\cref{eq:signed_rec})}and any online learning algorithm $\AlgEst$, \dagger ensures that %
  \begin{align}
    \label{eq:dagger_stoch}
    J(\pistar)-J(\pihat) \approxleq{} \sqrt{\sigmabs^2\cdot\EstOnHel(n)}
    + \mutil\cdot{}\EstOnHel(n).
  \end{align}
  Furthermore, whenever $\pistar$ is deterministic, \dagger ensures that
  \begin{align}
    \label{eq:dagger_det}
    J(\pistar)-J(\pihat) \approxleq{}  \mu\cdot{}\EstOnHel(n).
  \end{align}

\end{proposition}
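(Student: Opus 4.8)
The plan is to run the standard imitation-learning reduction in reverse: express $J(\pistar)-J(\pihat)$ through the performance difference lemma and then bound the resulting per-state advantage gaps by the local Hellinger errors that $\EstOnHel(n)$ accounts for. First I would write, for any policy $\pi$, $J(\pistar)-J(\pi)=\sum_{h=1}^{H}\En^{\pi}\brk*{\Astar_h(x_h,a_h)}$, where $\Astar_h(x,a)\ldef\Qstar_h(x,\pistar_h(x))-\Qstar_h(x,a)=\Vstar_h(x)-\Qstar_h(x,a)$ satisfies $\abs{\Astar_h}\leq\mutil$ and $\En_{a\sim\pistar_h(x)}\brk*{\Astar_h(x,a)}=0$. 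Applying this to each executed policy $\pihat\ind{i}$ and using that the uniform mixture satisfies $J(\pihat)=\tfrac1n\sum_{i=1}^{n}J(\pihat\ind{i})$, it suffices to control $\sum_{h}\En^{\pihat\ind{i}}\brk*{\Astar_h(x_h,a_h)}$ (state $x_h$ drawn from the step-$h$ marginal of $\pihat\ind{i}$, $a_h\sim\pihat\ind{i}_h(x_h)$) for each $i$ by a suitable function of $E_i\ldef\sum_{h}\En^{\pihat\ind{i}}\brk*{\Dhels{\pihat\ind{i}_h(x_h)}{\pistar_h(x_h)}}$, since $\tfrac1n\sum_iE_i=\EstOnHel(n)$. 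All inequalities below will hold pointwise in the (random) policies $\pihat\ind{1},\dots,\pihat\ind{n}$, so no concentration argument is needed, and $\EstOnHel(n)$ is treated as the given quantity.

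For the deterministic bound \eqref{eq:dagger_det} this is immediate: since $\Astar_h(x_h,\pistar_h(x_h))=0$ and $\Astar_h(x_h,a)\leq(\Astar_h(x_h,a))_{+}\leq\murec$, we have $\Astar_h(x_h,a_h)\leq\murec\indic\crl*{a_h\neq\pistar_h(x_h)}$; taking expectations, using that $\pistar_h(x_h)$ is a point mass so that $\bbP_{a\sim\pihat\ind{i}_h(x_h)}\brk*{a\neq\pistar_h(x_h)}=\Dtv{\pihat\ind{i}_h(x_h)}{\pistar_h(x_h)}$, and the elementary fact that total variation is at most squared Hellinger distance when one argument is a point mass, gives $J(\pistar)-J(\pihat\ind{i})\leq\murec E_i$; averaging over $i$ yields $J(\pistar)-J(\pihat)\leq\murec\cdot\EstOnHel(n)$.

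For the general bound \eqref{eq:dagger_stoch} I would fix $i$ and $h$, condition on $x_h$, and---using $\En_{a\sim\pistar_h(x_h)}\brk*{\Astar_h(x_h,a)}=0$---invoke the change-of-measure inequality \eqref{eq:com1} of \cref{lem:hellinger_com} with the action distributions $\pihat\ind{i}_h(x_h)$ and $\pistar_h(x_h)$ and the function $a\mapsto\Astar_h(x_h,a)$; this bounds $\En_{a\sim\pihat\ind{i}_h(x_h)}\brk*{\Astar_h}$ by $\sqrt{\tfrac12\prn*{\En_{a\sim\pihat\ind{i}_h(x_h)}[\Astar_h^2]+\En_{a\sim\pistar_h(x_h)}[\Astar_h^2]}\cdot\Dhels{\pihat\ind{i}_h(x_h)}{\pistar_h(x_h)}}$. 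To turn the off-policy second moment $\En_{a\sim\pihat\ind{i}_h(x_h)}[\Astar_h^2]$ into an expert-side one, I would apply \eqref{eq:com2} to the nonnegative, $\mutil^2$-bounded function $\Astar_h^2$, obtaining $\En_{a\sim\pihat\ind{i}_h(x_h)}[\Astar_h^2]\leq 2\En_{a\sim\pistar_h(x_h)}[\Astar_h^2]+\mutil^2\Dhels{\pihat\ind{i}_h(x_h)}{\pistar_h(x_h)}$; substituting and using $\sqrt{a+b}\leq\sqrt{a}+\sqrt{b}$ splits the per-$(i,h)$ term into a variance piece of order $\sqrt{\En_{a\sim\pistar_h(x_h)}[\Astar_h^2]\cdot\Dhels{\pihat\ind{i}_h(x_h)}{\pistar_h(x_h)}}$ and a lower-order piece of order $\mutil\cdot\Dhels{\pihat\ind{i}_h(x_h)}{\pistar_h(x_h)}$. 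Summing over $h$ then uses Cauchy--Schwarz twice---once inside each $\En^{\pihat\ind{i}}$ and once across $h\in\brk{H}$---which controls the variance contribution of episode $i$ by $\sqrt{\sigma^2_{\pistar\mid{}\pihat\ind{i}}\cdot E_i}\leq\sqrt{\sigmabs^2\cdot E_i}$ (using $\sum_h\En^{\pihat\ind{i}}\brk*{\En_{a\sim\pistar_h(x_h)}[\Astar_h^2]}=\sigma^2_{\pistar\mid{}\pihat\ind{i}}$) and the lower-order contribution by $\mutil E_i$; averaging over $i$ with concavity of $\sqrt{\cdot}$ then gives \eqref{eq:dagger_stoch}.

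The step I expect to be the main obstacle is this variance-dependent refinement. The crude bound $\En_{a\sim\pihat\ind{i}_h(x_h)}[\Astar_h^2]\leq\mutil^2$ would only yield $\sqrt{\mutil^2 H\cdot\EstOnHel(n)}$, which both loses a $\sqrt{H}$ factor and replaces the benign expert variance by a worst-case one; it is important that the correction term produced by \eqref{eq:com2} scales \emph{linearly} in Hellinger distance (so the lower-order term is $\mutil\cdot\EstOnHel(n)$ rather than $\mutil\sqrt{H\cdot\EstOnHel(n)}$), and that the double Cauchy--Schwarz is arranged so the surviving second moment is evaluated only under $\pistar$'s actions, matching the definition of $\sigma^2_{\pistar\mid{}\pihat\ind{i}}$ and hence $\sigmabs^2$. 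The remaining ingredients---the performance difference lemma, the point-mass comparison of total variation and Hellinger distance, and the summations---are routine.
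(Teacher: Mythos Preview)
Your proposal is correct and follows essentially the same approach as the paper: performance difference lemma, then per-layer application of \cref{lem:hellinger_com} to change the action distribution from $\pihat_h^{(i)}(x_h)$ to $\pistar_h(x_h)$, a second application of \cref{eq:com2} to the squared advantage to move the off-policy second moment back to the expert side, and Cauchy--Schwarz across layers (and episodes). The only cosmetic differences are that the paper applies \cref{lem:hellinger_com} at the joint $(x_h,a_h)$ level rather than conditionally on $x_h$ (equivalent after your inner Cauchy--Schwarz), and for the deterministic case the paper invokes \cref{eq:com2} on $(\Astar_h)_{+}$ whereas you use the more elementary bound $\Astar_h\le\mu\,\indic\{a_h\neq\pistar_h(x_h)\}$ together with the point-mass inequality $\Dtv{\cdot}{\cdot}\le\Dhels{\cdot}{\cdot}$; both routes yield the same conclusion.
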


To instantiate the bound above, we choose $\AlgEst$ by applying the exponential weights algorithm (e.g., \citet{cesa2006prediction}) with the logarithmic loss. Let $\Pi_h\ldef{}\crl*{\pi_h\mid{}\pi\in\Pi}$ denote the projection of $\Pi$ onto step $h$. The algorithm proceeds as follows. At step $i\in\brk{n}$, given the dataset $\cD\ind{i}$, for each layer $h\in\brk{H}$ we define a distribution $\mu_h\ind{i}\in\Delta(\Pi_h)$ via
\begin{align}
  \mu_h\ind{i}(\pi) \propto\exp\prn*{\sum_{j<j}\log(\pi_h(a_h^{\star,j}\mid{}x_h\ind{j}))}
  = \prod_{j<j}\pi_h(a_h^{\star,j}\mid{}x_h\ind{j}).
\end{align}
We then set
\begin{align}
\pihat\ind{i}_h(a\mid{}x)=\En_{\pi_h\sim\mu_h\ind{i}}\brk{\pi_h(a\mid{}x)}.
\end{align}
We refer to the resulting algorithm as \loglossdagger. This leads to the following guarantee for finite classes.
\begin{proposition}[Regret for \loglossdagger]
  \label{prop:dagger_finite}
When $\pistar\in\Pi$, the log-loss exponential weights algorithm ensures that with probability at least $1-\delta$,
\begin{align}
  \EstOnHel(n)
  \leq{} \frac{2}{n}\sum_{h=1}^{H}\log(\abs{\Pi_h}{}H\delta^{-1}).
\end{align}
Consequently, \loglossdagger ensures that with probability at least $1-\delta$,
  \begin{align}
    \label{eq:dagger_stoch_finite}
    J(\pistar)-J(\pihat) \approxleq{} \sqrt{\sigmabs^2\cdot \sum_{h=1}^{H}\frac{\log(\abs{\Pi_h}{}H\delta^{-1})}{n}}
    + \mutil\cdot{}\sum_{h=1}^{H}\frac{\log(\abs{\Pi_h}{}H\delta^{-1})}{n},
  \end{align}
  and when $\pistar$ is deterministic, 
  \begin{align}
    \label{eq:dagger_det_finite}
    J(\pistar)-J(\pihat) \approxleq{}  \mu\cdot{}\sum_{h=1}^{H}\frac{\log(\abs{\Pi_h}{}H\delta^{-1})}{n}.
  \end{align}

\end{proposition}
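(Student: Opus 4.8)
The plan is to reduce the proposition to a single high-probability bound on the online estimation error $\EstOnHel(n)$: once we show that, with probability at least $1-\delta$, $\EstOnHel(n) \leq \tfrac{2}{n}\sum_{h=1}^{H}\log(\abs{\Pi_h}H\delta^{-1})$, both regret bounds \eqref{eq:dagger_stoch_finite} and \eqref{eq:dagger_det_finite} follow immediately by substituting into \cref{prop:dagger} and absorbing the constant $2$ into $\approxleq$. Since \loglossdagger runs $H$ independent copies of log-loss exponential weights, one per layer $h\in[H]$, it suffices to control $\sum_{i=1}^{n}\En^{\pihat\ind{i}}\brk*{\Dhels{\pihat\ind{i}_h(x_h)}{\pistar_h(x_h)}}$ for each fixed $h$ with failure probability $\delta/H$, then union bound over $h$ and divide by $n$. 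Note that realizability ($\pistar\in\Pi$) guarantees $\pistar_h\in\Pi_h$, which is all the competitor we will need.

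Fix $h$. The first ingredient is the standard telescoping log-loss regret bound for the Bayesian mixture forecaster. Writing $q\ind{i}(a)\coloneqq\pihat\ind{i}_h(a\mid x_h\ind{i})=\En_{\pi_h\sim\mu_h\ind{i}}\brk*{\pi_h(a\mid x_h\ind{i})}$ with $\mu_h\ind{i}(\pi_h)\propto\prod_{j<i}\pi_h(a_h^{\star,j}\mid x_h\ind{j})$, the product of forecasts telescopes:
\[
\prod_{i=1}^{n}q\ind{i}(a_h^{\star,i}) = \frac{1}{\abs{\Pi_h}}\sum_{\pi_h\in\Pi_h}\prod_{i=1}^{n}\pi_h(a_h^{\star,i}\mid x_h\ind{i}) \geq \frac{1}{\abs{\Pi_h}}\prod_{i=1}^{n}\pistar_h(a_h^{\star,i}\mid x_h\ind{i}),
\]
so that $\sum_{i=1}^{n}\log\tfrac{\pistar_h(a_h^{\star,i}\mid x_h\ind{i})}{\pihat\ind{i}_h(a_h^{\star,i}\mid x_h\ind{i})}\leq\log\abs{\Pi_h}$.

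The second ingredient converts this cumulative log-loss bound into a bound on on-policy Hellinger distance via \cref{lem:martingale_chernoff}. Let $\filt_{i-1}$ be generated by the first $i-1$ trajectories $\cD\ind{i}$ (so that $\pihat\ind{i}$, hence $\pihat\ind{i}_h$, is $\filt_{i-1}$-measurable), and set $X_i=\tfrac12\log\tfrac{\pistar_h(a_h^{\star,i}\mid x_h\ind{i})}{\pihat\ind{i}_h(a_h^{\star,i}\mid x_h\ind{i})}$. Conditionally on $\filt_{i-1}$, the state $x_h\ind{i}$ has the step-$h$ law of $x_h$ under $\pihat\ind{i}$ and $a_h^{\star,i}\sim\pistar_h(\cdot\mid x_h\ind{i})$, so by the Hellinger affinity identity $\sum_a\sqrt{p(a)q(a)}=1-\tfrac12\Dhels{p}{q}$,
\[
\En_{i-1}\brk*{e^{-X_i}} = \En^{\pihat\ind{i}}\brk*{\textstyle\sum_{a}\sqrt{\pistar_h(a\mid x_h)\,\pihat\ind{i}_h(a\mid x_h)}} = 1 - \tfrac12\En^{\pihat\ind{i}}\brk*{\Dhels{\pihat\ind{i}_h(x_h)}{\pistar_h(x_h)}},
\]
and hence $-\log\En_{i-1}\brk*{e^{-X_i}}\geq\tfrac12\En^{\pihat\ind{i}}\brk*{\Dhels{\pihat\ind{i}_h(x_h)}{\pistar_h(x_h)}}$ by $-\log(1-u)\geq u$. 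Applying \cref{lem:martingale_chernoff} at level $\delta/H$ and combining with the regret bound $\sum_{i=1}^{n}X_i\leq\tfrac12\log\abs{\Pi_h}$ gives, with probability at least $1-\delta/H$, $\tfrac12\sum_{i=1}^{n}\En^{\pihat\ind{i}}\brk*{\Dhels{\pihat\ind{i}_h(x_h)}{\pistar_h(x_h)}}\leq\tfrac12\log\abs{\Pi_h}+\log(H\delta^{-1})\leq\log(\abs{\Pi_h}H\delta^{-1})$. A union bound over $h\in[H]$ and division by $n$ then yields the claimed bound on $\EstOnHel(n)$, and substituting into \cref{prop:dagger} completes the proof.

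I expect the only genuinely delicate point to be the filtration bookkeeping in the second step: $\pihat\ind{i}_h$ must be measurable with respect to the data generated \emph{before} episode $i$, while $x_h\ind{i}$ and then $a_h^{\star,i}$ are drawn \emph{within} episode $i$ — and it is precisely this ordering that makes $\En_{i-1}[e^{-X_i}]$ reproduce the \emph{on-policy} Hellinger term $\En^{\pihat\ind{i}}\brk*{\Dhels{\pihat\ind{i}_h(x_h)}{\pistar_h(x_h)}}$ appearing in the definition of $\EstOnHel(n)$. Everything else — the telescoping mixture identity, the $e^{-X}$-to-Hellinger relaxation, and the substitution into \cref{prop:dagger} — is routine.
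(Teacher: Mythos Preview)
Your proposal is correct and follows essentially the same approach as the paper: the telescoping log-loss regret bound for exponential weights, conversion to on-policy Hellinger via the martingale-Chernoff inequality applied to $X_i=\tfrac12\log\tfrac{\pistar_h}{\pihat\ind{i}_h}$, and a union bound over $h\in[H]$. The only cosmetic difference is that the paper outsources the second step to Lemma~A.14 of \citet{foster2021statistical}, whereas you prove it inline using the paper's own \cref{lem:martingale_chernoff}; the constants and structure are otherwise identical.
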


We note that for many parameter regimes, the sample complexity bound in \cref{prop:dagger} can be worse than that of \loglossbc in \cref{thm:bc_stochastic} (for stationary policies, \cref{prop:dagger} has spurious dependence on $H$, and the variance-like quantity in the leading order term is weaker). It would be interesting to get the best of both worlds, though this may require changing the algorithm.

\begin{proof}[\pfref{prop:dagger}]
  Consider an arbitrary policy $\pihat$. Begin by writing
  \begin{align}
    J(\pistar)-J(\pihat)
    = \sum_{h=1}^{H}\En^{\pihat\mid\pihat}\brk*{\Qstar_h(x_h,\pistar_h(x_h))
    -\Qstar_h(x_h,a_h)}.
  \end{align} 
Fix a layer $h$. By \cref{lem:hellinger_com}, we have
\begin{small}
  \begin{align}
    & \En^{\pihat\mid{}\pihat}\brk*{\Qstar_h(x_h,\pistar_h(x_h))
      -\Qstar_h(x_h,a_h)}\\
    &\leq{}
      \En^{\pistar\mid{}\pihat}\brk*{\Qstar_h(x_h,\pistar_h(x_h))
      -\Qstar_h(x_h,a_h)}\\
&~~~~+\sqrt{\prn*{\En^{\pihat\mid{}\pihat}\brk*{(\Qstar_h(x_h,\pistar_h(x_h))
      -\Qstar_h(x_h,a_h))^2}
      + \En^{\pistar\mid{}\pihat}\brk*{(\Qstar_h(x_h,\pistar_h(x_h))
      -\Qstar_h(x_h,a_h))^2}
      }
                              \En^{\pihat}\brk*{\Dhels{\pihat_h(x_h)}{\pistar_h(x_h)}}
                              }\\
    &=\sqrt{\prn*{\En^{\pihat\mid{}\pihat}\brk*{(\Qstar_h(x_h,\pistar_h(x_h))
      -\Qstar_h(x_h,a_h))^2}
      + \En^{\pistar\mid{}\pihat}\brk*{(\Qstar_h(x_h,\pistar_h(x_h))
      -\Qstar_h(x_h,a_h))^2}
      }
                              \En^{\pihat}\brk*{\Dhels{\pihat_h(x_h)}{\pistar_h(x_h)}}
      }.
  \end{align}
\end{small}
  Furthermore, using \cref{lem:hellinger_com}, we have
  \begin{align}
    &\En^{\pihat\mid{}\pihat}\brk*{(\Qstar_h(x_h,\pistar_h(x_h))
      -\Qstar_h(x_h,a_h))^2}\\
    &\approxleq{}
      \sum_{h=1}^{H}
    \En^{\pistar\mid{}\pihat}\brk*{(\Qstar_h(x_h,\pistar_h(x_h))
      -\Qstar_h(x_h,a_h))^2}
      +
      \mutil^2\sum_{h=1}^{H}\En^{\pihat}\brk*{\Dhels{\pihat_h(x_h)}{\pistar_h(x_h)}},
  \end{align}
  so that
  \begin{align}
    \label{eq:dagger1}
    &\En^{\pihat\mid{}\pihat}\brk*{\Qstar_h(x_h,\pistar_h(x_h))
      -\Qstar_h(x_h,a_h)}\\
    &\approxleq{}
    \sqrt{
       \En^{\pistar\mid{}\pihat}\brk*{(\Qstar_h(x_h,\pistar_h(x_h))
      -\Qstar_h(x_h,a_h))^2
      }\cdot
                              \En^{\pihat}\brk*{\Dhels{\pihat_h(x_h)}{\pistar_h(x_h)}}
    }
    + \mutil\cdot \En^{\pihat}\brk*{\Dhels{\pihat_h(x_h)}{\pistar_h(x_h)}}.
  \end{align}
  Recall that the \dagger policy satisfies
  \begin{align}
    J(\pistar)-J(\pihat)
    = \frac{1}{n}\sum_{i=1}^{n}    J(\pistar)-J(\pihat\ind{i}).
  \end{align}
  Applying \cref{eq:dagger1} to each policy $\pihat\ind{i}$, summing over all layer $h$, and applying Cauchy-Schwarz yields
  \begin{align}
    J(\pistar)-J(\pihat)
    \approxleq{} \sqrt{\frac{1}{n}\sum_{i=1}^{n}\sigma^2_{\pistar\mid{}\pi\ind{i}}
    \cdot  \EstOnHel(n)}
    + \mutil\cdot{}\EstOnHel(n)\\
        \approxleq{} \sqrt{\sigmabs^2
    \cdot  \EstOnHel(n)}
    + \mutil\cdot{}\EstOnHel(n).
  \end{align}

  In the deterministic case, we tighten the argument above by applying the following improved change-of-measure argument based on \cref{lem:hellinger_com}:
      \begin{align}
    & \En^{\pistar\mid\pihat}\brk*{\Qstar_h(x_h,\pistar_h(x_h))
      -\Qstar_h(x_h,a_h)}\\
        &\leq{} \En^{\pistar\mid\pihat}\brk*{(\Qstar_h(x_h,\pistar_h(x_h))
      -\Qstar_h(x_h,a_h))_+}\\
    &\leq{} 2 \En^{\pistar\mid\pihat}\brk*{(\Qstar_h(x_h,\pistar_h(x_h))
      -\Qstar_h(x_h,a_h))_+} + \mu\cdot \En^{\pihat}\brk*{\Dhels{\pihat_h(x_h)}{\pistar_h(x_h)}}\\
        &=\mu \cdot\En^{\pihat}\brk*{\Dhels{\pihat_h(x_h)}{\pistar_h(x_h)}},
      \end{align}
      This leads to \cref{eq:dagger_det}.

\end{proof}

\begin{proof}[\pfref{prop:dagger_finite}]
  Since $\pistar\in\Pi$, a standard guarantee for exponential weights with the log-loss (e.g., \citet{cesa2006prediction}) ensures that for all $h\in\brk{H}$, the following bound holds almost surely:
  \begin{align}
    \sum_{i=1}^{n}\log(1/\pihat_h\ind{i}(\astari_h\mid{}x_h\ind{i}))
    \leq{}     \sum_{i=1}^{n}\log(1/\pistar_h(\astari_h\mid{}x_h\ind{i}))
    + \log\abs{\Pi_h}.
  \end{align}
   From here, for each $h\in\brk{H}$, Lemma A.14 of \citet{foster2021statistical} implies that with probability at least $1-\delta$,
  \begin{align}
    \sum_{i=1}^{n}\En^{\pihat\ind{i}}\brk*{\Dhels{\pihat\ind{i}_h(x_h)}{\pistar(x_h)}}
    \leq{} \log\abs{\Pi_h} + 2\log(\delta^{-1}).
  \end{align}
  The result now follows by taking a union bound.
  
\end{proof}

\section{Proofs from \creftitle{sec:main}}
\label{sec:proofs_main}

\subsection{Proof of \creftitle{thm:bc_deterministic}}

\begin{proof}[\pfref{thm:bc_deterministic}]
We begin by defining the following
\emph{trajectory-wise} semi-metric between policies. For a pair of potentially stochastic policies $\pi$ and $\pi'$, define
\begin{align}
  \label{eq:traj_metric}
    \Drho{\pi}{\pi'}\ldef \En^{\pi}\En_{a'_{1:H}\sim\pi'(x_{1:H})}\brk*{\indic\crl*{\exists{}h:\,a_h\neq{}a'_h}},
  \end{align}
  where we use the shorthand $a'_{1:H}\sim\pi'(x_{1:H})$ to indicate
  that $a'_1\sim{}\pi'_1(x_1),\ldots,a'_H\sim{}\pi'_H(x_H)$. Despite being
  defined in an asymmetric fashion, the following lemma shows that the
  trajectory-wise distance
  $\Drho{\cdot}{\cdot}$ is symmetric, from which it follows that it is
  indeed a semi-metric.
  \begin{lemma}
  \label{lem:lmax_swap}
  For all (potentially stochastic) policies $\pi$ and $\pi'$, it holds that
  \begin{align}
\Drho{\pi}{\pi'} = \Drho{\pi'}{\pi}.
  \end{align}
\end{lemma}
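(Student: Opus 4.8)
The plan is to reduce the asserted symmetry to a single manifestly symmetric quantity by passing to the complementary ``agreement'' event. First, fix a trajectory $(x_1,a_1),\ldots,(x_H,a_H)$; since the auxiliary actions are drawn as $a'_1\sim\pi'_1(x_1),\ldots,a'_H\sim\pi'_H(x_H)$, i.e.\ conditionally independently given the visited states, the event $\crl*{\forall h:\, a_h = a'_h}$ has probability $\prod_{h=1}^{H}\pi'_h(a_h\mid x_h)$. Taking complements and then averaging over $(x_{1:H},a_{1:H})\sim\pi$ gives
\begin{align*}
  \Drho{\pi}{\pi'} = 1 - \En^{\pi}\brk*{\prod_{h=1}^{H}\pi'_h(a_h\mid x_h)},
\end{align*}
so it suffices to show that $\En^{\pi}\brk*{\prod_{h=1}^{H}\pi'_h(a_h\mid x_h)}$ is invariant under interchanging $\pi$ and $\pi'$.

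The key step is then to expand this expectation using the chain-rule factorization of the trajectory law. Writing $\bbP^{\pi}\prn*{(x_1,a_1),\ldots,(x_H,a_H)} = P_0(x_1)\prod_{h=1}^{H}\pi_h(a_h\mid x_h)\,P_h(x_{h+1}\mid x_h,a_h)$ (with $x_{H+1}$ the deterministic terminal state, so $P_H(x_{H+1}\mid x_H,a_H)=1$), we get
\begin{align*}
  \En^{\pi}\brk*{\prod_{h=1}^{H}\pi'_h(a_h\mid x_h)}
  = \sum_{x_{1:H},\,a_{1:H}} P_0(x_1)\prod_{h=1}^{H}\pi_h(a_h\mid x_h)\,\pi'_h(a_h\mid x_h)\,P_h(x_{h+1}\mid x_h,a_h),
\end{align*}
where the sum ranges over all $(x_{1:H},a_{1:H})$. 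The right-hand side is visibly symmetric in $\pi$ and $\pi'$: the policies appear only through the product $\prod_{h}\pi_h(a_h\mid x_h)\pi'_h(a_h\mid x_h)$, while $P_0$ and the transitions are policy-independent. Hence $\En^{\pi}\brk*{\prod_h\pi'_h(a_h\mid x_h)} = \En^{\pi'}\brk*{\prod_h\pi_h(a_h\mid x_h)}$, and substituting back yields $\Drho{\pi}{\pi'} = \Drho{\pi'}{\pi}$.

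I do not expect a genuine obstacle here: the whole argument is essentially a one-line change-of-measure identity once one rewrites the ``there exists a disagreement'' event as the complement of the ``all actions agree'' event. The only points requiring care are the bookkeeping in the trajectory factorization (the initial distribution $P_0$ and the terminal state $x_{H+1}$), and the observation that the $a'_h$ are independent given $x_{1:H}$ --- which is precisely what makes the agreement probability factor as $\prod_h\pi'_h(a_h\mid x_h)$. For general measurable $\cX,\cA$ the same computation goes through with sums replaced by integrals against a common dominating measure.
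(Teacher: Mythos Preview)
Your proof is correct and essentially identical to the paper's: both pass to the complementary ``all actions agree'' event and expand the trajectory law to obtain a sum that is manifestly symmetric in $\pi$ and $\pi'$. The only cosmetic difference is that you marginalize out the auxiliary actions $a'_{1:H}$ immediately (yielding the product $\prod_h \pi'_h(a_h\mid x_h)$), whereas the paper keeps the sum over $a'_{1:H}$ with the indicator $\indic\{a_h=a'_h\}$ and uses it to swap $a_h$ for $a'_h$ in the transition factor---these are the same manipulation.
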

Next, we show that it is possible to bound the difference in reward
for any pair of policies in terms of the trajectory-wise distance $\Drho{\cdot}{\cdot}$.
\begin{lemma}
  \label{lem:regret_rho}
    For all (potentially stochastic) policies $\pi$ and $\pi'$, it holds that
  \begin{align}
    J(\pi)-J(\pi') \leq R\cdot{}\Drho{\pi}{\pi'}.
  \end{align}
\end{lemma}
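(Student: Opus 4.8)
The plan is to establish the bound through a coupling argument that compares the reward collected along a trajectory of $\pi$ against a ``shadow'' trajectory that would have been produced by $\pi'$. Concretely, I would run $\pi$ in the MDP to generate $(x_1,a_1,r_1),\ldots,(x_H,a_H,r_H)$ and, at each step $h$, additionally draw an auxiliary action $a'_h\sim\pi'_h(x_h)$ which feeds into nothing (in particular not the dynamics). Under this joint law, denote expectation and probability by $\En$ and $\bbP$; the marginal of $(x_{1:H},a_{1:H})$ is exactly $\bbP^{\pi}$, and conditionally on $(x_{1:H},a_{1:H})$ the $a'_h$ are independent with $a'_h\sim\pi'_h(x_h)$. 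Hence, with $E\ldef\crl{a'_h=a_h\text{ for all }h}$, we have $\bbP\brk{E^{c}}=\Drho{\pi}{\pi'}$ directly from the definition of the trajectory-wise distance, and $J(\pi)=\En\brk*{\sum_{h=1}^{H}r_h(x_h,a_h)}$ since the reward depends only on $(x_{1:H},a_{1:H})$.

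The main step is then to split $J(\pi)$ according to whether $E$ occurs. On $E^{c}$, the reward normalization $\sum_{h=1}^{H}r_h\in\brk{0,R}$ lets me bound the contribution by $R\cdot\bbP\brk{E^{c}}=R\cdot\Drho{\pi}{\pi'}$. On $E$, I would show the contribution is at most $J(\pi')$: for any fixed trajectory $o=(x_1,a_1),\ldots,(x_H,a_H)$, the probability that the realized $\pi$-trajectory equals $o$ and $E$ holds is
\[
\prn*{P_0(x_1)\prod_{h=1}^{H}\pi_h(a_h\mid x_h)P_h(x_{h+1}\mid x_h,a_h)}\cdot\prod_{h=1}^{H}\pi'_h(a_h\mid x_h)
=\bbP^{\pi'}(o)\cdot\prod_{h=1}^{H}\pi_h(a_h\mid x_h)
\leq\bbP^{\pi'}(o),
\]
using $\prod_{h=1}^{H}\pi_h(a_h\mid x_h)\le1$. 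Since rewards are nonnegative, summing $\sum_{h}r_h(x_h,a_h)\ge0$ against this bound gives $\En\brk*{\prn*{\sum_h r_h}\indic\crl{E}}\le\sum_o\prn*{\sum_h r_h}\bbP^{\pi'}(o)=J(\pi')$. Adding the two contributions yields $J(\pi)\le J(\pi')+R\cdot\Drho{\pi}{\pi'}$, which is the claim.

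There is no substantial obstacle here — the argument is only a few lines — but two points deserve care. First, one must check that introducing the auxiliary draws $a'_h$ genuinely leaves the law of the $\pi$-trajectory (hence $J(\pi)$) intact; this is immediate because the $a'_h$ influence nothing. Second, it is the \emph{nonnegativity} of the cumulative reward, not merely its boundedness, that licenses discarding the factor $\prod_{h}\pi_h(a_h\mid x_h)\le1$ in the estimate on $E$: dropping it can only increase a sum of nonnegative terms. For general (uncountable) $\cX$ and $\cA$, the sum over $o$ should be read as an integral against a common dominating measure, but this is routine.
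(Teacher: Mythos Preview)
Your proposal is correct and takes essentially the same approach as the paper: split $J(\pi)$ according to whether the auxiliary draws $a'_{1:H}$ all agree with $a_{1:H}$, bound the disagreement part by $R\cdot\Drho{\pi}{\pi'}$, and show the agreement part is at most $J(\pi')$ via the identity $\bbP^{\pi}(o)\prod_h\pi'_h(a_h\mid x_h)=\bbP^{\pi'}(o)\prod_h\pi_h(a_h\mid x_h)\le\bbP^{\pi'}(o)$ together with nonnegativity of rewards. The paper writes this out as nested expectations over explicit sums rather than introducing a coupling, but the algebra and the key step are identical; your coupling phrasing is arguably cleaner.
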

Finally, using \cref{lem:lmax_swap}, we show that when one of the policies is deterministic, the
trajectory-wise distance is equivalent to Hellinger distance up to an
absolute constant.
\begin{lemma}
  \label{lem:hellinger_policy}
  Let $\pistar$ be a deterministic policy and $\pi$ be an arbitrary
  stochastic policy. Then we have that
  \begin{align}
\frac{1}{4}\cdot\Drho{\pistar}{\pi} \leq{}   \Dhels{\bbP^{\pi}}{\bbP^{\pistar}}
    &\leq{} 2\cdot\Drho{\pistar}{\pi}.
  \end{align}
\end{lemma}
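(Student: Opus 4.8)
The plan is to reduce both quantities to a single scalar random variable and finish with elementary inequalities. Set $Y\ldef\prod_{h=1}^{H}\pi_h(\pistar_h(x_h)\mid x_h)$, viewed as a random variable under $\bbP^{\pistar}$ (it depends only on the states $x_{1:H}$ visited by $\pistar$), and note $Y\in[0,1]$ since each factor lies in $[0,1]$. First I would compute $\Drho{\pistar}{\pi}$: rolling out $\pistar$ produces states $x_{1:H}$ with $a_h=\pistar_h(x_h)$ deterministically, and then, conditionally on this trajectory, the probability that the independent draws $a'_h\sim\pi_h(x_h)$ \emph{all} agree with $\pistar$'s actions is exactly $\prod_{h}\pi_h(\pistar_h(x_h)\mid x_h)=Y$; hence $\Drho{\pistar}{\pi}=1-\En^{\pistar}\brk*{Y}$. (One could also invoke the symmetry $\Drho{\pistar}{\pi}=\Drho{\pi}{\pistar}$ from \cref{lem:lmax_swap}, but the direct computation suffices here.)

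Next I would compute the Hellinger distance. Expanding the square gives $\Dhels{\bbP}{\bbQ}=2-2\sum_{o}\sqrt{\bbP(o)\bbQ(o)}$, and using the factorizations $\bbP^{\pi}(o)=P_0(x_1)\prod_h P_h(x_{h+1}\mid x_h,a_h)\pi_h(a_h\mid x_h)$ and $\bbP^{\pistar}(o)=P_0(x_1)\prod_h P_h(x_{h+1}\mid x_h,a_h)\indic\crl*{a_h=\pistar_h(x_h)}$, the prior and dynamics terms cancel inside the square root, while $\sqrt{\pi_h(a_h\mid x_h)\indic\crl*{a_h=\pistar_h(x_h)}}=\indic\crl*{a_h=\pistar_h(x_h)}\sqrt{\pi_h(\pistar_h(x_h)\mid x_h)}$. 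Because $\pistar$ is deterministic, the product of the indicators $\prod_h\indic\crl*{a_h=\pistar_h(x_h)}$ forces $a_h=\pistar_h(x_h)$ for every $h$, so the sum over $o$ collapses to an expectation over the states visited by $\pistar$: $\sum_{o}\sqrt{\bbP^{\pi}(o)\bbP^{\pistar}(o)}=\En^{\pistar}\brk*{\prod_h\sqrt{\pi_h(\pistar_h(x_h)\mid x_h)}}=\En^{\pistar}\brk*{\sqrt{Y}}$. Therefore $\Dhels{\bbP^{\pi}}{\bbP^{\pistar}}=2-2\En^{\pistar}\brk*{\sqrt{Y}}=2\En^{\pistar}\brk*{1-\sqrt{Y}}$.

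Finally I would conclude via two pointwise bounds valid for $Y\in[0,1]$: namely $Y\le\sqrt{Y}$, and $1-Y=(1-\sqrt{Y})(1+\sqrt{Y})\le 2(1-\sqrt{Y})$. Taking expectations, the first yields $\En^{\pistar}\brk*{\sqrt{Y}}\ge\En^{\pistar}\brk*{Y}$, so $\Dhels{\bbP^{\pi}}{\bbP^{\pistar}}=2\prn*{1-\En^{\pistar}\brk*{\sqrt{Y}}}\le 2\prn*{1-\En^{\pistar}\brk*{Y}}=2\,\Drho{\pistar}{\pi}$; the second yields $\Drho{\pistar}{\pi}=\En^{\pistar}\brk*{1-Y}\le 2\,\En^{\pistar}\brk*{1-\sqrt{Y}}=\Dhels{\bbP^{\pi}}{\bbP^{\pistar}}$, which is in fact stronger than the claimed $\tfrac14\,\Drho{\pistar}{\pi}\le\Dhels{\bbP^{\pi}}{\bbP^{\pistar}}$. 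There is no genuine obstacle here; the only step requiring care is the collapse of the trajectory sum in the Hellinger computation, which relies crucially on $\pistar$ being deterministic (for stochastic $\pistar$ there is no such clean identity, which is precisely why the paper resorts to an information-theoretic argument in that case).
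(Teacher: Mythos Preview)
Your proof is correct and takes a genuinely different (and cleaner) route than the paper's. The paper works via the triangular discrimination $D_{\Delta}(\bbP,\bbQ)=\int\frac{(d\bbP-d\bbQ)^2}{d\bbP+d\bbQ}$ and the sandwich $\tfrac{1}{2}D_{\Delta}\le D_{\mathsf{H}}^2\le D_{\Delta}$: it splits the trajectory sum into the on-path part $a_{1:H}=\pistar(x_{1:H})$ and the off-path part $a_{1:H}\neq\pistar(x_{1:H})$, shows the off-path term equals $\Drho{\pi}{\pistar}$ exactly while the on-path term is bounded by $\Drho{\pistar}{\pi}$, and then invokes the symmetry lemma (\cref{lem:lmax_swap}) to combine. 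By contrast, you go straight to the Bhattacharyya coefficient and recognize that determinism of $\pistar$ collapses it to $\En^{\pistar}[\sqrt{Y}]$ with $Y=\prod_h\pi_h(\pistar_h(x_h)\mid x_h)$, while $\rho=1-\En^{\pistar}[Y]$; two one-line inequalities for $[0,1]$-valued variables then finish the argument without any auxiliary divergence or explicit appeal to symmetry. What your approach buys is a sharper lower-bound constant (you obtain $\Drho{\pistar}{\pi}\le D_{\mathsf{H}}^2$ rather than the stated $\tfrac{1}{4}$ or the paper's proven $\tfrac{1}{2}$) and a fully self-contained proof; what the paper's decomposition buys is a slightly more structural picture of where the Hellinger mass sits (on-path versus off-path), which may be informative in other contexts but is not needed here.
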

Combining \cref{lem:regret_rho,lem:hellinger_policy}, we
conclude that for any deterministic policy $\pistar$ and stochastic
policy $\pihat$,
\begin{align}
        J(\pistar)-J(\pihat)
    \leq
    4R\cdot{}\DhelsX{\big}{\bbP^{\pihat}}{\bbP^{\pistar}}.
\end{align}

\end{proof}

\begin{proof}[\pfref{lem:lmax_swap}]
This follows by noting that we can write
  \begin{align}
    \Drho{\pi}{\pi'}
    &= 1-\En^{\pi}\En_{a'_{1:H}\sim\pi'(x_{1:H})}\brk*{\indic\crl*{a_h=a'_h\;\forall{}h}}\\
    &=1-\sum_{x_{1:H},a_{1:H},a'_{1:H}}
      P_0(x_1)\prod_{h=1}^{H}P_h(x_{h+1}\mid{}x_h,a_h)\pi_h(a_h\mid{}x_h)\pi'_h(a'_h\mid{}x_h)\indic\crl*{a_h=a'_h}\\
    &=1-\sum_{x_{1:H},a_{1:H},a'_{1:H}}
      P_0(x_1)\prod_{h=1}^{H}P_h(x_{h+1}\mid{}x_h,a'_h)\pi_h(a_h\mid{}x_h)\pi'_h(a'_h\mid{}x_h)\indic\crl*{a_h=a'_h}\\
    &=
      1-\En^{\pi'}\En_{a'_{1:H}\sim\pi(x_{1:H})}\brk*{\indic\crl*{a_h=a'_h\;\forall{}h}}=\Drho{\pi'}{\pi}.
  \end{align}

\end{proof}

\begin{proof}[\pfref{lem:regret_rho}]
  Observe that since $\sum_{h=1}^{H}r_h\in\brk{0,R}$, we can bound the
  reward for $\pi$ as 
\begin{align}
  J(\pi)
  &\leq{}
  \En^{\pi}\brk*{\prn*{\sum_{h=1}^{H}r_h}\En_{a'_{1:H}\sim{}\pi'(x_{1:H})}\brk*{\indic\crl*{a'_h=a_h\;\forall{}h}}}
  + R\cdot{}
    \En^{\pi}\En_{a'_{1:H}\sim{}\pi'(x_{1:H})}\brk*{\indic\crl*{\exists{}h:\;a'_h\neq{}a_h}}\\
  &=\En^{\pi}\brk*{\prn*{\sum_{h=1}^{H}r_h}\En_{a'_{1:H}\sim{}\pi'(x_{1:H})}\brk*{\indic\crl*{a'_h=a_h\;\forall{}h}}}
  + R\cdot{}
   \Drho{\pi}{\pi'}.
\end{align}
We can bound the first term as
\begin{align}
&\En^{\pi}\brk*{\prn*{\sum_{h=1}^{H}r_h}\En_{a'_{1:H}\sim{}\pi'(x_{1:H})}\brk*{\indic\crl*{a'_h=a_h\;\forall{}h}}}\\
  &=\En^{\pi}\brk*{f(x_{1:H},a_{1:H}) \En_{a'_{1:H}\sim{}\pi'(x_{1:H})}\brk*{\indic\crl*{a'_h=a_h\;\forall{}h}}},
\end{align}
where
$f(x_{1:H},a_{1:H})\ldef{}\sum_{h=1}^{H}\En\brk*{r_h\mid{}x_h,a_h}$. We
now observe
that for any function $f$,
\begin{align}
  &\En^{\pi}\brk*{f(x_{1:H},a_{1:H}) \En_{a'_{1:H}\sim{}\pi'(x_{1:H})}\brk*{\indic\crl*{a'_h=a_h\;\forall{}h}}}\\
  &= \sum_{x_{1:H},a_{1:H},a'_{1:H}}f(x_{1:H},a_{1:H})
    \cdot{}P_0(x_1)\prod_{h=1}^{H}P_h(x_{h+1}\mid{}x_h,a_h)\pi_h(a_h\mid{}x_h)\pi'_h(a'_h\mid{}x_h)\indic\crl*{a_h=a'_h}\\
    &= \sum_{x_{1:H},a_{1:H},a'_{1:H}}f(x_{1:H},a'_{1:H})
      \cdot{}P_0(x_1)\prod_{h=1}^{H}P_h(x_{h+1}\mid{}x_h,a'_h)\pi_h(a_h\mid{}x_h)\pi'_h(a'_h\mid{}x_h)\indic\crl*{a_h=a'_h}\\
  &\leq \sum_{x_{1:H},a'_{1:H}}f(x_{1:H},a'_{1:H})
    \cdot{}P_0(x_1)\prod_{h=1}^{H}P_h(x_{h+1}\mid{}x_h,a'_h)\pi'_h(a'_h\mid{}x_h)\\
    &=
      \En^{\pi'}\brk*{f(x_{1:H},a_{1:H})}.
\end{align}
We conclude that
\[
  \En^{\pi}\brk*{\prn*{\sum_{h=1}^{H}r_h}\En_{a'_{1:H}\sim{}\pi'(x_{1:H})}\brk*{\indic\crl*{a'_h=a_h\;\forall{}h}}}
  \leq{}J(\pi'),
\]
so that
\begin{align}
  \label{eq:regret_lmax}
    J(\pi) - J(\pi')
\leq{}
  R\cdot{}\Drho{\pi}{\pi'}.
\end{align}
  
\end{proof}

\begin{proof}[\pfref{lem:hellinger_policy}]
  Define the \emph{triangular discrimination} via $\Ddel{\bbP}{\bbQ}\ldef{}\int\frac{(d\bbP-d\bbQ)^2}{d\bbP+d\bbQ}$,
  and recall that
  $\frac{1}{2}\Ddel{\bbP}{\bbQ}\leq\Dhels{\bbP}{\bbQ}\leq{}\Ddel{\bbP}{\bbQ}$
  (e.g., \citet{foster2021efficient}). Next, define the shorthand $P(x_{1:H}\mid{}a_{1:H}) \ldef{}
\prod_{h=0}^{H-1}P(x_{h+1}\mid{}x_h,a_h)$ and
$P^{\pi}(a_{1:H}\mid{}x_{1:H})\ldef\prod_{h=1}^{H}\pi_h(a_h\mid{}x_h)$
(these quantities do not have an interpretation as conditional
probability measures in the way the notation might suggest, but
this will not be relevant to the proof). For any deterministic policy
$\pistar$, we can write
\begin{align}
  &\Ddel{\bbP^{\pi}}{\bbP^{\pistar}} \\
&  =
                                          \sum_{x_{1:H}}\sum_{a_{1:H}}P(x_{1:H}\mid{}a_{1:H-1})\cdot\frac{(P^{\pi}(a_{1:H}\mid{}x_{1:H})-P^{\pistar}(a_{1:H}\mid{}x_{1:H}))^2}{P^{\pi}(a_{1:H}\mid{}x_{1:H})+P^{\pistar}(a_{1:H}\mid{}x_{1:H})}\\
&=\sum_{x_{1:H}}\sum_{a_{1:H}=\pistar(x_{1:H})}P(x_{1:H}\mid{}a_{1:H-1})\cdot\frac{(P^{\pi}(a_{1:H}\mid{}x_{1:H})-P^{\pistar}(a_{1:H}\mid{}x_{1:H}))^2}{P^{\pi}(a_{1:H}\mid{}x_{1:H})+P^{\pistar}(a_{1:H}\mid{}x_{1:H})}
  \\
&~~~~+\sum_{x_{1:H}}\sum_{a_{1:H}\neq\pistar(x_{1:H})}P(x_{1:H}\mid{}a_{1:H-1})\cdot\frac{(P^{\pi}(a_{1:H}\mid{}x_{1:H})-P^{\pistar}(a_{1:H}\mid{}x_{1:H}))^2}{P^{\pi}(a_{1:H}\mid{}x_{1:H})+P^{\pistar}(a_{1:H}\mid{}x_{1:H})}.
\end{align}
Since $\pistar$ is deterministic, $P^{\pistar}(a_{1:H}\mid{}x_{1:H})=1$
if $a_{1:H}=\pistar(x_{1:H})$, and is $P^{\pistar}(a_{1:H}\mid{}x_{1:H})=0$ otherwise.
Using this, we can write the second
term above as
\begin{align}
&\sum_{x_{1:H}}\sum_{a_{1:H}\neq\pistar(x_{1:H})}P(x_{1:H}\mid{}a_{1:H-1})\cdot\frac{(P^{\pi}(a_{1:H}\mid{}x_{1:H})-0)^2}{P^{\pi}(a_{1:H}\mid{}x_{1:H})+0}\\
  &=\sum_{x_{1:H}}\sum_{a_{1:H}\neq\pistar(x_{1:H})}P(x_{1:H}\mid{}a_{1:H-1})P^{\pi}(a_{1:H}\mid{}x_{1:H})\\
  &=\bbP^{\pi}\brk*{\exists{}h:\,a_h\neq\pistar(x_H)} = \Drho{\pi}{\pistar}.
\end{align}
This proves that
$\Ddel{\bbP^{\pi}}{\bbP^{\pistar}}\geq{}\Drho{\pi}{\pistar}$. For the
upper bound, we use that $\pistar$ is deterministic once more to write the first term above as
\begin{align}
&\sum_{x_{1:H}}\sum_{a_{1:H}=\pistar(x_{1:H})}P(x_{1:H}\mid{}a_{1:H-1})\cdot\frac{(P^{\pi}(a_{1:H}\mid{}x_{1:H})-1)^2}{P^{\pi}(a_{1:H}\mid{}x_{1:H})+1}\\
  &
    =\En^{\pistar}\brk*{\frac{(P^{\pi}(a_{1:H}\mid{}x_{1:H})-1)^2}{P^{\pi}(a_{1:H}\mid{}x_{1:H})+1}}
  \leq{} \En^{\pistar}\brk*{(P^{\pi}(a_{1:H}\mid{}x_{1:H})-1)^2}.
\end{align}
We further note that
\begin{align}
  &\En^{\pistar}\brk*{(P^{\pi}(a_{1:H}\mid{}x_{1:H})-1)^2} \\
  &=\En^{\pistar}\En_{a'_{1:H}\sim\pi(x_{1:H})}\brk*{1 +
    (P^{\pi}(a'_{1:H}\mid{}x_{1:H})-2)\indic\crl*{a'_{1:H}=a_{1:H}}}\\
  &\leq{}\En^{\pistar}\En_{a'_{1:H}\sim\pi(x_{1:H})}\brk*{1 -
    \indic\crl*{a'_{1:H}=a_{1:H}}}\\
  &=\En^{\pistar}\En_{a'_{1:H}\sim\pi(x_{1:H})}\brk*{\indic\crl*{\exists{}h:\,a'_{1:H}\neq{}a_{1:H}}}
    = \Drho{\pistar}{\pi}.
\end{align}
By \cref{lem:lmax_swap}, we conclude that
$\Ddel{\bbP^{\pi}}{\bbP^{\pistar}}\leq{}
\Drho{\pi}{\pistar} + \Drho{\pistar}{\pi}=2\Drho{\pistar}{\pi}$.
\end{proof}

\subsection{Proof of \creftitle{prop:lb_deterministic}}

\begin{proof}[\pfref{prop:lb_deterministic}]
    \newcommand{\pia}{\pi\ind{\afrak}}%
    \newcommand{\pib}{\pi\ind{\bfrak}}%
      For this proof, we consider a slightly more general online imitation
      learning model in which the learner is
        allowed to select $a_h\ind{i}$ based on the sequence
        $(x\ind{i}_1,a\ind{i}_1,\astari_1),\ldots,(x\ind{i}_{h-1},a\ind{i}_{h-1},\astari_{h-1}),
        (x_h\ind{i}, \astari_h)$ at training time; this subsumes the offline imitation learning model.
  Let $n\in\bbN$ and $H\in\bbN$ be fixed. Let $\Delta\in(0,1/3)$ be a parameter
  whose value will be chosen later.

  We first specify the dynamics for the reward-free MDP $\Mstar$ and
  the policy class $\Pi$. Set $\cX=\crl*{\xfrak,\yfrak}$
  and $\cA=\crl*{\afrak,\bfrak}$. The initial state distribution sets
  $P_0(\xfrak)=1-\Delta$ and $P_0(\yfrak)=\Delta$. The transition dynamics are
  $P_h(x'\mid{}x,a)=\indic\crl*{x'=x}$ for all $h$; that is,
  $\xfrak,\yfrak$ are self-looping terminal states. We set
  $\Pi=\crl*{\pi\ind{\afrak},\pi\ind{\bfrak}}$, where
  the
  expert policies are $\pi\ind{\afrak}$, which sets
  $\pi_h\ind{\afrak}(x)=\afrak$ for all $h$ and $x$, and
  $\pi\ind{\bfrak}$, which sets $\pi_h\ind{\bfrak}(\xfrak)=\afrak$ and sets
  $\pi_h\ind{\bfrak}(\yfrak)=\bfrak$. 

Let a \emph{problem instance} $\cI=(\Mstar,r,\pistar)$ refer to a tuple
consisting of the reward-free MDP $\Mstar$, a reward function
$r=\crl*{r_h}_{h=1}^{H}$, and an expert policy $\pistar$. We consider
two problem instances, $\cI\ind{\afrak}=(\Mstar,r\ind{\afrak},\pia)$ and
$\cI\ind{\bfrak}=(\Mstar,r\ind{\bfrak},\pib)$:
\begin{itemize}
\item For problem instance $\cI\ind{\afrak}$,
  the expert policy is $\pia$. We set
  $r_h\ind{\afrak}(\xfrak,\cdot)=0$,
  $r_h\ind{\afrak}(\yfrak,a)=\indic\crl*{a=\afrak}$ for all $h$.
\item For problem instance $\cI\ind{\bfrak}$,
  the expert policy is $\pib$. We set
  $r_h\ind{\bfrak}(\xfrak,\cdot)=0$,
  $r_h\ind{\bfrak}(\yfrak,a)=\indic\crl*{a=\bfrak}$ for all $h$.
\end{itemize}
Note that both of these instances satisfy $\mu=1$,
  and that $\pia$ and $\pib$ are optimal policies for their respective
  instances. Let $J\ind{\afrak}$ denote the expected reward function
  for instance $\cI\ind{\afrak}$, and likewise for $\cI\ind{\bfrak}$.

  Going forward, we fix the online imitation learning algorithm under consideration and let
  $\bbP\ind{\afrak}$ denote the law of $o\ind{1},\ldots,o\ind{n}$ when we execute the algorithm on instance $\afrak$,
  and likewise for $\bfrak$; let $\En\ind{\afrak}\brk*{\cdot}$ and
  $\En^{\bfrak}\brk*{\cdot}$ denote the corresponding expectations. In addition, for any policy $\pi$, let
  $\bbP^{\pia\mid{}\pi}$ denote the law of
  $o=(x_1,a_1,\astar_1),\ldots,(x_H,a_H,\astar_H)$ when we execute
  $\pi$ in the online imitation learning framework and the expert
  policy is $\pistar=\pia$, and define $\bbP^{\pib\mid{}\pi}$ analogously.

  We first observe that for any policy $\pihat$,
  \begin{align}
J\ind{\afrak}(\pia) -     J\ind{\afrak}(\pihat)=\Delta\cdot{}\sum_{h=1}^{H}\En_{a_h\sim{}\pihat_h(\yfrak)}\brk*{\indic\crl*{a_h\neq\pia_h(\yfrak)}},
  \end{align}
  and that $J\ind{\bfrak}(\pib) -
  J\ind{\bfrak}(\pihat)=\Delta\cdot{}\sum_{h=1}^{H}\En_{a_h\sim\pihat_h(\yfrak)}\brk*{\indic\crl*{a_h\neq\pib_h(\yfrak)}}$. Defining
  $\rho(\pi,\pi')=\sum_{h=1}^{H}\En_{a_h\sim{}\pi_h(\yfrak),a'_h\sim\pi'_h(\yfrak)}\indic\crl*{a_h\neq{}a'_h}$
  as a metric, we note that $\rho(\pia,\pib)=H$, and hence by the
  standard Le Cam two-point argument (e.g.,. \citet{wainwright2019high}), the algorithm must have
  \begin{align}
    \max\crl*{
    \En\ind{\afrak}\brk*{J\ind{\afrak}(\pia) -     J\ind{\afrak}(\pihat)},
    \En\ind{\bfrak}\brk*{J\ind{\bfrak}(\pib) -     J\ind{\bfrak}(\pihat)}}
    \geq \frac{\Delta{}H}{4}(1-\Dtv{\bbP\ind{\afrak}}{\bbP\ind{\bfrak}}),
  \end{align}
  where $\Dtv{\cdot}{\cdot}$ denotes total variation distance.
  Next, using Lemma D.2 of \citet{foster2024online}, we can bound
  \begin{align}
    \Dtvs{\bbP\ind{\afrak}}{\bbP\ind{\bfrak}}
    \leq{}\Dhels{\bbP\ind{\afrak}}{\bbP\ind{\bfrak}}
    \leq{}7\En\ind{\afrak}\brk*{\sum_{i=1}^{n}
    \Dhels{\bbP^{\pia\mid{}\pi\ind{i}}}{\bbP^{\pib\mid{}\pi\ind{i}}}
    }.
  \end{align}
  Since, the feedback the learner receives for a given episode $i$ is
  identical under instances $\cI\ind{\afrak}$ and $\cI\ind{\bfrak}$
  unless $x_1=\yfrak$ (regardless of how $\pi\ind{i}$ is chosen), we can bound
  \begin{align}
    \Dhels{\bbP^{\pia\mid{}\pi\ind{i}}}{\bbP^{\pib\mid{}\pi\ind{i}}}
    \leq{} 2\Delta,
  \end{align}
  and hence
  \begin{align}
    \Dtvs{\bbP\ind{\afrak}}{\bbP\ind{\bfrak}}
    \leq{} 14\Delta{}n.
  \end{align}
  We set $\Delta=1/56n$, and conclude that any algorithm must have
  \begin{align}
    \max\crl*{
    \En\ind{\afrak}\brk*{J\ind{\afrak}(\pia) -     J\ind{\afrak}(\pihat)},
    \En\ind{\bfrak}\brk*{J\ind{\bfrak}(\pib) -     J\ind{\bfrak}(\pihat)}}
    \geq \frac{\Delta{}H}{8} = c\cdot\frac{H}{n}
  \end{align}
  for an absolute constant $c>0$.
  
\end{proof}

\section{Proofs from \creftitle{sec:stochastic}}

\subsection{Proof of \creftitle{thm:bc_stochastic}}
\label{sec:stochastic_proof}

\begin{proof}[\pfref{thm:bc_stochastic}]%
  Assume without loss of generality that $R=1$. Let
  $o=(x_1,a_1),\ldots,(x_H,a_H)$, and for each $h\in\brk{H}$,
define the sum of advantages up to step $h$ via
  \begin{align}
    \Delta_h(o)
    =
    \sum_{\ell=1}^{h}\prn*{\Qstar_\ell(x_\ell,\pistar_\ell(x_\ell))
    - \Qstar_\ell(x_\ell,a_\ell)},
  \end{align}
  which has $\abs{\Delta(o)}\leq{}H$ almost surely. Consider the
  filtration $\filt_h\ldef{}\sigma(x_1,a_1,\ldots,x_h,a_h)$. Fix a
  parameter $L\geq{}1$ whose value will be chosen later, 
  and define a
  random variable
  \begin{align}
    \Hstar \ldef{} \min\crl*{h\mid{}\abs*{\Delta_h(o)}>L},
  \end{align}
   with $\Hstar\ldef{}H+1$ if there is no
  $h$ such that $\abs*{\Delta_h(o)}>L$; we will adopt the
  convention that $\Qstar_{H+1}=\Vstar_{H+1}=0$.
  \begin{lemma}
    \label{lem:stopping_time}
    $\Hstar$ is a stopping time with respect
    $(\filt_h)_{h\geq{}1}$,\footnote{That is, for all $h$, $\indic\crl*{h=\Hstar}$ is a measurable function of
  $(x_1,a_1),\ldots,(x_h,a_h)$.}
    and has $\abs*{\Delta_{\Hstar}(o)}\leq{}L+1$ almost surely.
  \end{lemma}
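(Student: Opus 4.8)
The plan is to verify the two assertions in turn: first that $\Hstar$ is a stopping time with respect to $(\filt_h)_{h\geq 1}$, and then the almost-sure bound $\abs*{\Delta_{\Hstar}(o)}\leq L+1$. Throughout we work under the reduction $R=1$ already made in the proof, and the convention $\Qstar_{H+1}\equiv\Vstar_{H+1}\equiv 0$.

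For the stopping-time claim, first I would observe that for each $h$ the partial sum $\Delta_h(o)$ is $\filt_h$-measurable, since it is a finite sum of the terms $\Qstar_\ell(x_\ell,\pistar_\ell(x_\ell))-\Qstar_\ell(x_\ell,a_\ell)$ over $\ell\le h$, each of which is a deterministic function of $(x_\ell,a_\ell)$. Then for $h\le H$ one can write
\[
\crl*{\Hstar=h}=\crl*{\abs*{\Delta_\ell(o)}\le L\;\text{for all }\ell<h}\cap\crl*{\abs*{\Delta_h(o)}>L},
\]
and $\crl*{\Hstar=H+1}=\crl*{\abs*{\Delta_\ell(o)}\le L\;\text{for all }\ell\le H}$. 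Each of these events is a Boolean combination of events $\crl*{\abs*{\Delta_\ell(o)}\le L\}$ with $\ell\le h$, hence lies in $\filt_h$, so $\indic\crl*{h=\Hstar}$ is a measurable function of $(x_1,a_1),\ldots,(x_h,a_h)$, as required.

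For the magnitude bound, the one substantive ingredient is that the per-step increments of $\Delta_h(o)$ are bounded by $1$. Using the normalization assumption ($r_h\ge 0$ and $\sum_{h=1}^{H}r_h\le R=1$), every value $\Qstar_h(x,a)=\En^{\pistar}\brk*{\sum_{\ell=h}^{H}r_\ell\mid x_h=x,a_h=a}$ lies in $[0,1]$, so $\abs*{\Delta_h(o)-\Delta_{h-1}(o)}=\abs*{\Qstar_h(x_h,\pistar_h(x_h))-\Qstar_h(x_h,a_h)}\le 1$, and with the convention $\Qstar_{H+1}\equiv 0$ the step from $H$ to $H+1$ contributes a zero increment, i.e. $\Delta_{H+1}(o)=\Delta_H(o)$. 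I would then split on the value of $\Hstar$: if $\Hstar=h$ for some $h\le H$, minimality forces $\abs*{\Delta_{h-1}(o)}\le L$ (with $\Delta_0(o)=0$), and the triangle inequality with the increment bound gives $\abs*{\Delta_{\Hstar}(o)}\le L+1$; if $\Hstar=H+1$, then by definition $\abs*{\Delta_\ell(o)}\le L$ for all $\ell\le H$, and since $\Delta_{H+1}(o)=\Delta_H(o)$ we again obtain $\abs*{\Delta_{\Hstar}(o)}\le L\le L+1$. There is no genuine obstacle here; the only point requiring care is this increment bound—precisely the place where the reward normalization $\sum_h r_h\le R$ enters, which is what keeps the truncation level horizon-free—together with the minor bookkeeping for the terminal case $\Hstar=H+1$.
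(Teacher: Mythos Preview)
Your proposal is correct and follows essentially the same approach as the paper: both verify the stopping-time property by writing $\crl*{\Hstar=h}$ as a Boolean combination of events involving $\abs*{\Delta_\ell(o)}$ for $\ell\le h$, and both obtain the magnitude bound via $\abs*{\Delta_{\Hstar}(o)}\le\abs*{\Delta_{\Hstar-1}(o)}+\abs*{\Qstar_{\Hstar}(x_{\Hstar},\pistar_{\Hstar}(x_{\Hstar}))-\Qstar_{\Hstar}(x_{\Hstar},a_{\Hstar})}\le L+1$. If anything, your version is slightly more explicit than the paper's---you spell out that the per-step increment bound comes from $\Qstar_h\in[0,1]$ under the normalization $R=1$, and you handle the terminal case $\Hstar=H+1$ separately---whereas the paper compresses both points into a single displayed inequality.
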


  The following lemma, which is one of the central technical
  components of this proof, gives a bound on regret in terms of the
  expected advantage at the stopping time $\Hstar$. We use the
  stopping time to keep the sum of advantages
  $\Delta_{\Hstar}$ bounded, which facilitates a strong
  change-of-measure argument in the sequel.
  \begin{lemma}[Regret decomposition for stopped advantages]
    \label{lem:advantage_stopping}
    If $r_h\geq{}0$ and $\sum_{h=1}^{H}r_h\in\brk{0,R}$, then for all policies $\pihat$, we have that
\begin{align}
  \label{eq:pitil_final}
  J(\pistar) - J(\pihat)
  \leq{} \En^{\pihat}\brk*{
    \Delta_{\Hstar}(o)
    } + R\cdot{}\bbP^{\pihat}\brk*{\Hstar\leq{}H}.
\end{align}   
\end{lemma}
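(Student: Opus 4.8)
The plan is to deduce this from the performance difference lemma combined with a single optional-stopping argument; note that the value of $L$ plays no role in \emph{this} estimate (it enters only later, when $\En^{\pihat}\brk*{\Delta_{\Hstar}(o)}$ and $\bbP^{\pihat}\brk*{\Hstar\le H}$ are controlled via change of measure). First I would invoke the performance difference lemma to write $J(\pistar)-J(\pihat)=\En^{\pihat}\brk*{\Delta_H(o)}$, where $o=(x_1,a_1),\ldots,(x_H,a_H)$ is rolled out under $\pihat$ and $\Delta_H(o)=\sum_{h=1}^{H}\prn*{\Vstar_h(x_h)-\Qstar_h(x_h,a_h)}$.

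Next I would split $\Delta_H(o)=\Delta_{\Hstar}(o)+\prn*{\Delta_H(o)-\Delta_{\Hstar}(o)}$ and record the identity
\[
\Delta_H(o)-\Delta_{\Hstar}(o)=\sum_{\ell=\Hstar+1}^{H}\prn*{\Vstar_\ell(x_\ell)-\Qstar_\ell(x_\ell,a_\ell)},
\]
which holds uniformly in the three cases $\Hstar\le H-1$, $\Hstar=H$, and $\Hstar=H+1$ once one adopts the stated conventions (empty sums equal zero and $\Vstar_{H+1}\equiv\Qstar_{H+1}\equiv 0$); in particular this term vanishes identically on $\crl*{\Hstar=H+1}$, so it remains to bound its expectation under $\pihat$ on the event $\crl*{\Hstar\le H}$.

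The crux is the third step. Since $\Hstar$ is a stopping time with respect to $\prn{\filt_h}_{h\ge1}$ by \cref{lem:stopping_time}, I would condition on $\filt_{\Hstar}$ and use that, under $\pihat$, the continuation of the trajectory from step $\Hstar+1$ onward is (by the Markov property of $\pihat$ and $\Mstar$) distributed as the $\pihat$-rollout started from $x_{\Hstar+1}\sim P_{\Hstar}(x_{\Hstar},a_{\Hstar})$. Writing $g_h(x)\ldef\Vstar_h(x)-V^{\pihat}_h(x)$, the conditional performance difference lemma gives $g_h(x)=\En^{\pihat}\brk*{\sum_{\ell=h}^{H}\prn*{\Vstar_\ell(x_\ell)-\Qstar_\ell(x_\ell,a_\ell)}\mid x_h=x}$, and a tower-property/strong-Markov computation then yields
\[
\En^{\pihat}\brk*{\sum_{\ell=\Hstar+1}^{H}\prn*{\Vstar_\ell(x_\ell)-\Qstar_\ell(x_\ell,a_\ell)}}
= \En^{\pihat}\brk*{\indic\crl*{\Hstar\le H}\cdot g_{\Hstar+1}(x_{\Hstar+1})}.
\]
Because $r_h\ge 0$ and $\sum_{h=1}^{H}r_h\in\brk{0,R}$, one has $0\le \Vstar_h(x)\le R$ and $V^{\pihat}_h(x)\ge 0$ for all $h,x$, so $0\le g_h(x)\le R$; hence the right-hand side is at most $R\cdot\bbP^{\pihat}\brk*{\Hstar\le H}$. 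Combining this with the decomposition of the second step and $J(\pistar)-J(\pihat)=\En^{\pihat}\brk*{\Delta_H(o)}$ gives \eqref{eq:pitil_final}.

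The main point requiring care is this third step: applying the performance difference lemma at the random time $\Hstar$ needs the (strong) Markov property together with some attention to the boundary conventions at $H+1$. For a finite-horizon, time-inhomogeneous Markov policy this is routine, and the remaining manipulations are bookkeeping; the conceptual content is simply that the expected overshoot of $\Delta$ past the stopping time, evaluated under $\pihat$, collapses to a difference of value functions and is therefore bounded by $R$ on the event that we stopped early.
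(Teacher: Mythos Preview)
Your argument is correct and takes a genuinely different route from the paper. The paper introduces an auxiliary non-Markovian policy $\pitil$ that plays $\pihat$ through step $\Hstar$ and switches to $\pistar$ afterward, then decomposes $J(\pistar)-J(\pihat)=\brk{J(\pistar)-J(\pitil)}+\brk{J(\pitil)-J(\pihat)}$ and handles each piece via separate trajectory-level lemmas (one showing $J(\pitil)-J(\pihat)\le R\cdot\bbP^{\pihat}\brk{\Hstar\le H}$ by comparing densities on the event $\crl{\Hstar>H}$, one showing $\En^{\pitil}\brk{\Delta_{\Hstar}(o)}=\En^{\pihat}\brk{\Delta_{\Hstar}(o)}$). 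Your approach avoids $\pitil$ entirely: you split the full performance-difference sum at $\Hstar$ and observe directly that the post-$\Hstar$ tail, taken in expectation under $\pihat$ via the strong Markov property, collapses to $\En^{\pihat}\brk{\indic\crl{\Hstar\le H}\cdot\prn{\Vstar_{\Hstar+1}(x_{\Hstar+1})-V^{\pihat}_{\Hstar+1}(x_{\Hstar+1})}}$, which is at most $R\cdot\bbP^{\pihat}\brk{\Hstar\le H}$. This is shorter and more elementary; the paper's auxiliary-policy framing is perhaps more conceptually suggestive of ``switch to the expert after we stop,'' but your value-function identity does the same work in one line.

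One minor correction: you assert $0\le g_h(x)\le R$, but the lower bound $g_h(x)\ge 0$ would require $\Vstar_h(x)\ge V^{\pihat}_h(x)$, which is \emph{not} guaranteed here since $\pistar$ is an arbitrary expert and need not be optimal. Fortunately your proof only uses the upper bound $g_h(x)\le R$ (from $\Vstar_h\le R$ and $V^{\pihat}_h\ge 0$), so the argument stands; just drop the lower-bound claim.
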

Note that even though we assume $R=1$ throughout this proof, we state
this lemma for general $R$ for the sake of keeping it self-contained.

We proceed to bound the right-hand-side of \cref{eq:pitil_final} using
change-of-measure based on Hellinger distance (\cref{lem:hellinger_com}). For
the second term in \cref{eq:pitil_final}, \cref{lem:hellinger_com} gives
\begin{align}
  \label{eq:stoch0}
    \bbP^{\pihat}\brk*{\Hstar\leq{}H}
    &\leq{} 2 \bbP^{\pistar}\brk*{\Hstar\leq{}H} +
      \Dhels{\bbP^{\pihat}}{\bbP^{\pistar}}\\
    &= 2\bbP^{\pistar}\brk*{\exists{}h: \abs*{\Delta_h(o)}>L} + \Dhels{\bbP^{\pihat}}{\bbP^{\pistar}}.
  \end{align}
  For the first term in \cref{eq:pitil_final}, \cref{lem:hellinger_com}, gives that
  \begin{align}
    \En^{\pihat}\brk*{\Delta_{\Hstar}(o)}
    &\leq{} \En^{\pistar}\brk*{\Delta_{\Hstar}(o)}
    + \sqrt{\tfrac{1}{2}\prn*{\En^{\pihat}\brk*{\Delta_{\Hstar}^2(o)} +
      \En^{\pistar}\brk*{\Delta_{\Hstar}^2(o)}}\cdot\Dhels{\bbP^{\pihat}}{\bbP^{\pistar}}}.
  \end{align}
  To bound the first moment and second moment of $\Delta_{\Hstar}(o)$
  under $\pistar$, we use the following lemma, which follows from
  elementary properties of stopped martingale difference sequences.
    \begin{lemma}
    \label{lem:optional_stopping}
    We have that
    \begin{align}
      \En^{\pistar}\brk*{\Delta_{\Hstar}(o)}\leq{}0,\mathand
      \En^{\pistar}\brk*{\Delta^2_{\Hstar}(o)}\leq{}4\sigmastar^2.
    \end{align}
  \end{lemma}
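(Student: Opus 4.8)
The plan is to exploit the fact that, under $\bbP^{\pistar}$, the per-step advantages $\xi_h \ldef \Qstar_h(x_h,\pistar_h(x_h)) - \Qstar_h(x_h,a_h) = \Vstar_h(x_h) - \Qstar_h(x_h,a_h)$ form a martingale difference sequence with respect to the filtration $\filt_h = \sigma(x_1,a_1,\ldots,x_h,a_h)$, so that $\Delta_h(o) = \sum_{\ell=1}^{h}\xi_\ell$ is a martingale and $\Delta_{\Hstar}(o)$ is this martingale stopped at $\Hstar$. Indeed, the only randomness between $\filt_{h-1}$ and $\xi_h$ is $x_h \sim P_{h-1}(x_{h-1},a_{h-1})$ followed by $a_h \sim \pistar_h(x_h)$, and by definition of $\Vstar_h$ we have $\En^{\pistar}[\xi_h \mid x_h] = \Vstar_h(x_h) - \En_{a\sim\pistar_h(x_h)}[\Qstar_h(x_h,a)] = 0$, hence $\En^{\pistar}[\xi_h \mid \filt_{h-1}] = 0$. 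Since $r_h \geq 0$ and $\sum_h r_h \in [0,1]$ (recall we have assumed $R=1$), each $\xi_h$ is bounded by $1$, so all expectations below are finite and there are no integrability concerns.

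First I would rewrite $\Delta_{\Hstar}(o)$ as a sum over the full horizon. Using the convention $\Qstar_{H+1} = \Vstar_{H+1} = 0$ (so $\xi_{H+1} = 0$, making the case $\Hstar = H+1$ harmless), we have $\Delta_{\Hstar}(o) = \sum_{h=1}^{H}\xi_h \indic\{h \leq \Hstar\}$. Because $\Hstar$ is a stopping time with respect to $(\filt_h)_{h \geq 1}$ by \cref{lem:stopping_time}, the event $\{h \leq \Hstar\} = \{\Hstar \leq h-1\}^{c}$ is $\filt_{h-1}$-measurable. The first-moment claim is then immediate: for each $h$, $\En^{\pistar}[\xi_h \indic\{h \leq \Hstar\}] = \En^{\pistar}[\indic\{h \leq \Hstar\}\,\En^{\pistar}[\xi_h \mid \filt_{h-1}]] = 0$, and summing over $h$ gives $\En^{\pistar}[\Delta_{\Hstar}(o)] = 0 \leq 0$.

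For the second moment I would expand $\Delta_{\Hstar}^2(o) = \sum_{h=1}^{H}\xi_h^2 \indic\{h\leq\Hstar\} + 2\sum_{\ell<m}\xi_\ell\xi_m \indic\{\ell\leq\Hstar\}\indic\{m\leq\Hstar\}$ and observe that $\indic\{\ell\leq\Hstar\}\indic\{m\leq\Hstar\} = \indic\{m\leq\Hstar\}$ whenever $\ell < m$. In each cross term, both $\xi_\ell$ and $\indic\{m\leq\Hstar\}$ are $\filt_{m-1}$-measurable while $\En^{\pistar}[\xi_m \mid \filt_{m-1}] = 0$, so conditioning on $\filt_{m-1}$ annihilates it. For the diagonal, dropping the indicators gives $\sum_{h=1}^{H}\En^{\pistar}[\xi_h^2 \indic\{h\leq\Hstar\}] \leq \sum_{h=1}^{H}\En^{\pistar}[\xi_h^2] = \sigmastar^2$ by the definition of $\sigmastar^2$. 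Combining the two contributions yields $\En^{\pistar}[\Delta_{\Hstar}^2(o)] \leq \sigmastar^2 \leq 4\sigmastar^2$.

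This is essentially a textbook stopped-martingale computation, so I do not anticipate a genuine obstacle; in fact the constant $4$ in the statement is slack, since the argument above already delivers the sharper bound $\En^{\pistar}[\Delta_{\Hstar}^2(o)] \leq \sigmastar^2$. The only points demanding a little care are the measurability bookkeeping $\{h \leq \Hstar\} \in \filt_{h-1}$ (which is exactly where \cref{lem:stopping_time} enters) and the boundary case $\Hstar = H+1$, handled by the convention $\xi_{H+1} = 0$.
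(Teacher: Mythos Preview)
Your proof is correct, but it takes a different (and in fact sharper) route than the paper for the second-moment bound. The paper argues via Doob's $L^2$ maximal inequality: it observes that $\abs{\Delta_h(o)}$ is a nonnegative submartingale under $\bbP^{\pistar}$ (as a convex function of the martingale $\Delta_h$), applies the maximal inequality to obtain $\En^{\pistar}[\Delta_{\Hstar}^2(o)] \leq \En^{\pistar}[\max_h \Delta_h^2(o)] \leq 4\,\En^{\pistar}[\Delta_H^2(o)]$, and then computes $\En^{\pistar}[\Delta_H^2(o)] = \sigmastar^2$ using orthogonality of increments. This is where the constant $4$ comes from. Your direct expansion of $\Delta_{\Hstar}^2(o)$ and cross-term cancellation is exactly the standard identity $\En[M_\tau^2] = \En[\langle M \rangle_\tau] \leq \En[\langle M \rangle_H]$ for a stopped $L^2$-martingale, and it delivers $\En^{\pistar}[\Delta_{\Hstar}^2(o)] \leq \sigmastar^2$ without the extra factor. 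So your approach is more elementary (no maximal inequality needed) and yields a strictly better constant; the paper's route is perhaps conceptually quicker to state but leaves the slack you noticed. For the first moment, both proofs are essentially the same optional-stopping computation.
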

  It remains to bound the second moment under $\pihat$. Here, since $\abs*{\Delta_{\Hstar}(o)}\leq{}L+1$ almost surely by
  \cref{lem:stopping_time}, we note that \cref{lem:hellinger_com} gives
  \begin{align}
    \En^{\pihat}\brk*{\Delta_{\Hstar}^2(o)}
    \leq{} 2\En^{\pistar}\brk*{\Delta_{\Hstar}^2(o)}
    + (L+1)^2\Dhels{\bbP^{\pihat}}{\bbP^{\pistar}}.
  \end{align}
  Combining these developments, we have that
  \begin{align}
    \En^{\pihat}\brk*{\Delta_{\Hstar}(o)}
    &\leq{} \sqrt{\tfrac{3}{2}
      \En^{\pistar}\brk*{\Delta_{\Hstar}^2(o)}\cdot\Dhels{\bbP^{\pihat}}{\bbP^{\pistar}}}
      + (L+1)\Dhels{\bbP^{\pihat}}{\bbP^{\pistar}}\\
        &\leq{} \sqrt{6\sigmastar^2
          \cdot\Dhels{\bbP^{\pihat}}{\bbP^{\pistar}}}
      + (L+1)\Dhels{\bbP^{\pihat}}{\bbP^{\pistar}},
  \end{align}
  and thus
  \begin{align}
    \label{eq:penultimate}
    J(\pistar) - J(\pihat)
    \leq{} \sqrt{6\sigmastar^2
          \cdot\Dhels{\bbP^{\pihat}}{\bbP^{\pistar}}}
    + (L+2)\Dhels{\bbP^{\pihat}}{\bbP^{\pistar}}
    + 2\bbP^{\pistar}\brk*{\exists{}h: \abs*{\Delta_h(o)}>L}.
  \end{align}
  To wrap up, we appeal to the second of our main technical lemmas, \cref{lem:advantage_concentration}.
  \iftoggle{neurips}{
    \begin{restatable}[Concentration for advantages]{lemma}{advantageconc}
      \label{lem:advantage_concentration}
    Assume that $r_h\geq{}0$ and $\sum_{h=1}^{H}r_h\in\brk{0,R}$ almost
    surely for some $R>0$. Then for any (potentially stochastic) policy $\pi$, it holds that for all
    $\delta\in(0,e^{-1})$,
    \begin{align}
      \bbP^{\pi}\brk*{\exists{}H' : \abs*{\sum_{h=1}^{H'}Q^{\pi}_h(x_h,a_h)-V^{\pi}_h(x_h)}\geq{}c\cdot{}R\log(\delta^{-1})}
      \leq \delta,
    \end{align}
    for an absolute constant $c>0$.
  \end{restatable}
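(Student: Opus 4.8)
The plan is to view the running sum $S_{H'} := \sum_{h=1}^{H'}\bigl(Q^{\pi}_h(x_h,a_h) - V^{\pi}_h(x_h)\bigr)$ as a bounded martingale and control it with a time-uniform Freedman inequality (\cref{lem:freedman}), the only non-routine ingredient being a self-bounding estimate for its predictable quadratic variation. Set $A_h := Q^{\pi}_h(x_h,a_h) - V^{\pi}_h(x_h)$, write $\mathcal{G}_h$ for the natural filtration generated by $(x_1,a_1,\ldots,x_h,a_h)$ (and $\En_{h-1}[\cdot]$ for conditional expectation given $\mathcal{G}_{h-1}$), and use the standard ``half-step'' refinement of $\mathcal{G}_{h-1}$ that additionally reveals $x_h$ — needed only to separate the action noise from the transition noise. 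Since $r_h \geq 0$ and $\sum_{h=1}^{H}r_h \in [0,R]$ almost surely, both $Q^{\pi}_h$ and $V^{\pi}_h$ lie in $[0,R]$, so $\abs{A_h}\leq R$; and because $V^{\pi}_h(x_h) = \En_{a_h\sim\pi_h(x_h)}[Q^{\pi}_h(x_h,a_h)]$, the $A_h$ form a martingale difference sequence with respect to $(\mathcal{G}_h)$. First I would apply \cref{lem:freedman} to $\{A_h\}$ and $\{-A_h\}$ with step $\eta = \tfrac{1}{2R}$: time-uniformly, and up to a union bound, $\abs{S_{H'}} \lesssim \eta\sum_{h\leq H'}\En_{h-1}[A_h^2] + \eta^{-1}\log(\delta^{-1})$. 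Thus everything reduces to bounding $\mathcal{V}_{H'} := \sum_{h\leq H'}\En_{h-1}[(M_h-M_{h-1})^2] \geq \sum_{h\leq H'}\En_{h-1}[A_h^2]$, where $M_h := \En^{\pi}[\sum_{\ell=1}^{H}r_\ell \mid \mathcal{G}_h]$ is the Doob martingale (the inequality is obtained by writing $M_h - M_{h-1} = A_h + \zeta_h$ with $\zeta_h$ a transition-noise term orthogonal to $A_h$, so $\En_{h-1}[(M_h-M_{h-1})^2] = \En_{h-1}[A_h^2] + \En_{h-1}[\zeta_h^2]$).

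The crux, and the step I expect to be the main obstacle, is to show that $\mathcal{V}_{H'} = O(R^2\log(\delta^{-1}))$ uniformly in $H'$ with probability $1-\delta$: this is precisely the horizon-free phenomenon, and the naive estimates ($\mathcal{V}_{H'}\leq R^2 H$ pointwise, or $\mathcal{V}_{H'}\lesssim R^2/\delta$ from Markov together with $\En[\mathcal{V}_{H'}] = \En[(M_{H'}-M_0)^2]\leq R^2$) are both too weak. The device is a self-bounding martingale argument: expand $(M_h - M_0)^2 = \sum_{\ell\leq h}(M_\ell - M_{\ell-1})^2 + 2N_h$ where $N_h := \sum_{\ell\leq h}(M_\ell - M_{\ell-1})(M_{\ell-1}-M_0)$ is a martingale (each increment has conditional mean zero since $M_{\ell-1}-M_0$ is $\mathcal{G}_{\ell-1}$-measurable), with increments bounded by $R^2$ because $M_\ell\in[0,R]$, and with predictable variation at most $R^2\mathcal{V}_h$. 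Applying \cref{lem:freedman} to $\pm N_h$ with step $\eta' = \tfrac{1}{4R^2}$ and using $(M_{H'}-M_0)^2\leq R^2$ gives, time-uniformly, $\mathcal{V}_{H'} \leq R^2 + 2\abs{N_{H'}} \leq R^2 + \tfrac12\mathcal{V}_{H'} + O(R^2\log(\delta^{-1}))$, hence $\mathcal{V}_{H'} = O(R^2\log(\delta^{-1}))$ for all $H'$ simultaneously.

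Combining the two steps on the intersection of the relevant events (a union bound over the $O(1)$ Freedman applications and the two signs), I would conclude $\abs{S_{H'}} = O(R\log(\delta^{-1}))$ for all $H'$ with probability at least $1-\delta$; adjusting the absolute constant $c$ and using $\delta\in(0,e^{-1})$ (so $\log(\delta^{-1})\geq 1$, which lets the additive $R^2$ and lower-order constants be absorbed) yields the stated form of \cref{lem:advantage_concentration}. Note that no property of $\pi$ is used beyond the reward normalization, so the result holds for arbitrary (stochastic) policies, and the time-uniformity over $H'$ is inherited directly from \cref{lem:freedman}.
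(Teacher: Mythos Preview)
Your overall strategy is sound and takes a genuinely different route from the paper. Both proofs combine a time-uniform Freedman step with a self-bounding control of the predictable quadratic variation, but the decompositions differ. The paper works explicitly through the Bellman equation: it expands $\sum_h\En[(Q_h-V_h)^2\mid x_h]$ by telescoping $V_h^2$ terms, separates off reward contributions via \cref{lem:multiplicative_freedman}, and then invokes a dedicated lemma (\cref{lem:variance_conc}) that self-bounds $\sum_h\Var^\pi[V_{h+1}(x_{h+1})\mid x_h]$ via yet another telescoping-plus-Freedman loop (together with \cref{lem:variance_square} to pass between $\Var[V^2]$ and $\Var[V]$). Your approach via the Doob martingale $M_h=\En^\pi[\sum_\ell r_\ell\mid\mathcal{G}_h]$ is more compact: the orthogonal decomposition $M_h-M_{h-1}=A_h+\zeta_h$ gives $\sum_h\En_{h-1}[A_h^2]\leq\langle M\rangle_{H'}$ in one line, and the only structural input is $M_h\in[0,R]$. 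This makes your argument more portable (it needs no MDP-specific manipulation beyond boundedness of cumulative reward), while the paper's version is more hands-on and exposes where each piece of the bound comes from.

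One slip to fix: the square expansion $(M_{H'}-M_0)^2=\sum_\ell(M_\ell-M_{\ell-1})^2+2N_{H'}$ bounds the \emph{realized} quadratic variation $[M]_{H'}$, not the predictable $\mathcal{V}_{H'}=\sum_\ell\En_{\ell-1}[(M_\ell-M_{\ell-1})^2]$ that \cref{lem:freedman} requires; your inequality ``$\mathcal{V}_{H'}\leq R^2+2|N_{H'}|$'' is therefore not what the identity gives. The patch is routine: $\mathcal{V}_{H'}-[M]_{H'}$ is itself a martingale with increments bounded by $R^2$ and predictable variation at most $R^2\mathcal{V}_{H'}$, so one more application of \cref{lem:freedman} with the same $\eta'$ yields $\mathcal{V}_{H'}\leq[M]_{H'}+\tfrac14\mathcal{V}_{H'}+O(R^2\log(\delta^{-1}))$, which combines with your bound on $[M]_{H'}$ to close the self-bounding loop. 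With that additional step, the argument goes through as written.
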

  }{}
  Let
  $\veps\in(0,e^{-1})$ be fixed. If we define
  \[
    L = c\cdot{}\log(\veps^{-1}),
  \]
  where $c>1$ is a sufficiently large absolute constant, then by
  \cref{lem:advantage_concentration}, we have that
  \begin{align}
    \bbP^{\pistar}\brk*{\exists{}h: \abs*{\Delta_h(o)}>L}\leq\veps.
  \end{align}
  This proves the result.

\end{proof}

  \begin{proof}[\pfref{lem:stopping_time}]
To prove that $\Hstar$ is a stopping time, we observe that for all $h\leq{}H$, we have
  \begin{align}
    \indic\crl*{h=\Hstar}=
    \indic\crl*{\abs*{\Delta_h(o)}>L, \abs*{\Delta_{h'}(o)}\leq{}L\;\forall{}h'<h},
  \end{align}
  and $\Delta_h(o)$ is a measurable function of
  $(x_1,a_1),\ldots,(x_h,a_h)$. Likewise, we have
    \begin{align}
      \indic\crl*{h=\Hstar+1}=
    \indic\crl*{\abs*{\Delta_{h}(o)}\leq{}L\;\forall{}h\leq{}H},
    \end{align}
    which is a measurable function of $(x_1,a_1),\ldots,(x_H,a_H)$.

For the second claim, we observe that
  \begin{align}
    \label{eq:delta_bound}
    \abs*{\Delta_{\Hstar}(o)}
    &\leq{}     \abs*{\Delta_{\Hstar-1}(o)}
    + \abs*{\Qstar_{\Hstar}(x_{\Hstar},\pistar_{\Hstar}(x_{\Hstar}))
    - \Qstar_{\Hstar}(x_{\Hstar},a_{\Hstar})}\\
    &\leq{} L+1
  \end{align}
  almost surely. 
    
  \end{proof}

  \begin{proof}[\pfref{lem:optional_stopping}] Define $X_h\ldef{}\Qstar_h(x_h,\pistar_h(x_h))-\Qstar_h(x_h,a_h)$, and
    $\filt_h=\sigma(x_1,a_1,\ldots,x_h,a_h)$, with $X_{H+1}\ldef{}0$. Since $\Hstar$ is a
    stopping time with respect to $(\filt_h)$ and $X_h$ is a
    martingale difference sequence (under $\pistar$), the optional
    stopping theorem (e.g., \cite{williams1991probability}) implies
    that\footnote{
          To give self-contained proof, note that we can write
          $\En^{\pistar}\brk*{\sum_{h=1}^{\Hstar}X_h}
      = \En^{\pistar}\brk*{\sum_{h=1}^{H}X_h\indic\crl*{\Hstar\geq{}h}}$
    We claim that $\indic\crl*{\Hstar\geq{}h}$ is a measurable
    function of $\filt_{h-1}$, since
    $\indic\crl*{\Hstar\geq{}h}=1-\indic\crl*{\Hstar<h}$, and
    $\indic\crl*{\Hstar=h'}$ is a measurable function of
    $(x_1,a_1),\ldots,(x_{h'},a_{h'})\subset
    (x_1,a_1),\ldots,(x_{h-1},a_{h-1})$ for $h'<h$. We conclude that
    $\En^{\pistar}\brk*{X_h\indic\crl*{\Hstar\geq{}h}\mid{}\filt_{h-1}}
      =       \En^{\pistar}\brk*{X_h\mid{}\filt_{h-1}}\indic\crl*{\Hstar\geq{}h}=0$.
    }
    \begin{align}
      \En^{\pistar}\brk*{\Delta_{\Hstar}(o)}
      = \En^{\pistar}\brk*{\sum_{h=1}^{\Hstar}X_h} = 0.
    \end{align}

    We now bound the second moment. Recall Doob's maximal inequality
    (e.g., \citet{williams1991probability}).
    \begin{lemma}
        \label{lem:maximal}
      If $(S_{h})_{h\in\brk{H}}$ is a non-negative submartingale, then
      \begin{align}
        \En\brk*{\max_{h\in\brk{H}}S_h^2} \leq{} 4\En\brk*{S_H^2}.
      \end{align}
    \end{lemma}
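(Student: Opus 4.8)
The plan is to derive this as a direct instance of Doob's $L^{2}$ maximal inequality, combining the weak-type $(1,1)$ maximal inequality for non-negative submartingales with the layer-cake representation and Cauchy--Schwarz. Write $M \ldef \max_{h\in\brk{H}}S_h$. Since the horizon is finite, $M \leq \sum_{h=1}^{H}S_h$, and because $x\mapsto x^{2}$ is convex and non-decreasing on $[0,\infty)$, the composition $(S_h^{2})_{h\in\brk{H}}$ is itself a non-negative submartingale; hence $\En\brk*{S_h^{2}}\leq\En\brk*{S_H^{2}}$ for every $h$, and so $\En\brk*{M^{2}}\leq H\cdot\En\brk*{S_H^{2}}<\infty$. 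Recording this a priori integrability is exactly what licenses the final division step.

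First I would establish the weak maximal inequality: for every $\lambda>0$,
\begin{align}
\lambda\cdot\Pr\brk*{M\geq\lambda}\leq\En\brk*{S_H\cdot\indic\crl*{M\geq\lambda}}.
\end{align}
To prove this, set $\tau=\min\crl*{h\in\brk{H}:S_h\geq\lambda}$, with $\tau=H$ if no such $h$ exists; this is a stopping time bounded by $H$. On the event $\crl*{M\geq\lambda}$ we have $S_\tau\geq\lambda$, giving $\lambda\cdot\Pr\brk*{M\geq\lambda}\leq\En\brk*{S_\tau\indic\crl*{M\geq\lambda}}$. Since $\crl*{M\geq\lambda}\in\filt_\tau$ and optional stopping for submartingales with a bounded stopping time yields $\En\brk*{S_H\mid\filt_\tau}\geq S_\tau$, conditioning gives $\En\brk*{S_\tau\indic\crl*{M\geq\lambda}}\leq\En\brk*{S_H\indic\crl*{M\geq\lambda}}$, which is the claim.

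Next I would integrate against $2\lambda\,d\lambda$ and apply Tonelli and Cauchy--Schwarz:
\begin{align}
\En\brk*{M^{2}}
&=\int_{0}^{\infty}2\lambda\cdot\Pr\brk*{M\geq\lambda}\,d\lambda
\leq\int_{0}^{\infty}2\,\En\brk*{S_H\indic\crl*{M\geq\lambda}}\,d\lambda\\
&=2\,\En\brk*{S_H\int_{0}^{M}d\lambda}=2\,\En\brk*{S_H\cdot M}
\leq 2\sqrt{\En\brk*{S_H^{2}}}\cdot\sqrt{\En\brk*{M^{2}}}.
\end{align}
Because $\En\brk*{M^{2}}<\infty$, dividing through by $\sqrt{\En\brk*{M^{2}}}$ gives $\sqrt{\En\brk*{M^{2}}}\leq 2\sqrt{\En\brk*{S_H^{2}}}$, i.e.\ $\En\brk*{M^{2}}\leq 4\,\En\brk*{S_H^{2}}$, as desired.

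The argument is entirely classical, so there is no genuine obstacle; the only points demanding a little care are (i) checking $\En\brk*{M^{2}}<\infty$ before the division, and (ii) the localized optional-stopping inequality $\En\brk*{S_H\mid\filt_\tau}\geq S_\tau$ used to pass from $S_\tau$ to $S_H$ — both immediate here since we work in discrete time with a finite horizon.
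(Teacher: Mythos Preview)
Your proof is correct and follows the standard derivation of Doob's $L^{2}$ maximal inequality. The paper does not actually prove this lemma: it is stated as a recalled fact with a citation to \citet{williams1991probability}, so there is nothing to compare against beyond noting that you have supplied the classical argument the paper omits.
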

    We claim that $\abs*{\Delta_h(o)}$ is a submartingale, since a
    convex function of a martingale is a submartingale.\footnote{For
      completeness, note that
      $\En\brk*{\abs*{\Delta_h(o)}\mid{}\filt_{h-1}}
      =       \En\brk*{\abs*{\Delta_{h-1}(o)+X_h}\mid{}\filt_{h-1}}
      \geq{}       \abs*{\Delta_{h-1}(o)+\En\brk*{ X_h \mid{}\filt_{h-1}}}=\abs*{\Delta_{h-1}(o)}$.}
    As a result, \cref{lem:maximal} gives that
    \begin{align}
      \En\brk*{\Delta_{\Hstar}^2(o)}
      \leq{}      \En\brk*{\max_{h\in\brk{H}}\Delta_{h}^2(o)}
      \leq{}   4\En\brk*{\Delta_{H}^2(o)}.
    \end{align}
        Finally, we note that
    \begin{align}
      \En^{\pistar}\brk*{\Delta_H^2(o)}
      &= \En^{\pistar}\brk*{\prn*{\sum_{h=1}^{H}\prn*{\Qstar_h(x_h,\pistar_h(x_h))
      - \Qstar_h(x_h,a_h)}}^2}\\
      &=
        \sum_{h=1}^{H}\En^{\pistar}\brk*{(\Qstar_h(x_h,\pistar_h(x_h))-\Qstar_h(x_h,a_h))^2}
        = \sigmastar^2,
    \end{align}
where we have once more used that
    $X_h=\Qstar_h(x_h,\pistar_h(x_h))-\Qstar_h(x_h,a_h)$ is a
    martingale difference sequence.
    
  \end{proof}

\subsubsection{Proof of \creftitle{lem:advantage_stopping} (Regret
  Decomposition for Stopped Advantages)}

  \begin{proof}[\pfref{lem:advantage_stopping}]%
    Consider the following non-Markovian policy:
  \begin{align}
    \pitil_h(\cdot\mid{}x_{1:h},a_{1:h-1}) =
    \left\{
    \begin{array}{ll}
      \pihat_h(\cdot\mid{}x_h) & h \leq \Hstar,\\
      \pistar_h(\cdot\mid{}x_h) & h > \Hstar.
    \end{array}
    \right.
  \end{align}
This is a well-defined policy, since we can write
  $\indic\crl*{h>\Hstar}=\max_{h'<h}\indic\crl*{h'=\Hstar}$, and
  $\indic\crl*{h'=\Hstar}$ is a measurable function of
  $(x_1,a_1),\ldots,(x_{h'},a_{h'})\subset
  (x_1,a_1),\ldots,(x_{h-1},a_{h-1})$ for $h'<h$.

  We begin by writing
  \begin{align}
    \label{eq:pitil1}
    J(\pistar) - J(\pihat)
    =     J(\pistar) - J(\pitil)
    + J(\pitil) - J(\pihat).
  \end{align}
  For the second pair of terms in \cref{eq:pitil1}, we use the following lemma.
  \begin{lemma}
    \label{lem:pitil_regret}
    Under the same assumptions as \cref{lem:advantage_stopping}, it holds that
    \begin{align}
      J(\pitil) - J(\pihat)
      \leq{} R\cdot{}\bbP^{\pihat}\brk*{\Hstar\leq{}H}.
    \end{align}
    
  \end{lemma}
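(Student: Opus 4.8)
The plan is to prove \cref{lem:pitil_regret} by a coupling argument that compares the executions of $\pitil$ and $\pihat$ on a common probability space. The key structural fact is that, by construction, $\pitil_h$ agrees with $\pihat_h$ at every step $h$ with $\Hstar \geq h$; and as recorded in \cref{lem:stopping_time}, the event $\crl*{\Hstar \geq h} = \crl*{\abs*{\Delta_{h'}(o)} \leq L \text{ for all } h' < h}$ is $\filt_{h-1}$-measurable, so $\pitil$ is a well-defined history-dependent policy and we may run $\pitil$ and $\pihat$ with shared randomness so that they produce the \emph{identical} trajectory prefix $(x_1,a_1),\ldots,(x_{H\wedge\Hstar},a_{H\wedge\Hstar})$. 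Since $\Hstar$ is a deterministic function of this prefix, it is almost surely the same under the coupling, and in particular $\bbP^{\pitil}\brk*{\Hstar\leq H} = \bbP^{\pihat}\brk*{\Hstar\leq H}$.

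Next I would split according to whether $\Hstar \leq H$. On the event $\crl*{\Hstar > H}$ (equivalently $\Hstar = H+1$), the rule $\pitil_h = \pihat_h$ for $h \leq \Hstar$ means $\pitil$ coincides with $\pihat$ over the entire horizon $h \in [H]$; hence under the coupling the full trajectories agree and the cumulative rewards $\sum_{h=1}^{H} r_h$ are equal, contributing zero to $J(\pitil) - J(\pihat)$. On the complementary event $\crl*{\Hstar \leq H}$, I invoke the hypotheses $r_h \geq 0$ and $\sum_{h=1}^{H} r_h \in \brk{0,R}$: the cumulative reward collected by \emph{each} of the two policies lies in $\brk{0,R}$, so the difference of cumulative rewards is bounded above by $R$ on this event. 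Combining the two cases,
\[
J(\pitil) - J(\pihat) \;\leq\; R\cdot\bbP\brk*{\Hstar \leq H} \;=\; R\cdot\bbP^{\pihat}\brk*{\Hstar \leq H},
\]
which is exactly the claimed bound, completing the proof of \cref{lem:pitil_regret} and hence (together with \cref{lem:pitil1}-style decomposition) of \cref{lem:advantage_stopping}.

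I do not anticipate a genuine obstacle here; the only care needed is the measure-theoretic bookkeeping that legitimizes the coupling — namely that $\pitil$ is a bona fide policy and that $\Hstar$ has the same law under $\pitil$ and $\pihat$ — all of which follows from the stopping-time property already established in \cref{lem:stopping_time}. An alternative, coupling-free route would be to unroll the performance-difference lemma between $\pitil$ and $\pihat$ step by step and observe that the per-step discrepancies vanish until step $\Hstar$, but the coupling makes the nonnegativity-and-boundedness-of-rewards estimate on $\crl*{\Hstar\le H}$ the cleanest.
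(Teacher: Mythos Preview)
Your proposal is correct and takes essentially the same approach as the paper. The paper carries out the identical two-case split on $\crl*{\Hstar>H}$ versus $\crl*{\Hstar\leq H}$ and the identical identification $\bbP^{\pitil}\brk*{\Hstar\leq H}=\bbP^{\pihat}\brk*{\Hstar\leq H}$, but does so by explicitly writing out the sums over trajectories and using that $\pitil_\ell(\cdot\mid x_{1:\ell},a_{1:\ell-1})=\pihat_\ell(\cdot\mid x_\ell)$ whenever $\ell\leq\Hstar$, rather than phrasing this as a coupling; your coupling is just the probabilistic restatement of the paper's algebraic identity.
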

For the first pair of terms in \cref{eq:pitil1}, using the performance difference lemma,
we can write\footnote{Since $\pitil$ is non-Markovian, we need to expand the state space to $x'_h=x_{1:h},a_{1:h-1}$ to
  apply the performance difference lemma, but since $\pistar$ itself
  is Markovian, this results in the claimed expression.}
  \begin{align}
    J(\pistar) - J(\pitil)
    &= \En^{\pitil}\brk*{
    \sum_{h=1}^{H}\Qstar_h(x_h,\pistar_h(x_h))
    - \Qstar_h(x_h,a_h)
      }\\
    &= \En^{\pitil}\brk*{
    \sum_{h=1}^{H}\En_{h-1}\brk*{\Qstar_h(x_h,\pistar_h(x_h))
    - \Qstar_h(x_h,a_h)}
      }\\
    &= \En^{\pitil}\brk*{
    \sum_{h=1}^{H}\En_{h-1}\brk*{\Qstar_h(x_h,\pistar_h(x_h))
    - \Qstar_h(x_h,a_h)}\indic\crl*{h\leq{}\Hstar}
      }\\
    &= \En^{\pitil}\brk*{
      \sum_{h=1}^{H}\En_{h-1}\brk*{\prn*{\Qstar_h(x_h,\pistar_h(x_h))
    - \Qstar_h(x_h,a_h)}\indic\crl*{h\leq{}\Hstar}}
      }\\
    &= \En^{\pitil}\brk*{
    \sum_{h=1}^{\Hstar}\Qstar_h(x_h,\pistar_h(x_h))
    - \Qstar_h(x_h,a_h)
    } = \En^{\pitil}\brk*{
    \Delta_{\Hstar}(o)
    },
  \end{align}
  where the third equality uses that $\pitil_h(\cdot\mid{}x_{1:h},a_{1:h-1})=\pistar_h(\cdot\mid{}x_h)$
  for $h>\Hstar$, and the fourth equality uses that
  $\indic\crl*{h\leq{}\Hstar}$ is $\filt_{h-1}$-measurable. We now
  appeal to the following lemma, proven in the sequel.
  \begin{lemma}
    \label{lem:pitil_advantage}
        Under the same assumptions as \cref{lem:advantage_stopping}, it holds that
\begin{align}
  \En^{\pitil}\brk*{
    \Delta_{\Hstar}(o)
    } = \En^{\pihat}\brk*{
    \Delta_{\Hstar}(o)
    }.
\end{align}
\end{lemma}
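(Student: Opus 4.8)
The plan is to exploit two facts: that $\Delta_{\Hstar}(o)$ depends on the trajectory $o=(x_1,a_1),\dots,(x_H,a_H)$ only through its initial segment up to step $\Hstar$, and that on this segment the non-Markovian policy $\pitil$ behaves identically to $\pihat$ by construction. Concretely, I would first write $\Delta_{\Hstar}(o)=\sum_{h=1}^{H+1}\Delta_h(o)\,\indic\crl*{\Hstar=h}$, using the convention $\Qstar_{H+1}=\Vstar_{H+1}=0$ (so that $\Delta_{H+1}(o)=\Delta_H(o)$), and reduce the claim to showing, for each fixed $h$, that $\En^{\pitil}\brk*{\Delta_h(o)\,\indic\crl*{\Hstar=h}}=\En^{\pihat}\brk*{\Delta_h(o)\,\indic\crl*{\Hstar=h}}$.

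For a fixed $h\leq{}H$, both $\Delta_h(o)$ and $\indic\crl*{\Hstar=h}$ are measurable functions of $(x_1,a_1),\dots,(x_h,a_h)$: since $\Hstar$ is a stopping time (\cref{lem:stopping_time}), $\indic\crl*{\Hstar\geq{}h}$ is $\filt_{h-1}$-measurable, and $\indic\crl*{\Hstar=h}=\indic\crl*{\Hstar\geq{}h}\,\indic\crl*{\abs*{\Delta_h(o)}>L}$. Since moreover $\crl*{\Hstar=h}\subseteq\crl*{\Hstar\geq{}h}$, it suffices to prove that $\bbP^{\pitil}$ and $\bbP^{\pihat}$ induce the same law on $(x_1,a_1),\dots,(x_h,a_h)$ after restriction to $\crl*{\Hstar\geq{}h}$. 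I would establish this by induction on $h$: because $\pitil_{h'}$ plays $\pihat_{h'}(x_{h'})$ exactly on the $\filt_{h'-1}$-measurable event $\crl*{h'\leq{}\Hstar}$, on $\crl*{\Hstar\geq{}h}$ the two processes draw actions from identical conditional distributions at all steps $h'\leq{}h$, and the transition kernels $P_0,P_1,\dots$ are common to both, so the restricted laws coincide. (Equivalently, one can argue by direct marginalization: on $\crl*{\Hstar=h}$ the product formula for $\bbP^{\pitil}(o)$ has trailing factors $\prod_{\ell>h}\pistar_\ell(a_\ell\mid{}x_\ell)P_\ell(x_{\ell+1}\mid{}x_\ell,a_\ell)$ that telescope to one when summed over $(x_{h+1:H},a_{h+1:H})$, leaving precisely the corresponding marginal of $\bbP^{\pihat}$.) The case $h=H+1$ is immediate, since $\pitil=\pihat$ at every step on $\crl*{\Hstar=H+1}$.

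Summing the per-$h$ identities over $h\in\crl*{1,\dots,H+1}$ then gives $\En^{\pitil}\brk*{\Delta_{\Hstar}(o)}=\En^{\pihat}\brk*{\Delta_{\Hstar}(o)}$. I do not expect a genuine obstacle here; the only delicate point is the measurability bookkeeping — verifying that $\crl*{\Hstar\geq{}h}$ is $\filt_{h-1}$-measurable (so that $\pitil$ is well-defined and the inductive step is legitimate) and that $\Delta_{\Hstar}(o)$ truly does not depend on the trajectory beyond step $\Hstar$ — both of which follow from $\Hstar$ being a stopping time for $(\filt_h)$ and from $\Delta_h(o)$ being $\filt_h$-measurable.
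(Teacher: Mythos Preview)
Your proposal is correct and follows essentially the same approach as the paper: decompose $\Delta_{\Hstar}(o)=\sum_{h=1}^{H+1}\Delta_h(o)\,\indic\crl*{\Hstar=h}$, observe that on $\crl*{\Hstar=h}$ both $\Delta_h$ and the indicator depend only on $o_h=(x_1,a_1),\dots,(x_h,a_h)$, and use that $\pitil_\ell=\pihat_\ell$ for all $\ell\leq{}\Hstar$ to match the marginals. The paper carries this out via your parenthetical ``direct marginalization'' route (writing out the product formula for the marginal of $o_h$ and replacing $\pitil_\ell$ by $\pihat_\ell$ on the support of $g_h$), which is equivalent to your inductive argument.
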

Altogether, we conclude that
\begin{align}
    J(\pistar) - J(\pihat)
  \leq{} \En^{\pihat}\brk*{
    \Delta_{\Hstar}(o)
    } + R\cdot{}\bbP^{\pihat}\brk*{\Hstar\leq{}H}.
\end{align}
    
\end{proof}

    \begin{proof}[\pfref{lem:pitil_regret}]
    Let us define $f(o)=\sum_{h=1}^{H}\En\brk*{r_h\mid{}x_h,a_h}$
    and $g(o)=\indic\crl*{\Hstar>H}$; note that $g(o)$ is indeed
    a measurable function of $o=(x_1,a_1),\ldots,(x_H,a_H)$, since
    $\indic\crl*{\Hstar>H}=1-\indic\crl*{\Hstar\leq{}H}$,
    $\crl*{\Hstar\leq{}H}=\cup_{h\leq{}H}\crl*{\Hstar=h}$, and
    $\crl*{\Hstar=h}$ is a measurable function of
    $(x_1,a_1),\ldots,(x_h,a_h)$.  We can write
    \begin{align}
      \label{eq:pitil2}
      J(\pitil)
      \leq{}
      \En^{\pitil}\brk*{\prn*{\sum_{h=1}^{H}r_h}\indic\crl*{\Hstar>H}}
      + R\cdot{}\bbP^{\pitil}\brk*{\Hstar\leq{}H}.
    \end{align}
    Let us adopt the shorthand $P(x_{1:H}\mid{}a_{1:H-1})\ldef{}\prod_{h=0}^{H-1}P_h(x_{h+1}\mid{}x_h,a_h)$.
We can bound the first term in \cref{eq:pitil2} via
    \begin{align}
      \En^{\pitil}\brk*{\prn*{\sum_{h=1}^{H}r_h}\indic\crl*{\Hstar>H}}
      &= \sum_{o=x_{1:H},a_{1:H}}f(o)g(o)P(x_{1:H}\mid{}a_{1:H-1})
        \prod_{h=1}^{H}\pitil_h(a_h\mid{}x_{1:h},a_{1:h-1})\\
      &= \sum_{o=x_{1:H},a_{1:H}}f(o)g(o)P(x_{1:H}\mid{}a_{1:H-1})
        \prod_{h=1}^{H}\pihat_h(a_h\mid{}x_{h})\\
      &\leq \sum_{o=x_{1:H},a_{1:H}}f(o)P(x_{1:H}\mid{}a_{1:H-1})
        \prod_{h=1}^{H}\pihat_h(a_h\mid{}x_{h})\\
      &= \En^{\pihat}\brk*{\sum_{h=1}^{H}r_h} = J(\pihat),
    \end{align}
    where the second equality uses that
    $\pitil(\cdot\mid{}x_{1:h},a_{1:h-1})=\pihat(\cdot\mid{}x_h)$ for
    all $h\in\brk{H}$
    whenever $g(o)=1$.

    To bound the second term in \cref{eq:pitil2}, we can write
    \begin{align}
      \bbP^{\pitil}\brk*{\Hstar\leq{}H}
      =\sum_{h=1}^{H}\bbP^{\pitil}\brk*{\Hstar=h}.
    \end{align}
    For each $h$, let $o_h\ldef(x_1,a_1),\ldots,(x_h,a_h)$ and
    $g_h(o_h)\ldef{}\indic\crl*{\Hstar=h}$ (recall that $\indic\crl*{\Hstar=h}$ is a measurable function
    of $(x_1,a_1),\ldots,(x_h,a_h)$).  Note that for each $h$, if we
    define
    $P(x_{1:h}\mid{}a_{1:h-1})\ldef{}\prod_{h=0}^{h-1}P_\ell(x_{\ell+1}\mid{}x_\ell,a_{\ell})$, then
    \begin{align}
      \bbP^{\pitil}\brk*{\Hstar=h}
      &= \sum_{o_h=x_{1:h},a_{1:h}}g_h(o_h)P(x_{1:h}\mid{}a_{1:h-1})
        \prod_{\ell=1}^{h}\pitil_\ell(a_\ell\mid{}x_{1:\ell},a_{1:\ell-1})\\
      &= \sum_{o_h=x_{1:h},a_{1:h}}g_h(o_h)P(x_{1:h}\mid{}a_{1:h-1})
        \prod_{\ell=1}^{h}\pihat_\ell(a_\ell\mid{}x_{\ell})\\
      &= \bbP^{\pihat}\brk*{\Hstar=h},
    \end{align}
    where the second inequality uses that
    $\pitil(\cdot\mid{}x_{1:\ell},a_{1:\ell-1})=\pihat(\cdot\mid{}x_\ell)$
    whenever $\ell\leq{}\Hstar$.
  \end{proof}

  \begin{proof}[\pfref{lem:pitil_advantage}]
We start by writing
  \begin{align}
    \En^{\pitil}\brk*{
    \Delta_{\Hstar}(o)
    }
    &= \sum_{h=1}^{H+1}\En^{\pitil}\brk*{\indic\crl*{\Hstar=h}\Delta_h(o)}.
  \end{align}
    For each $h\leq{}H+1$, let $o_h\ldef(x_1,a_1),\ldots,(x_h,a_h)$ and
    $g_h(o_h)\ldef{}\indic\crl*{\Hstar=h}$ (recall that $\indic\crl*{\Hstar=h}$ is a measurable function
    of $(x_1,a_1),\ldots,(x_h,a_h)$).  For each $h\leq{}H+1$, if we
    define
    $P(x_{1:h}\mid{}a_{1:h-1})\ldef{}\prod_{h=0}^{h-1}P_\ell(x_{\ell+1}\mid{}x_\ell,a_\ell)$, then
  \begin{align}
    \En^{\pitil}\brk*{\indic\crl*{\Hstar=h}\Delta_h(o)}
      &= \sum_{o_h=x_{1:h},a_{1:h}}g_h(o_h)\Delta_h(o_h)P(x_{1:h}\mid{}a_{1:h-1})
        \prod_{\ell=1}^{h}\pitil_\ell(a_\ell\mid{}x_{1:\ell},a_{1:\ell-1})\\
      &= \sum_{o_h=x_{1:h},a_{1:h}}g_h(o_h)\Delta_h(o_h)P(x_{1:h}\mid{}a_{1:h-1})
        \prod_{\ell=1}^{h}\pihat_\ell(a_\ell\mid{}x_{\ell})\\
      &=     \En^{\pihat}\brk*{\indic\crl*{\Hstar=h}\Delta_h(o)},
    \end{align}
    where the second inequality uses that
    $\pitil(\cdot\mid{}x_{1:\ell},a_{1:\ell-1})=\pihat(\cdot\mid{}x_\ell)$
    whenever $\ell\leq{}\Hstar$.
  
\end{proof}

\subsubsection{Proof of \creftitle{lem:advantage_concentration} (Concentration for Advantages)}

\cref{lem:advantage_concentration} is proven using arguments similar to those in
\citet{zhang2021reinforcement,zhang2022horizon}, but requires
non-trivial modifications to accommodate the fact that $\pi$ is an
arbitrary, potentially suboptimal policy.

\begin{proof}[\pfref{lem:advantage_concentration}]
  Let us abbreviate $Q=Q^{\pi}$ and $V=V^{\pi}$. Assume without loss of generality that $R=1$, and note that this
  implies that $r_h\in\brk{0,1}$ and $Q_h,V_h\in\brk{0,1}$, which we
  will use throughout the proof.

Define a filtration
$\filt_{h-1}\ldef{}\sigma((x_1,a_1,r_1),\ldots,(x_{h-1},a_{h-1},r_{h-1}),x_h)$. Since
\[\En_{h-1}\brk*{Q_h(x_h,a_h)-V_h(x_h)}=0,\] two applications of
\cref{lem:freedman} and a union bound imply that with probability at
least $1-\delta$, for all $H'\in\brk{H}$
\begin{align}
  \abs*{\sum_{h=1}^{H'}Q_h(x_h,a_h)-V_h(x_h)}
  \leq{}
\sum_{h=1}^{H'}\En^{\pi}\brk*{(Q_h(x_h,a_h)-V_h(x_h))^2\mid{}x_h}
  + \log(2\delta^{-1}).
\end{align}
Since $\En^{\pi}\brk*{Q_h(x_h,a_h)\mid{}x_h}=V_h(x_h)$, we can write
\begin{align}
  \sum_{h=1}^{H'}\En^{\pi}\brk*{(Q_h(x_h,a_h)-V_h(x_h))^2\mid{}x_h}
  &=\sum_{h=1}^{H'}\En^{\pi}\brk*{(Q_h^2(x_h,a_h)\mid{}x_h} -
    V_h^2(x_h)\\
  &=\sum_{h=1}^{H'}\prn*{\En^{\pi}\brk*{(Q_h^2(x_h,a_h)\mid{}x_h} -
    V_{h+1}^2(x_{h+1})} +V_{H'+1}^2(x_{H'+1})- V_1^2(x_1)\\
  &\leq{}\sum_{h=1}^{H'}\prn*{\En^{\pi}\brk*{(Q_h^2(x_h,a_h)\mid{}x_h} -
    V_{h+1}^2(x_{h+1})} + 1.
\end{align}
Observe that by Jensen's
inequality, we have
\begin{align}
  \En^{\pi}\brk*{(Q_h^2(x_h,a_h)\mid{}x_h}
  &\leq{} \En^{\pi}\brk*{(r_h+V_{h+1}(x_{h+1}))^2\mid{}x_h}\\
  &=\En^{\pi}\brk*{V^2_{h+1}(x_{h+1})\mid{}x_h}
    +
    \En^{\pi}\brk*{r_h^2\mid{}x_h}+2\En^{\pi}\brk*{r_hV_{h+1}(x_{h+1})\mid{}x_h}\\
    &\leq{}\En^{\pi}\brk*{V^2_{h+1}(x_{h+1})\mid{}x_h}
      + 3\En^{\pi}\brk*{r_h\mid{}x_h},
\end{align}
so that
\begin{align}
  \sum_{h=1}^{H'}\En^{\pi}\brk*{(Q_h(x_h,a_h)-V_h(x_h))^2\mid{}x_h}
  \leq{} \sum_{h=1}^{H'}\En^{\pi}\brk*{V^2_{h+1}(x_{h+1})\mid{}x_h}-
  V^2_{h+1}(x_{h+1})
  + 3\sum_{h=1}^{H'}\En^{\pi}\brk*{r_h\mid{}x_h} + 1.
\end{align}
By \cref{lem:multiplicative_freedman}, we have that with probability
at least $1-\delta$, for all $H'\in\brk{H}$,
\begin{align}
  \sum_{h=1}^{H'}\En^{\pi}\brk*{r_h\mid{}x_h}
  \leq{} \frac{3}{2}\sum_{h=1}^{H'}r_h + 4\log(2\delta^{-1})\\
  \leq{} \frac{3}{2} + 4\log(2\delta^{-1}).
\end{align}
Likewise, by \cref{lem:freedman}, we have that with probability at least $1-\delta$, for all $H'\in\brk{H}$,
\begin{align}
  \sum_{h=1}^{H'}\En^{\pi}\brk*{V^2_{h+1}(x_{h+1})\mid{}x_h}-
  V^2_{h+1}(x_{h+1})
  &\leq{} 
  \sum_{h=1}^{H'}\En^{\pi}\brk*{\prn*{V_{h+1}^2(x_{h+1}) -
  \En^{\pi}\brk*{V^2_{h+1}(x_{h+1})\mid{}x_h}}^2\mid{}x_h}
    +\log(\delta^{-1})\\
    &= 
  \sum_{h=1}^{H'}\Var^{\pi}\brk*{V_{h+1}^2(x_{h+1})\mid{}x_h}
      +\log(\delta^{-1})\\
  &\leq{} 4
  \sum_{h=1}^{H'}\Var^{\pi}\brk*{V_{h+1}(x_{h+1})\mid{}x_h}
  +\log(\delta^{-1}),
\end{align}
where the last line uses the
following lemma, proven in the sequel.
    \begin{lemma}
      \label{lem:variance_square}
      If $X$ is a random variable with $\abs*{X}\leq{}1$, then
      \begin{align}
        \Var(X^2)\leq{}4\Var(X).
      \end{align}
    \end{lemma}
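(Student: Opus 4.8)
\textbf{Proof proposal for \cref{lem:variance_square}.} The plan is to bound the variance of $X^2$ by centering around a cleverly chosen constant rather than around $\En\brk*{X^2}$ itself. Concretely, I would start from the variational characterization of variance: for any random variable $Y$ and any constant $c\in\bbR$, one has $\Var(Y)=\En\brk*{(Y-\En\brk*{Y})^2}\leq{}\En\brk*{(Y-c)^2}$, since the mean is the minimizer of the squared deviation. Applying this with $Y=X^2$ and the choice $c=(\En\brk*{X})^2$ gives
\begin{align}
  \Var(X^2)\leq{}\En\brk*{\prn*{X^2-(\En\brk*{X})^2}^2}.
\end{align}

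Next I would factor the difference of squares as $X^2-(\En\brk*{X})^2=(X-\En\brk*{X})(X+\En\brk*{X})$, so that
\begin{align}
  \En\brk*{\prn*{X^2-(\En\brk*{X})^2}^2}
  =\En\brk*{(X-\En\brk*{X})^2\cdot(X+\En\brk*{X})^2}.
\end{align}
The factor $(X+\En\brk*{X})^2$ is uniformly bounded: since $\abs*{X}\leq{}1$ almost surely, Jensen's inequality gives $\abs*{\En\brk*{X}}\leq{}\En\brk*{\abs*{X}}\leq{}1$, and therefore $\abs*{X+\En\brk*{X}}\leq{}2$ almost surely, i.e. $(X+\En\brk*{X})^2\leq{}4$ almost surely. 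Plugging this bound in yields
\begin{align}
  \Var(X^2)\leq{}4\,\En\brk*{(X-\En\brk*{X})^2}=4\Var(X),
\end{align}
which is the claim.

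This argument is entirely elementary, so there is no real obstacle; the only ``trick'' is to center $X^2$ at $(\En\brk*{X})^2$ instead of its true mean, which is what makes the difference-of-squares factorization available. (Centering at $\En\brk*{X^2}$ directly would not factor as cleanly, and the crude bound $\Var(X^2)\leq{}\En\brk*{X^4}\leq{}\En\brk*{X^2}$ is too lossy to recover the needed comparison with $\Var(X)$ when $X$ is concentrated near a nonzero value.)
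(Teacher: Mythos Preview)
Your proof is correct and is essentially identical to the paper's own argument: the paper also centers $X^2$ at $(\En\brk*{X})^2$ via $\Var(X^2)=\En\brk*{(X^2-\En\brk*{X^2})^2}\leq\En\brk*{(X^2-\En\brk*{X}^2)^2}$, and then invokes the bound $\abs*{a^2-b^2}\leq 2\abs*{a-b}$ for $a,b\in\brk{-1,1}$, which is exactly your difference-of-squares factorization with $\abs*{a+b}\leq 2$.
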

We now appeal to the following lemma, also proven in the sequel.
\begin{lemma}
  \label{lem:variance_conc}
  Under the same setting as \cref{lem:advantage_concentration}, we have
  that for any $\delta\in(0,1)$, with probability at
  least $1-2\delta$, for all $H'\in\brk{H}$,
  \begin{align}
    \sum_{h=1}^{H'}\Var^{\pi}\brk*{V^{\pi}_{h+1}(x_{h+1})\mid{}x_h}
    \leq{} 8 + 32\log(2\delta^{-1}).
  \end{align}
\end{lemma}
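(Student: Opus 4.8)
The statement is a high-probability, time-uniform analog of the law of total variance (\cref{lem:ltv}): in expectation one has $\sum_h\En^\pi\big[\Var^\pi[V^\pi_{h+1}(x_{h+1})\mid x_h]\big]\le\Var^\pi\big[\sum_h r_h\big]\le R^2$, but here we need control of the \emph{predictable} sum $S_{H'}:=\sum_{h=1}^{H'}\Var^\pi[V^\pi_{h+1}(x_{h+1})\mid x_h]$ along a single trajectory, uniformly in $H'$. As in the proof of \cref{lem:advantage_concentration}, I would assume $R=1$ (so $r_h,Q^\pi_h,V^\pi_h\in[0,1]$) and work with the filtration $\filt_{h-1}=\sigma((x_1,a_1,r_1),\dots,(x_{h-1},a_{h-1},r_{h-1}),x_h)$, so that quantities conditioned on $x_h$ are genuinely $\filt_{h-1}$-measurable.

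The first step is a deterministic inequality. Writing $V_h=V^\pi_h$ and $\rho_h:=\En^\pi[r_h\mid x_h]\in[0,1]$, Bellman consistency $V_h(x_h)=\En^\pi[r_h+V_{h+1}(x_{h+1})\mid x_h]$ gives $\En^\pi[V_{h+1}(x_{h+1})\mid x_h]=V_h(x_h)-\rho_h$, hence
\begin{align*}
\Var^\pi[V_{h+1}(x_{h+1})\mid x_h]
&=\En^\pi[V_{h+1}^2(x_{h+1})\mid x_h]-(V_h(x_h)-\rho_h)^2\\
&\le \En^\pi[V_{h+1}^2(x_{h+1})\mid x_h]-V_h^2(x_h)+2\rho_h,
\end{align*}
using $0\le\rho_h\le V_h(x_h)\le1$. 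Summing over $h\le H'$ and splitting $\En^\pi[V_{h+1}^2(x_{h+1})\mid x_h]-V_h^2(x_h)$ into a martingale-difference part and the telescoping part $V_{h+1}^2(x_{h+1})-V_h^2(x_h)$ (which sums to $V_{H'+1}^2(x_{H'+1})-V_1^2(x_1)\le1$), I would obtain
\[
S_{H'}\ \le\ \sum_{h=1}^{H'}\Big(\En^\pi[V_{h+1}^2(x_{h+1})\mid x_h]-V_{h+1}^2(x_{h+1})\Big)\ +\ 1\ +\ 2\sum_{h=1}^{H'}\rho_h.
\]

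The second step controls the martingale sum. Its summands are mean-zero given $\filt_{h-1}$, bounded by $1$ in absolute value, and have conditional second moment $\Var^\pi[V_{h+1}^2(x_{h+1})\mid x_h]\le 4\,\Var^\pi[V_{h+1}(x_{h+1})\mid x_h]$ by \cref{lem:variance_square}. Applying the time-uniform Freedman bound \cref{lem:freedman} with a small fixed $\eta$ (e.g.\ $\eta=1/8$), with probability at least $1-\delta$ this sum is at most $\tfrac12 S_{H'}+8\log(\delta^{-1})$ for all $H'\le H$, so the $S_{H'}$ term can be moved to the left-hand side. For the reward term $\sum_h\rho_h=\sum_h\En^\pi[r_h\mid x_h]$, I would apply \cref{lem:multiplicative_freedman} (or \cref{lem:freedman} to the reward martingale $r_h-\rho_h$, using $\Var^\pi[r_h\mid x_h]\le\rho_h$) together with $\sum_h r_h\le1$ a.s.\ to get, with probability at least $1-\delta$, $\sum_{h=1}^{H'}\rho_h\lesssim 1+\log(\delta^{-1})$ for all $H'$. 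Intersecting the two events (probability at least $1-2\delta$) and collecting terms — with an appropriate choice of the free parameters in the Freedman inequalities — yields $S_{H'}\le 8+32\log(2\delta^{-1})$ for all $H'\in[H]$.

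The main obstacle is the apparent circularity: the natural bound on the martingale part is expressed back in terms of $S_{H'}$ itself (through $\sum_h\Var^\pi[V_{h+1}(x_{h+1})\mid x_h]$), so the argument closes only if this reappears with a coefficient bounded away from $1$. This is exactly what the telescoping identity for $V^2$ buys: it converts the bulk of the recursion into the harmless constant $1$, while the factor-$4$ slack in \cref{lem:variance_square} and the tunable $\eta$ in Freedman make the self-referential coefficient $4\eta$ small. A cruder route such as $\Var^\pi[V_{h+1}(x_{h+1})\mid x_h]\le\En^\pi[(V_{h+1}(x_{h+1})-V_h(x_h))^2\mid x_h]$ fails here, since after concentration it feeds $S_{H'}$ back with a coefficient too large to absorb; a secondary point is keeping careful track of which quantities are $\filt_{h-1}$-measurable so that the relevant differences are genuine martingale differences.
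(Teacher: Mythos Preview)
Your proposal is correct and follows essentially the same approach as the paper: both decompose the predictable variance sum via telescoping $V_{h+1}^2-V_h^2$, control the martingale part $\En^\pi[V_{h+1}^2(x_{h+1})\mid x_h]-V_{h+1}^2(x_{h+1})$ by Freedman with $\eta=1/8$ and \cref{lem:variance_square} (yielding the self-referential coefficient $\tfrac12$), and control the reward part $\sum_h\En^\pi[r_h\mid x_h]$ by \cref{lem:multiplicative_freedman}. Your route to the deterministic inequality---expanding $(V_h-\rho_h)^2\ge V_h^2-2\rho_h$ directly from Bellman consistency---is marginally cleaner than the paper's use of $|a^2-b^2|\le 2|a-b|$, but the arguments are otherwise identical.
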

Putting together all of the developments so far, we have that with probability at least $1-5\delta$,
for all $H'\in\brk{H}$,
\begin{align}
  \abs*{\sum_{h=1}^{H'}Q_h(x_h,a_h)-V_h(x_h)}
  &\leq{}  4   \sum_{h=1}^{H'}\Var^{\pi}\brk*{V_{h+1}(x_{h+1})\mid{}x_h}
  +6 +
    14\log(2\delta^{-1})\\
  &\leq{}  38 +
  142\log(2\delta^{-1}).
\end{align}

\end{proof}

    \begin{proof}[\pfref{lem:variance_square}]
      Note that we have
      \begin{align}
        \Var(X^2)
        = \En\brk*{(X^2-\En\brk*{X^2})^2}
        \leq{}     \En\brk*{(X^2-\En\brk*{X}^2)^2}
        \leq{} 4\En\brk*{(X-\En\brk*{X})^2},
      \end{align}
      where the last line uses that $\abs*{a^2-b^2}\leq{}2\abs*{a-b}$
      for $a,b\in\brk{-1,1}$.
    \end{proof}

    \begin{proof}[\pfref{lem:variance_conc}]
      Abbreviate $V\equiv{}V^{\pi}$. By telescoping, we can write
  \begin{align}
    Z_{H'} \ldef{} &\sum_{h=1}^{H'}\Var^{\pi}\brk*{V_{h+1}(x_{h+1})\mid{}x_h}\\
              &= \sum_{h=1}^{H'}\En^{\pi}\brk*{V^2_{h+1}(x_{h+1})\mid{}x_h}
                  -
                \prn*{\En^{\pi}\brk*{V_{h+1}(x_{h+1})\mid{}x_h}}^2\\
    &= \sum_{h=1}^{H'}\En^{\pi}\brk*{V^2_{h+1}(x_{h+1})\mid{}x_h}
      - V_{h+1}^2(x_{h+1})
      + \sum_{h=1}^{H'}V_h^2(x_{h})
                  - \prn*{\En^{\pi}\brk*{V_{h+1}(x_{h+1})\mid{}x_h}}^2
      +V_{H'+1}^2(x_{H'+1})- V^2_{1}(x_1)\\
              &\leq \sum_{h=1}^{H'}\En^{\pi}\brk*{V^2_{h+1}(x_{h+1})\mid{}x_h}
      - V_{h+1}^2(x_{h+1})
      + \sum_{h=1}^{H'}V_h^2(x_{h})
                  - \prn*{\En^{\pi}\brk*{V_{h+1}(x_{h+1})\mid{}x_h}}^2
                + 1.
  \end{align}
  For the latter term, since $\abs*{a^{2}-b^{2}}\leq{}2\abs*{a-b}$ for
  $a,b\in\brk{0,1}$, we have that
  \begin{align}
    \sum_{h=1}^{H'}V_h^2(x_{h})
    - \prn*{\En^{\pi}\brk*{V_{h+1}(x_{h+1})\mid{}x_h}}^2
    &\leq{}    2\sum_{h=1}^{H'}\abs*{V_h(x_{h})
      - \En^{\pi}\brk*{V_{h+1}(x_{h+1})\mid{}x_h}}\\
    &=    2\sum_{h=1}^{H'}\abs*{\En^{\pi}\brk*{r_h\mid{}x_h}}
    \leq{}2\sum_{h=1}^{H'}\En^{\pi}\brk*{r_h\mid{}x_h},
  \end{align}
  By \cref{lem:multiplicative_freedman}, we have that with probability
at least $1-\delta$, for all $H'\in\brk{H}$,
\begin{align}
  \sum_{h=1}^{H'}\En^{\pi}\brk*{r_h\mid{}x_h}
  \leq{} \frac{3}{2}\sum_{h=1}^{H'}r_h + 4\log(2\delta^{-1})
  \leq{} \frac{3}{2} + 4\log(2\delta^{-1}).
\end{align}
For the first term, by \cref{lem:freedman}, we have that for all
$\eta\in(0,1)$, with probability at least $1-\delta$, for all $H'\in\brk{H}$,
\begin{align}
  \sum_{h=1}^{H'}\En^{\pi}\brk*{V^2_{h+1}(x_{h+1})\mid{}x_h}-
  V_{h+1}(x_{h+1})
  &\leq{} \eta
  \sum_{h=1}^{H'}\En^{\pi}\brk*{\prn*{V_{h+1}^2(x_{h+1}) -
  \En^{\pi}\brk*{V^2_{h+1}(x_{h+1})\mid{}x_h}}^2\mid{}x_h}
    +\eta^{-1}\log(\delta^{-1})\\
    &= \eta
  \sum_{h=1}^{H'}\Var^{\pi}\brk*{V_{h+1}^2(x_{h+1})\mid{}x_h}
      +\eta^{-1}\log(\delta^{-1})\\
  &\leq{} 4\eta
  \sum_{h=1}^{H'}\Var^{\pi}\brk*{V_{h+1}(x_{h+1})\mid{}x_h}
    +\eta^{-1}\log(\delta^{-1})\\
  &= 4\eta{}Z_{H'}
  +\eta^{-1}\log(\delta^{-1}),
\end{align}
where the last inequality uses \cref{lem:variance_square}. Putting everything together and setting $\eta=1/8$, we conclude that with probability at
least $1-2\delta$, for all $H'\in\brk{H}$
\begin{align}
  Z_{H'}\leq{} \frac{1}{2}Z_{H'} + 16\log(2\delta^{-1}) + 4,
\end{align}
which yields the result after rearranging.
  
\end{proof}

\arxiv{\subsection{Proof of \creftitle{thm:variance_lower}}}
\label{sec:stoch_lower_proof}

\begin{proof}[\pfref{thm:variance_lower}]
          \newcommand{\pia}{\pi\ind{\afrak}}%
        \newcommand{\pib}{\pi\ind{\bfrak}}%
        \newcommand{\ra}{r\ind{\afrak}}%
        \newcommand{\rb}{r\ind{\bfrak}}%
              For this proof, we consider a slightly more general online imitation
      learning model in which the learner is
        allowed to select $a_h\ind{i}$ based on the sequence
        $(x\ind{i}_1,a\ind{i}_1,\astari_1),\ldots,(x\ind{i}_{h-1},a\ind{i}_{h-1},\astari_{h-1}),
        (x_h\ind{i}, \astari_h)$ at training time; this subsumes the offline imitation learning model.
        Let $H\in\bbN$, $n\in\bbN$, and $\sigma^2\in\brk{H,H^2}$ be
        given. Fix a parameter $K\in\bbN$
        such that $H/K$ is an integer and a parameter
        $\Delta\in(0,1/2)$ be fixed; both parameters will be chosen at
        the end of the proof.

  We first specify the dynamics for the reward-free MDP $\Mstar$ and
  the policy class $\Pi$. Let $\cA=\crl{\afrak,\bfrak}$, and let
        $\cX=\crl*{\sfrak,\afrak,\bfrak}$. We consider the following
        (deterministic) dynamics. For $h\in\cH\ldef{}\brk*{1, K+1, 2K+1,\ldots}$, 
        always the state is always $x_h=\sfrak$. For such a step
        $h\in\cH$, choosing $a_h=\afrak$ sets
        $x_h=\afrak$ for the next $K-1$ steps until returning to
        $\sfrak$ at time $h+K$, and choosing $a_h=\bfrak$ sets
        $x_h=\bfrak$ until returning to $\sfrak$ at time $h+K$ (that is, the action has no
        effect for $h\notin\cH$).

        We consider a class $\Pi=\crl{\pia,\pib}$ consisting of two experts $\pia$ and $\pib$. $\pia$ sets
        $\pia_h(\afrak\mid{}\sfrak)=\frac{1}{2}+\Delta$ for $h\in\cH$
        and sets $\pi_h(x)=\afrak$ for all $h\notin\cH$ and $x\in\cX$. Meanwhile, $\pib$ sets
        $\pib(\bfrak\mid{}\sfrak)=\frac{1}{2}+\Delta$ for $h\in\cH$
        and sets $\pi_h(x)=\afrak$ for all $h\notin\cH$ and $x\in\cX$.

        We consider two choices of reward function, $\ra$ and
        $\rb$. $\ra$ sets $\ra_h(\sfrak,\afrak)=1$ and
        $\ra_h(\sfrak,\bfrak)=0$ for $h\in\cH$, and sets
        $\ra_h(\afrak,\cdot)=1$ and $\ra_h(\bfrak,\cdot)=0$ for
        $h\notin\cH$. Meanwhile, $\rb$ sets $\rb_h(\sfrak,\bfrak)=1$
        and $\rb_h(\sfrak,\afrak)=0$ for $h\in\cH$ and sets
        $\rb_h(\afrak,\cdot)=0$ and $\rb_h(\bfrak,\cdot)=1$ for $h\notin\cH$.

Let a \emph{problem instance} $\cI=(\Mstar,r,\pistar)$ refer to a tuple
consisting of the reward-free MDP $\Mstar$, a reward function
$r=\crl*{r_h}_{h=1}^{H}$, and an expert policy $\pistar$. We consider
four problem instances altogether:
$(\Mstar, \ra,\pia)$,  $(\Mstar, \rb,\pia)$, $(\Mstar, \ra,\pib)$, and $(\Mstar, \rb,\pib)$.

Let
  $\bbP\ind{\afrak}$ denote the law of $o\ind{1},\ldots,o\ind{n}$ when
  $\afrak$ when we execute the algorithm on the underlying instance,
  and likewise for $\bfrak$ (recall that the law does not depend on
  the choice of reward function, since this is not observed); let $\En\ind{\afrak}\brk*{\cdot}$ and
  $\En^{\bfrak}\brk*{\cdot}$ denote the corresponding expectations. In addition, for any policy $\pi$, let
  $\bbP^{\pia\mid{}\pi}$ denote the law of
  $o=(x_1,a_1,\astar_1),\ldots,(x_H,a_H,\astar_H)$ when we execute
  $\pi$ in the online imitation learning framework and the expert
  policy is $\pistar=\pia$, and define $\bbP^{\pib\mid{}\pi}$ analogously.

  We begin by lower bounding the regret. Consider a fixed policy
  $\pihat=\crl*{\pihat_h:\cX\to\Delta(\cX)}$, and let
        $\pibar(a)\ldef{}\frac{1}{\abs*{\cH}}\sum_{h\in\cH}\pihat_h(a\mid{}\sfrak)$. Observe
        that for instance $(\Mstar,\ra,\pia)$, 
        we have
        \begin{align}
          J_{\ra}(\pia) - J_{\ra}(\pihat) =
          \prn*{\frac{1}{2}+\Delta}H-K\sum_{h\in\cH}\pihat_h(\afrak\mid{}\sfrak)
          =\prn*{\frac{1}{2}+\Delta}H-H\pibar(a)
        \end{align}
        and for instance $(\Mstar,\rb,\pia)$, 
        \begin{align}
          J_{\rb}(\pia) - J_{\rb}(\pihat) 
          =\prn*{\frac{1}{2}-\Delta}H - K\sum_{h\in\cH}\pihat_h(\bfrak\mid{}\sfrak)
          = H\pibar(a)- \prn*{\frac{1}{2}+\Delta}H.
        \end{align}
        Likewise, for instance $(\Mstar,\rb,\pib)$, we have
        \begin{align}
          J_{\rb}(\pib) - J_{\rb}(\pihat) =
          \prn*{\frac{1}{2}+\Delta}H-K\sum_{h\in\cH}\pihat_h(\bfrak\mid\sfrak)
          = \pibar(\afrak)H - \prn*{\frac{1}{2}-\Delta}H
        \end{align}
        and for instance $(\Mstar,\ra,\pib)$, 
        \begin{align}
          J_{\ra}(\pib) - J_{\ra}(\pihat) = \prn*{\frac{1}{2}-\Delta}H
          -K\sum_{h\in\cH}\pihat_h(\afrak\mid\sfrak)
          = \prn*{\frac{1}{2}-\Delta}H - \pibar(\afrak)H.
        \end{align}
        We conclude that for any $\veps>0$, since the law of the
        dataset is independent of the choice of the reward function,
        \begin{align}
          &          \max\crl*{\bbP\ind{\afrak}\brk*{J_{\ra}(\pia) -
          J_{\ra}(\pihat)\geq\veps{}H}, \bbP\ind{\afrak}\brk*{J_{\rb}(\pia) -
          J_{\rb}(\pihat)\geq\veps{}H},
          \bbP\ind{\bfrak}\brk*{J_{\rb}(\pib) -
          J_{\rb}(\pihat)\geq\veps{}H}, \bbP\ind{\bfrak}\brk*{J_{\ra}(\pib) -
          J_{\ra}(\pihat)\geq\veps{}H}
                         }\\
          &\geq{}          \max\left\{
            \begin{aligned}
              &\bbP\ind{\afrak}\brk*{
                \prn*{\frac{1}{2}+\Delta}H-\pibar(\afrak)H
                \geq\veps{}H} ,
              \bbP\ind{\afrak}\brk*{\pibar(\afrak)H-\prn*{\frac{1}{2}+\Delta}H\geq\veps{}H},\\
              &\bbP\ind{\bfrak}\brk*{\pibar(\afrak)H -
                \prn*{\frac{1}{2}-\Delta}H \geq\veps{}H},
              \bbP\ind{\bfrak}\brk*{\prn*{\frac{1}{2}-\Delta}H
                -\pibar(\afrak)H \geq\veps{}H}
            \end{aligned}
            \right\}\\
                    &\geq{}          \frac{1}{2}\max\crl*{\bbP\ind{\afrak}\brk*{          \abs*{\prn*{\frac{1}{2}+\Delta}-\pibar(\afrak)}H \geq\veps{}H},
          \bbP\ind{\bfrak}\brk*{\abs*{\pibar(\afrak) - \prn*{\frac{1}{2}-\Delta}}H
                      \geq\veps{}H}
                      }\\
          &=          \frac{1}{2}\max\crl*{\bbP\ind{\afrak}\brk*{          \abs*{\prn*{\frac{1}{2}+\Delta}-\pibar(\afrak)} \geq\veps{}},
          \bbP\ind{\bfrak}\brk*{\abs*{\pibar(\afrak) - \prn*{\frac{1}{2}-\Delta}}
                      \geq\veps{}}
                      }\\
                              &\geq{}          \frac{1}{4}\prn*{\bbP\ind{\afrak}\brk*{          \abs*{\prn*{\frac{1}{2}+\Delta}-\pibar(\afrak)} \geq\veps}
                                +\bbP\ind{\bfrak}\brk*{\abs*{\pibar(\afrak) - \prn*{\frac{1}{2}-\Delta}}
                      \geq\veps}
                                }\\
          &\geq{}          \frac{1}{4}\prn*{ 1- \bbP\ind{\afrak}\brk*{          \abs*{\prn*{\frac{1}{2}+\Delta}-\pibar(\afrak)} \leq\veps}
                                +\bbP\ind{\bfrak}\brk*{\abs*{\pibar(\afrak) - \prn*{\frac{1}{2}-\Delta}}
                      \geq\veps}
            }\\
          &\geq{}          \frac{1}{4}\prn*{ 1- \bbP\ind{\afrak}\brk*{          \abs*{\prn*{\frac{1}{2}-\Delta}-\pibar(\afrak)} \geq\veps}
                      +\bbP\ind{\bfrak}\brk*{\abs*{\pibar(\afrak) - \prn*{\frac{1}{2}-\Delta}}
                      \geq\veps}
            }\\
                    &\geq{}          \frac{1}{4}\prn*{ 1- \Dtv{\bbP\ind{\afrak}}{\bbP\ind{\bfrak}}
          },
        \end{align}
        where the second inequality uses the union bound
        (i.e. $\bbP\brk*{\abs{x}\geq\veps} = \bbP\brk*{x\geq\veps\cup
          -x\geq\veps}
        \leq{} \bbP\brk{x\geq\veps} + \bbP\brk{-x\geq\veps}$), and the second-to-last inequality holds as long as
        $\veps<\Delta$.
         In particular, this implies that
        \begin{align}
          \max\left\{
            \begin{aligned}
              &\bbP\ind{\afrak}\brk*{J_{\ra}(\pia) -
                J_{\ra}(\pihat)\geq\frac{\Delta{}H}{2}},
              \bbP\ind{\afrak}\brk*{J_{\rb}(\pia) -
                J_{\rb}(\pihat)\geq\frac{\Delta{}H}{2}},\\
              &\bbP\ind{\bfrak}\brk*{J_{\rb}(\pib) -
                J_{\rb}(\pihat)\geq\frac{\Delta{}H}{2}},
              \bbP\ind{\bfrak}\brk*{J_{\ra}(\pib) -
                J_{\ra}(\pihat)\geq\frac{\Delta{}H}{2}}
            \end{aligned}
            \right\}
              \geq{}  \frac{1}{4}\prn*{ 1- \Dtv{\bbP\ind{\afrak}}{\bbP\ind{\bfrak}}}.
        \end{align}

          Next, using Lemma D.2 of \citet{foster2024online}, we can bound
  \begin{align}
    \Dtvs{\bbP\ind{\afrak}}{\bbP\ind{\bfrak}}
    \leq{}\Dhels{\bbP\ind{\afrak}}{\bbP\ind{\bfrak}}
    \leq{}7\En\ind{\afrak}\brk*{\sum_{i=1}^{n}
    \Dhels{\bbP^{\pia\mid{}\pi\ind{i}}}{\bbP^{\pib\mid{}\pi\ind{i}}}
    }.
  \end{align}
  Observe that for a given episode $i$, regardless of how the policy $\pi\ind{i}$ is selected:
  \begin{itemize}
  \item The feedback for steps $h\notin\cH$ is identical under $\bbP^{\afrak}$ and $\bbP^{\bfrak}$.
  \item The feedback at step $h\in\cH$ differs only in the
    distribution of $\astar_h\sim{}\pia(\sfrak)$ versus
    $\astar_h\sim\pib(\sfrak)$. This is equivalently to
    $\Ber(\nicefrac{1}{2}+\Delta)$ feedback versus
    $\Ber(\nicefrac{1}{2}-\Delta)$ feedback.
  \end{itemize}
  As a result, using Lemma D.2 of \citet{foster2024online} once more,
  we have
  \begin{align}
    \Dhels{\bbP^{\pia\mid{}\pi\ind{i}}}{\bbP^{\pib\mid{}\pi\ind{i}}}
    \leq{}7\sum_{h\in\cH}\Dhels{\Ber(\nicefrac{1}{2}+\Delta)}{\Ber(\nicefrac{1}{2}-\Delta)}
  \end{align}
        Since $\Delta\in(0,1/2)$, we have
        $\Dhels{\Ber(\nicefrac{1}{2}+\Delta)}{\Ber(\nicefrac{1}{2}-\Delta)}\leq{}\bigoh(\Delta^2)$
        (e.g., \citet[Lemma A.7]{foster2021statistical}). We conclude that
        \begin{align}
          \Dtvs{\bbP\ind{\afrak}}{\bbP\ind{\bfrak}}
          \leq\bigoh\prn*{n\cdot{}\abs{\cH}\cdot{}\Delta^2}
          = \bigoh\prn*{n\cdot{}\frac{H}{K}\cdot{}\Delta^2}
        \end{align}
        We set $\Delta^2=c\cdot\frac{K}{Hn}$ for $c>0$
        sufficiently small so that $
        \Dtvs{\bbP\ind{\afrak}}{\bbP\ind{\bfrak}}\leq{}1/2$, and
        conclude that on at least one of the four problem instances,
        the algorithm must have
        \begin{align}
          J(\pistar) - J(\pihat) \geq{} \bigom(\Delta{}H) = \bigom\prn*{\sqrt{\frac{HK}{n}}}
        \end{align}
        with probability at least $1/8$.

        Finally, we compute the variance and choose the parameter $K$. Observe that for all of the choices of expert policy
        and reward function described above, we have
        $\Qstar_h(x_h,\pistar(x_h))-\Qstar_h(x_h,a)=0$ for
        $h\notin\cH$, while
        \begin{align}
          \abs*{\Qstar_h(x_h,\pistar(x_h))-\Qstar_h(x_h,a)}\leq{}K
        \end{align}
        for $h\in\cH$, so we can take $\mutil\leq{}K$.
        Consequently, we have
        \begin{align}
\sigmastar^2=          \sum_{h=1}^{H}\En^{\pistar}\brk*{(\Qstar_h(x_h,\pistar(x_h))-\Qstar_h(x_h,a_h))^2}
          &\leq{}
          \sum_{h\in\cH}\En^{\pistar}\brk*{(\Qstar_h(\sfrak,\pistar(\sfrak))-\Qstar_h(\sfrak,a_h))^2}\\
          &\leq{} \frac{H}{K}\cdot{}K^2=HK.
        \end{align}
        We conclude by setting $K=\sigma^2/H$, which is admissible for
        $\sigma^2\in\brk*{H, H^2}$ (up to a loss in absolute
        constants, we can assume that $\sigma^2/H$ is an integer
        without loss of generality).
\end{proof}

\subsection{Additional Proofs}

\begin{proof}[\pfref{prop:sigma_ltv}]
We have
  \begin{align}
    \sigmastar^2 = \sum_{h=1}^{H}\En^{\pistar}\brk*{(\Qstar_h(x_h,a_h)-\Vstar_h(x_h))^2}.
  \end{align}
  Note that
  $\Qstar_h(x_h,a_h)=\En\brk*{r_h+\Vstar_h(x_{h+1})\mid{}x_h,a_h}$. Hence,
  by
  Jensen's inequality we can bound
  \begin{align}
    \En^{\pistar}\brk*{(\Qstar_h(x_h,a_h)-\Vstar_h(x_h))^2}
    &\leq{}\En^{\pistar}\brk*{\En\brk*{(r_h+\Vstar_{h+1}(x_{h+1})-\Vstar_h(x_h))^2\mid{}x_h,a_h}}\\
    &=\En^{\pistar}\brk*{\En^{\pistar}\brk*{(r_h+\Vstar_{h+1}(x_{h+1})-\Vstar_h(x_h))^2\mid{}x_h}}\\
    &=\En^{\pistar}\brk*{\Var^{\pistar}\brk*{r_h+\Vstar_{h+1}(x_{h+1})\mid{}x_h}},
  \end{align}
  so that
  \begin{align}
    \sigmastar^2
    &\leq{}
    \En^{\pistar}\brk*{\sum_{h=1}^{H}\Var^{\pistar}\brk*{r_h+\Vstar_{h+1}(x_{h+1})\mid{}x_h}}\\
    &\leq{}
    \En^{\pistar}\brk*{\sum_{h=0}^{H}\Var^{\pistar}\brk*{r_h+\Vstar_{h+1}(x_{h+1})\mid{}x_h}}
    =\Var^{\pistar}\brk*{\sum_{h=1}^{H}r_h}\leq{}R^2,
  \end{align}
  where the second to last inequality follows from \cref{lem:ltv}.
  
\end{proof}

\arxiv{
\section{Proofs from \creftitle{sec:online}}
\label{sec:proofs_online}
\arxiv{\subsection{Proof of \creftitle{prop:benefits_representation}}}

\begin{proof}[\pfref{prop:benefits_representation}]
Let $N\in\bbN$ be given. We set $\cX=\crl{\xfrak,\yfrak,\zfrak}$,
$\cA=\brk*{N}\cup\crl*{\afrak,\bfrak}$, and $H=2$. We consider a family of
problem instances $\crl*{(M, \pistar, r)}$ indexed by a subset $S\subset\brk{N}$ with $\abs{\cS}=N/2$ and
an action $\astar\in\crl{\afrak,\bfrak}$ as follows. For a given pair $(S,\astar)$:
\begin{itemize}
\item The dynamics are as
  follows. We have $x_1=\xfrak$ deterministically. For simplicity, we
  assume that only actions in $\brk{N}$ are available at step $h=1$. If $a_1\in{}S$, then
  $x_2=\yfrak$, otherwise $x_2=\zfrak$.
\item The reward at step $1$ is $r_1(\cdot,\cdot)=0$, and the reward
  at step $2$ is given by $r_2(\yfrak,\cdot)=1$ and $r_2(\zfrak,a) = \indic\crl*{a=\astar}$.
\item The expert $\pistar$ sets $\pistar(\xfrak)=\unif(S)$,
  $\pistar(\yfrak)=\unif(\crl{\afrak,\bfrak})$, and
  $\pistar(\zfrak)=\astar$.
\end{itemize}
Let us refer to the problem instance above as $\cI_{S,\astar}=\crl*{(M_{S,\astar},\pistar_{S,\astar},r_{S,\astar})}$, and
let $J_{S,\astar}(\pi)$ denote the expected reward under this instance.

\paragraph{Upper bound for online imitation learning}
Consider the algorithm that sets $\pihat\ind{i}_1=\unif(\brk{N})$ for
each $i\in\brk{n}$. If we play
for $n=\log_2(\delta^{-1})$ episodes, we will see $x_2=\zfrak$ in at least
one episode with probability at least $1-\delta$, at which point we
will observe $\astar=\pistar(\zfrak)$, and we can return the policy $\pihat$
that sets 
$\pihat_1(\xfrak)=\unif(\brk{N})$ and $\pihat_2(\cdot)=\astar$; this
policy has zero regret.

Note that if we define $\Pi=\crl*{\pistar_{S,\astar}}_{\abs{S}=N/2,\astar\in\crl{\afrak,\bfrak}}$
as the natural policy class for the family of instances above, then
the algorithm above is equivalent to running \dagger with the online learning algorithm that, at iteration
$i$, sets
\[
\pihat_h\ind{i}=\unif\prn*{\crl*{\pi\in\Pi_h\mid{}\pi_2(\zfrak)=\astari[j]_2\;\forall{}j<i:x_2\ind{j}=\zfrak}},
\]
and choosing the final policy as $\pihat=\pihat\ind{i}$ for any
iteration $i$ after $x_2=\zfrak$ is encountered.

\paragraph{Lower bound for offline imitation learning}
Consider the offline imitation learning setting. When the underlying
instance is $\cI_{S,\astar}$, we observe a dataset $\cD$ consisting of
$n$ trajectories generated by executing
$\pistar_{S,\astar}$ in $M_{S,\astar}$. The trajectories never visit
the state $\zfrak$, so $\astar$ is not identifiable, and we can do no
better than guessing $\astar$ uniformly in this state. Letting
$\En_{S,\astar}$ denote the law of $\cD$ under instance
$\cI_{S,\astar}$, we have
$J_{S,\astar}(\pihat)=\pihat_1(S\mid\xfrak)+\pihat_1(S^{c}\mid{}\xfrak)\pihat_2(\astar\mid{}\zfrak)$. It
follows that for any $S$, since the law of $\cD$ does not depend on $\astar$,
\begin{align}
  \max_{\astar\in\crl{\afrak,\bfrak}}\En_{S,\astar}\brk*{J_{S,\astar}(\pistar_{S,\astar})-J_{S,\astar}(\pihat)}
  &\geq{}
  \En_{S,\afrak}\brk*{1-\pihat_1(S\mid{}\xfrak)-\pihat_1(S^{c}\mid{}\xfrak)/2}\\
  &= \frac{1}{2}  \En_{S,\afrak}\brk*{1-\pihat_1(S\mid{}\xfrak)}.
\end{align}
Note that if $\pihat$ is proper in the sense that
$\pihat_1(\cdot{}\xfrak)=\unif(\Shat)$ for some $\Shat\subset\brk{N}$
with $\abs{\Shat}=N/2$, we have
$1-\pihat_1(S\mid{}\xfrak)=1-\frac{2}{N}\abs{\Shat\cup{}S}$.
We conclude that if
$\En_{S,\astar}\brk*{J_{S,\astar}(\pistar_{S,\astar})-J_{S,\astar}(\pihat)}\leq{}
\frac{1}{8}$, then
$\En_{S,\astar}\brk[\big]{\abs{\Shat\cap{}S}}\geq{}\frac{3}{8}N$. From
here, it follows from standard lower bounds for discrete distribution
estimation (e.g., \citet{canonne2020short}) that any such estimator $\Shat$
requires $n=\bigom(N)$ samples for a worst-case choice of $S$.
\end{proof}

\arxiv{\subsection{Background and Proof for
    \creftitle{prop:benefits_value}}}
\label{sec:value}

Before proving \cref{prop:benefits_value}, we first formally introduce
the value-based feedback model we consider.

\paragraph{Background on value-based feedback}

We can consider two models for imitation learning with value-based feedback, inspired by \citet{ross2014reinforcement,sun2017deeply}.

\begin{itemize}
\item \textbf{Offline setting.}
In the offline setting, we receive $n$ trajectories
$(x_1,a_1),\ldots,(x_H,a_H)$ generated by executing $\pistar$ in $\Mstar$. For
each state in each such trajectory, we observe $\Astar_h(x_h,\cdot)$, where
$\Astar_h(x,a)=\Qstar_h(x,\pistar(x))-\Qstar_h(x,a)$ is the advantage
function for $\pistar$.\footnote{Our results are not sensitive to
  whether the learner observes the advantage function or the value
  function itself; we choose this formulation for concreteness.}\loose
  
\item \textbf{Online setting.} The online setting is as follows. There are $n$ at
episodes. For each episode $i$, we execute a policy $\pihat\ind{i}$, and receive
a ``trajectory'' $o\ind{i} = (x\ind{i}_1,a\ind{i}_1,\astari_1),\ldots,(x\ind{i}_H,a\ind{i}_H,\astari_H)$, where
$a\ind{i}_h\sim\pihat\ind{i}(x\ind{i}_h)$ and
$\astari_h\sim\pistar(x\ind{i}_h)$. In addition, for
each state in the trajectory, we observe $\Astar_h(x_h,\cdot)$. After the $n$ episodes conclude, we
output a final policy $\pihat$ on which performance is evaluated.
\end{itemize}

\begin{proof}[\pfref{prop:benefits_value}]
  We only sketch the proof, as it is quite similar to
  \cref{prop:benefits_representation}. Let $N\in\bbN$ be given. We set
  $\cS=\crl{\xfrak,\yfrak,\zfrak}$, 
  $\cA=\brk*{N}$, and $H=2$. We consider a class of problem instances
  $\crl*{(M,\pistar,r)}$ indexed by sets $S_1,S_2\subset\brk{N}$ with
$\abs{\cS_1}=\abs{S_2}=N/2$ defined as follows.
 For a given pair $(S_1,S_2)$:
\begin{itemize}
\item The dynamics are as
  follows. We have $x_1=\xfrak$ deterministically. If $a_1\in{}S_1$, then
  $x_2=\yfrak$, otherwise $x_2=\zfrak$.
\item The reward function sets $r_1(\xfrak,\cdot)=0$,
  $r_2(\yfrak,\cdot)=1$, and $r_2(\zfrak,a) = \indic\crl*{a\in{}S_2}$.
\item The expert $\pistar$ sets $\pistar(\xfrak)=\unif(S_1)$,
  $\pistar(\zfrak)=\unif(S_2)$, and $\pistar(\yfrak)=\unif(\brk{N})$
\end{itemize}
We refer to the problem instance above as $\cI_{S_1,S_2}=\prn{M_{S_1,S_2},\pistar_{S_1,S_2},r_{S_1,S_2}}$, and
let $J_{S_1,S_2}(\pi)$ denote the expected reward under this instance.

\paragraph{Upper bound for online imitation learning with value-based
  feedback}
Consider an algorithm that sets $\pihat\ind{i}_1=\unif(\brk{N})$ for
each $i\in\brk{n}$. If we play
for $n=\log_2(\delta^{-1})$ episodes, we will see $x_2=\zfrak$ in at least
one episode with probability at least $1-\delta$, at which point we
will observe $\Astar_2(\zfrak,\cdot)$.
We can pick an arbitrary action with $\Astar_2(\zfrak,\cdot)=0$ and
return the policy $\pihat$
that sets $\pihat_1(\xfrak)=\unif(\brk{N})$ and $\pihat_2(\cdot)=a$; this
policy has zero regret.

Note that if we define $\Pi=\crl*{\pistar_{S_1,S_2}}_{\abs{S_1}=\abs{S_2}=N/2}$
as the natural policy class for the family of instances above, then
the algorithm above is equivalent to running \aggrevate with the online learning algorithm that, at iteration
$i$, sets
\[
\pihat_h\ind{i}=\unif\prn[\big]{\crl[\big]{\pi\in\Pi_h\mid{}\pi_2(\zfrak)\in\argmax_{a}\Astar_2(x_2\ind{j},a)\;\forall{}j<i:x_2\ind{j}=\zfrak}},
\]
and choosing the final policy as $\pihat=\pihat\ind{i}$ for any
iteration $i$ after $x_2=\zfrak$ is encountered.

\paragraph{Lower bound for offline imitation learning}
Consider the offline imitation learning setting. When the underlying
instance is $\cI_{S_1,S_1}$, we observe a dataset $\cD$ consisting of
$n$ trajectories generated by executing
$\pistar_{S_1,S_2}$ in $M_{S_1,S_2}$. The trajectories never visit
the state $\zfrak$, so $S_2$ is not identifiable, and we can do no
better than guessing uniformly in this state. Letting
$\En_{S_1,S_2}$ denote the law of $\cD$ under instance
$\cI_{S_1,S_2}$, we have
$J_{S_1,S_2}(\pihat)=\pihat_1(S_1\mid\xfrak)+\pihat_1(S_1^{c}\mid{}\xfrak)\pihat_2(S_2\mid{}\zfrak)$. It
follows that for any $(S_1,S_2)$, since the law of $\cD$ does not depend on $S_2$,
\begin{align}
  \max_{S_2:\abs{S_2}=N/2}\En_{S_1,S_2}\brk*{J_{S_1,S_2}(\pistar_{S_1,S_2})-J_{S_1,S_2}(\pihat)}
  &\geq{}
  \En_{S_1}\brk*{1-\pihat_1(S_1\mid{}\xfrak)-\pihat_1(S_1^{c}\mid{}\xfrak)/2}\\
  &= \frac{1}{2}  \En_{S_1}\brk*{1-\pihat_1(S_1\mid{}\xfrak)},
\end{align}
with the convention that $\En_{S_1}$ denotes the law of $\cD$ for an
arbitrary choice of $\cS_2$. If $\pihat$ is proper in the sense that
$\pihat_1(\cdot{}\xfrak)=\unif(\wh{S_1})$ for some $\wh{S_1}\subset\brk{N}$
with $\abs{\wh{S_1}}=N/2$, we have
$1-\pihat_1(S_1\mid{}\xfrak)=1-\frac{2}{N}\abs{\wh{S_1}\cup{}S_1}$. We conclude that if
$\En_{S_1,S_2}\brk*{J_{S_1,S_2}(\pistar_{S_1,S_2})-J_{S_1,S_2}(\pihat)}\leq{}
\frac{1}{8}$, then
$\En_{S_1,}\brk[\big]{\abs{\wh{S_1}\cap{}S_1}}\geq{}\frac{3}{8}N$. From
here, it follows from standard lower bounds for discrete distribution
estimation (e.g., \citet{canonne2020short}) that any such estimator $\Shat$
requires $n=\bigom(N)$ samples for a worst-case choice of $S$.

\paragraph{Lower bound for online imitation learning without value-based-feedback}
Consider an online imitation learning algorithm that does not receive value-based
feedback. We claim, via an argument similar to the one above, that if
the algorithm that ensures
\[
\En_{S_1,S_2}\brk*{J_{S_1,S_2}(\pistar_{S_1,S_2})-J_{S_1,S_2}(\pihat)} \leq c
\]
on all instances
for a sufficiently small absolute constant $c$, then it can be used to
produce estimators $\wh{S_1},\wh{S_2}\subset\brk{N}$ such that with constant
probability, either 
$\abs[\big]{\wh{S_1}\cap{}S_1}\geq{}\frac{3}{8}N$ or
$\abs[\big]{\wh{S_2}\cap{}S_2}\geq{}\frac{3}{8}N$.
From
here, it should follow from standard arguments that this requires $n=\bigom(N)$ samples for a worst-case choice of $S_1$ and
$S_2$.\loose

\end{proof}

\arxiv{\subsection{Proof of \creftitle{prop:benefits_exploration}}}

\begin{proof}[\pfref{prop:benefits_exploration}]%
      \newcommand{\pia}{\pi\ind{\afrak}}%
    \newcommand{\pib}{\pi\ind{\bfrak}}%
    We consider a slight variant of the construction from
    \cref{prop:lb_deterministic}. Let $n$ and $H$ be given, and let $\Delta\in(0,1/3)$ be a parameter
  whose value will be chosen later. We first specify the dynamics for $\Mstar$. Set $\cX=\crl*{\xfrak,\yfrak,\zfrak}$
  and $\cA=\crl*{\afrak,\bfrak,\cfrak}$. The initial state distribution sets
  $P_0(\xfrak)=1-\Delta$ and $P_0(\yfrak)=\Delta$. The transition dynamics are:
  \begin{itemize}
  \item
    $P_h(x'=\cdot\mid{}x=\xfrak,a)=\indic_{\xfrak}\cdot\indic\crl{a\in\crl{\afrak,\bfrak}} +
    \indic_{\zfrak}\cdot\indic\crl{a=\cfrak}$.
  \item $P_h(x'\mid{}x,a)=\indic\crl*{x'=x}$ for
    $x\in\crl{\yfrak,\zfrak}$. 
  \end{itemize}
  In other words, $\yfrak$ and $\zfrak$ are terminal states. For
  state $\xfrak$, actions $\afrak$ and $\bfrak$ are self-loops, but action
  $\cfrak$ transitions to $\zfrak$.

  The
  expert policies are $\pi\ind{\afrak}$, which sets
  $\pi_h\ind{\afrak}(x)=\afrak$ for all $h$ and $x\in\cX$, and
  $\pi\ind{\bfrak}$, which sets $\pi_h\ind{\bfrak}(\xfrak)=\afrak$ and sets
  $\pi_h\ind{\bfrak}(\yfrak)=\pi_h\ind{\bfrak}(\zfrak)=\bfrak$. We
  have $\Pi=\crl*{\pi\ind{\afrak},\pi\ind{\bfrak}}$.

  We consider two problem instances for the lower bound,
  $\cI\ind{\afrak}=(\Mstar,\pia,r\ind{\afrak})$, and $\cI\ind{\bfrak}=(\Mstar,\pib,r\ind{\bfrak})$.
  For problem instance $\cI\ind{\afrak}$,
  the expert policy is $\pia$. We set
  $r_h\ind{\afrak}(\xfrak,\cdot)=r_h\ind{\afrak}(\zfrak,\cdot)=0$,
  $r_h\ind{\afrak}(\yfrak,a)=\indic\crl*{a=\afrak}$ for all $h$. On the
  other hand, for problem instance $\cI\ind{\bfrak}$,
  the expert policy is $\pib$. We set
  $r_h\ind{\bfrak}(\xfrak,\cdot)=r_h\ind{\bfrak}(\zfrak,\cdot)=0$,
  $r_h\ind{\bfrak}(\yfrak,a)=\indic\crl*{a=\bfrak}$ for all $h$. Note
  that both of these choices for the reward function satisfy $\mu=1$,
  and that $\pia$ and $\pib$ are optimal policies for the respective
  instances. Let $J\ind{\afrak}$ denote the expected reward function
  for instance $\afrak$, and likewise for $\bfrak$.\loose

\paragraph{Upper bound on online sample complexity}  

We consider the following online algorithm. For episodes $t=1,\ldots,$:
\begin{itemize}
\item If $x_1\neq{} \xfrak$, proceed to the next episode.
\item If $x_1=\xfrak$, take action $\cfrak$, and observe
  $a_2=\pistar(\zfrak)$. If $a_2=\afrak$, return $\pihat=\pia$, and if
  $a_2=\bfrak$, return $\pihat=\pib$.
\end{itemize}
For any $\Delta\leq{}e^{-1}$, this algorithm will terminate after
$\log(1/\delta)$ episodes with probability at least $1-\delta$, and
whenever the algorithm terminates, it is clear that
$\pihat=\pistar$. In particular, this leads to zero regret for
\emph{any choice of reward function.}

\paragraph{Lower bound on offline sample complexity}
By setting $\Delta\propto{}\frac{1}{n}$, an argument essentially
identical to the proof of \cref{prop:lb_deterministic} shows that any
offline imitation learning algorithm must have
\begin{align}
      \max\crl*{
    \En\ind{\afrak}\brk*{J\ind{\afrak}(\pia) -     J\ind{\afrak}(\pihat)},
    \En\ind{\bfrak}\brk*{J\ind{\bfrak}(\pib) -
  J\ind{\bfrak}(\pihat)}}
  \approxgeq \Delta{}H \approxgeq{} \frac{H}{n}.
\end{align}
For the sake of avoiding repetition, we omit the details. Finally, we observe that since neither policy in $\Pi$ takes the
action $\cfrak$, \dagger---when equipped with any online learning algorithm
that predicts from a mixture of policies in $\Pi$, such as in
\cref{prop:dagger_finite})---will never take the action $\cfrak$,
and hence is subject to the $\frac{H}{n}$ lower bound from
\cref{prop:lb_deterministic} as well.
  
\end{proof}

}

\part{Additional Results}

\section{Additional Lower Bounds}
\label{sec:additional_lower}
This section contains additional lower bounds that complement the
results in \cref{sec:main,sec:stochastic}:
\begin{itemize}
\item \cref{sec:lb_weaker} shows that the conclusion of
  \cref{thm:lb_active} continues to hold even for online imitation
  learning in an \emph{active} sample complexity framework.
\item \cref{sec:instance_dependent} presents an instance-dependent
  lower bound for stochastic experts, complementing the minimax lower
  bound in \cref{thm:variance_lower}.
\item \cref{sec:converse} investigates the extent to which
  \cref{thm:bc_deterministic,thm:bc_stochastic} are tight on a per-policy basis.
\end{itemize}

\subsection{Lower Bounds for Online Imitation Learning in Active
  Interaction Model}
\label{sec:lb_weaker}
For the online imitation learning setting introduced in
\cref{sec:background}, we measure sample complexity in terms of the
total number of episodes of online interaction, and expert feedback is
available in every episode. In this section, we consider a more
permissive sample complexity framework inspired by active learning
\citep{hanneke2014theory,sekhari2024selective}. Here, as in
\cref{sec:background}, the learner interacts with the underlying MDP
$\Mstar$ through multiple episodes. At each episode $i\in\brk{n}$ the learner
executes a policy
$\pi\ind{i}=\crl*{\pi\ind{i}_h:\cX\to\Delta(\cA)}_{h=1}^{H}$, and at
any step $h$ in the episode, they can decide whether to query the expert
for an action $\astar_h\sim{}\pistar_h(x_h)$ at the current state
$x_h$. We set $M\ind{i}=1$ if the learner queries the expert at any point
during episode $i$ and set $M\ind{i}=0$ otherwise, and define the
\emph{active sample complexity} $M\ldef{}\sum_{i=1}^{n}M\ind{i}$ as the
total number of queries.

It is clear that the active sample complexity satisfies $m\leq{}n$, and in some
cases we might hope for it to be much smaller than the total number of
episodes, at least for a well-designed algorithm. While this can
indeed be the case for MDPs that satisfies (fairly strong)
distributional assumptions \citep{sekhari2024selective}, we will show
that the lower bound in \cref{prop:lb_deterministic} continues to hold in
this framework (up to a logarithmic factor), meaning that online
interaction in the active sample complexity framework cannot improve
over \loglossbc in general.\loose
    \begin{theorem}[Lower bound for deterministic experts in active
      sample complexity framework]
      \label{thm:lb_active}
        For any $m\in\bbN$ and $H\in\bbN$, there exists a reward-free
        MDP $\Mstar$ with $\abs{\cX}=\abs{\cA}=m+1$, a class of reward functions $\cR$ with
        $\abs*{\cR}=m+1$, and a class of
  deterministic policies $\Pi$ with $\log\abs{\Pi}=\log(m)$ with the following
  property. For any online imitation learning algorithm in the active
  sample complexity framework that has sample complexity
  $\En\brk*{M}\leq{}c\cdot{}m$ for an absolute constant $c>0$, there exists a deterministic reward
  function $r=\crl*{r_h}_{h=1}^{H}$ with $r_h\in\brk{0,1}$ and
  (optimal) expert policy $\pistar\in\Pi$
  with $\murec=1$ such that the expected suboptimality is lower bounded as\loose
  \begin{align}
    \En\brk*{J(\pistar)-J(\pihat)}
    \geq{} c\cdot\frac{H}{m}
  \end{align}
  for an absolute constant $c>0$.
  In addition, the dynamics, rewards, and expert policies are all
  stationary.\loose
\end{theorem}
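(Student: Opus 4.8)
The plan is to adapt the two-point construction behind \cref{prop:lb_deterministic} into a family of $m$ statistically indistinguishable instances, engineered so that each informative expert query only reveals whether a \emph{single} candidate is the true expert; this forces $\Omega(m)$ queries to identify the expert, and in their absence the regret is $\Omega(H/m)$. Concretely, I would take states $\cX=\crl{\xfrak,y_1,\dots,y_m}$ and actions $\cA=\crl{a_0,a_1,\dots,a_m}$ (so $\abs{\cX}=\abs{\cA}=m+1$), with every state absorbing (self-loops, dynamics independent of the action) and a stationary initial distribution supported essentially uniformly on $y_1,\dots,y_m$. The family is indexed by a hidden $J\in[m]$: the expert $\pistar=\pi^{(J)}$ plays $a_j$ at $y_j$ for $j\neq J$, plays the distinguished action $a_0$ at $y_J$, and plays $a_0$ at $\xfrak$ (a choice independent of $J$, so $\xfrak$ is uninformative); the reward $r^{(J)}$ gives $\indic\crl{a=\pistar_h(y_j)}$ at each $y_j$ and $0$ at $\xfrak$, so $r_h\in\brk{0,1}$ and $R\leq H$. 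One checks directly that $\pi^{(J)}$ is optimal for $r^{(J)}$, that a single deviation at an absorbing self-looping state costs exactly one unit of reward and is then recovered so that $\murec=1$, that the dynamics, rewards, and experts are stationary, and that $\abs{\Pi}=m$, $\abs{\cR}=m+1$. The key structural property is \emph{no hedging}: because $a_0$ is rewarded only at $y_J$ while $a_j$ is rewarded at every \emph{other} $y_j$, there is no single policy that is simultaneously near-optimal for all instances, and a short computation shows that for the uniform prior on $J$,
\begin{align*}
  \En_{J}\En\brk*{J(\pistar)-J(\pihat)}
  \;=\; \frac{H}{m}\sum_{j=1}^{m}\En\brk*{1-\pihat_{h}\bigl(\pistar_{h}(y_j)\mid y_j\bigr)}\Big|_{\text{averaged over }h}
  \;\geq\; \frac{H}{m}\,\En_{\tau}\brk[\big]{1-\max_{j}\Pr\brk*{J=j\mid\tau}},
\end{align*}
where $\tau$ is the learner's transcript and the inequality uses that at $y_j$ the best the learner can do is match the more likely of the two possible expert actions $\crl{a_0,a_j}$, so its per-state shortfall is at least $\min\crl{\Pr\brk{J=j\mid\tau},1-\Pr\brk{J=j\mid\tau}}$ and $\sum_j\min(p_j,1-p_j)\geq 1-\max_j p_j$.

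The second ingredient is an information-theoretic bound showing that under the budget $\En\brk{M}\leq c\,m$ no algorithm can identify $J$ with more than constant probability, i.e.\ $\En_\tau\brk{\max_j\Pr\brk{J=j\mid\tau}}\leq 1-c'$ for an absolute $c'>0$ once $c$ is a small enough constant. For this I would mirror the Le Cam argument in the proof of \cref{prop:lb_deterministic}: letting $\bbP^{(J)}$ denote the law of the learner's entire transcript (its own actions, the visited states, and the queried expert labels) under instance $J$, the transcripts under $J$ and $J'$ agree until the learner first queries the expert at $y_J$ or $y_{J'}$, so the Hellinger chain-rule estimate used there (Lemma D.2 of \citet{foster2024online}) gives $\Dhels{\bbP^{(J)}}{\bbP^{(J')}}\lesssim\En\brk*{\#\{\text{episodes in which the learner queries at }y_J\text{ or }y_{J'}\}}$. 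Queries at $\xfrak$ are useless and queries at $y_j$ with $j\neq J$ only eliminate the hypothesis ``$J=j$'', so intuitively the posterior on $J$ stays spread over $\geq m-M$ candidates; formally, summing the pairwise bound over all pairs $(J,J')$, charging each query to the $O(m)$ pairs it touches, and using $\En\brk{M}\leq c\,m$ shows the average squared total variation among the $m$ instances is $O(\En\brk{M}/m)=O(c)$, which yields the claimed bound on $\En_\tau\brk{\max_j\Pr\brk{J=j\mid\tau}}$ by a standard multi-hypothesis (Fano/Assouad-style) argument.

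Combining the two displays gives $\En_{J}\En\brk{J(\pistar)-J(\pihat)}\geq c'\,\tfrac{H}{m}$ for the uniform prior, hence some instance in the family attains $\En\brk{J(\pistar)-J(\pihat)}\geq c'\,\tfrac{H}{m}$, which is the statement after renaming constants (and matching the cardinalities $\abs{\cX}=\abs{\cA}=m+1$, $\log\abs{\Pi}=\log m$, $\murec=1$, stationarity). The step I expect to be the main obstacle is making the second ingredient fully rigorous for an \emph{adaptive} learner: I need to argue that adaptivity cannot help concentrate queries on the hidden leaf $y_J$. The Hellinger chain-rule bound is the right tool, but one must carefully track that the learner's future query decisions depend on the instance only after a query at a ``relevant'' leaf, which I would handle as in \cref{prop:lb_deterministic} by comparing each instance to the ``null'' expert that never plays $a_0$ (under which the learner's behavior, and in particular the set of leaves it ever queries, is fixed), so that the budget constraint on the family controls the number of queries made before any divergence. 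The remaining verifications (optimality of $\pi^{(J)}$, $\murec=1$, the no-hedging regret computation) are routine.
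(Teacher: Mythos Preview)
Your construction and overall plan are close to the paper's, but there is a genuine gap in the second ingredient—the information-theoretic step—stemming from your decision not to include the ``null'' expert as an actual problem instance.

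The paper builds the same $m$-state self-looping MDP with uniform initial distribution, but uses only two actions $\{\afrak,\bfrak\}$ and, crucially, takes the null policy $\pi^{(0)}$ (which plays $\afrak$ everywhere) as one of the $m{+}1$ instances. The regret argument then compares $\bbP^{(0)}$ to the mixture $\bar\bbP=\frac{1}{m}\sum_j\bbP^{(j)}$ via a single TV bound, and the Hellinger chain rule gives $\En_j\bigl[\Dhels{\bbP^{(0)}}{\bbP^{(j)}}\bigr]\le \frac{14}{m}\,\En^{(0)}[M]$, where the expectation is under the null instance. Because the null \emph{is} an instance, the budget hypothesis $\En^{(0)}[M]\le cm$ applies directly, and the proof closes immediately.

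In your version the null is only a reference, so the budget does not control $\En^{\text{null}}[M]$. Your proposed workaround—``the budget constraint on the family controls the number of queries made before any divergence''—does not suffice: the learner's behavior under the null can drift arbitrarily far from its behavior under any instance $J$ \emph{after} it fails to see $a_0$ at $y_J$. Concretely, consider a learner that, with probability $c$, queries every episode until it sees $a_0$ and then stops. Under every instance $J$ this has $\En^{(J)}[M]\approx cm$ (coupon-collector), but under the null it never stops, so $\En^{\text{null}}[M]=cn$ can be arbitrarily large. Your pairwise Hellinger accounting has the same defect: the term $m\cdot\En^{(J)}[\#\{\text{queries at }y_J\}]$ can be $\Theta(cm^2)$ once the learner is allowed to keep querying at $y_J$ after identifying it, so the average over pairs is $\Theta(cm)$, not $O(c)$. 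Bounding $\En_\tau[\max_j p_j]$ from this is not straightforward.

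The fix is exactly what the paper does: add the null as instance $0$, so $|\Pi|=m{+}1$ and the budget on instance $0$ is assumed. The paper's direct comparison of $\bbP^{(0)}$ to $\bar\bbP$ (using a packing argument ``close to $\pi^{(0)}$ implies far from every $\pi^{(j)}$'' in place of your Bayes/posterior computation) then goes through in two lines. Your Bayes-risk lower bound in terms of $1-\max_j p_j$ is correct and is a legitimate alternative to the paper's packing step, but you still need the null as an instance to cleanly close the information bound for an adaptive learner.
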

Since this example has $\log\abs{\Pi}=\log(M)$, it follows that the
sample complexity bound for \loglossbc in \cref{thm:bc_deterministic}
(which uses $M=n$) can be improved by no more than a $\log(n)$ factor
through online interaction in the active framework.

\begin{proof}[\pfref{thm:lb_active}]
      \newcommand{\pia}{\pi\ind{\afrak}}%
      \newcommand{\pib}{\pi\ind{\bfrak}}%
      \newcommand{\bbPbar}{\wb{\bbP}}%
  Let $m\in\bbN$ and $H\in\bbN$ be fixed. 
  We first specify the dynamics for the reward-free MDP $\Mstar$. Set $\cX=\crl*{\xfrak_1,\ldots,\xfrak_m}$
  and $\cA=\crl*{\afrak,\bfrak}$. The initial state distribution is
  $P_0=\unif(\xfrak_1,\ldots,\xfrak_m)$. The transition dynamics are
  $P_h(x'\mid{}x,a)=\indic\crl*{x'=x}$ for all $h$; that is,
  $\xfrak_1,\ldots,\xfrak_m$ are all self-looping terminal states.

Let a \emph{problem instance} $\cI=(\Mstar,r,\pistar)$ refer to a tuple
consisting of the reward-free MDP $\Mstar$, a reward function
$r=\crl*{r_h}_{h=1}^{H}$, and an expert policy $\pistar$. We consider
$m+1$ problem instances $\cI\ind{0},\ldots,\cI\ind{m}$ parameterized
by a collection of policies $\Pi=\crl*{\pi\ind{0},\ldots,\pi\ind{m}}$
and reward functions $\cR=\crl*{r\ind{0},\ldots,r\ind{m}}$.
\begin{itemize}
\item For problem instance $\cI\ind{0}=(\Mstar,r\ind{0},\pi\ind{0})$,
  the expert policy is $\pi\ind{0}$, which sets
  $\pi_h\ind{0}(x)=\afrak$ for all $x\in\cX$ and $h\in\brk{H}$. The
  reward function $r\ind{0}$ sets $r_h(x,a)=\indic\crl*{a=\afrak}$ for
  all $x\in\cX$ and $h\in\brk{H}$.
\item For each problem instance $\cI\ind{j}=(\Mstar,r\ind{j},\pi\ind{j})$,
  the expert policy is $\pi\ind{j}$, which for all $h\in\brk{H}$ sets
  $\pi\ind{j}_h(x)=\afrak$ for $x\neq{}\xfrak_j$ and sets
  $\pi\ind{h}(\xfrak_j)=\bfrak$. The reward function $r\ind{j}$ sets
  $r_h(x,a)=\indic\crl*{a=\afrak,x\neq{}\xfrak_j} +
  \indic\crl*{a=\bfrak,x=\xfrak_j}$ for all $h\in\brk{H}$.
\end{itemize}
Let $J\ind{j}$ denote the expected reward under instance $j$. Note
that all
instances satisfy $\mu=1$, and that $\pi\ind{j}$ is an optimal policy for each instance $j$.

  Going forward, we fix the online imitation learning algorithm under consideration and let
  $\bbP\ind{j}$ denote the law of $o\ind{1},\ldots,o\ind{n}$ when
  $\afrak$ when we execute the algorithm on instance $\cI\ind{j}$; let
  $\En\ind{j}\brk*{\cdot}$ denote the corresponding expectation. In addition, for any policy $\pi$, let
  $\bbP^{\pi\ind{j}\mid{}\pi}$ denote the law of
  $o=(x_1,a_1,\astar_1),\ldots,(x_H,a_H,\astar_H)$ when we execute
  $\pi$ in the online imitation learning framework when the underlying
  instance is $\cI\ind{j}$, with the
  convention that $\astar_h=\perp$ if the learner does not query the
  expert in episode $j$.

  Our aim is to lower bound
  \begin{align}
    \max_{j\in\crl*{0,\ldots,m}}\En\ind{j}\brk*{J\ind{j}(\pi\ind{j})-J\ind{j}(\pihat)}
  \end{align}
To this end, define
$\rho_j(\pi,\pi')=\sum_{h=1}^{H}\En_{a_h\sim{}\pi_h(\xfrak_j),a'_h\sim\pi'_h(\xfrak_j)}\indic\crl*{a_h\neq{}a'_h}$
and $\rho(\pi,\pi')=\frac{1}{m}\rho_j(\pi,\pi')$, and observe that
  \begin{align}
    \En\ind{0}\brk*{J\ind{0}(\pi\ind{0})-J\ind{0}(\pihat)}
    &=
\En\ind{0}\brk*{\frac{1}{m}\sum_{j=1}^{m}\sum_{h=1}^{H}\En_{a_h\sim{}\pihat_h(\xfrak_j)}\brk*{\indic\crl*{a_h\neq\pi\ind{0}_h(\xfrak_j)}}}\\
    &=
    \En\ind{0}\brk*{\rho(\pihat, \pi\ind{0})} \geq{}
    \frac{H}{2m}\cdot{}\bbP\ind{0}\brk*{\rho(\pihat, \pi\ind{0}) \geq{} \frac{H}{2m}}.
  \end{align}
Next, note that for any $i\in\brk{m}$, if
$\rho(\pihat,\pi\ind{0})<\frac{H}{2m}$, then
$\rho_j(\pihat,\pi\ind{0})<\frac{H}{2}$, which means that
$\rho_j(\pihat,\pi\ind{j})\geq\frac{H}{2}$. It follows that
\begin{align}
  \En\ind{j}\brk*{J\ind{j}(\pi\ind{j})-J\ind{j}(\pihat)}
  &=   \En\ind{j}\brk*{\frac{1}{m}\rho_j(\pihat,\pi\ind{j})}
    \geq{} \frac{H}{2m}\bbP\ind{j}\brk*{\rho(\pihat, \pi\ind{0}) < \frac{H}{2m}},
\end{align}
and if we define $\bbPbar=\En_{j\sim\unif(\brk{m})}\bbP\ind{j}$, then
\begin{align}
  \En_{j\sim\unif(\brk{m})}\En\ind{j}\brk*{J\ind{j}(\pi\ind{j})-J\ind{j}(\pihat)}
  &    \geq{} \frac{H}{2m} \bbPbar\brk*{\rho(\pihat, \pi\ind{0}) < \frac{H}{2m}}.
\end{align}
Combining these observations, we find that
\begin{align}
  \max_{i\in\crl*{0,\ldots,m}}\En\ind{j}\brk*{J\ind{j}(\pi\ind{j})-J\ind{j}(\pihat)}
  &\geq{} \frac{H}{4m}\prn*{
  \bbP\ind{0}\brk*{\rho(\pihat, \pi\ind{0}) \geq{} \frac{H}{2m}}
  + \bbPbar\brk*{\rho(\pihat, \pi\ind{0}) < \frac{H}{2m}}
  }\\
&    \geq{} \frac{H}{4m}(1-\Dtv{\bbP\ind{0}}{\bbPbar}).
\end{align}

It remains to bound the total variation distance.
  Next, using Lemma D.2 of \citet{foster2024online}, we can bound
  \begin{align}
    \Dtvs{\bbP\ind{0}}{\bbPbar}
    \leq{}\Dhels{\bbP\ind{0}}{\bbPbar}
    \leq{}\En_{j\sim\unif\brk*{m}}\brk*{\Dhels{\bbP\ind{0}}{\bbP\ind{j}}}
    \leq{}7\En_{j\sim\unif\brk*{m}}\En\ind{0}\brk*{\sum_{t=1}^{n}
    \Dhels{\bbP^{\pi\ind{0}\mid{}\pi\ind{t}}}{\bbP^{\pi\ind{j}\mid{}\pi\ind{t}}}
    }.
  \end{align}
  Since the feedback the learner receives for a given episode $t$ is
  identical under instances $\cI\ind{0}$ and $\cI\ind{j}$ is identical
  unless i) $x_1=\xfrak_j$, and ii) the learner decides to query the
  expert for feedback (i.e., $M\ind{t}=1$), we can bound
  \begin{align}
    \Dhels{\bbP^{\pi\ind{0}\mid{}\pi\ind{t}}}{\bbP^{\pi\ind{j}\mid{}\pi\ind{0}}}
    \leq{} 2\bbP^{\pi\ind{0}\mid{}\pi\ind{t}}\brk*{x_1\ind{t}=\xfrak_j,M\ind{t}=1}
  \end{align}
  and hence
  \begin{align}
    \En_{j\sim\unif\brk*{m}}\En\ind{0}\brk*{\sum_{t=1}^{n}
    \Dhels{\bbP^{\pi\ind{0}\mid{}\pi\ind{t}}}{\bbP^{\pi\ind{j}\mid{}\pi\ind{t}}}
    }
    &\leq{}
    2 \En_{j\sim\unif\brk*{m}}\En\ind{0}\brk*{\sum_{t=1}^{n}
    \bbP^{\pi\ind{0}\mid{}\pi\ind{t}}\brk*{x_1\ind{t}=\xfrak_j,M\ind{t}=1}
      }\\
    &=
      \frac{2}{m} \En\ind{0}\brk*{\sum_{t=1}^{n}
      \sum_{j=1}^{m}\bbP^{\pi\ind{0}\mid{}\pi\ind{t}}\brk*{x_1\ind{t}=\xfrak_j,M\ind{t}=1}
      }\\
        &=
      \frac{2}{m} \En\ind{0}\brk*{\sum_{t=1}^{n}
      \bbP^{\pi\ind{0}\mid{}\pi\ind{t}}\brk*{M\ind{t}=1}
          }\\
            &=
      \frac{2}{m} \En\ind{0}\brk*{M
          }.
  \end{align}
  It follows that if $\En\ind{0}\brk*{M}\leq{}m/56$, then $
  \Dtv{\bbP\ind{0}}{\bbPbar}\leq{}1/2$, so that the algorithm must have
  \begin{align}
      \max_{i\in\crl*{0,\ldots,m}}\En\ind{j}\brk*{J\ind{j}(\pi\ind{j})-J\ind{j}(\pihat)}
&    \geq{} \frac{H}{8m}.
  \end{align}
  
\end{proof}

\subsection{An Instance-Dependent Lower Bound for Stochastic Experts}
\label{sec:instance_dependent}

In this section, we further investigate the optimality of \loglossbc
for stochastic experts (\cref{thm:bc_stochastic}). Recall that when
$\log\abs{\Pi}=\bigoh(1)$ the leading-order term in
\cref{thm:bc_stochastic} scales as roughly $\sqrt{\sigmastar^2/n}$,
where the salient quantity is the \emph{variance}
\[
  \sigmastar^2\ldef{}\sum_{h=1}^{H}\En^{\pistar}\brk*{(\Qstar_h(x_h,\pistar(x_h))-\Qstar_h(x_h,a_h))^2}
\]
for the expert policy $\pistar$. \cref{thm:variance_lower} shows that
this is optimal qualitatively, in the sense that for any value
$\sigma^2$, there exists a class of MDPs where the
$\sigmastar^2\leq\sigma^2$, and where the minimax rate is at least $\sqrt{\sigma^2/n}$.

In what follows, we will prove that for the special case of
\emph{autoregressive} MDPs (that is, the special case of the imitation learning problem in which the state
takes the form $x_h=a_{1:h-1}$; cf. \cref{sec:autoregressive}),
\cref{thm:variance_lower} is optimal on a \emph{per-policy}
basis. Concretely, we prove a \emph{local minimax} lower bound
\citep{donoho1991geometrizingii} which states that for any policy $\pistar$ and any
reward function $\rstar$, there exists a difficult ``alternative''
policy $\pitil$, such that in worst case over rewards
$r\in\crl*{-\rstar,+\rstar}$ and expert policies
$\pi\in\crl{\pistar,\pitil}$, any algorithm must have regret at least $\sqrt{\sigma^2/n}$.

\begin{theorem}
  \label{thm:value_estimation}
  Consider the offline imitation learning setting, and let $\Mstar$ be
  an autoregressive MDP. Let a reward function $\rstar$ with
  $\sum_{h=1}^{H}\rstar_h\in\brk*{0,R}$ almost surely be fixed, and let an expert
  policy $\pistar$ be given.
  For any $n\in\bbN$, there exists an alternative policy $\pitil$ such
  that
  \begin{align}
    \min_{\texttt{Alg}}\max_{\pi\in\crl*{\pistar,\pitil}}\max_{r\in\crl{\rstar,-\rstar}}\bbP\brk*{J(\pi)-J(\pihat)
    \geq{} c\cdot\sqrt{\frac{\sigma^2_{\pistar}}{n}}} \geq \frac{1}{4}
  \end{align}
for all $n \geq{} c'\cdot{}\frac{R^2}{\sigma^2_{\pistar}}$, where
$c,c'>0$ are absolute constants.
\end{theorem}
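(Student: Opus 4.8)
The plan is to prove this local minimax lower bound by a Le~Cam--style two-point argument built from four instances: $\pistar$, a carefully tilted alternative policy $\pitil$, and the two reward functions $\rstar$ and $-\rstar$. The alternative policy is the multiplicative perturbation
\[
\pitil_h(a\mid{}x) = \pistar_h(a\mid{}x)\prn*{1 - \eta A_h(x,a)},
\]
where $A_h(x,a) = \Qstar_h(x,a) - \Vstar_h(x)$ is the advantage function of $\pistar$ under $\rstar$ and $\eta \asymp 1/\sqrt{n\sigmastar^2}$ is a small step size. Since advantages are mean-zero under $\pistar$, $\pitil_h(\cdot\mid{}x)$ integrates to one, and it is nonnegative because $\abs*{A_h}\leq R$ and the hypothesis $n \geq c'R^2/\sigmastar^2$ forces $\eta R \leq 1/2$ for $c'$ a large enough constant. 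The four instances are $(\Mstar, \pm\rstar, \pistar)$ and $(\Mstar, \pm\rstar, \pitil)$; the two reward signs are included so that, whatever value $J_{\rstar}(\pihat)$ the learner's output attains, at least one sign makes the rollout regret $J_r(\pi)-J_r(\pihat)$ equal to $\abs*{J_{\rstar}(\pi)-J_{\rstar}(\pihat)}$ rather than its negative.

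The heart of the argument is two estimates. First, a \emph{value-gap} lower bound: writing $W(o) = \prod_h (1-\eta A_h(x_h,a_h))$ for the likelihood ratio $\mathrm{d}\bbP^{\pitil}/\mathrm{d}\bbP^{\pistar}$ (the transition factors cancel), we have $J_{\rstar}(\pistar) - J_{\rstar}(\pitil) = \En^{\pistar}\brk*{\Phi(o)(1-W(o))}$ with $\Phi(o) = \sum_h \rstar_h(x_h,a_h)\in\brk{0,R}$. Expanding $1-W = \eta\sum_h A_h + (\text{higher order})$ and using the identity $\En^{\pistar}\brk*{\Phi(o)A_h(x_h,a_h)} = \En^{\pistar}\brk*{A_h^2(x_h,a_h)}$ (which follows from $\En^{\pistar}\brk*{\Phi\mid{}\filt_h} = \sum_{\ell<h}\rstar_\ell + \Qstar_h(x_h,a_h)$ in the deterministic autoregressive dynamics, together with advantages being a martingale difference sequence under $\bbP^{\pistar}$), the leading term is exactly $\eta\sigmastar^2$. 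Second, an \emph{information} upper bound: $\Dchis{\bbP^{\pitil}}{\bbP^{\pistar}} = \En^{\pistar}\brk*{W^2} - 1$, which I bound by $\exp(O(\eta^2\sigmastar^2))-1 = O(\eta^2\sigmastar^2)$ through step-by-step conditioning ($\En^{\pistar}\brk*{(1-\eta A_h)^2\mid{}\filt_{h-1}} = 1 + \eta^2\En^{\pistar}\brk*{A_h^2\mid{}\filt_{h-1}}$) combined with \cref{lem:advantage_concentration}, which—thanks to the deterministic transitions—also gives the deterministic bound $\abs*{\sum_h A_h(x_h,a_h)} = \abs*{\Phi(o)-\Vstar_1(x_1)}\leq R$ and controls the sum of conditional variances. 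Since $\Dhels{\bbP^{\pitil}}{\bbP^{\pistar}}\leq\Dchis{\bbP^{\pitil}}{\bbP^{\pistar}}$, this yields $\Dhels{\bbP^{\pitil}}{\bbP^{\pistar}} \lesssim \eta^2\sigmastar^2$.

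With these in hand the conclusion is standard. Tensorizing, $\Dtv\big((\bbP^{\pistar})^{\otimes n},(\bbP^{\pitil})^{\otimes n}\big) \leq \sqrt{n\,\Dhels{\bbP^{\pistar}}{\bbP^{\pitil}}} \lesssim \eta\sqrt{n}\,\sigmastar$, so choosing the absolute constant hidden in $\eta$ makes this at most $1/2$. For any algorithm, its output $\pihat$ has $\abs*{J_{\rstar}(\pistar)-J_{\rstar}(\pihat)}\geq\tfrac14\abs*{J_{\rstar}(\pistar)-J_{\rstar}(\pitil)}\asymp\eta\sigmastar^2\asymp\sqrt{\sigmastar^2/n}$ whenever $J_{\rstar}(\pihat)$ is closer to $J_{\rstar}(\pitil)$ than to $J_{\rstar}(\pistar)$, and symmetrically otherwise; a two-point argument across the data laws $\bbP^{\pistar}$ and $\bbP^{\pitil}$, together with the reward-sign flip to turn a value error into a regret lower bound, gives $\bbP\brk*{J(\pi)-J(\pihat)\geq c\sqrt{\sigmastar^2/n}}\geq 1/4$ on one of the four instances. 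I expect the main obstacle to be the second-order/tail control in the two estimates above: a naive bound on the state-distribution shift induced by $\pitil$ costs a factor of $H$ and would force $n \gtrsim H^2R^4/\sigmastar^4$, so one must work entirely under $\bbP^{\pistar}$ via the likelihood ratio $W$ and lean on \cref{lem:advantage_concentration}—very likely with a stopping-time truncation of $\pitil$ at the first step where $\sum_{\ell\leq h}A_\ell^2$ exceeds a threshold of order $R^2\log n$, exactly as in the proof of \cref{thm:bc_stochastic}—to keep $W$ and $\sum_h A_h$ bounded so that the exponential corrections remain $O(1)$ in expectation.
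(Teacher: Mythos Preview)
Your Le Cam scaffolding (two expert policies, two reward signs, tensorize Hellinger to bound TV, choose the tilt scale $\eta\asymp 1/\sqrt{n\sigmastar^2}$) matches the paper. The difference is in how the alternative $\pitil$ is built, and here the paper takes a route that completely bypasses the second-order difficulty you flag.

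Rather than tilting each conditional $\pistar_h(\cdot\mid x)$ by $(1-\eta A_h)$, the paper exploits autoregressivity to work directly with the \emph{trajectory law}: since $\{\bbP^{\pi}\}_\pi$ equals all distributions over $a_{1:H}$, one may define $\bbP^{\pitil}(o)\propto \bbP^{\pistar}(o)\,e^{\eta\Phi(o)}$ with $\Phi(o)=\sum_h \rstar_h\in[0,R]$, and then recover $\pitil$ from this density by Bayes' rule. Because $\Phi$ is a single bounded scalar, both estimates become one-line MGF facts (\cref{lem:bernstein_mgf}): $\Dkl{\bbP^{\pistar}}{\bbP^{\pitil}}=\log\En^{\pistar}[e^{\eta(\Phi-\En\Phi)}]\le \eta^2\Var^{\pistar}[\Phi]$, and from $\Dkl{\bbP^{\pitil}}{\bbP^{\pistar}}\ge 0$ one gets $J(\pitil)-J(\pistar)=\En^{\pitil}[\Phi]-\En^{\pistar}[\Phi]\ge \tfrac{\eta}{8}\Var^{\pistar}[\Phi]$. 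The final link is $\Var^{\pistar}[\Phi]=\sigmastar^2$, which holds \emph{exactly} for deterministic-transition MDPs by the law of total variance (\cref{lem:ltv}). No stopping time, no \cref{lem:advantage_concentration}, no pathwise control of $\sum_h A_h^2$ or $\sum_h\En[A_h^2\mid x_h]$ is needed.

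Your construction is morally a first-order approximation of this exponential tilt (indeed $\prod_h(1-\eta A_h)\approx e^{-\eta\sum_h A_h}=e^{-\eta(\Phi-\Vstar_1)}$), and your value-gap identity $\En^{\pistar}[\Phi A_h]=\En^{\pistar}[A_h^2]$ is correct. But with the per-step product you must control $\En^{\pistar}\bigl[\prod_h(1+\eta^2\En^{\pistar}[A_h^2\mid x_h])\bigr]$, which in turn needs concentration of $\sum_h\En^{\pistar}[A_h^2\mid x_h]$ around $\sigmastar^2$—this is exactly the obstacle you anticipate, and it is a real one for your tilting. The paper's trajectory-level exponential tilt makes the entire issue disappear, so it is both shorter and avoids the stopping-time machinery you expected to need.
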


\cref{thm:value_estimation} suggests that the leading term in
\cref{thm:bc_stochastic} cannot be improved substantially without
additional assumptions, on a (nearly) per-instance basis. The restriction to $n \geq{}
  c'\cdot{}\frac{R^2}{\sigma^2_{\pistar}}$ in
  \cref{thm:value_estimation} is somewhat natural, as this corresponds
  to the regime in which the $\sqrt{\sigma^2_{\pistar}/n}$
  term in
  \cref{thm:bc_stochastic} dominates the lower-order term.

\begin{proof}[\pfref{thm:value_estimation}]
  We begin by observing that for any $\Delta>0$,
  \begin{align}
        \min_{\texttt{Alg}}\max_{\pi\in\crl*{\pistar,\pitil}}\max_{r\in\crl{\rstar,-\rstar}}\bbP\brk*{J_r(\pi)-J_r(\pihat)
    \geq{} \Delta}
    \geq{}
    \min_{\texttt{Alg}}\max_{\pi\in\crl*{\pistar,\pitil}}\bbP\brk*{\abs*{J_{\rstar}(\pi)-J_{\rstar}(\pihat)}
    \geq{} \Delta}.
  \end{align}
  with the convention that $J_r(\pi)$ denotes the expected reward
  under $r$; we abbreviate $J(\pi)\equiv{}J_{\rstar}(\pi)$ going forward. Let $\bbP_n^{\pi}$ denote the law of the offline
  imitation learning dataset under $\pi$. If we set $\Delta=\abs{J(\pistar)-J(\pitil)}/2$, then
  by the standard Le Cam two-point argument, we have that
  \begin{align}
    &\max\crl*{\bbP_n^{\pistar}\brk*{\abs*{J(\pistar)-J(\pihat)}
    \geq{} \Delta},
    \bbP_n^{\pitil}\brk*{\abs*{J(\pitil)-J(\pihat)}
    \geq{} \Delta}}\\
    &\geq{}
      \frac{1}{2}\prn*{
    1 - \bbP_n^{\pistar}\brk*{\abs*{J(\pistar)-J(\pihat)}
    < \Delta}+
    \bbP_n^{\pitil}\brk*{\abs*{J(\pitil)-J(\pihat)}
    \geq{} \Delta}
      }\\
        &\geq{}
          \frac{1}{2}\prn*{
    1 - \bbP_n^{\pistar}\brk*{\abs*{J(\pitil)-J(\pihat)}
          \geq{}\Delta}+
          \bbP_n^{\pitil}\brk*{\abs*{J(\pitil)-J(\pihat)}
          \geq{} \Delta}
          }\\
    &\geq{}
      \frac{1}{2}\prn*{1-\Dtv{\bbP_n^{\pistar}}{\bbP_n^{\pitil}}}\
    \geq{} \frac{1}{2}\prn*{1-\sqrt{n\cdot{}\Dhels{\bbP^{\pistar}}{\bbP^{\pitil}}}},
  \end{align}
  where the final inequality uses the standard tensorization property
  for Hellinger distance (e.g., \citet{wainwright2019high}).

  We will proceed by showing that 
  \begin{align}
    \label{eq:central}
    \omega_{\pistar}(\veps)\ldef{}\sup_{\pi}\crl*{\abs*{J(\pi)-J(\pistar)}\mid{}\Dhels{\bbP^{\pistar}}{\bbP^{\pi}}\leq\veps^2}
    \geq\bigom(1)\cdot\sqrt{\sigma^2_{\pistar}\cdot{}\veps^2},
  \end{align}
  for any $\veps>0$ sufficiently small, from which the result will follow
  by setting $\veps^2\propto1/n$ and
  \[
\pitil=\argmax_{\pi}\crl*{\abs*{J(\pi)-J(\pistar)}\mid{}\Dhels{\bbP^{\pistar}}{\bbP^{\pi}}\leq\veps^2}
\geq\bigom(1)\cdot\sqrt{\sigma^2_{\pistar}\cdot{}\veps^2}.
\]
To prove this, we will appeal to the following technical lemma.
  \begin{lemma}
    \label{lem:kl_dual}
    For any distribution $\bbQ$ and function $h$ with
    $\abs*{h}\leq{}R$ almost surely, it holds that for all
    $0\leq{}\veps^2\leq{}\frac{\Var_{\bbQ}\brk{h}}{4R^2}$, there exists a
    distribution $\bbP$ such that
    \begin{enumerate}
    \item $\En_{\bbP}\brk*{h}-\En_{\bbQ}\brk*{h} \geq{} 2^{-3}\sqrt{\Var_\bbQ\brk*{h}\cdot\veps^2}$
    \item $\Dkl{\bbQ}{\bbP}\leq\veps^2$.
    \end{enumerate}
  \end{lemma}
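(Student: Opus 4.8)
The plan is to construct $\bbP$ by \emph{exponential tilting} of $\bbQ$. For $\lambda \ge 0$ define the tilted measure $\bbP_{\lambda}$ by $\frac{d\bbP_{\lambda}}{d\bbQ}(x) = e^{\lambda h(x)}/\En_{\bbQ}[e^{\lambda h}]$, and let $\psi(\lambda) = \log \En_{\bbQ}[e^{\lambda h}]$ be the log-moment-generating function. Since $\abs{h} \le R$, $\psi$ is finite and smooth on all of $\bbR$, and differentiation under the integral gives the standard identities $\psi'(\lambda) = \En_{\bbP_{\lambda}}[h]$, $\psi''(\lambda) = \Var_{\bbP_{\lambda}}[h]$, and $\psi'''(\lambda)$ equal to the third central moment of $h$ under $\bbP_{\lambda}$. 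I will take $\bbP = \bbP_{\lambda}$ with the \emph{specific} choice $\lambda = \veps/(2\sqrt{\Var_{\bbQ}[h]})$; in the degenerate case $\Var_{\bbQ}[h]=0$ the hypothesis $\veps^2 \le \Var_{\bbQ}[h]/(4R^2)$ forces $\veps=0$ and one simply takes $\bbP=\bbQ$, so assume $\Var_{\bbQ}[h]>0$ (equivalently $\psi''(t)>0$ for all $t$) from now on.

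The one nontrivial input is control of how the tilted variance $\psi''(t)$ drifts as $t$ ranges over $[0,\lambda]$. Because $\abs{h - \En_{\bbP_{t}}[h]} \le 2R$ pointwise, the third central moment obeys $\abs{\psi'''(t)} \le 2R\,\psi''(t)$, so $\bigl|\tfrac{d}{dt}\log\psi''(t)\bigr| \le 2R$ and integrating this differential inequality yields $e^{-2Rt}\,\Var_{\bbQ}[h] \le \psi''(t) \le e^{2Rt}\,\Var_{\bbQ}[h]$ for all $t \ge 0$ (recall $\psi''(0)=\Var_{\bbQ}[h]$). For $t \in [0,\lambda]$ the hypothesis is exactly what keeps us in the good regime: $2Rt \le 2R\lambda = R\veps/\sqrt{\Var_{\bbQ}[h]} \le 1/2$ precisely when $\veps^2 \le \Var_{\bbQ}[h]/(4R^2)$, so $\psi''(t) \in [e^{-1/2}, e^{1/2}]\cdot\Var_{\bbQ}[h]$ throughout this interval.

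Given this, both conclusions are one-line computations. For the mean shift, $\En_{\bbP}[h] - \En_{\bbQ}[h] = \psi'(\lambda) - \psi'(0) = \int_0^{\lambda}\psi''(t)\,dt \ge \lambda\, e^{-1/2}\,\Var_{\bbQ}[h] = \tfrac12 e^{-1/2}\,\veps\sqrt{\Var_{\bbQ}[h]} \ge 2^{-3}\sqrt{\Var_{\bbQ}[h]\cdot\veps^2}$, using $\tfrac12 e^{-1/2} \approx 0.303 > \tfrac18$. For the divergence, since $d\bbP_{\lambda}/d\bbQ = e^{\lambda h - \psi(\lambda)}$ we get $\Dkl{\bbQ}{\bbP} = \En_{\bbQ}[-\lambda h + \psi(\lambda)] = \psi(\lambda) - \lambda\,\psi'(0)$, and a second-order Taylor expansion of $\psi$ at $0$ with Lagrange remainder (using $\psi(0)=0$, $\psi'(0)=\En_{\bbQ}[h]$) gives $\Dkl{\bbQ}{\bbP} = \tfrac{\lambda^2}{2}\psi''(\xi)$ for some $\xi \in (0,\lambda)$; bounding $\psi''(\xi) \le e^{1/2}\,\Var_{\bbQ}[h]$ from the drift estimate yields $\Dkl{\bbQ}{\bbP} \le \tfrac{\lambda^2}{2}e^{1/2}\,\Var_{\bbQ}[h] = \tfrac{e^{1/2}}{8}\,\veps^2 \le \veps^2$.

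The "hard part" is essentially conceptual rather than technical: one must recognize that the tilting parameter has to scale like $\veps/\sqrt{\Var_{\bbQ}[h]}$ — not the naive $\veps/R$ — in order to convert a KL budget of $\veps^2$ into a mean perturbation of order $\veps\sqrt{\Var_{\bbQ}[h]}$ rather than the much weaker $\veps\,\Var_{\bbQ}[h]/R$ that a crude range bound would give. Once the scaling is fixed, the Grönwall-type control of $\psi''$ makes every constant explicit, and the stated hypothesis on $\veps$ is precisely the condition ensuring the tilt stays in the regime where $\psi''$ has not changed by more than a bounded multiplicative factor. The remaining bookkeeping with cumulants of the exponential family $\{\bbP_{\lambda}\}$ is routine.
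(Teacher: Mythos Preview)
Your proof is correct and uses the same construction as the paper: exponential tilting $\bbP_\lambda \propto e^{\lambda h}\,d\bbQ$ with $\lambda$ scaled like $\veps/\sqrt{\Var_{\bbQ}[h]}$. The only difference is in the bookkeeping for the cumulant generating function: the paper bounds $\log\En_{\bbQ}[e^{\eta(h-\En_{\bbQ}[h])}]$ directly via the elementary inequalities $1+x+x^2/4 \le e^x \le 1+x+x^2$ on $[-1,1]$, and extracts the mean-shift lower bound from the slick observation that $0 \le \Dkl{\bbP}{\bbQ} = \eta\,\En_{\bbP}[h] - \psi(\eta)$; you instead control $\psi''$ on the whole interval $[0,\lambda]$ via the Gr\"onwall-type bound $|\psi'''| \le 2R\,\psi''$, then integrate $\psi''$ for the mean shift and Taylor-expand for the KL. Both routes are short and yield the same constants up to harmless factors.
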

  Since stochastic policies $\pi$ in the autoregressive MDP $\Mstar$ are equivalent
  to arbitrary joint laws over the sequence $a_{1:H}$ (via Bayes'
  rule) and $J(\pi)=\En^{\pi}\brk*{\sum_{h=1}^{H}\rstar_h}$,
  \cref{lem:kl_dual} implies that for any $\veps^2\leq{}\Var^{\pistar}\brk*{\sum_{h=1}^{H}\rstar_h}/4R^2$, there exists a policy $\pitil$ such
  that (i) $\Dhels{\bbP^{\pistar}}{\bbP^{\pitil}}\leq
  \Dkl{\bbP^{\pistar}}{\bbP^{\pitil}}\leq\veps^2$, and (ii)
  \begin{align}
    J(\pitil) - J(\pistar)
    \geq{} 2^{-3}\sqrt{\Var^{\pistar}\brk*{\sum_{h=1}^{H}\rstar_h}\cdot\veps^2}.
  \end{align}
This establishes \cref{eq:central}. The result now follows by setting $\veps^2=\frac{c}{n}$ for an
absolute constant $c>0$ so that $\sqrt{n\cdot{}\Dhels{\bbP^{\pistar}}{\bbP^{\pitil}}}\leq{}1/2$, which is admissible whenever $n \geq{}
c'\cdot{}\frac{R^2}{\sigma^2_{\pistar}}$. Finally, we observe that for any deterministic MDP, by \cref{lem:ltv},
  \begin{align}
    \Var^{\pistar}\brk*{\sum_{h=1}^{H}r_h}
    = \En^{\pistar}\brk*{\sum_{h=1}^{H}\Var^{\pistar}\brk*{r_h +
    V_{h+1}^{\pistar}(x_{h+1})\mid{}x_{h}}}
    = \En^{\pistar}\brk*{\sum_{h=1}^{H}(Q^{\pistar}_h(x_h,a_h)-V_h^{\pistar}(x_h))^2}=\sigma^2_{\pistar},
  \end{align}
  since deterministic MDPs satisfy
  \begin{align}
    Q^{\pistar}_h(x_h,a_h) = r_h(x_h,a_h) + V_{h+1}^{\pistar}(x_{h+1})
  \end{align}
  almost surely, and since $\En^{\pistar}\brk*{Q^{\pistar}_h(x_h,a_h)\mid{}x_h}=V^{\pistar}_h(x_h)$.

\end{proof}

\begin{proof}[\pfref{lem:kl_dual}]
  Recall that we assume the domain is countable, so that $\bbQ$
admits a probability mass function $q$. We will define $\bbP$ via the
probability mass function
\begin{align}
  p(x) = \frac{q(x)e^{\eta{}h(x)}}{\sum_{x'}q(x')e^{\eta{}h(x')}}
\end{align}
for a parameter $\eta>0$. We begin by observing that
\begin{align}
  \Dkl{\bbQ}{\bbP}
  =\log\prn*{\En_{\bbQ}\brk*{e^{\eta{}h}}
  } - \eta{}\En_{\bbQ}\brk*{h}
  = \log\prn*{\En_{\bbQ}\brk*{e^{\eta{}(h-\En_{\bbQ}\brk*{h})}}}.
\end{align}
We now use the following lemma.
\begin{lemma}
  \label{lem:bernstein_mgf}
  For any random variable $X$ with $\abs*{X}\leq{}R$ almost surely and any $\eta\in(0,(2R)^{-1})$,
    \begin{align}
\frac{\eta^2}{8}\Var\brk*{X}  \leq  \log\prn*{\En\brk*{e^{\eta{}(X-\En\brk*{X})}}} \leq{} \eta^2\Var\brk*{X}.
    \end{align}
  \end{lemma}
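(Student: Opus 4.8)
The plan is to prove Lemma~\ref{lem:bernstein_mgf} directly, via elementary Taylor bounds on $e^z$, after a centering reduction. First I would set $Y = X - \En[X]$, so that $\En[Y] = 0$, $\abs{Y} \le 2R$ almost surely, and $\En[Y^2] = \Var[X]$; the quantity to bound is $\log\En[e^{\eta Y}]$, and the structural fact we exploit throughout is that $z := \eta Y$ satisfies $\abs{z} = \eta\abs{Y} \le 2R\eta < 1$ by the hypothesis $\eta \in (0,(2R)^{-1})$.

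For the upper bound, I would invoke the elementary inequality $e^z \le 1 + z + z^2$, valid whenever $\abs{z} \le 1$ (which follows from $\sum_{k\ge 2}\abs{z}^k/k! \le z^2\sum_{k\ge 2}1/k! = (e-2)z^2 \le z^2$). Taking expectations and using $\En[Y]=0$ gives $\En[e^{\eta Y}] \le 1 + \eta^2\En[Y^2] = 1 + \eta^2\Var[X]$, and then $\log(1+t)\le t$ yields $\log\En[e^{\eta Y}] \le \eta^2\Var[X]$.

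For the lower bound I would use the matching third-order estimate $e^z \ge 1 + z + \tfrac12 z^2 - c_0\abs{z}^3$ for $\abs{z}\le 1$, with $c_0 = e - \tfrac52 < \tfrac14$ (again obtained by bounding the tail of the exponential series, $\sum_{k\ge 3}\abs{z}^k/k! \le \abs{z}^3(e - \tfrac52)$ for $\abs{z}\le1$). Taking expectations, using $\En[Y]=0$ and $\En[\abs{Y}^3] \le 2R\,\En[Y^2] = 2R\,\Var[X]$, gives $\En[e^{\eta Y}] \ge 1 + \eta^2\Var[X]\bigl(\tfrac12 - 2Rc_0\eta\bigr) \ge 1 + \tfrac14\eta^2\Var[X]$, where the last step uses $2R\eta < 1$ together with $c_0 < \tfrac14$. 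Finally, since $\Var[X]\le \En[X^2]\le R^2$ we have $\tfrac14\eta^2\Var[X] \le \tfrac{1}{16}\le 1$, so the inequality $\log(1+t)\ge \tfrac t2$ on $[0,1]$ gives $\log\En[e^{\eta Y}]\ge \tfrac18\eta^2\Var[X]$, as claimed.

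The argument is entirely routine; there is no real obstacle beyond bookkeeping the numerical constants so that the cubic remainder term is absorbed — specifically verifying $c_0 = e - \tfrac52 < \tfrac14$ and that the slack $2R\eta < 1$ afforded by the hypothesis on $\eta$ is exactly what is needed to make $\tfrac12 - 2Rc_0\eta \ge \tfrac14$. If one wanted slightly cleaner constants one could instead split into the cases $\eta \le (4R)^{-1}$ and $\eta \in ((4R)^{-1},(2R)^{-1})$, but this refinement is unnecessary.
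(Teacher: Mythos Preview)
Your proof is correct and follows essentially the same approach as the paper: center, apply elementary Taylor bounds on $e^z$ for $\abs{z}\le 1$, take expectations, then use $\tfrac{t}{2}\le\log(1+t)\le t$. The only cosmetic difference is that the paper invokes the direct quadratic lower bound $e^x\ge 1+x+\tfrac{x^2}{4}$ on $[-1,1]$ rather than going through a cubic remainder term; your route works just as well.
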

  Hence, as long as $\eta\leq(2R)^{-1}$,
  \begin{align}
    \Dkl{\bbQ}{\bbP}
    \leq{} \eta^2\Var_{\bbQ}\brk*{h}.
  \end{align}
  We set $\eta = \min\crl*{\sqrt{\frac{\veps^2}{\Var_\bbQ\brk{h}}},
    \frac{1}{2R}}$ so that $\Dkl{\bbQ}{\bbP}\leq\veps^2$.

  Next, we compute that
  \begin{align}
    0 \leq{} \Dkl{\bbP}{\bbQ}
    = \eta\En_{\bbP}\brk*{h}
    - \log\prn*{\En_{\bbQ}\brk*{e^{\eta{}h}}},
  \end{align}
  so that
  \begin{align}
    \En_{\bbP}\brk*{h}
    -\En_{\bbQ}\brk*{h}
    \geq{}
      \eta^{-1}\log\prn*{\En_{\bbQ}\brk*{e^{\eta{}h}}}-\En_{\bbQ}\brk*{h}
    = \eta^{-1}\log\prn*{\En_{\bbQ}\brk*{e^{\eta{}(h-\En_{\bbQ}\brk*{h})}}}.
  \end{align}
  Since $\eta\leq{}(2R)^{-1}$, \cref{lem:bernstein_mgf} yields
  \begin{align}
    \En_{\bbP}\brk*{h}
    -\En_{\bbQ}\brk*{h} \geq{} \frac{\eta}{8}\Var_{\bbQ}\brk*{h} = \frac{1}{8}\sqrt{\Var_{\bbQ}\brk{h}\cdot\veps^2}
  \end{align}
  as long as $\veps^2\leq\frac{\Var_{\bbQ}\brk*{h}}{4R^2}$.

  \end{proof}

\begin{proof}[\pfref{lem:bernstein_mgf}]
  Note that $e^{x}\leq{}1+x+(e-2)x^2\leq{}1+x+x^2$ whenever
  $\abs*{x}\leq{}1$, and similarly $e^x\geq{}1+x+\frac{x^2}{4}$ for
  $\abs*{x}\leq{}1$. It follows that if $\eta\leq{}(2R)^{-1}$,
  \begin{align}
1+  \frac{\eta^2}{4}\Var(X)  \leq  \En\brk*{e^{\eta{}(X-\En\brk*{X})}}
    \leq{} 1+\eta^2\Var(X).
  \end{align}
  We conclude by using that
  $\frac{x}{2} \leq \log(1+x) \leq{} x$
  for $x\in\brk{0,1}$.
  
\end{proof}

  \subsection{Tightness of the Hellinger Distance Reduction}
  \label{sec:converse}

\cref{thm:bc_deterministic} and \cref{thm:bc_stochastic} are
supervised learning reductions that bound the regret of any policy
$\pihat$ in terms of its Hellinger distance
$\Dhels{\bbP^{\pihat}}{\bbP^{\pistar}}$ to the expert policy
$\pistar$. The following result shows that these reductions are tight
in a fairly strong instance-dependent sense: Namely, for any pair of policies
$\pihat$ and $\pistar$, and for any reward-free MDP $\Mstar$, it is
possible to design a reward function $r=\crl*{r_h}_{h=1}^{H}$ for
which each term in \cref{eq:bc_stochastic} of \cref{thm:bc_stochastic}
is tight, and such that \cref{thm:bc_deterministic} is tight; the only caveat is that we require the reward function to be
\emph{non-Markovian}, in the sense that $r_h$ depends on the full
history $x_{1:h}$ and $a_{1:h}$.
      \begin{theorem}[Converse to \cref{thm:bc_deterministic,thm:bc_stochastic}]
        \label{thm:hellinger_converse}
            Let a reward-free MDP $\Mstar$ and a pair of (potentially stochastic)
            policies $\pihat$ and $\pistar$ be given.
            \begin{enumerate}
            \item               For any $R>0$, there exists a
              non-Markovian reward function $r=\crl*{r_h}_{h=1}^{H}$
              with $\sum_{h=1}^{H}r_h\in\brk*{0,R}$ such that
              \begin{align}
                \label{eq:converse1}
        J(\pistar)-J(\pihat)
        \geq{} \frac{R}{6}\cdot{}\DhelsX{\big}{\bbP^{\pihat}}{\bbP^{\pistar}}.
              \end{align}
            \item For any $\sigma^2>0$, there exists a non-Markovian
              reward function $r=\crl*{r_h}_{h=1}^{H}$ for which
$\sigmastar^2\ldef\sum_{h=1}^{H}\En^{\pistar}\brk*{(\Qstar_h(x_{1:h},a_{1:h})-\Vstar_h(x_{1:h},a_{1:h-1}))^2}\leq\sigma^2$,
and such that\footnote{Note that since the reward function under consideration is
  non-Markovian, the value functions $Q^{\pistar}_h$ and
  $V^{\pistar}_h$ depend on the full history $x_{1:h},a_{1:h-1}$.}
\begin{align}
  \label{eq:converse2}
        J(\pistar)-J(\pihat)
                        \geq{} \frac{1}{9}\sqrt{\sigma^2\cdot\Dhels{\bbP^{\pihat}}{\bbP^{\pistar}}}.
\end{align}
\item      For any $R>0$ and $\sigma^2>0$, there exists a
              non-Markovian reward function $r=\crl*{r_h}_{h=1}^{H}$
              with $\sum_{h=1}^{H}r_h\in\brk*{0,R}$ and
              $\sigma_{\pistar}^2\leq\sigma^2$ simultaneously
              such that
              \begin{align}
                \label{eq:tv_converse}
                J(\pistar) - J(\pihat) 
                \geq{} \frac{1}{9}\min\crl*{\sqrt{\sigma^2\cdot\Dhels{\bbP^{\pihat}}{\bbP^{\pistar}}},R\cdot\DhelsX{\big}{\bbP^{\pihat}}{\bbP^{\pistar}}}.
              \end{align}
            \end{enumerate}
          \end{theorem}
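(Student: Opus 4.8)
Throughout, fix the given $\Mstar,\pistar,\pihat$ and write $\bbP=\bbP^{\pistar}$, $\bbQ=\bbP^{\pihat}$ for the induced laws over trajectories $o=(x_1,a_1),\dots,(x_H,a_H)$, and $\veps^2=\DhelsX{\big}{\bbP}{\bbQ}$. The plan is to first collapse all three claims to a single abstract statement about choosing one real-valued function $f$ on trajectory space. Given any $f$, put \emph{all} payoff at the last step: $r_h\equiv 0$ for $h<H$ and $r_H(o)=f(o)$. Then $J(\pistar)=\En_\bbP\brk*{f}$, $J(\pihat)=\En_\bbQ\brk*{f}$, the payoff-range constraint becomes $f\in\brk{0,R}$ a.s., and—since the prefix reward is measurable at each step—the increments $Q^{\pistar}_h(x_{1:h},a_{1:h})-V^{\pistar}_h(x_{1:h},a_{1:h-1})$ are exactly the ``action'' increments of the Doob martingale $M_t=\En^{\pistar}\brk*{f\mid\filt_t}$ for the refined filtration $\filt_{h-1/2}=\sigma(x_{1:h},a_{1:h-1})$, $\filt_h=\sigma(x_{1:h},a_{1:h})$. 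Summing their squares picks out a sub-sum of $\sum_t\En\brk*{(M_t-M_{t-1/2})^2}=\Var_\bbP\brk*{f}$, hence $\sigmastar^2\le\Var_\bbP\brk*{f}$. So it suffices to produce, for each part, an $f$ with $\En_\bbP\brk*{f}-\En_\bbQ\brk*{f}$ at least the claimed quantity, together with $\Var_\bbP\brk*{f}\le\sigma^2$ and/or $f\in\brk{0,R}$ as required.

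For part~1 (no variance bound, $f\in\brk{0,R}$), take $f=R\cdot\indic_{A}$ with $A=\crl*{o:\bbP(o)>\bbQ(o)}$, so that $\En_\bbP\brk*{f}-\En_\bbQ\brk*{f}=R\,\Dtv{\bbP}{\bbQ}\ge\tfrac R2\,\DhelsX{\big}{\bbP}{\bbQ}\ge\tfrac R6\,\DhelsX{\big}{\bbP}{\bbQ}$ by \cref{lem:pinsker}. For part~2 ($\Var_\bbP\brk*{f}\le\sigma^2$, $f$ unbounded allowed), I would use the likelihood ratio $\ell=\mathrm{d}\bbQ/\mathrm{d}\bbP$ (set $\ell=0$ off $\supp\bbP$): when $\bbQ\ll\bbP$ and $\bbP\neq\bbQ$, take $f=\tfrac{\sigma}{\sqrt{\Var_\bbP\brk*{\sqrt\ell}}}\prn[\big]{\En_\bbP\brk*{\sqrt\ell}-\sqrt\ell}$, which has $\En_\bbP\brk*{f}=0$ and $\Var_\bbP\brk*{f}=\sigma^2$. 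Here $\Var_\bbP\brk*{\sqrt\ell}=\En_\bbP\brk*{\ell}-\En_\bbP\brk*{\sqrt\ell}^2\le1-(1-\veps^2/2)^2\le\veps^2$ (using $\En_\bbP\brk*{\sqrt\ell}=1-\veps^2/2$), so it is automatically finite, while $\En_\bbQ\brk*{\sqrt\ell}-\En_\bbP\brk*{\sqrt\ell}=\En_\bbP\brk*{\ell^{3/2}}-\En_\bbP\brk*{\sqrt\ell}\ge\tfrac{1}{1-\veps^2/2}-(1-\veps^2/2)\ge\tfrac12\veps^2$ by Cauchy--Schwarz on $1=\En_\bbP\brk*{\ell^{3/4}\ell^{1/4}}$; combining gives $\En_\bbP\brk*{f}-\En_\bbQ\brk*{f}\ge\tfrac{\sigma\,\veps^2/2}{\sqrt{\veps^2}}=\tfrac12\sigma\veps\ge\tfrac19\sqrt{\sigma^2\veps^2}$. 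The degenerate cases are easy: if $\bbP=\bbQ$ both sides are $0$, and if $\bbP\not\gg\bbQ$ then $f=-M\indic\crl*{\bbP(\cdot)=0}$ has zero $\bbP$-variance and arbitrarily large mean difference.

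Part~3 is the hard one: we need $f\in\brk{0,R}$ \emph{and} $\Var_\bbP\brk*{f}\le\sigma^2$ at once, and the unbounded construction of part~2 does not survive clipping (the clipped region $\crl*{\ell\ \text{large}}$ carries most of the mean difference). The natural candidate is the scaled indicator $f=\lambda\,\indic_{A}$ on the same set $A=\crl*{\bbP>\bbQ}$, with amplitude $\lambda=\min\prn[\big]{R,\ \sigma/\sqrt{\beta(1-\beta)}}$ chosen to saturate the budget, where $\beta=\bbP(A)$: then $\Var_\bbP\brk*{f}=\lambda^2\beta(1-\beta)\le\sigma^2$, $f\in\brk{0,R}$, and $\En_\bbP\brk*{f}-\En_\bbQ\brk*{f}=\lambda\,\Dtv{\bbP}{\bbQ}$. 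When the clip is active ($\lambda=R$) we again get $R\,\Dtv\ge\tfrac R2\veps^2\ge\tfrac12\min\crl*{\sigma\veps,R\veps^2}$; when it is not, one must show $\tfrac{\sigma\,\Dtv{\bbP}{\bbQ}}{\sqrt{\beta(1-\beta)}}\ge\tfrac19\min\crl*{\sigma\veps,R\veps^2}$, which—using $\Dtv\ge\tfrac12\veps^2$—follows from the quantitative inequality
\[
  \Dtv{\bbP}{\bbQ}^2\ \ge\ c\cdot\DhelsX{\big}{\bbP}{\bbQ}\cdot\beta(1-\beta)
\]
for an absolute constant $c>0$. I would prove this by splitting $\veps^2=H_++H_-$ into the contributions of $A$ and $A^c$, using the pointwise bounds $(\sqrt p-\sqrt q)^2\le\min(p-q,\,p)$ on $A$ (symmetrically on $A^c$) to get $H_+\le\min(\Dtv,\beta)$, $H_-\le\Dtv$, hence $\veps^2\le\Dtv+\min(\Dtv,\beta)$; a case analysis on $\beta\lessgtr\Dtv$ and $\beta\lessgtr\veps^2$ then settles every regime except $\veps^2<\beta\le1/2$, $\tfrac{\veps^2}{2}\le\Dtv<\beta$, where one has to exploit the fact that $\Dtv$ is forced to be substantially larger than $\veps^2/2$ (the Pinsker bound is tight only near disjoint supports) — possibly by subsampling $A$ down to $\bbP$-mass $\min(\sigma^2/R^2,\veps^2)$ keeping the atoms of smallest $\ell$. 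Establishing this sharp ``defect in Pinsker's inequality'' estimate, and thereby controlling the mean difference of the (sub)indicator against the variance budget, is the main obstacle in the proof; everything else is bookkeeping.
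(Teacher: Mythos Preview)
Your reductions for parts~1 and~2 are correct, though they use different test functions (a TV-optimal indicator for part~1, a centered $\sqrt{\ell}$ for part~2) from the paper. Part~3, however, has a genuine gap: the inequality $\Dtv{\bbP}{\bbQ}^{2}\ge c\cdot\Dhels{\bbP}{\bbQ}\cdot\beta(1-\beta)$ that you isolate as ``the main obstacle'' is \emph{false}. On $n$ atoms, take $p_i=1/n$ for all $i$; set $q_1=1/n+(n/2)\delta$, $q_i=1/n$ for $2\le i\le n/2$, and $q_i=1/n-\delta$ for $i>n/2$, with $\delta\ll n^{-2}$. Then $A=\{i:p_i>q_i\}=\{n/2+1,\dots,n\}$ so $\beta=1/2$ and $\Dtv{\bbP}{\bbQ}=(n/2)\delta$, but $\Dhels{\bbP}{\bbQ}$ is dominated by the single atom~$1$ and scales as $n^{3}\delta^{2}$; the ratio of the two sides of your inequality is $\asymp 1/n$. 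Subsampling does not rescue this either, since all atoms of $A$ are identical: any sub-indicator gives mean difference at most $\lambda\cdot(n/2)\delta$, and optimizing $\lambda$ subject to the variance budget still falls short of the target $\sigma\sqrt{\Dhels{\bbP}{\bbQ}}\asymp\sigma n^{3/2}\delta$ by a factor $n^{-1/2}$. The obstruction is structural: an indicator can only capture total variation, whereas here almost all of the Hellinger mass sits on one atom in $A^{c}$.

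The paper sidesteps this by using a \emph{single} test function for all three parts. With $\wb{\bbP}=\tfrac12(\bbP+\bbQ)$, take
\[
  f(o)\;=\;B\cdot\Bigl(1-\tfrac12\,\frac{\mathrm d\bbQ}{\mathrm d\wb{\bbP}}(o)\Bigr)\;\in\;[0,B].
\]
A two-line computation gives $\En_{\bbP}[f]-\En_{\bbQ}[f]=B\cdot\Dchis{\bbQ}{\wb{\bbP}}\ge\tfrac{B}{6}\Dhels{\bbP}{\bbQ}$ by \cref{lem:pinsker}, while concavity of variance in the measure and \cref{lem:pinsker} again give $\Var_{\bbP}[f]\le 2\Var_{\wb{\bbP}}[f]=\tfrac{B^{2}}{2}\Dchis{\bbQ}{\wb{\bbP}}\le\tfrac{B^{2}}{2}\Dhels{\bbP}{\bbQ}$. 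The key point is that this function is bounded in $[0,B]$ \emph{and} has variance $O(B^{2}\Dhels{\bbP}{\bbQ})$ simultaneously, with no trade-off---something no indicator can achieve. All three parts then follow by setting $B=R$, $B=\sigma/\sqrt{2\Dhels{\bbP}{\bbQ}}$, or the minimum of the two, respectively.
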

\cref{eq:converse1} shows that there exist reward functions with
bounded range for which \cref{thm:bc_deterministic} and the
lower-order term in \cref{eq:bc_stochastic} of
\cref{thm:bc_stochastic} are tight, while \cref{eq:converse2} shows
that there exist reward functions with bounded variance (but not
necessarily bounded range) for which the leading term in
\cref{eq:bc_stochastic} or \cref{thm:bc_stochastic} is tight.\loose
          
Note that for some MDPs, the state $x_h$ already contains the full
history $x_{1:h-1},a_{1:h-1}$, so the assumption of non-Markovian
rewards is without loss of generality. For MDPs that do not
have this property, \cref{thm:hellinger_converse} leaves open the possibility that
\cref{thm:bc_deterministic,thm:bc_stochastic} can be improved on a
per-MDP basis.

\begin{proof}[\pfref{thm:hellinger_converse}]
      \newcommand{\bbQbar}{\wb{\bbQ}}%
    \newcommand{\bbPbar}{\wb{\bbP}}%
    Consider a pair of measures $\bbP$ and $\bbQ$, and set
    $\bbPbar\ldef{}\frac{1}{2}(\bbP+\bbQ)$. Consider the function
    \[
      h = 1-\frac{1}{2}\frac{\bbQ}{\bbPbar}\in\brk*{0,1}.
    \]
    Using \cref{lem:pinsker}, we observe that
    \begin{align}
      \label{eq:chi1}
      \En_{\bbP}\brk*{h}-\En_{\bbQ}\brk*{h}
      = 2\prn*{ \En_{\bbPbar}\brk*{h}-\En_{\bbQ}\brk*{h}}
      =\En_{\bbQ}\brk*{\frac{\bbQ}{\bbPbar}}-\En_{\bbPbar}\brk*{\frac{\bbQ}{\bbPbar}}
      = \Dchis{\bbQ}{\bbPbar}
      \geq{} \frac{1}{6}\Dhels{\bbQ}{\bbP}.
    \end{align}
    We also observe that by concavity of variance,
    \begin{align}
      \label{eq:chi2}
      \frac{1}{2}\prn*{\Var_{\bbP}\brk*{h}+\Var_{\bbQ}\brk*{h}}
      \leq{}\Var_{\bbPbar}\brk*{h}
      =
      \frac{1}{4}\En_{\bbPbar}\brk*{\prn*{\frac{\bbQ}{\bbPbar}-\En_{\bbPbar}\brk*{\frac{\bbQ}{\bbPbar}}}}^2
=       \Dchis{\bbQ}{\bbPbar}\leq{}\Dhels{\bbQ}{\bbP}.
    \end{align}
    To apply this observation to the theorem at hand, let a parameter $B>0$ be
    given, let
    $\bbPbar\ldef{}\frac{1}{2}(\bbP^{\pistar}+\bbP^{\pihat})$, and
    consider the non-Markov reward function $r$ that sets
    $r_{1},\ldots,r_{h-1}=0$ and
    \begin{align}
      r_H(\tau) = B\cdot\prn*{1-\frac{1}{2}\frac{\bbP^{\pihat}}{\bbPbar}}\in\brk*{0,B}.
    \end{align}
    Then by \cref{eq:chi1}, we have that
    \begin{align}
      J(\pistar)-J(\pihat)
      \geq{} \frac{B}{6}\cdot{}\Dhels{\bbP^{\pihat}}{\bbP^{\pistar}}.
    \end{align}
    At the same time, by \cref{eq:chi2}, we have that
    \begin{align}
      \Var^{\pistar}\brk*{\sum_{h=1}^{H}r_h}
      =       \Var^{\pistar}\brk*{r_H}
      \leq{} 2B^2\cdot{}\Dhels{\bbP^{\pihat}}{\bbP^{\pistar}},
    \end{align}
    and by \cref{prop:sigma_ltv},
    \begin{align}
      \sigma_{\pistar}^2
      \leq{}       \Var^{\pistar}\brk*{\sum_{h=1}^{H}r_h}.
    \end{align}

    To conclude, note that if we set $B^2=R^2$, then
    $\sum_{h=1}^{H}r_h\in\brk*{0,R}$ and
          \begin{align}
        J(\pistar)-J(\pihat)
        \geq{} \frac{R}{6}\cdot{}\Dhels{\bbP^{\pihat}}{\bbP^{\pistar}}.
          \end{align}
          Meanwhile, if we set
                \begin{align}
                  B^2 = \frac{\sigma^2}{2 \Dhels{\bbP^{\pihat}}{\bbP^{\pistar}}},
                \end{align}
                then $\sigmastar^2\leq\sigma^2$ and
                      \begin{align}
        J(\pistar)-J(\pihat)
                        \geq{} \frac{1}{9}\sqrt{\sigma^2\cdot\Dhels{\bbP^{\pihat}}{\bbP^{\pistar}}}.
      \end{align}

Finally, if we set
      \begin{align}
        B^2 = \frac{\sigma^2}{2 \Dhels{\bbP^{\pihat}}{\bbP^{\pistar}}}\wedge{}R^2.
      \end{align}
      Then $\sum_{h=1}^{H}r_h\in\brk{0,R}$, $\sigma_{\pistar}^2\leq{} \sigma^2$, and
      \begin{align}
        J(\pistar)-J(\pihat)
        \geq{} \frac{B}{6}\cdot{}\Dhels{\bbP^{\pihat}}{\bbP^{\pistar}}
        \geq{} \min\crl*{\frac{1}{9}\sqrt{\sigma^2\cdot\Dhels{\bbP^{\pihat}}{\bbP^{\pistar}}},\frac{R}{6}\cdot\Dhels{\bbP^{\pihat}}{\bbP^{\pistar}}}.
      \end{align}
    
\end{proof}

\newpage

\end{document}